\theoremstyle{plain}
\newtheorem{theorem}{Theorem}[section]
\newtheorem{proposition}[theorem]{Proposition}
\newtheorem{lemma}[theorem]{Lemma}
\theoremstyle{definition}
\theoremstyle{remark}
\newtheorem{remark}[theorem]{Remark}
\newcommand{\R}{\mathbb{R}}
\newcommand{\E}{\mathbb{E}}
\newcommand{\N}{\mathbb{N}}
\renewcommand{\d}{\mathrm{d}}
\renewcommand{\P}{\mathcal{P}}
\newcommand{\T}{\mathrm{T}}
\newcommand{\D}{\mathrm D}
\newcommand{\F}{\mathcal F}
\newcommand{\V}{\mathcal V}
\newcommand{\zb}{\bm}
\newcommand\dx{\mathrm{d}}
\DeclareMathOperator*{\argmin}{arg\,min}
\DeclareMathOperator{\dom}{dom}
\DeclareMathOperator{\Id}{Id}
\DeclareMathOperator{\prox}{prox}
\DeclareMathOperator{\RS}{RS}
\DeclareMathOperator{\supp}{supp}
\newcommand{\opt}{\mathrm{opt}}
\icmltitlerunning{Neural Wasserstein Gradient Flows for Discrepancies with Riesz Kernels}
\begin{document}

\twocolumn[
\icmltitle{Neural Wasserstein Gradient Flows for Discrepancies with Riesz Kernels}




\begin{icmlauthorlist}
\icmlauthor{Fabian Altekr\"uger}{HU,TU}
\icmlauthor{Johannes Hertrich}{TU}
\icmlauthor{Gabriele Steidl}{TU}
\end{icmlauthorlist}

\icmlaffiliation{HU}{Department of Mathematics, 
Humboldt-Universit\"at zu Berlin, 
Unter den Linden 6, 
D-10099 Berlin, Germany}
\icmlaffiliation{TU}{Institute of Mathematics,
  Technische Universit\"at Berlin,
  Stra{\ss}e des 17. Juni 136, 
  D-10623 Berlin, Germany}

\icmlcorrespondingauthor{Fabian Altekr\"uger}{fabian.altekrueger@hu-berlin.de}
\icmlcorrespondingauthor{Johannes Hertrich}{j.hertrich@math.tu-berlin.de}

\icmlkeywords{Machine Learning, ICML}

\vskip 0.3in
]



\printAffiliationsAndNotice{}  

\begin{figure*}[t!]
\begin{subfigure}[t]{.14\textwidth}
  \includegraphics[width=\linewidth]{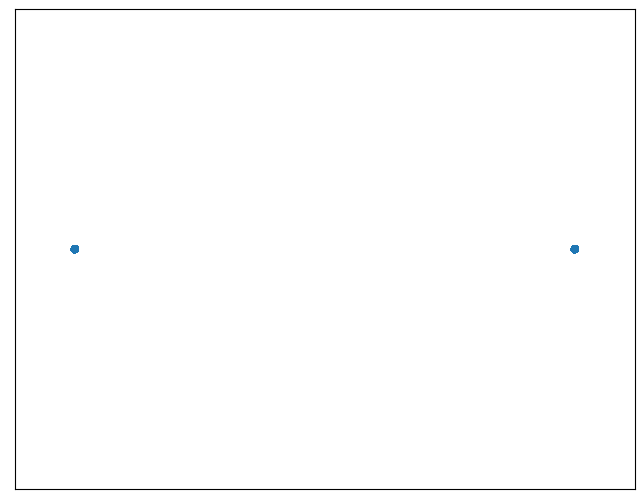}
\end{subfigure}%
\begin{subfigure}[t]{.14\textwidth}
  \includegraphics[width=\linewidth]{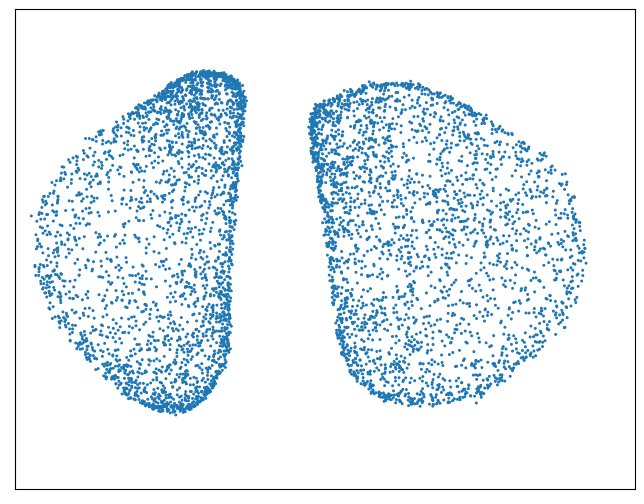}
\end{subfigure}%
\begin{subfigure}[t]{.14\textwidth}
  \includegraphics[width=\linewidth]{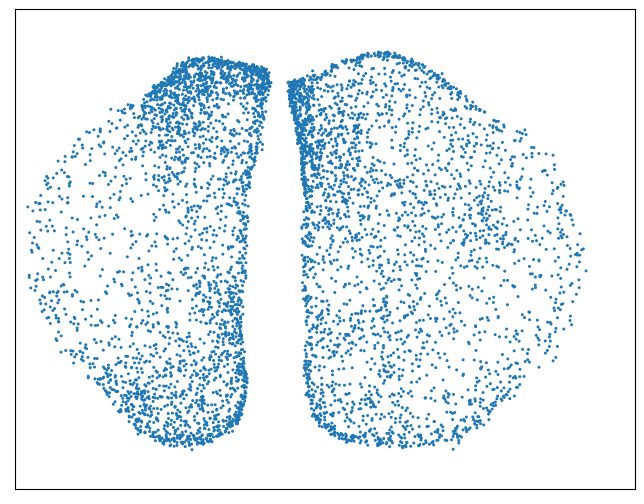}
\end{subfigure}%
\begin{subfigure}[t]{.14\textwidth}
  \includegraphics[width=\linewidth]{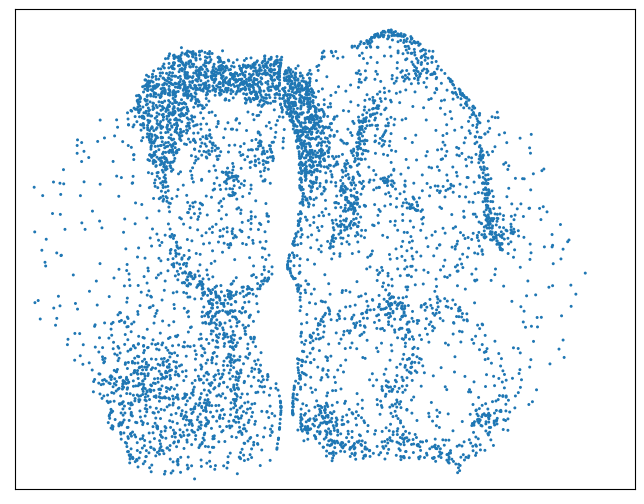}
\end{subfigure}%
\begin{subfigure}[t]{.14\textwidth}
  \includegraphics[width=\linewidth]{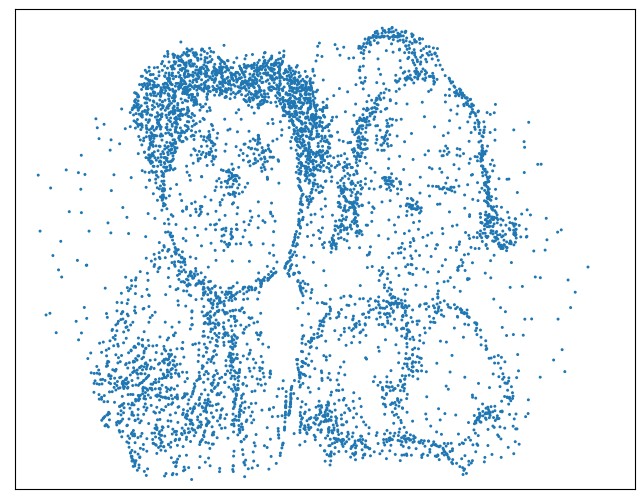}
\end{subfigure}%
\begin{subfigure}[t]{.14\textwidth}
  \includegraphics[width=\linewidth]{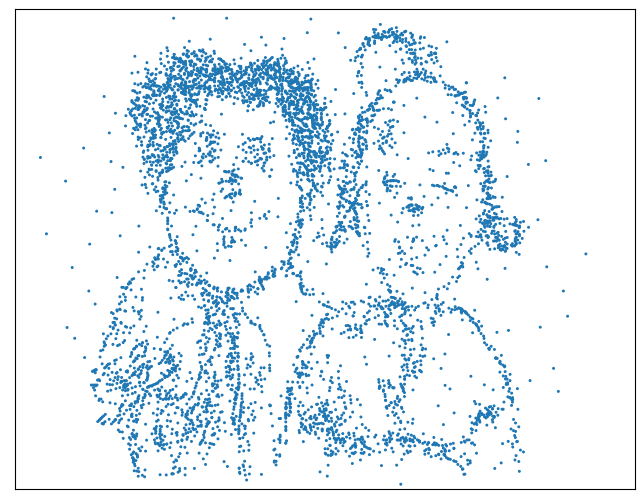}
\end{subfigure}%
\hfill
\begin{subfigure}[t]{.14\textwidth}
  \includegraphics[width=\linewidth]{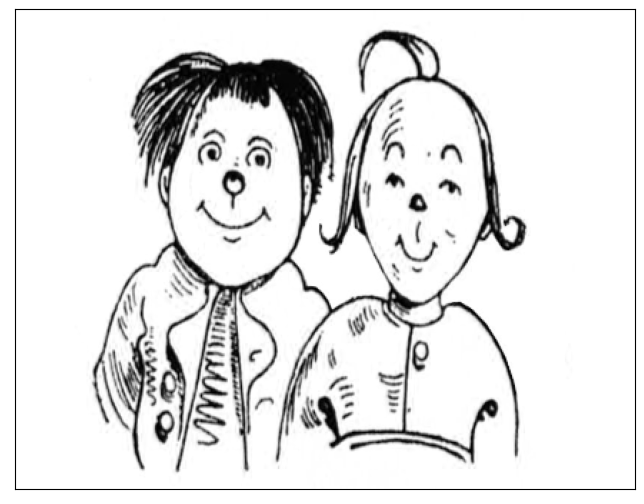}
\end{subfigure}%

\begin{subfigure}[t]{.14\textwidth}
  \includegraphics[width=\linewidth]{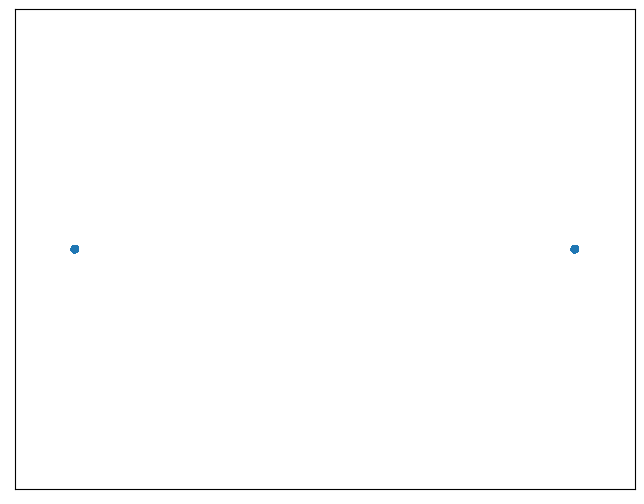}
\caption*{t=0.0}
\end{subfigure}%
\begin{subfigure}[t]{.14\textwidth}
  \includegraphics[width=\linewidth]{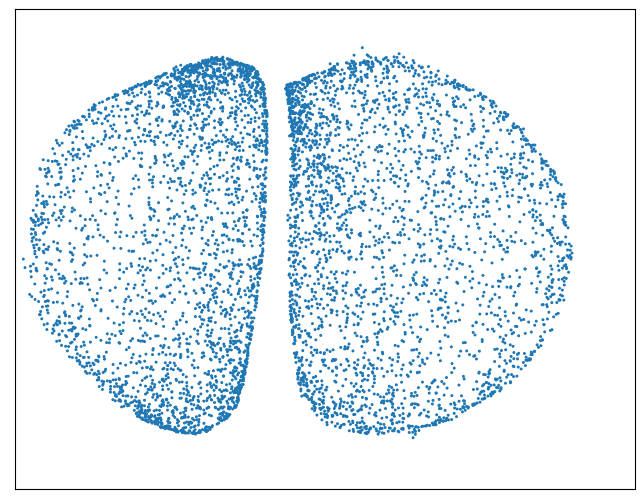}
\caption*{t=2.0}
\end{subfigure}%
\begin{subfigure}[t]{.14\textwidth}
  \includegraphics[width=\linewidth]{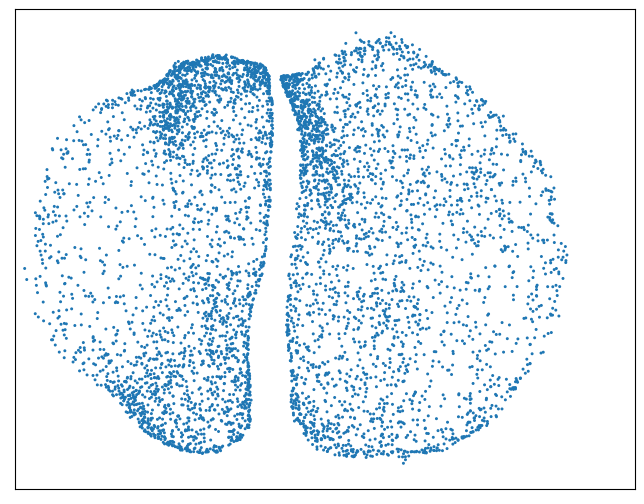}
\caption*{t=4.0}
\end{subfigure}%
\begin{subfigure}[t]{.14\textwidth}
  \includegraphics[width=\linewidth]{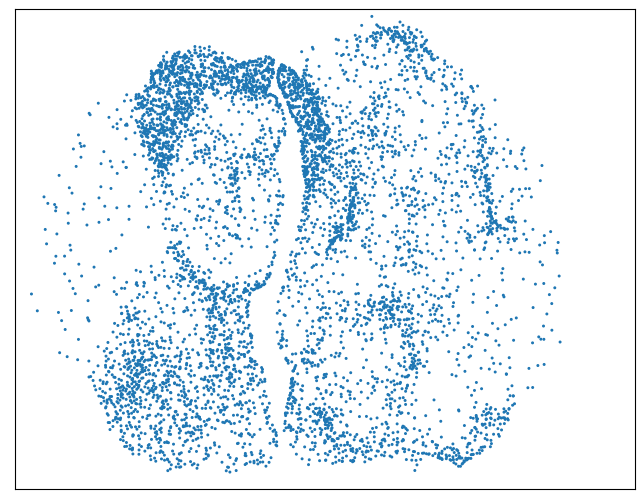}
\caption*{t=16.0}
\end{subfigure}%
\begin{subfigure}[t]{.14\textwidth}
  \includegraphics[width=\linewidth]{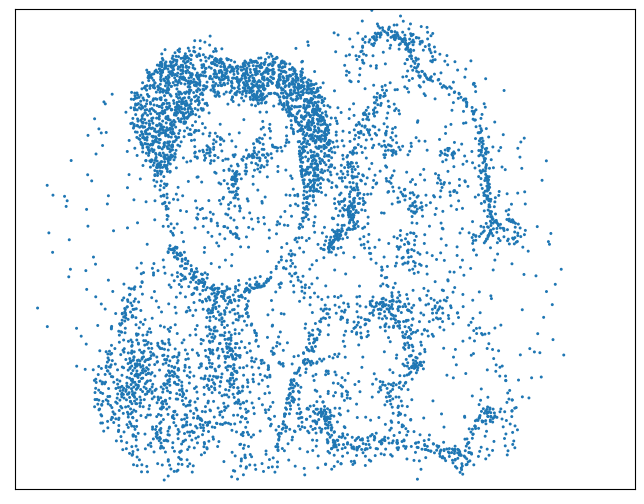}
\caption*{t=32.0}
\end{subfigure}%
\begin{subfigure}[t]{.14\textwidth}
  \includegraphics[width=\linewidth]{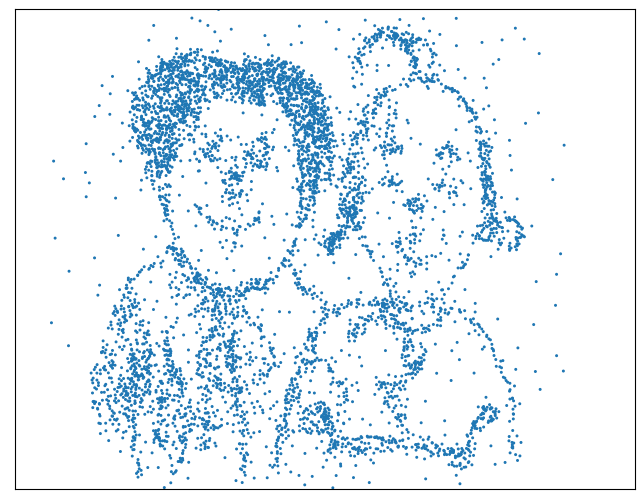}
\caption*{t=100.0}
\end{subfigure}%
\hfill
\begin{subfigure}[t]{.14\textwidth}
  \includegraphics[width=\linewidth]{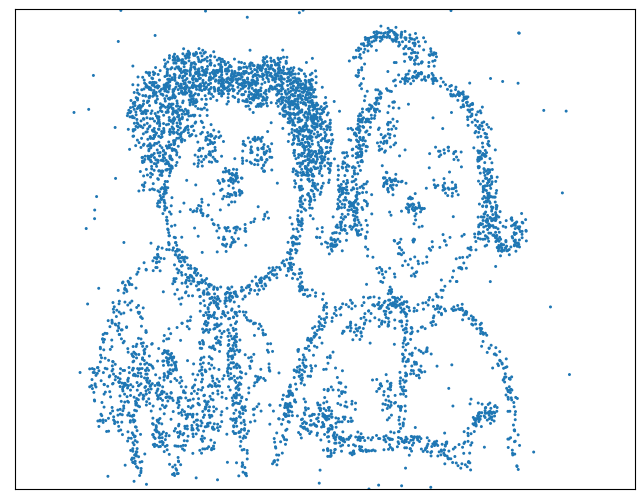}
\caption*{target}
\end{subfigure}%
\caption{Neural backward (top) and  forward (bottom) schemes for the Wasserstein flow
of the MMD with distance kernel starting in exactly two points 'sampled' from $\delta_{(-0.5,0)} + \delta_{(0.5,0)}$ toward the 2D density 'Max und Moritz' (Drawing by Wilhelm Busch top right and a sampled version bottom right). 
} \label{fig:discrepancy_maxmoritz}
\end{figure*}

\begin{abstract}
Wasserstein gradient flows of maximum mean discrepancy (MMD) functionals
with non-smooth Riesz kernels  show a rich structure as singular measures 
can become absolutely continuous ones and conversely.
In this paper we contribute to the understanding of such flows.
We propose to approximate the backward scheme 
of Jordan, Kinderlehrer and Otto for computing such
Wasserstein gradient flows as well as a forward scheme for 
so-called Wasserstein steepest descent flows by neural networks (NNs).
Since we cannot restrict ourselves to absolutely continuous measures,
we have to deal with transport plans and velocity plans instead
of usual transport maps and velocity fields.
Indeed, we approximate the disintegration of both plans 
by generative NNs which are learned with respect to  
appropriate loss functions.
In order to evaluate the quality of both neural schemes, we benchmark them on the interaction energy.
Here we provide analytic formulas for Wasserstein schemes starting at a Dirac measure 
and show their convergence as the time step size tends to zero.
Finally, we illustrate our neural MMD flows by numerical examples.
\end{abstract}

\section{Introduction}

Wasserstein gradient flows of certain functionals $\F$ gained increasing attention in generative 
modeling  over the last years.
If $\F$ is given by the Kullback-Leibler divergence, the corresponding gradient flow can be represented
by the Fokker-Planck equation and the Langevin equation \cite{JKO1998,O2001,OW2005,P2014} and is related to the Stein variational gradient descent \cite{DWYZ2023,GWJDZ2020,LFS2022}.
In combination with deep-learning techniques, these representations can be used for generative modeling, see, 
e.g.,~\cite{AAS2021,GJWWYZ2019,GAG2021,HHS2022,HHS2021,SDKKEP2021,SE2019,WT2011}.
For approximating Wasserstein gradient flows for more general functionals, a backward discretization scheme in time, known as
Jordan-Kinderlehrer-Otto (JKO) scheme \cite{G1993, JKO1998} can be used. 
Its basic idea is to discretize the whole flow in time 
by applying iteratively the Wasserstein proximal operator with respect to $\mathcal F$. 
In case of absolutely continuous measures, Brenier's theorem \cite{Brenier1987} can be applied
to rewrite this operator via transport maps having convex potentials
and to learn these transport maps \cite{FZTC2022}
or their potentials \cite{ASM2022,BPKC2022,MKLGSB2021}
by neural networks (NNs).
In most papers, the objective functional arises from 
Kullback-Leibler divergence  or its relatives,
which restricts the considerations to absolutely continuous measures.

In this paper, we are interested in gradient flows with respect to 
discrepancy functionals which are also defined for singular measures.
Moreover, in contrast to  Langevin Monte Carlo algorithms, no analytical form of the target measure is required.
The \emph{maximum mean discrepancy} (MMD) is defined as
$
\mathcal D_K^2(\mu,\nu)\coloneqq \mathcal E_K(\mu-\nu),
$
where $\mathcal E_K$ is the \emph{interaction energy}  for signed measures
\begin{equation*}
\mathcal E_K(\eta)\coloneqq \frac12\int_{\R^d}\int_{\R^d}K(x,y)  \,\d \eta(x)\d \eta(y)
\end{equation*}
and $K\colon\R^d\times\R^d\to\R$ is a conditionally positive definite kernel.
Then, we consider gradient flows with respect to the MMD functional 
$\F_\nu\colon\P_2(\R^d)\to\R$ given by
\begin{equation*}
\F_\nu \coloneqq \mathcal E_K+\mathcal V_{K,\nu} = \mathcal D_K^2(\cdot,\nu) +\text{const}, 
\end{equation*}
where $\V_{K, \nu}(\mu)$ is the so-called \emph{potential energy} 
\begin{align*}\label{eq:potential}
    \V_{K, \nu}(\mu) \coloneqq - \int_{\R^d} \int_{\R^d} K(x,y) \,  \d\nu(y) \, \d \mu(x)    
\end{align*}
 acting  as an attraction term between the masses of $\mu$ and $\nu$, while the 
interaction energy $\mathcal E_K$ is a repulsion term enforcing a proper spread of  $\mu$.
In general, it will be essential for our method that the flow's functional can be approximated by samples, which is, e.g., possible if it is defined by an integral
\begin{equation}\label{eq:foi}
\F(\mu)=\int_{\R^d}G(x)\d \mu(x), \quad G\colon\R^d\to\R,
\end{equation}
like in the potential energy or by a double integral like in the interaction energy.
MMD gradient flows are directly related to NN optimization \cite{AKSG2019}.
For $\lambda$-convex kernels with Lipschitz continuous gradient, 
MMD Wasserstein gradient flows were thoroughly investigated~in \cite{AKSG2019}. 
In particular, it was shown that these flows can be described as particle flows.
However, in certain applications, non-smooth and non-$\lambda$-convex kernels like Riesz kernels $K:\R^d \times \R^d \to \R$,
\begin{equation}\label{eq:riesz}
K(x,y)=-\|x-y\|^r,\quad r\in(0,2)
\end{equation}
and especially negative distance kernels are of interest \cite{CaHu17,ChSaWo22b,EGNS2021,GPS2012,TSGSW2011,W2005}.
Here it is known  that the Wasserstein gradient flow of the interaction energy 
starting at an empirical measure cannot remain empirical \cite{BaCaLaRa13}, so that these flows are  no longer just particle flows. 
In particular,  Dirac measures (particles) might ''explode'' and become absolutely continuous measures in two dimensions
or ''condensated'' singular non-Dirac measures in higher dimensions and conversely. For an illustration, see the last example in Appendix \ref{app:discrepancy_examples}
and \cite{HGBS2023} for the one-dimensional setting.
Thus, neither the analysis of absolutely continuous Wasserstein gradient flows nor the findings in~\cite{AKSG2019} are applicable in this case. 
From the computational side, Riesz kernels have exceptional properties which allow a very efficient computation of the corresponding MMD. 
More precisely, in \cite{HWAH2023} it was shown that MMD with Riesz kernels coincides with its sliced version. 
Thus, the computation of gradients of MMD can be done in the one-dimensional setting in a fast manner using a simple sorting algorithm.

\paragraph{Contributions.}
We propose to compute the JKO scheme by learning  generative
NNs which approximate the disintegration of the transport plans. Using plans instead of maps, we are no longer restricted to absolutely continuous measures, while. Similarly, we consider Wasserstein steepest descent flows 
\cite{HGBS2022} and a forward discretization scheme in time.
We approximate the disintegration of the corresponding velocity plans
by  NNs, where we have to use the loss function corresponding to the steepest descent flow now. 
Using the disintegration for both schemes, we can handle arbitrary measures in contrast to existing methods, which are limited to the absolutely continuous case.
This could be of interest when considering target measures supported on submanifolds, as done in, e.g., \cite{BC2020}.
MMD flows approximated by our neural schemes starting just at two points are illustrated in Fig.~\ref{fig:discrepancy_maxmoritz}.
Another contribution of our paper is the convergence analysis of the
backward, resp. forward schemes 
starting at a Dirac measure for the interaction energy.
Indeed, we provide analytical formulas for the JKO and forward schemes 
and prove that they converge to the same curve when the time step size goes to zero.
This delivers a ground truth for evaluating our neural approximations. 
We highlight the performance of our neural backward and forward schemes by numerical examples.

\paragraph{Related Work.}
There exist several approaches to compute neural approximations of the JKO scheme for absolutely continuous measures. Exploiting Brenier's Theorem, in \cite{ASM2022,BPKC2022,MKLGSB2021} it was proposed to use input convex NNs (ICNNs) \cite{AXK2017} 
within the JKO scheme. More precisely,
starting with samples from the initial measure $\mu$, samples from each step of the JKO were iteratively generated by discretizing the functional $\mathcal F$ in \cite{ASM2022,MKLGSB2021}, see also Sect.~\ref{sec:nbs}.
If the potential is strictly convex, 
they can compute the density in each step 
using the change-of-variables formula.
A similar approach was used in \cite{BPKC2022},
but here the objective is to approximate the functional $\mathcal{F}$ 
via NNs for a given trajectory of samples. In \cite{HKPS2021}, 
approximation results for a similar method were provided. 
Instead of using ICNNs, \cite{FZTC2022} proposed to directly learn the transport map and rewrite the functional $\mathcal{F}$ with a variational formula. Here it is possible to compute $\mathcal{F}$ sample-based, but a minimax problem has to be solved. Finally, motivated by the computational burden of the JKO scheme, the Wasserstein distance in the JKO scheme was replaced by the sliced-Wasserstein distance in \cite{BCSD2022}. All these methods rely on absolutely continuous measures and are not directly applicable for general measures.
A slight modification of the JKO scheme for simulating Wasserstein flows is proposed in \cite{CCWWL2022}.
For the task of  computing strong and weak optimal transport plans, a generalization of transport maps to transport plans was done in \cite{KSB2022}. Here a transport plan, represented by a so-called stochastic transport map, was learned exploiting the dual formulation of the Wasserstein distance as a minimax problem. Another approach for learning transport plans  by training a NN in an adversarial fashion was proposed in \cite{LZSCZY2020}.
Recently, it was shown in ~\cite{AKSG2019} that Wasserstein flows of MMDs with smooth and $\lambda$-convex kernels  can be fully described by particle flows. However, here we are interested in non-smooth and non-$\lambda$-convex kernels, where this characterization does not hold true.
Finally, closely related to gradient flows are Wasserstein natural gradient methods which replace Euclidean gradients by more general ones, see \cite{AGLM2020,CL2020,LLOM2021}.

\paragraph{Outline.} We introduce Wasserstein gradient flows and Wasserstein steepest descent flows as well as a backward and forward scheme
for their time discretization in Sect.~\ref{sec:WGF}.
In Sect.~\ref{sec:nbs}, we derive a neural backward scheme and in Sect.~\ref{sec:forward} a neural forward scheme.
Analytic formulas for backward and forward schemes of Wasserstein flows of the interaction energy starting at a Dirac measure
are given in Sect.~\ref{sec:theory}.
These ground truths are used in the first examples in Sect.~\ref{sec:numerics} and were subsequently accomplished by examples for
MMD flows. Proofs are postponed to the appendix.

\section{Wasserstein Flows} \label{sec:WGF}
We are interested in gradient flows in the \emph{Wasserstein space} $\P_2(\R^d)$ of Borel probability measures with finite second moments equipped
with the \emph{Wasserstein distance} 
\begin{equation}\label{eq:W2}
W_2^2(\mu,\nu)\coloneqq\min_{\zb \pi\in\Gamma(\mu,\nu)}\int_{\R^d\times\R^d}\|x-y\|^2\d \zb\pi(x,y),
\end{equation}
where $\Gamma(\mu,\nu)\coloneqq\{\zb \pi\in\P_2(\R^d\times\R^d):(\pi_1)_\#\zb\pi=\mu,\,(\pi_2)_\#\zb\pi=\nu\}$. 
Here $T_{\#}\mu \coloneqq \mu \circ T^{-1}$ denotes the \emph{push-forward} of $\mu$ via 
the measurable map $T$ and
$\pi_i(x) \coloneqq x_i$, $i = 1,2$ for $x = (x_1,x_2) \in \R^{d \times d}$.
In the case that $\mu$ is absolutely continuous, the Wasserstein distance 
can be reformulated by Breniers' theorem \cite{Brenier1987}
using transport maps $T: \R^d \to \R^d$ instead of transport plans as
\begin{equation}\label{eq:map}
W_2^2(\mu,\nu)=\min_{T_\#\mu=\nu} \int_{\R^d}\|x-T(x)\|^2 \d \mu(x).
\end{equation}
Then  the optimal transport map $\hat T$ is unique and implies the unique optimal transport plan by
$\hat {\zb \pi} = (\Id,\hat T)_\# \mu$. Further, $\hat T = \nabla \psi$ for some
convex, lower semi-con\-tin\-u\-ous (lsc) and $\mu$-a.e.\ differentiable function $\psi\colon \R^d \to (-\infty,+\infty]$. 

A curve $\gamma\colon I\to\P_2(\R^d)$ 
on the interval $I\subseteq \R$ is called \emph{absolutely continuous} 
if there exists a Borel velocity field $v_t\colon\R^d\to\R^d$ with 
$\int_I \|v_t\|_{L_{2,\gamma(t)}} \d t<\infty$ 
such that the continuity equation 
$$
\partial_t\gamma(t)+\nabla\cdot(v_t\gamma(t))=0
$$
is fulfilled on $I\times\R^d$ in a weak sense.
An absolutely continuous curve $\gamma\colon(0,\infty)\to\P_2(\R^d)$ 
with velocity field $v_t\in \T_{\gamma (t)}\P_2(\R^d)$ is a \textbf{Wasserstein gradient flow with respect to} $\F\colon\P_2(\R^d)\to(-\infty,\infty]$ 
if 
\begin{equation}\label{eq:gf_condition}
v_t\in -\partial \F(\gamma(t)),\quad \text{for a.e. } t>0,
\end{equation}
where $\partial \F(\mu)$ denotes the reduced Fr\'echet subdiffential at $\mu$ and $\T_{\mu}\P_2(\R^d)$ the regular tangent space,
see Appendix \ref{sec:gen_geodesics}.

A pointwise formulation of Wasserstein flows using steepest descent directions 
was suggested by \cite{HGBS2022}.
In order to describe all ``directions'' in $\P_2(\R^d)$, it is not sufficient to consider velocity fields.
Instead, we need velocity 
plans $\zb v\in\zb V(\mu)$, where $\zb V(\mu) \coloneqq\{\zb v\in\P_2(\R^d\times\R^d):(\pi_1)_\#\zb v=\mu\}$ \cite{AGS2005,Gigli2004}.
Now the curve $\gamma_{\zb v}$ in direction $\zb v\in\zb V(\mu)$ starting at $\mu$ is defined by
$$
\gamma_{\zb v}(t)=(\pi_1+t\pi_2)_\#\zb v.
$$
The \emph{(Dini-)directional derivative} of a function 
$\F\colon\P_2(\R^d)\to(-\infty,\infty]$ at $\mu \in \P_2(\R^d)$
in direction $\zb v \in \zb V(\mu)$
is given by
$$
\D_{\zb v}\F(\mu)\coloneqq \lim_{t\to 0+}\frac{\F(\gamma_{\zb v}(t))-\F(\mu)}{t}.
$$
For a velocity plan $\zb v$, we define the
\emph{multiplication by} $c \in \R$ as 
$c\cdot \zb v\coloneqq (\pi_1,c\pi_2)_\#\zb v$ 
and the
\emph{metric velocity} by 
$\|\zb v\|_\mu ^2\coloneqq \int_{\R^d\times\R^d}\|y\|^2\d \zb v(x,y)$.
Let $(x)^-\coloneqq \max(-x,0)$.
Then, inspired by properties of the gradient in Euclidean spaces, we define the \emph{set of steepest descent directions at} $\mu \in \P_2(\R^d)$ by 
\begin{equation}\label{eq:steepest_descent_direction}
\nabla_- \mathcal F(\mu) \coloneqq \Bigl\{
\left(\frac{\D_{\hat{\zb v}}\F(\mu)} {\|\hat{\zb v}\|_\mu^2 }\right)^- \cdot \hat{\zb v} :  
\hat {\zb v}\in\argmin_{\zb v\in\zb V(\mu)} \frac{\D_{\zb v}\F(\mu)}{\|\zb v\|_\mu }\Bigr\}.
\end{equation}
An absolutely continuous curve 
$\gamma\colon[0,\infty)\to\P_2(\R^d)$ 
is a \textbf{Wasserstein steepest descent flow with respect to} $\F\colon\P_2(\R^d)\to(-\infty,\infty]$ 
if 
\begin{equation}\label{wdf}
\dot\gamma(t) \in \nabla_-\mathcal F (\gamma(t)), \quad t \in [0,\infty),
\end{equation}
where $\dot\gamma(t)$ is the tangent of $\gamma$ at time $t$, see Appendix \ref{sec:gen_geodesics}.

Although both Wasserstein flows are different in general, they coincide by the following proposition from \cite{HGBS2022}
for functions which are
$\lambda$-convex along generalized geodesics,  see \eqref{eq:gg} in the appendix.

\begin{proposition}
Let $\F \colon \P_2(\R^d) \to \R$ be locally Lipschitz continuous 
and $\lambda$-convex along generalized geodesics.
Then, there exist unique Wasserstein steepest descent and gradient flows starting at $\mu \in\P_2(\R^d)$ 
and these flows coincide.
\end{proposition}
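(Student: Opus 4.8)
The plan is to reduce everything to the classical theory of Wasserstein gradient flows for $\lambda$-convex functionals (Ambrosio--Gigli--Savar\'e) and then upgrade a gradient flow to a steepest descent flow and conversely. First I would invoke the AGS existence and uniqueness theorem: since $\F$ is proper, lsc (being locally Lipschitz continuous, hence continuous on $\P_2(\R^d)$) and $\lambda$-convex along generalized geodesics, there exists a unique Wasserstein gradient flow $\gamma$ starting at any $\mu\in\P_2(\R^d)$, and moreover along this flow the subdifferential $\partial\F(\gamma(t))$ has a unique element of minimal norm which equals $-\dot\gamma(t)$ for a.e.\ $t$; also $\F$ is differentiable along the flow with $\frac{\d}{\d t}\F(\gamma(t)) = -\|\dot\gamma(t)\|^2_{\gamma(t)}$. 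This gives the gradient flow; the work is to show it satisfies the steepest descent inclusion \eqref{wdf}, and that any steepest descent flow must coincide with it.

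The key step is the following \emph{identification of directional derivatives with the inner product against the subgradient}. For a $\lambda$-convex function $\F$, one has for every $\mu$ and every velocity plan $\zb v\in\zb V(\mu)$ the inequality $\D_{\zb v}\F(\mu)\ge \langle \xi,\zb v\rangle_\mu$ for $\xi\in\partial\F(\mu)$ (definition of the Fr\'echet subdifferential in the metric sense), and for the minimal-norm element $\xi_0$ one additionally has $\D_{\xi_0}\F(\mu) \le -\|\xi_0\|_\mu^2$ (this is where $\lambda$-convexity along generalized geodesics, rather than just convexity, is used, via the "chain-rule" / slope identity $|\partial\F|(\mu)=\|\xi_0\|_\mu$). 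Combining, the minimizer in \eqref{eq:steepest_descent_direction} of $\D_{\zb v}\F(\mu)/\|\zb v\|_\mu$ is attained exactly at (positive multiples of) $-\xi_0$, with optimal value $-\|\xi_0\|_\mu$, and then the rescaling $\bigl(\D_{\hat{\zb v}}\F(\mu)/\|\hat{\zb v}\|_\mu^2\bigr)^-\cdot \hat{\zb v}$ collapses precisely to $-\xi_0$. Hence $\nabla_-\F(\mu)=\{-\xi_0\}=\{-\text{element of minimal norm in }\partial\F(\mu)\}$, a singleton. Plugging $\mu=\gamma(t)$ and using $\dot\gamma(t)=-\xi_0(\gamma(t))$ from the AGS flow shows $\gamma$ is a Wasserstein steepest descent flow, and the singleton property forces any steepest descent flow to solve the same differential inclusion $\dot\gamma(t)=-\xi_0(\gamma(t))$, whence uniqueness and coincidence follow from uniqueness of the AGS gradient flow.

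Concretely I would carry this out in four steps: (i) recall from the appendix the precise notions of reduced Fr\'echet subdifferential, regular tangent space, and generalized geodesic $\lambda$-convexity, and state the AGS existence/uniqueness result for the gradient flow together with the minimal-norm and slope identities; (ii) prove the two-sided bound $\langle\xi_0,\zb v\rangle_\mu \le \D_{\zb v}\F(\mu)$ and $\D_{-\xi_0}\F(\mu)=-\|\xi_0\|_\mu^2$, and deduce by Cauchy--Schwarz on $\P_2(\R^d\times\R^d)$ that $\argmin_{\zb v}\D_{\zb v}\F(\mu)/\|\zb v\|_\mu$ consists exactly of positive scalings of $-\xi_0$; (iii) compute the rescaling in \eqref{eq:steepest_descent_direction} to conclude $\nabla_-\F(\mu)=\{-\xi_0\}$; (iv) conclude that the AGS gradient flow satisfies \eqref{wdf}, that it is the unique solution of $\dot\gamma=-\xi_0(\gamma)$, and hence the unique steepest descent flow, giving coincidence.

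The main obstacle I anticipate is step (ii), specifically establishing $\D_{-\xi_0}\F(\mu)\le -\|\xi_0\|_\mu^2$ with a genuine directional derivative (a one-sided limit) rather than just a $\liminf$. This requires the full strength of $\lambda$-convexity along generalized geodesics to control $\F(\gamma_{\zb v}(t))$ for small $t$ when $\zb v=-\xi_0$ is a bona fide plan and not a map-induced one (so one cannot just use geodesics and must work with the generalized-geodesic interpolation $(\pi_1+t\pi_2)_\#\zb v$); one must check that the generalized-geodesic convexity inequality applies along precisely this family and that the error term is $O(t^2)$, so that dividing by $t$ and letting $t\to 0^+$ yields the clean bound. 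A secondary technical point is ensuring the $\argmin$ in \eqref{eq:steepest_descent_direction} is nonempty — i.e.\ that the infimum $-\|\xi_0\|_\mu$ is attained — which follows because $-\xi_0\in\partial\F(\mu)$ realizes it, but one should also handle the degenerate case $\xi_0=0$ (i.e.\ $\mu$ a stationary point), where both sides of \eqref{wdf} are trivially $\{0\}$.
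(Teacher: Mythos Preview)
The paper does not give its own proof of this proposition: it is stated verbatim as a result ``from \cite{HGBS2022}'' and no argument is supplied in the present manuscript. So there is nothing in the paper to compare your proposal against; any comparison would have to be with the cited reference, which is outside the text you were given.

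That said, your outline is the natural strategy and is essentially what one expects the cited proof to do: invoke the AGS existence/uniqueness theory for $\lambda$-convex functionals to get the gradient flow with minimal-norm subgradient $\xi_0$, then identify $\nabla_-\F(\mu)$ with $\{-\xi_0\}$ via the two-sided directional-derivative bounds and Cauchy--Schwarz. One point to watch that you gloss over: in this paper the steepest descent direction is defined over velocity \emph{plans} $\zb v\in\zb V(\mu)$, whereas the reduced Fr\'echet subdifferential lives in $L_{2,\mu}$ (vector fields). Your step~(ii) writes $\langle\xi_0,\zb v\rangle_\mu$ as if this pairing is already defined; you would need to make explicit the embedding of vector fields into plans via $(\Id,\xi_0)_\#\mu$ and check that the Cauchy--Schwarz argument really forces the optimal plan to be induced by a map (namely $-\xi_0$), not a genuine multi-valued plan. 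The Remark immediately following the proposition hints at exactly this subtlety (restriction to the geometric tangent space). Your anticipated obstacle in step~(ii) about the one-sided limit is real but manageable under $\lambda$-convexity, since $t\mapsto \F(\gamma_{\zb v}(t)) - \tfrac{\lambda}{2}t^2\|\zb v\|_\mu^2$ is convex, so the difference quotient is monotone and the Dini limit exists.
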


Unfortunately, neither interaction energies nor MMD functionals with Riesz kernels \eqref{eq:riesz}
are $\lambda$-convex along generalized geodesics.

\begin{remark}
We slightly simplified the definitions in \cite{HGBS2022} as follows: 
i) The steepest descent directions $\zb v$ are originally defined to be in the so-called geometric tangent space. 
Although a formal proof is lacking, we conjecture that the minimizer $\hat{\zb v}$ in \eqref{eq:steepest_descent_direction} is always contained in the
geometric tangent space. 
ii) The original analysis uses Hadamard-directional derivatives, whose definition is stronger than the Dini-directional derivative.
However, in case of locally Lipschitz continuous functions $\mathcal F$ as, e.g., 
the discrepancy functional with the Riesz kernel for $r \in [1,2)$, 
both definitions coincide.
\end{remark}

\section{Neural Backward Scheme}\label{sec:nbs}
For computing Wasserstein gradient flows numerically, 
a backward scheme known as generalized \emph{minimizing movement scheme} \cite{G1993}, 
or \emph{Jordan-Kinderlehrer-Otto} (JKO) \emph{scheme} \cite{JKO1998} can be applied
which we explain next.
For a proper, lsc function $\F\colon \P_2(\R^d)\to (-\infty,\infty]$, $\tau>0$ and $\mu\in\P_2(\R^d)$, 
the \emph{Wasserstein proximal mapping} is the set-valued function 
\begin{equation} \label{prox}
\prox_{\tau\F}(\mu)\coloneqq \argmin_{\nu\in\P_2(\R^d)}\big\{\tfrac1{2\tau}W_2^2(\mu,\nu)+\F(\nu)\}.
\end{equation}
Note that the existence 
and uniqueness of the minimizer in \eqref{prox} is assured if
$\F$ is $\lambda$-convex along generalized geodesics,
where $\lambda > -1/\tau$ and $\mu \in \dom \F$, see Lemma 9.2.7 in \cite{AGS2005}.
\\
The \textbf{backward scheme} (JKO) starting at $\mu_\tau^0 \coloneqq \mu \in\P_2(\R^d)$ with time step size $\tau >0$
is the curve $\gamma_\tau$ 
given by $\gamma_\tau|_{((n-1)\tau,n\tau]} \coloneqq \mu_\tau^n$, $n \in \N$, 
where
\begin{equation}\label{eq:otto_curve}
\mu_\tau^n \coloneqq \prox_{\tau\F}(\mu_\tau^{n-1}).
\end{equation}
If $\F\colon \P_2(\R^d) \to (-\infty,+\infty]$ is coercive
and $\lambda$-convex along generalized geodesics,
then the JKO curves $\gamma_\tau$ starting at $\mu \in \overline{\dom \F}$
converge for $\tau \to 0$ locally uniformly to a locally Lipschitz curve $\gamma \colon (0,+\infty) \to \P_2(\R^d)$,
which is the unique Wasserstein gradient flow of $\F$ with $\gamma(0+) = \mu$, see Theorem 11.2.1 in \cite{AGS2005}. 
For a scenario with more general regular functionals, we refer to Theorem 11.3.2 in \cite{AGS2005}.

In general, Wasserstein proximal mappings are hard to compute, so
that their approximation with NNs became an interesting topic.
Most papers on neural Wasserstein gradient flows rely on the assumption 
that all $\mu_\tau^n$ arising in the JKO scheme are absolutely continuous. 
Then, by \eqref{eq:map}, the scheme simplifies to $\mu_\tau^n = {T_n}_\#\mu_\tau^{n-1}$, 
where $T_n$ is contained in
\begin{align} \label{eq:jko_abs_cont}
\argmin_{T}\Bigl\{\frac1{2\tau}\int_{\R^d}\|x-T(x)\|^2\d\mu_\tau^{n-1}(x) +\mathcal F(T_\#\mu_\tau^{n-1})\Bigr\}.
\end{align}
In \cite{ASM2022,BPKC2022,MKLGSB2021}, it was proposed to learn the transport map via its convex potential $T_n = \nabla \psi_n$ using input convex NNs, while \cite{FZTC2022} directly learned  $T_n$.
\\
Since we are interested in Wasserstein gradient flows for arbitrary measures, we extend \eqref{eq:jko_abs_cont} and the existing methods and consider the 
JKO scheme \eqref{eq:otto_curve} with plans instead of just maps , i.e.,
\begin{align}\label{eq:JKO_step_plan}
\hat{\zb\pi}\in
\argmin_{\substack{\zb \pi \in \P_2(\R^d \times \R^d) \\ {\pi_1}_{\#} \zb\pi 
= 
\mu_{\tau}^{n-1}}} &\big\{ \frac1{2\tau} \int_{\R^d \times \R^d} \Vert x - y \Vert^2 \dx \zb\pi(x,y)  \\
&+\F({\pi_2}_{\#}\zb \pi) \big\}, \quad 
\mu_\tau^n \coloneqq (\pi_2)_\#\hat{\zb \pi}
. \nonumber
\end{align}
We can describe such a plan $\hat{\pi}$ by a Markov kernel, as we will see in the next lemma.
\begin{lemma} \label{lem:disintegration}
For a measure $\mu \in \P_2 (\R^d)$ the following equality holds
{\small
\begin{align*}
&\{ \pi \in \P_2(\R^d \times \R^d) : {\pi_1}_{\#}\pi = \mu \} \\
&\quad= \{ \mu \times \pi_x : \pi_x ~\text{is a Markov kernel}  \} \\
&\quad= \{ \mu \times (\mathcal{T}(x,\cdot)_{\#}P_Z) : \mathcal{T}\colon\R^d \times \R^d \to \R^d ~\text{measurable}  \}.
\end{align*}
}
\end{lemma}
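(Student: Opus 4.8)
The plan is to establish the two equalities in turn. The first one, $\{\pi\in\P_2(\R^d\times\R^d):(\pi_1)_\#\pi=\mu\}=\{\mu\times\pi_x:\pi_x\ \text{Markov kernel}\}$, is just the disintegration theorem. The inclusion ``$\supseteq$'' is immediate: for a Markov kernel $\pi_x$ the measure $\mu\times\pi_x$, defined by $(\mu\times\pi_x)(A\times B)=\int_A\pi_x(B)\,\d\mu(x)$, is a probability measure on $\R^d\times\R^d$ with first marginal $\mu$; here and below all three sets are tacitly restricted by the common second-moment condition $\iint\|y\|^2\,\d\pi_x(y)\,\d\mu(x)<\infty$, which together with $\mu\in\P_2(\R^d)$ is exactly what makes $\mu\times\pi_x\in\P_2(\R^d\times\R^d)$. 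For ``$\subseteq$'', since $\R^d$ is Polish, every $\pi$ with $(\pi_1)_\#\pi=\mu$ admits a $\mu$-a.e.\ unique measurable family $(\pi_x)_{x\in\R^d}$ of probability measures with $\pi=\mu\times\pi_x$; see \cite{AGS2005}.

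For the second equality, the inclusion $\{\mathcal T(x,\cdot)_\#P_Z:\mathcal T\ \text{measurable}\}\subseteq\{\text{Markov kernels}\}$ is elementary: if $\mathcal T\colon\R^d\times\R^d\to\R^d$ is jointly measurable, then for each $x$ the set function $B\mapsto(\mathcal T(x,\cdot)_\#P_Z)(B)=\int\mathbf 1_{\{\mathcal T(x,z)\in B\}}\,\d P_Z(z)$ is a probability measure, and for fixed Borel $B$ it is measurable in $x$ by Tonelli's theorem applied to the jointly measurable integrand; hence $x\mapsto\mathcal T(x,\cdot)_\#P_Z$ is a Markov kernel.

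The nontrivial inclusion is the representation of an \emph{arbitrary} Markov kernel $\pi_x$ as $\mathcal T(x,\cdot)_\#P_Z$ with $\mathcal T$ jointly measurable; this is a measurable (parametrized) version of the inverse-transform / noise-outsourcing construction and is the heart of the lemma. I would fix a Borel isomorphism $\varphi\colon\R^d\to(0,1)$ and, using that the latent measure $P_Z$ is atomless, a measurable $S\colon\R^d\to(0,1)$ with $S_\#P_Z=\mathcal U$, the uniform law on $(0,1)$ (e.g.\ $S=G\circ\varphi$ with $G$ the continuous CDF of $\varphi_\#P_Z$). Setting $\tilde\pi_x\coloneqq\varphi_\#\pi_x$, $F_x(t)\coloneqq\tilde\pi_x((0,t])$ and $F_x^{-1}(u)\coloneqq\inf\{t\in(0,1):F_x(t)\ge u\}$, one has $F_x^{-1}(\cdot)_\#\mathcal U=\tilde\pi_x$ for every $x$, and $\mathcal T(x,z)\coloneqq\varphi^{-1}\big(F_x^{-1}(S(z))\big)$ then satisfies $\mathcal T(x,\cdot)_\#P_Z=\varphi^{-1}_\#\big(F_x^{-1}(\cdot)_\#\mathcal U\big)=\varphi^{-1}_\#\tilde\pi_x=\pi_x$.

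The main obstacle is the joint measurability of the parametrized quantile map $(x,u)\mapsto F_x^{-1}(u)$. This I would get from the identity $\{(x,u):F_x^{-1}(u)>t\}=\{(x,u):u>F_x(t)\}$, valid by right-continuity and monotonicity of $F_x$, together with the fact that $x\mapsto F_x(t)=\tilde\pi_x((0,t])$ is measurable because $\tilde\pi_x$ is a Markov kernel; more generally, a map that is monotone in one variable and measurable in the other is jointly measurable. Everything else — the passage through $\varphi$, checking $F_x^{-1}(u)\in(0,1)$ for $u\in(0,1)$, and tracking the second-moment condition, which is the same on all three sides — is routine bookkeeping.
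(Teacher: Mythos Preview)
Your argument is correct, and in fact more complete than the paper's. The paper's proof is two lines: the first equality is attributed to the disintegration theorem (as you do), and the second equality is attributed to \emph{Brenier's theorem}, the idea being that for each fixed $x$ the absolutely continuous latent law $P_Z$ admits a transport map onto $\pi_x$.

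The routes differ at the second equality. The paper exploits that $P_Z$ is a standard Gaussian on $\R^d$ and invokes Brenier to produce, for each $x$, an optimal map $T_x=\nabla\psi_x$ with $(T_x)_\#P_Z=\pi_x$; it then sets $\mathcal T(x,z)=T_x(z)$. You instead run the classical noise-outsourcing construction: pass to $(0,1)$ via a Borel isomorphism, build the parametrized quantile map, and pull back. Your approach is more elementary---it uses only that $P_Z$ is atomless and that the spaces are standard Borel---and, crucially, it addresses the \emph{joint} measurability of $\mathcal T$ head-on via the identity $\{F_x^{-1}(u)>t\}=\{u>F_x(t)\}$. The paper's one-line appeal to Brenier does not discuss why the family $x\mapsto T_x$ can be chosen measurably; this is true (e.g.\ by stability of Brenier maps or a measurable-selection argument), but it is precisely the nontrivial point, and you handle it explicitly. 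Your remark that all three sets carry the same implicit second-moment constraint is also well taken; the paper leaves this unsaid.
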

\begin{proof}
The first equality directly follows from the disintegration theorem \ref{thm:disintegration} and the second equality follows from Brenier's theorem \cite{Brenier1987}.
\end{proof}
Details towards the disintegration can be found in Appendix~\ref{app:disintegration}.
By Lemma~\ref{lem:disintegration} we can rewrite \eqref{eq:JKO_step_plan} to
{\small
\begin{align*}
\hat{\mathcal{T}}\in
\argmin_{\mathcal{T}} &\big\{ \frac1{2\tau} \int_{\R^d \times \R^d} \Vert x - y \Vert^2 \dx (\mathcal{T}(x,\cdot)_{\#}P_Z)(y) \dx \mu_{\tau}^{n-1} (x)  \\
&+\F({\pi_2}_{\#}(\mu_{\tau}^{n-1} \times \mathcal{T}(x,\cdot)_{\#}P_Z)) \big\}.
\end{align*}
}
Reformulating the pushforward measures we finally obtain
\begin{align} \label{eq:JKO_step_NN}
 \hat{\mathcal{T}}\in
\argmin_{\mathcal{T}} &\big\{ \frac1{2\tau} \int_{\R^d \times \R^d} \Vert x - \mathcal{T}(x,z) \Vert^2 \dx P_Z(z) \dx \mu_{\tau}^{n-1} (x) \nonumber \\
&+\F({\mathcal{T}}_{\#}(\mu_{\tau}^{n-1} \times P_Z)) \big\}.
\end{align}
Now we propose to parameterize the map $\mathcal{T}$ by a NN $\mathcal{T}_\theta(x,\cdot) \colon\R^d\times\R^d\to\R^d$ 
for a standard Gaussian latent distribution $P_Z \sim \mathcal N(0,\Id_d)$. 
We learn the NN  using the sampled function in \eqref{eq:JKO_step_NN} as loss function, see Alg.~\ref{alg:backward}.

For this, it is essential that the function $\mathcal F$ can be approximated by samples as in \eqref{eq:foi}.
In summary, we obtain the sample-based approximation of the JKO scheme outlined in Alg.~\ref{alg:backward}, which we call 
\emph{neural backward scheme}.

\begin{algorithm}
\begin{algorithmic}
\STATE {\bfseries Input:} Samples $x_1^0,...,x_N^0$ from $\mu_\tau^0$ and $\mathcal F$ in \eqref{eq:foi}.
\FOR{$n=1,2,...$}
\STATE - Learn a Markov kernel ${\zb \pi}^n_{x}(\cdot)={\mathcal{T}^n_\theta (x,\cdot)}_{\#}P_Z$ 
\STATE \hspace{0.2cm} by minimizing
\begin{align*}
\mathcal{L}(\theta) \coloneqq \E_{z_i\sim P_Z}\Big[&\frac{1}{2\tau N} \sum_{i=1}^N \Vert x_i^{n-1} 
- \mathcal{T}_{\theta} (x_i^{n-1},z_i) \Vert^2 \\
&+ \frac1N\sum_{i=1}^N G(\mathcal{T}_{\theta} (x_i^{n-1},z_i))\Big].
\end{align*}
\STATE - Sample $x_i^n$ from ${\zb \pi}_{x_i^{n-1}}$, i.e., draw $z_i\sim P_Z$ and \STATE \hspace{0.2cm} set $x_i^n\coloneqq \mathcal T_\theta(x_i^{n-1},z_i)$.
\STATE - Approximate $\mu_\tau^n\coloneqq\frac1N\sum_{i=1}^N\delta_{x_i^n}$.
\ENDFOR
\end{algorithmic}
\caption{{\bf Neural backward scheme}}
\label{alg:backward}
\end{algorithm}
\section{Neural Forward Scheme}\label{sec:forward}
For computing Wasserstein steepest descent flows numerically, we propose a time discretization by
an Euler forward scheme. 
\\
The \textbf{forward scheme} starting at $\mu_\tau^0 \coloneqq \mu \in\P_2(\R^d)$ with time step size $\tau$ is 
the curve $\gamma_\tau$ given by $\gamma_\tau|_{((n-1)\tau,n\tau]}=\mu_\tau^n$
with
\begin{equation}\label{eq:forward_cont}
\mu_\tau ^n \coloneqq \gamma_{\zb v^{n-1}}(\tau),\quad\text{where}\quad \zb v^{n-1}\in\nabla_-\F(\mu_\tau^{n-1}).
\end{equation}
The hard part consists in the computation of the velocity plans $\zb v^{n-1}$ which requires to solve the minimization problem
$\hat{\zb v}^{n-1} \in\argmin_{\zb v\in\zb V(\mu_\tau^{n-1})}\D_{\zb v}\F(\mu_\tau^{n-1})/\|\zb v\|_{\mu_\tau^{n-1}}$
in \eqref{eq:steepest_descent_direction}. For approximating these plans,
we use again the disintegration $\zb v=\mu\times \zb v_x$ with respect to $\mu$ with Markov kernel $\zb v_x$. 
We propose to parameterize the Markov kernel $\zb v_x$ by a NN $\mathcal{T}_\theta(x,\cdot) \colon\R^d\times\R^d\to\R^d$ 
via $\zb v_x={\mathcal{T}_\theta(x,\cdot)}_{\#}P_Z$ for a standard Gaussian latent distribution $P_Z$.
Using again the form of $\F$ in \eqref{eq:foi}, we learn the network by minimizing the loss function
\begin{equation}\label{eq:sdd_loss}
\mathcal L(\theta) = \frac{\E_{z_i\sim P_Z}\big[\frac1N\sum_{i=1}^N \nabla_{\mathcal T_\theta(x_i,z_i)} 
G(x_i)\big]}{\bigl(\E_{z_i\sim P_Z}\big[\frac1N\sum_{i=1}^N \|\mathcal T_\theta(x_i,z_i)\|^2\big]\bigr)^{1/2}} 
\approx \frac{\D_{\zb v}\F(\mu)}{\|\zb v\|_\mu},
\end{equation}
where the $x_i$, $i=1,...,N$ are samples from $\mu$ and
the above approximation of $\D_{\zb v}\F(\mu)$ follows from 
{\small
\begin{align*}
&\D_{\zb v}\F(\mu)=\lim_{t\to 0+} \tfrac1t\bigr(\F(\gamma_{\mu\times \zb v_x}(t))-\F(\mu)\bigr)\\
&=\lim_{t\to 0+} \int\tfrac1t G(x)\d (\pi_1+t\pi_2)_\#(\mu\times \zb v_x)(x) 
-\int\tfrac1t G(x)\d\mu(x)\\
&=\lim_{t\to0+}\int \tfrac1t (G(x+ty)-G(x))\d \zb v_x(y)\d \mu(x)\\
&=\int \nabla_y G(x)\d \zb v_x(y)\d \mu(x). 
\end{align*}
}
Here $\nabla_yG(x)\coloneqq\lim_{t\to0+}\frac{G(x+ty)-G(x)}{t}$ 
denotes the right-sided directional derivative of $G$ at $x$ in direction $y$,
which can be computed by the forward-mode of algorithmic differentiation.
By \eqref{eq:steepest_descent_direction}, we need the rescaling 
\begin{equation} \label{eq:steepest_descent_kernel}
\mathcal T_{\theta,-}(x,z) = \left( \D_{\hat{\zb v}}\F(\mu)/\|\hat{\zb v}\|_\mu^2 \right)^- \mathcal T_\theta(x,z),
\end{equation}
where $\left( \D_{\hat{\zb v}}\F(\mu)/\|\hat{\zb v}\|_\mu^2 \right)^-$ is discretized as in the second formula in
\eqref{eq:sdd_loss}.
Finally,  the steepest descent direction $\zb v^{n-1}$ is given by
$$
\zb v^{n-1} = \mu_\tau^{n-1} \times \mathcal T_{\theta,-}(x,\cdot)_\# P_Z.
$$
In summary, the explicit Euler scheme of Wasserstein steepest descent flows can be implemented as in Alg.~\ref{alg:forward},
which we call \emph{neural forward scheme}.

\begin{remark}
In the case that all involved measures are absolutely continuous, it was shown in \cite{AGS2005}, Theorem 12.4.4, that the geometric tangent space can be fully described by velocity fields.
Then, we can use maps instead of plans for approximating the steepest descent direction and thus simplify the neural forward scheme. This might increase the approximation power of the neural forward scheme in high dimensions.
\end{remark}
\begin{algorithm}[ht]
\begin{algorithmic}
\STATE {\bfseries Input:} Samples $x_1^0,...,x_N^0$ from $\mu_\tau^0$ and $\mathcal F$ in \eqref{eq:foi}.
\FOR{$n=1,2,...$}
\STATE - Learn ${\mathcal T}_\theta^{n-1}$ by minimizing the loss function \eqref{eq:sdd_loss}.
\STATE - Compute the network ${\mathcal T}_{\theta,-}^{n-1}(x,z)$ from \eqref{eq:steepest_descent_kernel} by
$$
\left(\frac{\E_{z_i\sim P_Z}\big[\frac1N\sum_{i=1}^N \nabla_{\mathcal T_\theta(x_i,z_i)} 
G(x_i)\big]}{\E_{z_i\sim P_Z}\big[\frac1N\sum_{i=1}^N \|\mathcal T_\theta(x_i,z_i)\|^2\big]}\right)^- \mathcal T_\theta(x,z).
$$
\STATE - Apply an explicit Euler step by computing for each $i$,
$$
x_i^{n}=x_i^{n-1}+\tau \mathcal T_{\theta,-}^{n-1}(x_i^{n-1},z_i),\quad z_i\sim P_Z.
$$
\STATE - Approximate $\mu_\tau^n\coloneqq\frac1N\sum_{i=1}^N\delta_{x_i^n}$.
\ENDFOR
\end{algorithmic}
\caption{{\bf Neural forward scheme}}
\label{alg:forward}
\end{algorithm}

\section{Flows for the Interaction Energy}\label{sec:theory}

\begin{figure*}[t!]
\centering
\begin{subfigure}{0.25\textwidth}
  \centering
  \includegraphics[width=\linewidth]{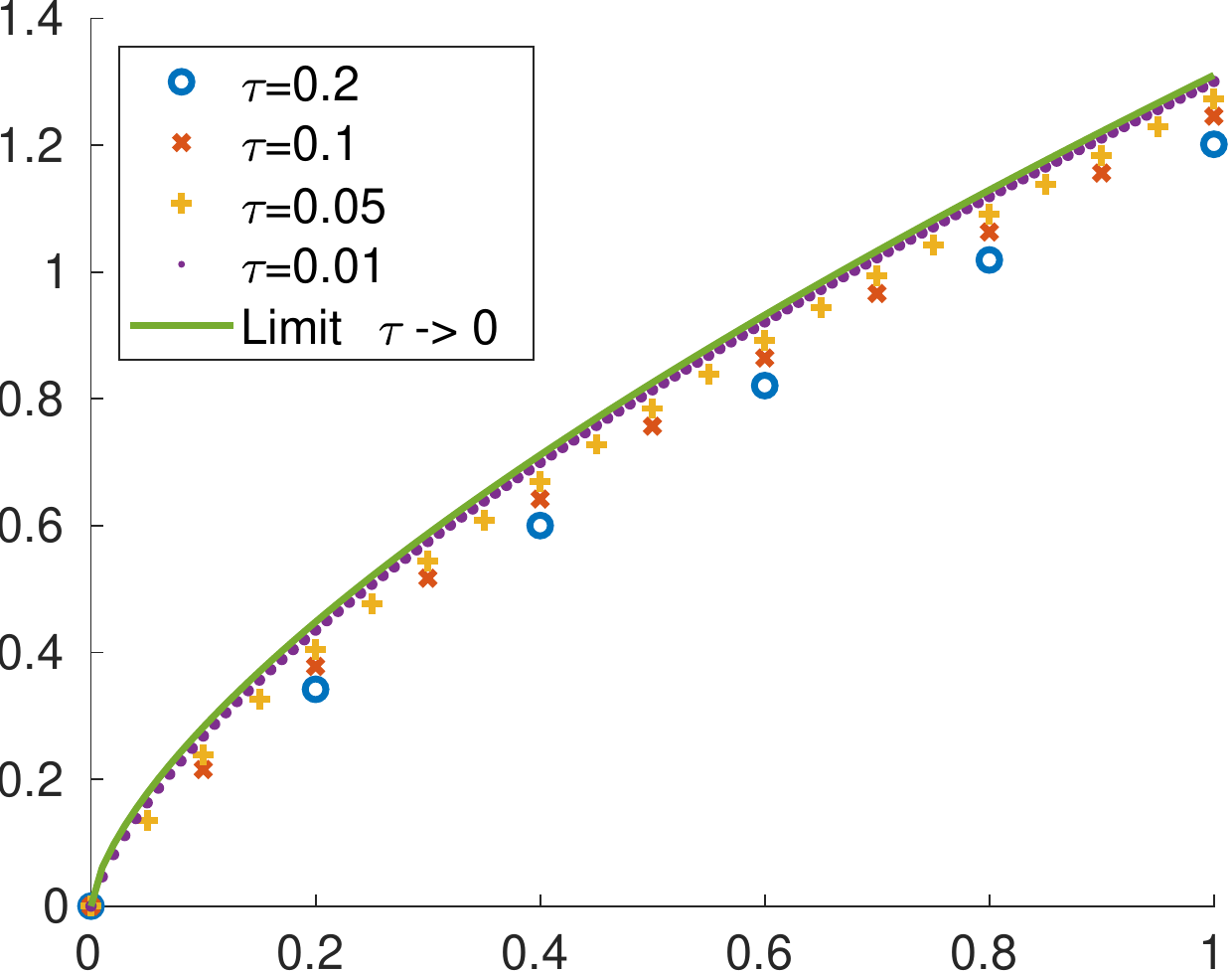}
  \caption{$r=0.5$, \quad $f_\tau(n\tau) < f(n\tau)$ } 
\end{subfigure}
\hspace{8mm}
\begin{subfigure}{0.25\textwidth}
  \centering
  \includegraphics[width=\linewidth]{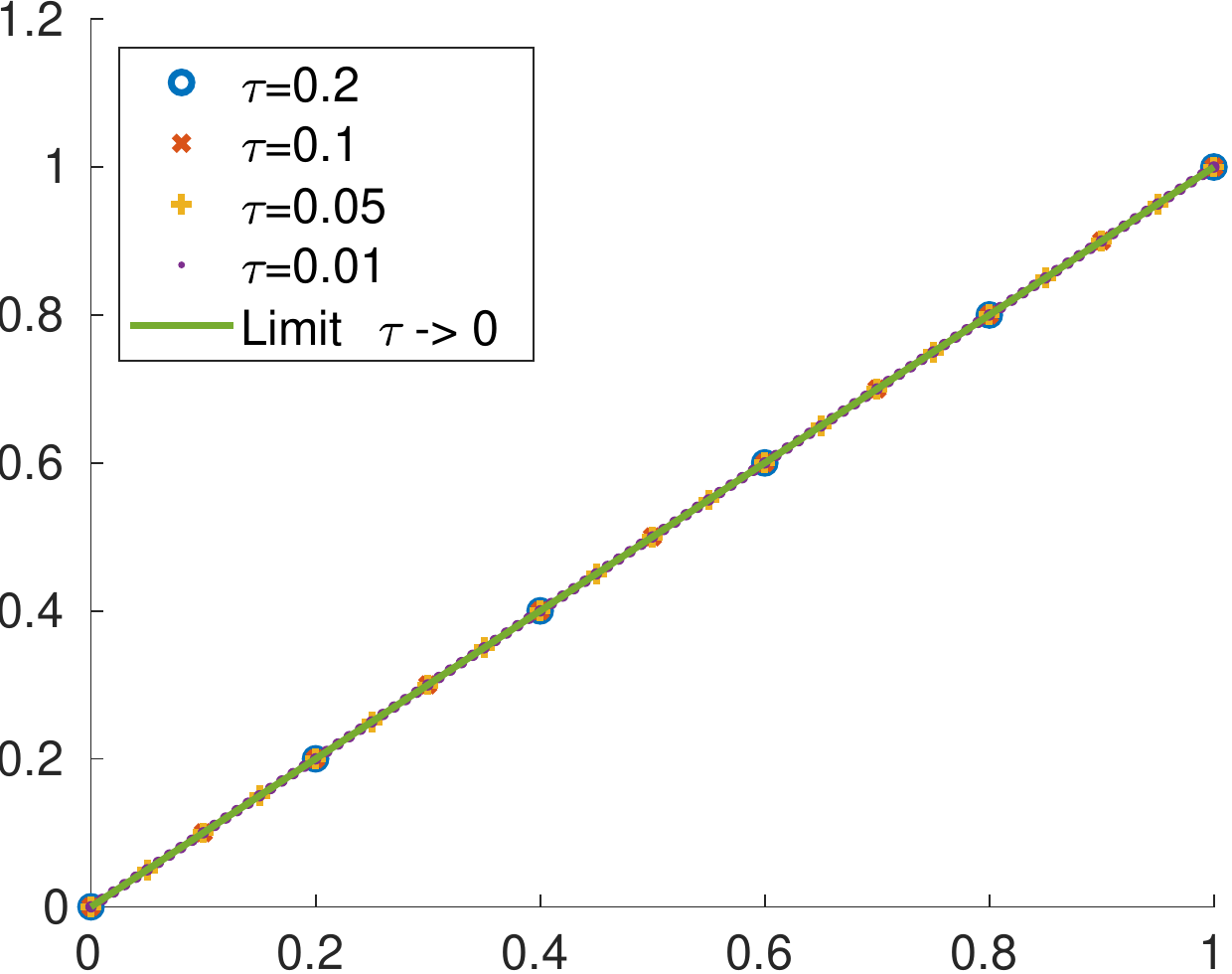}
  \caption{$r=1$, \quad $f_\tau(n\tau) = f(n\tau)$}
\end{subfigure}
\hspace{8mm}
\begin{subfigure}{0.25\textwidth}
  \centering
  \includegraphics[width=\linewidth]{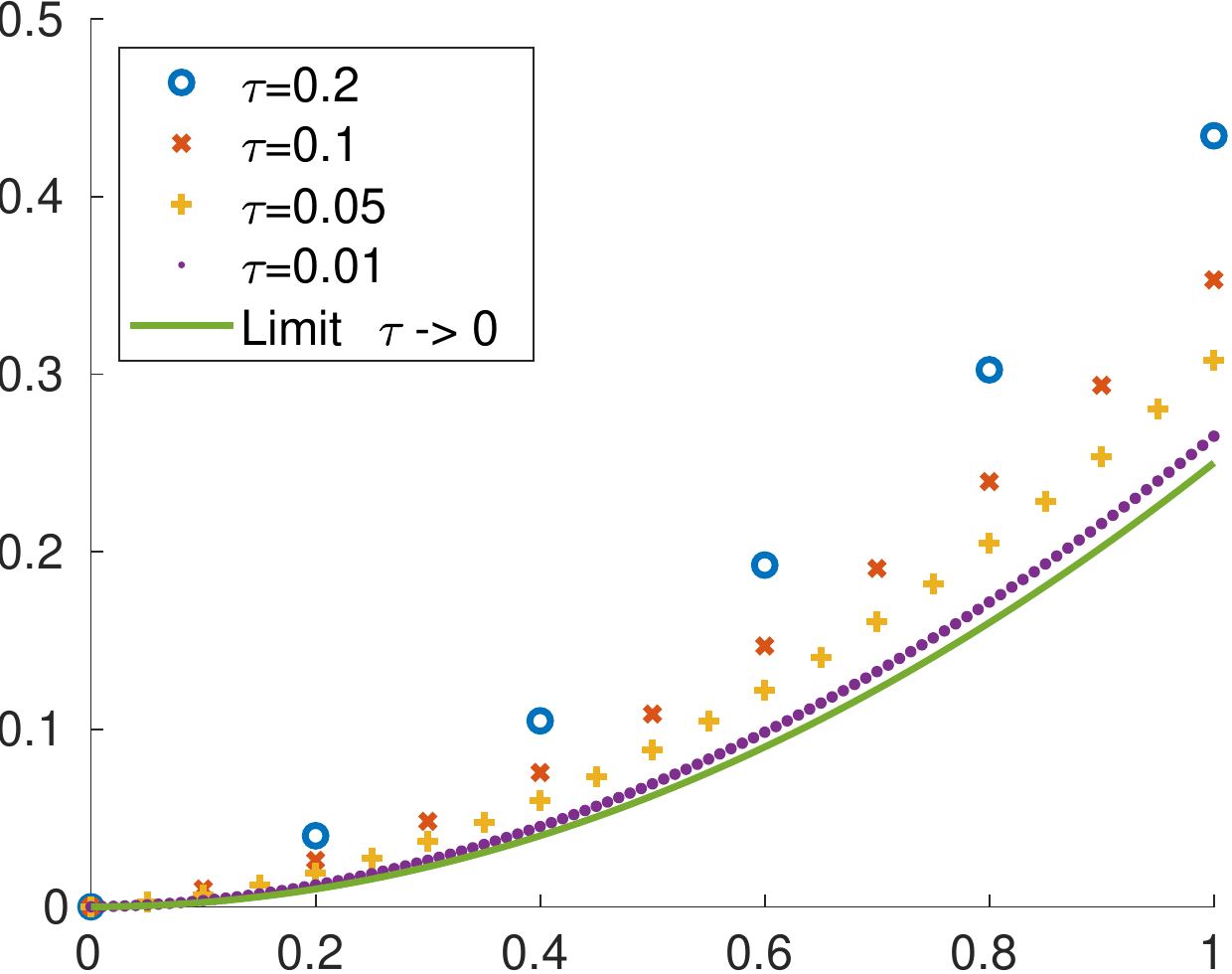}
  \caption{$r=1.5$, \quad $f_\tau(n\tau) > f(n\tau)$} 
\end{subfigure}%
\caption{
Visualization of the different convergence behavior $\gamma_\tau \to \gamma$ as $\tau \to 0$ 
in Theorem \ref{thm:jko-inter-flow}
via  $f_\tau(n\tau) \to f(n\tau)$, $n= 0,1,\ldots$ in Remark \ref{rem:vis} for $\F = \mathcal E_K$ and Riesz kernels with  $r \in \{0.5,1,1.5\}$.
}
\label{fig:convergence_JKO}
\end{figure*}

For evaluating the performance of our neural schemes, 
we examine the Wasserstein flows of the interaction energy with Riesz kernels 
starting at a Dirac measure.
We will provide analytical formulas for the steps in the backward and forward schemes
and prove convergence of the schemes to the respective curves.
In particular, we will see for the negative distance kernel the following:
in two dimensions, the Wasserstein flow $\gamma(t)$, $t > 0$ starting at $\delta_0$ 
becomes an absolutely continuous measure supported on the  ball 
$t \frac{\pi}{4}\mathbb B^2$ with increasing density towards its boundary.
In contrast, in three dimensions, the flow becomes 
uniformly distributed on the 2-sphere $t \frac23 \mathbb S^2$, 
i.e., it ''condensates'' on the surface of the ball $t \frac23 \mathbb B^3$.
\\
The following analytical formula for the Wasserstein proximal mapping at 
a Dirac measure was proven in \cite{HGBS2022} 
based on partial results in \cite{CaHu17,GuCaOl21,ChSaWo22b}.
Let $\mathcal U_A$ denote the uniform distribution on $A$.

\begin{theorem}  \label{thm:HD-spec}
Let $K$ be a Riesz kernel
with $r \in (0,2)$.
Then 
$\prox_{\tau\mathcal E_K}(\delta_0)= (\tau^{1/(2-r)} \mathrm{Id})_\#\eta^*$, where\\
$\eta^* \coloneqq\prox_{\mathcal E_K}(\delta_0)$
is given as follows:
 \begin{enumerate}[\upshape(i)]
\item For $d+r< 4$, it holds
  \[\eta^* = \rho_{s} \mathcal U_{s \mathbb B^d}, \quad
    \rho_s(x) \coloneqq A_s \, (s^2 - \|x\|_2^2)^{1-\frac{r+d}{2}},
    \]
     where $x \in s \mathbb B^d$ and
\begin{align*}
   A_s \coloneqq 
	\tfrac{\Gamma\left(\frac{d}{2} \right) s^{-(2-r)}}{\pi^{\frac{d}{2}} B
 \left(\frac{d}{2},2-\frac{r+d}{2}\right)} ,
   s \coloneqq  \left( \tfrac{\Gamma(2-\frac{r}{2}) \, \Gamma(\frac{d+r}{2}) \,r \, }{\frac{d}{2} \,\Gamma(\frac{d}{2})} \right)^{\frac{1}{2-r}}
\end{align*}
  with the Beta function $B$ and the Gamma function $\Gamma$.
  \item For $d+r \ge 4$, it holds
  \[\eta^*= \mathcal U_{c \mathbb S^{d-1}}, \;
  c \coloneqq \big(\tfrac{r}2 \, {_2F_1}\big(-\tfrac{r}{2},\tfrac{2-r-d}{2};\tfrac{d}{2};1\big)\big)^{1/(2-r)}
  \]
  with the hypergeometric function ${_2F_1}$.
  \end{enumerate}
\end{theorem}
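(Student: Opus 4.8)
The plan is to proceed in three stages: a scaling reduction to the case $\tau=1$, a reduction of $\prox_{\mathcal E_K}(\delta_0)$ to an Euler--Lagrange problem for an energy functional on $\P_2(\R^d)$, and the explicit verification that the two candidate measures solve it. For the scaling step, note that in \eqref{prox} with $\mu=\delta_0$ the two terms have complementary homogeneities: for $\nu=(\lambda\Id)_\#\eta$ with $\lambda>0$ one has $W_2^2(\delta_0,\nu)=\int_{\R^d}\|x\|^2\d\nu(x)=\lambda^2 W_2^2(\delta_0,\eta)$ by \eqref{eq:W2}, while $\mathcal E_K(\nu)=\lambda^r\mathcal E_K(\eta)$ by \eqref{eq:riesz}. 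Hence $\tfrac1{2\tau}W_2^2(\delta_0,\nu)+\mathcal E_K(\nu)=\tau^{r/(2-r)}\big(\tfrac12 W_2^2(\delta_0,\eta)+\mathcal E_K(\eta)\big)$ for the choice $\lambda=\tau^{1/(2-r)}$, and since $\eta\mapsto(\lambda\Id)_\#\eta$ is a bijection of $\P_2(\R^d)$ this already gives $\prox_{\tau\mathcal E_K}(\delta_0)=(\tau^{1/(2-r)}\Id)_\#\eta^*$ with $\eta^*=\argmin_{\nu\in\P_2(\R^d)}\{\tfrac12\int_{\R^d}\|x\|^2\d\nu(x)+\mathcal E_K(\nu)\}$; it remains to identify $\eta^*$.

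Set $F(\nu)\coloneqq\tfrac12\int_{\R^d}\|x\|^2\d\nu(x)+\mathcal E_K(\nu)$ and write $U^{\eta}(x)\coloneqq-\int_{\R^d}\|x-y\|^r\d\eta(y)$ for the Riesz potential. The confinement term is linear in $\nu$, and $\mathcal E_K$ is strictly convex along linear interpolations of probability measures because $-\|x-y\|^r$ is strictly conditionally positive definite for $r\in(0,2)$ — precisely the property that makes $\mathcal D_K$ a metric. Together with coercivity of $F$ (the $m_2^{r/2}$ growth of $|\mathcal E_K|$ is dominated by the $m_2$ growth of the confinement term, $r/2<1$) this yields a unique minimiser $\eta^*$, and the first variation of $F$ at $\nu$ is the effective potential $\Phi_\nu(x)\coloneqq U^{\nu}(x)+\tfrac12\|x\|^2$. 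By convexity, $\eta^*$ is the minimiser if and only if there is a constant $C$ with $\Phi_{\eta^*}\equiv C$ on $\supp\eta^*$ and $\Phi_{\eta^*}\ge C$ on all of $\R^d$. Thus the whole statement reduces to checking this pair of conditions for the candidate in each regime.

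For $d+r<4$ the candidate is $\eta^*=\rho_s\,\mathcal U_{s\mathbb B^d}$. Here one computes $U^{\eta^*}(x)$ through the classical beta-integral evaluation of the Riesz potential of a radial power-density on a ball: the exponent $1-\tfrac{r+d}{2}$ is $>-1$ exactly when $d+r<4$ (so that $\rho_s$ is integrable), and for this exponent the potential is, on $s\mathbb B^d$, an affine function of $\|x\|_2^2$ and stays below its affine continuation on $\R^d\setminus s\mathbb B^d$. Requiring the $\|x\|_2^2$-coefficient of $\Phi_{\eta^*}$ on the ball to vanish forces the stated value of $s$, and normalising $\eta^*$ to unit mass forces $A_s$; then $\Phi_{\eta^*}$ is constant on $s\mathbb B^d$ and $\ge C$ outside, as required. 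For $d+r\ge 4$ the density $\rho_s$ is no longer integrable, the mass ``condensates'', and the candidate is $\eta^*=\mathcal U_{c\mathbb S^{d-1}}$. Its potential $U^{\eta^*}(x)=-\int_{c\mathbb S^{d-1}}\|x-y\|^r\d\mathcal U_{c\mathbb S^{d-1}}(y)$ is radial and is evaluated by the Funk--Hecke/Gegenbauer formula as a ${_2F_1}$ in $\|x\|_2^2/c^2$; writing $\phi(\rho)$ for $\Phi_{\eta^*}(x)$ with $\|x\|_2=\rho$, radial symmetry makes $\phi$ constant on the support automatically, so the remaining requirement $\Phi_{\eta^*}\ge C$ is that $\rho=c$ is a global minimiser of $\phi$; the stationarity $\phi'(c)=0$, combined with Gauss's evaluation ${_2F_1}(\alpha,\beta;\gamma;1)=\tfrac{\Gamma(\gamma)\Gamma(\gamma-\alpha-\beta)}{\Gamma(\gamma-\alpha)\Gamma(\gamma-\beta)}$, produces the stated value of $c$, and a sign analysis of $\phi'$ upgrades it to a global minimum. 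The explicit potential identities used in both cases are the ``partial results'' drawn from \cite{CaHu17,GuCaOl21,ChSaWo22b}.

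The main obstacle is exactly the inequality half of the Euler--Lagrange condition, $\Phi_{\eta^*}\ge C$ off the support: equality on the support drops out of the potential computation, but showing that the ball potential remains below its quadratic continuation outside $s\mathbb B^d$, and that the stationary radius $c$ is a \emph{global} rather than merely local minimiser of $\phi$, requires the precise closed forms of the potentials. Collecting the Gamma- and Beta-function factors into the exact constants $A_s$, $s$, and $c$ is then a routine but lengthy bookkeeping, which I would not carry out in detail here.
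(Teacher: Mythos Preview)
The paper does not actually prove Theorem~\ref{thm:HD-spec}: it is stated with the attribution ``proven in \cite{HGBS2022} based on partial results in \cite{CaHu17,GuCaOl21,ChSaWo22b}'' and no argument is given in the text or appendix. So there is no in-paper proof to compare against; your sketch is in effect a reconstruction of what such a proof must look like.

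That reconstruction is sound. The scaling reduction is correct --- the key observation that $W_2^2(\delta_0,\nu)=\int\|x\|^2\d\nu$ is \emph{linear} in $\nu$ (because $\delta_0$ has a unique coupling with anything) is exactly what makes the problem a standard linearly-constrained minimisation of a strictly convex functional, so the Euler--Lagrange obstacle condition $\Phi_{\eta^*}=C$ on $\supp\eta^*$, $\Phi_{\eta^*}\ge C$ elsewhere, is both necessary and sufficient. Your coercivity bound $|\mathcal E_K(\nu)|\lesssim m_2(\nu)^{r/2}$ with $r/2<1$ is also right. The split into the two regimes via integrability of $\rho_s$ is the correct mechanism, and you are honest that the genuine work --- the closed-form evaluation of the Riesz potential of the power-law density on the ball and of the uniform measure on the sphere, and especially the global inequality off the support --- lives in the cited references. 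This is precisely how the result is assembled in \cite{HGBS2022}, so your outline matches the intended route; you have simply not reproduced the special-function computations, which is appropriate given that the present paper does not either.
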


Now the steps of the JKO scheme and its limit curve are given by the following theorem, which
we prove in Appendix~\ref{app:JKO_convergence}.

\begin{theorem}\label{thm:jko-inter-flow}
Let $K$ be a Riesz kernel with $r \in (0,2)$,
$\F \coloneqq \mathcal E_K$  and $\eta^*\coloneqq\prox_{\mathcal E_K}(\delta_0)$. 
\begin{enumerate} [\upshape(i)]
\item 
Then, the measures $\mu_\tau^n$ from the JKO scheme \eqref{eq:otto_curve} starting at $\mu_\tau^0=\delta_0$ 
are given by
\begin{equation*}
\mu_\tau^n=\big(t_{\tau,n}^{\frac{1}{2-r}}\Id\big)_\#\eta^*,
\end{equation*}
where $t_{\tau,0}=0$ and $t_{\tau,n}$, $n \in \N$,
is the unique positive zero of the function
$
t \mapsto t_{\tau,n-1}^{\frac{1}{2-r}}t^{\frac{1-r}{2-r}}-t+\tau.
$
In particular, we have for $r=1$ that $t_{\tau,n} = n \tau$.
\item
The associated curves $\gamma_\tau$ in \eqref{eq:otto_curve} converge for $\tau\to 0$ locally uniformly 
to the curve $\gamma\colon[0,\infty)\to\P_2(\R^d)$ given by
\begin{align*}
    \gamma(t) \coloneqq ((t (2-r))^{\frac{1}{2-r}} Id)_{\#} \eta^*.
\end{align*}
In particular, we have for $r=1$ that $\gamma(t) = (t \Id)_{\#} \eta^*$.
\end{enumerate}
\end{theorem}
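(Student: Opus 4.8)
The plan is to exploit the scaling structure of the interaction energy under Riesz kernels, which reduces the entire JKO iteration to a one-dimensional recursion in a scaling parameter. Observe that $\mathcal E_K$ is homogeneous under dilations: if $D_\lambda(x) \coloneqq \lambda x$, then $\mathcal E_K((D_\lambda)_\#\eta) = \lambda^r \mathcal E_K(\eta)$, because $K(\lambda x,\lambda y) = -\lambda^r\|x-y\|^r$. Likewise $W_2^2((D_\lambda)_\#\eta,(D_\mu)_\#\eta') $ involves a clean rescaling when the two measures are dilates of a common measure centered at the origin. Using Theorem~\ref{thm:HD-spec}, which already identifies $\prox_{\tau\mathcal E_K}(\delta_0) = (\tau^{1/(2-r)}\Id)_\#\eta^*$, the natural ansatz is that every $\mu_\tau^n$ is again a dilate $(s_n \Id)_\#\eta^*$ of the fixed profile $\eta^*$. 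The first step is therefore to prove, by induction on $n$, that $\prox_{\tau\mathcal E_K}$ maps dilates of $\eta^*$ to dilates of $\eta^*$; the base case $n=1$ is exactly Theorem~\ref{thm:HD-spec}, and for the inductive step one substitutes the ansatz $\nu = (s\Id)_\#\eta^*$ into the proximal objective $\tfrac1{2\tau}W_2^2(\mu_\tau^{n-1},\nu) + \mathcal E_K(\nu)$, where $\mu_\tau^{n-1} = (s_{n-1}\Id)_\#\eta^*$. The key subclaim here is that the $W_2$-optimal coupling between two dilates of $\eta^*$ is the obvious one (the identity plan pushed through the two dilations), which follows since $\eta^*$ is radially symmetric and Brenier's theorem gives the optimal map as a gradient of a convex potential — here $x\mapsto (s/s_{n-1})x$ works. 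This turns the minimization into a scalar problem over $s$.

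Carrying out that scalar minimization is the second step. Writing $s = t^{1/(2-r)}$ and $s_{n-1} = t_{\tau,n-1}^{1/(2-r)}$, the objective becomes (up to the constant $\mathcal E_K(\eta^*)$-type factor from $\prox_{\mathcal E_K}(\delta_0)$ and the second-moment normalization of $\eta^*$) a function of $t$ whose stationarity condition, after differentiation and simplification, is precisely $t_{\tau,n-1}^{\frac{1}{2-r}}t^{\frac{1-r}{2-r}} - t + \tau = 0$. One then checks that this function of $t$ (with $t_{\tau,n-1}\ge 0$ fixed) has a unique positive zero: it equals $\tau>0$ at $t=0$, tends to $-\infty$ as $t\to\infty$, and is strictly concave in $t$ for $r\in(0,2)$ (since $\frac{1-r}{2-r}<1$), so it is strictly decreasing past its maximum, giving exactly one sign change. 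The normalization constants from Theorem~\ref{thm:HD-spec} — in particular the definition of $s$ and the fact that $\int\|x\|^2\,\d\eta^*(x)$ is pinned down by the $\tau=1$ case — are what make the coefficients come out to exactly $t_{\tau,n-1}^{1/(2-r)}$ and $1$ and $\tau$; verifying this bookkeeping is routine but must be done carefully. The special case $r=1$ collapses the middle term to $t_{\tau,n-1}^{1}\cdot t^{0} = t_{\tau,n-1}$, so the equation is $t_{\tau,n-1} - t + \tau = 0$, i.e. $t_{\tau,n} = t_{\tau,n-1}+\tau$, hence $t_{\tau,n}=n\tau$.

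For part (ii), the plan is to show $t_{\tau,\lfloor s/\tau\rfloor} \to (2-r)s$ as $\tau\to 0$, locally uniformly in $s$. The clean route is to recognize the recursion as an explicit Euler-type discretization: rearranging the defining equation gives $\frac{t_{\tau,n}-t_{\tau,n-1}}{\tau} = \frac{t_{\tau,n-1}^{1/(2-r)}\,t_{\tau,n}^{(1-r)/(2-r)} - t_{\tau,n-1}}{\tau} + 1$; heuristically, when $t_{\tau,n}\approx t_{\tau,n-1}\approx t$ this behaves like $\frac{\d t}{\d s} = 1 + \frac{\d}{\d s}\big(\cdot\big)$, and more honestly one shows that $f(t)\coloneqq t^{1/(2-r)}$ satisfies, along the recursion, $f(t_{\tau,n}) - f(t_{\tau,n-1}) \to$ a fixed increment, leading to the closed form $\gamma(t) = ((2-r)t)^{1/(2-r)}\Id)_\#\eta^*$. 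Concretely I would define $a_{\tau,n} \coloneqq t_{\tau,n}^{1/(2-r)}$, rewrite the zero condition in terms of $a$, and show the resulting recursion for $a_{\tau,n}$ is a consistent, monotone one-step scheme for an ODE whose solution is $a(s) = ((2-r)s)^{1/(2-r)}$; monotonicity in $\tau$ (visualized in Figure~\ref{fig:convergence_JKO}) gives the locally uniform convergence via a Dini-type argument, and then continuity of the dilation map $\lambda\mapsto(\lambda\Id)_\#\eta^*$ into $(\P_2(\R^d),W_2)$ transfers this to locally uniform convergence of $\gamma_\tau\to\gamma$. Finally one invokes Theorem 11.2.1 in \cite{AGS2005} (or checks directly) that this limit is indeed the Wasserstein gradient flow, although for the statement as phrased only the convergence of the curves is required.

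The main obstacle I anticipate is the second step: verifying that substituting the dilation ansatz into the proximal functional genuinely produces the stated scalar equation with exactly those coefficients. This requires (a) justifying that the optimal transport plan between two centered radial dilates of $\eta^*$ is the linear one — nontrivial only in that one must confirm no mass-splitting plan does better, which radial symmetry plus Brenier handles — and (b) tracking all the Gamma/Beta normalization constants from Theorem~\ref{thm:HD-spec} through the differentiation so that the $-t$ and $+\tau$ terms appear with coefficient exactly $1$. The convergence analysis in part (ii), by contrast, is a fairly standard consistency-plus-monotonicity argument once the recursion is in hand.
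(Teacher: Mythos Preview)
Your approach to part~(i) has a genuine gap: by ``substituting the ansatz $\nu=(s\Id)_\#\eta^*$'' you restrict the proximal minimization to the one-parameter family of dilates of $\eta^*$, but the prox is taken over all of $\P_2(\R^d)$. Nothing in your argument rules out that some radially symmetric $\nu$ which is \emph{not} a dilate of $\eta^*$ achieves a smaller value; radial symmetry of the minimizer is easy to get, but the space of radial measures is still infinite-dimensional, so the reduction to a scalar problem in $s$ is unjustified. The paper closes this gap by first embedding the radially symmetric measures isometrically into the cone $\mathcal C_0((0,1))\subset L_2((0,1))$ via quantile functions (Proposition~\ref{prop:restr}), and then using an additive splitting of the full objective:
\[
\tfrac1{2\tau}\|g-\alpha f\|^2 + \F(\Psi^{-1}(g)) \;=\; \Bigl[\tfrac1{2\hat t}\|g\|^2 + \F(\Psi^{-1}(g))\Bigr] \;+\; \Bigl[\tfrac1{2\tau}\|g-\alpha f\|^2 - \tfrac1{2\hat t}\|g\|^2\Bigr].
\]
The first bracket is minimized over the whole cone at $\hat t^{1/(2-r)}f$ by Theorem~\ref{thm:HD-spec} (applied with $\hat t$ in place of $\tau$), and the second bracket is a strictly convex quadratic in $g$ (here one needs $\hat t>\tau$, supplied by Lemma~\ref{lem:h_tau}) whose minimizer is \emph{also} $\hat t^{1/(2-r)}f$ precisely because $\hat t$ solves $h_\tau(\cdot,t_0)=0$. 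Since both summands share a minimizer over $\mathcal C_0((0,1))$, so does the sum --- and this is a genuine global statement, not a restriction to dilates. Your anticipated difficulty~(b) about tracking Gamma/Beta constants is therefore a red herring: the $L^2$ embedding plus this splitting bypasses all of that. Two smaller issues: your uniqueness argument for the zero via strict concavity works only for $r\in(0,1)$ (for $r\in[1,2)$ the exponent $(1-r)/(2-r)\le 0$ makes $t\mapsto t^{(1-r)/(2-r)}$ convex; the paper uses monotonicity there instead), and Brenier does not directly apply to couple two dilates when $d+r\ge 4$, since $\eta^*$ is then singular (uniform on a sphere).

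For part~(ii), your consistency-plus-Dini sketch is plausible in spirit but underdeveloped: you have not established the monotonicity in $\tau$ that a Dini argument would require, and Figure~\ref{fig:convergence_JKO} only shows one-sided bounds of $f_\tau$ relative to the limit $f$, not monotonicity in $\tau$. The paper instead proceeds by quantitative Taylor-type estimates on the recursion (Lemmas~\ref{lem:t_abschaetzungen} and~\ref{thm:jko_estimate}), obtaining $|t_{\tau,n}-(2-r)n\tau|\le C\tau(1+\log n)$, and then converts this to $W_2$-convergence via separate Lipschitz and H\"older arguments for $r\ge 1$ and $r<1$ respectively.
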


For $r\ge 1$, in \cite{HGBS2022} it was shown that the curve $\gamma$ from part (ii) from the previous theorem is a Wasserstein steepest descent flow.

\begin{remark}[Illustration of Theorem \ref{thm:jko-inter-flow}] \label{rem:vis}
By the above theorem, 
we can represent the curves $\gamma_\tau|_{((n-1)\tau,n\tau]} \coloneqq \mu_\tau^n$ and their limit
$\gamma$ as $\tau \to 0$ by 
\begin{equation*}
\gamma(t)=(f(t)\Id)_\#\eta^* \quad \text{and}\quad \gamma_\tau(t)=(f_\tau(t)\Id)_\# \eta^*,
\end{equation*}
where the functions $f,f_\tau\colon[0,\infty)\to\R$ are given by
\begin{equation*}
f(t)=((2-r)t)^{\frac{1}{2-r}},\quad \text{and}\quad f_\tau|_{((n-1)\tau,n\tau]}= t_{\tau,n}^{\frac{1}{2-r}}.
\end{equation*}
Hence, the convergence behavior of $\gamma_\tau$ to $\gamma$ can be visualized by the convergence behavior of $f_\tau$ to $f$ 
as in Fig.~\ref{fig:convergence_JKO}. 
The values $t_{\tau,n}$ from Theorem~\ref{thm:jko-inter-flow} are computed by Newton's method.
For $r \in (0,1)$, it holds  $f_\tau(n\tau) < f(n\tau)$,
$n=1,2,\ldots$; for $r \in (1,2)$ we have the opposite relation, and for $r=1$ the approximation points lie  on the limit curve.
\end{remark}

The next theorem, which we prove in Appendix~\ref{app:JKO_convergence},
shows that also the Euler forward scheme converges for $r=1$. 
Note that for $r\in(0,1)$ there does not exist a steepest descent direction in $\delta_0$ 
such that the Euler forward scheme is not well-defined. 
For $r\in(1,2)$, we have  $\nabla_-\F(\delta_0)=\delta_{0,0}$ which implies that $\mu_\tau^n=\delta_0$ for all $n$, i.e.,
$\gamma_\tau$ in \eqref{eq:forward_cont} coincides with the constant curve $\gamma(t)=\delta_0$, 
which is a Wasserstein steepest descent flow with respect to $\F$, see \cite{HGBS2022}.

\begin{theorem}\label{thm:forward-inter-flow}
Let $K$ be a Riesz kernel with $r=1$, $\F \coloneqq \mathcal E_K$  and $\eta^*\coloneqq\prox_{\mathcal E_K}(\delta_0)$.  
Then the measures $\mu_\tau^n$ from the Euler forward scheme \eqref{eq:forward_cont} starting at $\mu_\tau^0=\delta_0$ 
coincides with those of the JKO scheme
$
\mu_\tau^n=(\tau n\Id)_\#\eta^*.$
\end{theorem}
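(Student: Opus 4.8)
The plan is to verify directly that the radial dilation is the (unique) steepest descent direction along the curve $t\mapsto (t\,\Id)_\#\eta^*$, perform one Euler step, and match the outcome with the JKO formula. By Theorem~\ref{thm:jko-inter-flow}(i), for $r=1$ the JKO iterates are $\mu_\tau^n=(n\tau\,\Id)_\#\eta^*$ (since $1/(2-r)=1$ and $t_{\tau,n}=n\tau$), so it suffices to show that the forward scheme \eqref{eq:forward_cont} started at $\mu_\tau^0=\delta_0$ produces the same measures. I would argue by induction on $n$, the claim being $\mu_\tau^n=(n\tau\,\Id)_\#\eta^*$; the base case $n=0$ is trivial, and the content is the inductive step: from $\mu_\tau^{n-1}=(s\,\Id)_\#\eta^*$ with $s\coloneqq(n-1)\tau$, one must identify a steepest descent direction $\zb v^{n-1}\in\nabla_-\F(\mu_\tau^{n-1})$ and show $\gamma_{\zb v^{n-1}}(\tau)=((s+\tau)\,\Id)_\#\eta^*$.

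First I would compute $\nabla_-\F(\mu_s)$ for $s>0$. By Theorem~\ref{thm:HD-spec}, $\eta^*$ (hence $\mu_s$) is atomless, so the diagonal $\{x=x'\}$ is $\mu_s\otimes\mu_s$-null and, for $\zb v=\mu_s\times\zb v_x\in\zb V(\mu_s)$, one may differentiate $\mathcal E_K(\gamma_{\zb v}(t))=-\tfrac12\iint\|x-x'+t(u-u')\|\,\d\zb v_x(u)\,\d\zb v_{x'}(u')\,\d\mu_s(x)\,\d\mu_s(x')$ under the integral to get $\D_{\zb v}\F(\mu_s)=-\int\langle \bar u(x),w_s(x)\rangle\,\d\mu_s(x)$, where $\bar u(x)\coloneqq\int u\,\d\zb v_x(u)$ and $w_s(x)\coloneqq\int\frac{x-x'}{\|x-x'\|}\,\d\mu_s(x')$. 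Since the numerator of the ratio in \eqref{eq:steepest_descent_direction} depends on $\zb v$ only through $\bar u$, while $\|\zb v\|_{\mu_s}^2\ge\int\|\bar u(x)\|^2\,\d\mu_s(x)$ by Jensen, the minimizers are carried by maps, and Cauchy--Schwarz in $L_{2,\mu_s}$ forces $\bar u=c\,w_s$ for some $c>0$; evaluating the rescaling factor $(\D_{\hat{\zb v}}\F(\mu_s)/\|\hat{\zb v}\|_{\mu_s}^2)^-=1/c$ then shows that $\nabla_-\F(\mu_s)=\{(\Id,w_s)_\#\mu_s\}$, a singleton. For the initial step I would either repeat this bookkeeping with $\zb v=\delta_0\times\eta$ (using $\mathcal E_K((t\,\Id)_\#\eta)=t\,\mathcal E_K(\eta)$, so that $\D_{\zb v}\F(\delta_0)/\|\zb v\|_{\delta_0}=\mathcal E_K(\eta)/M_2(\eta)^{1/2}$, minimized at $\eta^*$) or simply invoke that, by \cite{HGBS2022}, $\gamma(t)=(t\,\Id)_\#\eta^*$ is a Wasserstein steepest descent flow, so its initial velocity $\dot\gamma(0)=\delta_0\times\eta^*$ lies in $\nabla_-\F(\delta_0)$; the rescaling again gives $\nabla_-\F(\delta_0)=\{\delta_0\times\eta^*\}$.

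It then remains to run one Euler step: $\mu_\tau^n=\gamma_{\zb v^{n-1}}(\tau)=(\Id+\tau w_s)_\#\mu_s$ for $n\ge2$, and $\mu_\tau^1=(\tau\,\Id)_\#\eta^*$ for $n=1$. The crucial identity is $x+\tau w_s(x)=\tfrac{s+\tau}{s}\,x$ on $\supp\mu_s$, i.e.\ $w_s(x)=x/s$. By $1$-homogeneity of $z\mapsto z/\|z\|$ one has $w_s(s x_0)=w_1(x_0)$ for $x_0\in\supp\eta^*$, so it suffices to prove $w_1(x_0)=\int\frac{x_0-x'}{\|x_0-x'\|}\,\d\eta^*(x')=x_0$ on $\supp\eta^*$. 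This is precisely the first-order optimality (Euler--Lagrange) condition for $\eta^*=\argmin_\nu\{\tfrac12 M_2(\nu)+\mathcal E_K(\nu)\}=\prox_{\mathcal E_K}(\delta_0)$: the first variation $h(x)\coloneqq\tfrac12\|x\|^2-\int\|x-x'\|\,\d\eta^*(x')$ attains its global minimum on $\supp\eta^*$, hence $\nabla h=\Id-w_1$ vanishes there (in the interior of the ball in case (i) of Theorem~\ref{thm:HD-spec}; on the supporting sphere in case (ii), where minimality forces the normal derivative and rotational symmetry the tangential one to vanish). Substituting $w_s(x)=x/s$ yields $\mu_\tau^n=(\tfrac{s+\tau}{s}\,\Id)_\#\mu_s=((s+\tau)\,\Id)_\#\eta^*=(n\tau\,\Id)_\#\eta^*$, closing the induction; comparison with Theorem~\ref{thm:jko-inter-flow}(i) finishes the proof.

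I expect the main obstacle to be the rigorous identification of the steepest descent direction at $\mu_s$: justifying differentiation under the double integral near the non-smooth diagonal of the Riesz kernel, and confirming single-valuedness of $\nabla_-\F(\mu_s)$ so that the forward scheme is genuinely well-defined. A secondary technical point is the correct reading of the Euler--Lagrange condition in case (ii) of Theorem~\ref{thm:HD-spec}, where $h$ and its gradient must be handled in the appropriate one-sided/tangential sense on the supporting sphere, and the verification that the constant $c$ there is exactly such that $w_1=\Id$ on $c\,\mathbb S^{d-1}$.
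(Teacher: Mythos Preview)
Your proposal is correct and proceeds by the same induction, but the identification of the steepest descent direction is genuinely different from the paper's argument. The paper's proof is essentially a black-box reduction: it cites \cite{HGBS2022} (Corollary~20 and Theorem~22) for the existence and \emph{uniqueness} of $\nabla_-\F((t\Id)_\#\eta^*)$, cites Theorem~23 there for the fact that $t\mapsto(t\Id)_\#\eta^*$ is a steepest descent flow, and then reads off the steepest descent direction at $\mu_\tau^{n-1}$ as the tangent of that curve, namely $\dot\gamma_{\delta_0\otimes\eta^*}((n-1)\tau)=((n-1)\tau\,\Id,\Id)_\#\eta^*$, from which $\mu_\tau^n=(n\tau\,\Id)_\#\eta^*$ is immediate. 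No first-variation computation for $\eta^*$ is needed, and the case $n=1$ is not treated separately.

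Your route is more self-contained: you compute $\D_{\zb v}\F(\mu_s)$ explicitly, use Jensen and Cauchy--Schwarz to pin down the minimizing direction as the map $w_s$, and then identify $w_s(x)=x/s$ via the Euler--Lagrange condition for $\eta^*=\prox_{\mathcal E_K}(\delta_0)$. This buys independence from the cited results and makes the mechanism (the velocity field is exactly the radial dilation because $\eta^*$ satisfies $\nabla h=0$) fully transparent; the price is that you must justify differentiating under the double integral across the non-smooth diagonal (dominated convergence with the Lipschitz bound $\|u-u'\|$ suffices since $\mu_s$ is atomless), handle the start at $\delta_0$ separately, and in case~(ii) of Theorem~\ref{thm:HD-spec} argue the tangential/normal vanishing of $\nabla h$ on the sphere. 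Your self-identified obstacles are the right ones, and none of them is a genuine gap.
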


\section{Numerical Examples}\label{sec:numerics}
In the following, we evaluate our results based on numerical examples. 
In Subsection~\ref{sec:numerics_interaction}, we benchmark the different numerical schemes based on the interaction energy flow starting at $\delta_0$. 
Here, we can evaluate the quality of the outcome based on the analytic results in  Sect.~\ref{sec:theory}.
In Subsection~\ref{sec:discrepancy}, we apply the different schemes for MMD Wasserstein flows.
Since no ground truth is available, we can only compare the visual impression.
The implementation details and advantages of the both neural schemes are given in Appendix~\ref{app:implementation details}\footnote{The code is available at \newline \url{https://github.com/FabianAltekrueger/NeuralWassersteinGradientFlows}
}.

\paragraph{Comparison with Particle Flows.}
We compare our neural forward and backward schemes with particle flows. 
The main idea is to approximate the Wasserstein flow with respect to a function $\F$ by the gradient flow with respect to the functional $F_M(x_1,...,x_M)=\F(\frac1M\sum_{i=1}^M\delta_{x_i})$, where $x_1,...,x_M\in\R^d$ are distinct particles. We include a detailed description in Appendix~\ref{sec:particles}.
For smooth and $\lambda$-convex kernels, such flows were considered in \cite{AKSG2019}. 
In this particular case, the authors showed that MMD flows starting in point measures can be fully described by this representation.
However, for the Riesz kernels, this is no longer true. 
Instead, we show in Appendix~\ref{sec:particles} that the particle flow is a Wasserstein gradient flow \emph{but with respect to a restricted functional}.
Nevertheless, the mean field limit $M\to\infty$ may provide a meaningful approximation of Wasserstein gradient flows with respect to $\F$.
\\
However, for computing the particle flow, the assumption  $x_i\neq x_j$ for $i\neq j$ is crucial. 
Consequently, it is not possible to simulate a particle flow starting in a Dirac measure.
As a remedy, we start the particle flow in $M$ samples randomly located in a very small area around the initial point.
The optimal initial structure of the initial points depends on the choice of the functional $\mathcal{F}$ and is non-trivial to compute.
For a detailed analysis of the influence of the initial point distribution, we refer to Appendix~\ref{app:initial_particles}.
We will observe that particle flows provide a reasonable baseline for the approximation of Wasserstein flows even though the initial distribution of the samples significantly influences the the results. 

\begin{figure}[t]
\centering
\begin{subfigure}[t]{.15\textwidth}
\includegraphics[width=\linewidth]{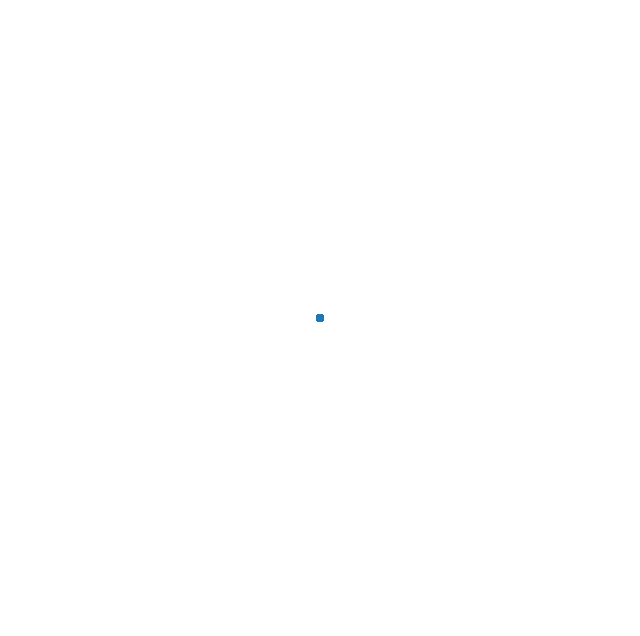}
\end{subfigure}%
\begin{subfigure}[t]{.15\textwidth}
  \includegraphics[width=\linewidth]{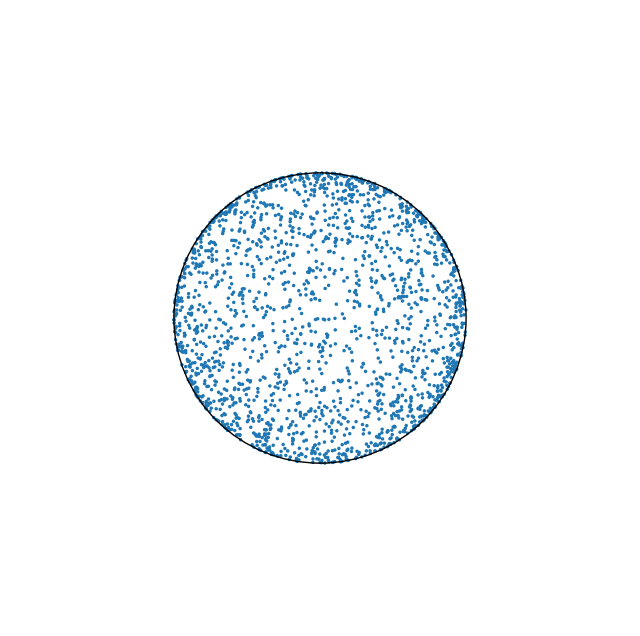}
\end{subfigure}%
\begin{subfigure}[t]{.15\textwidth}
  \includegraphics[width=\linewidth]{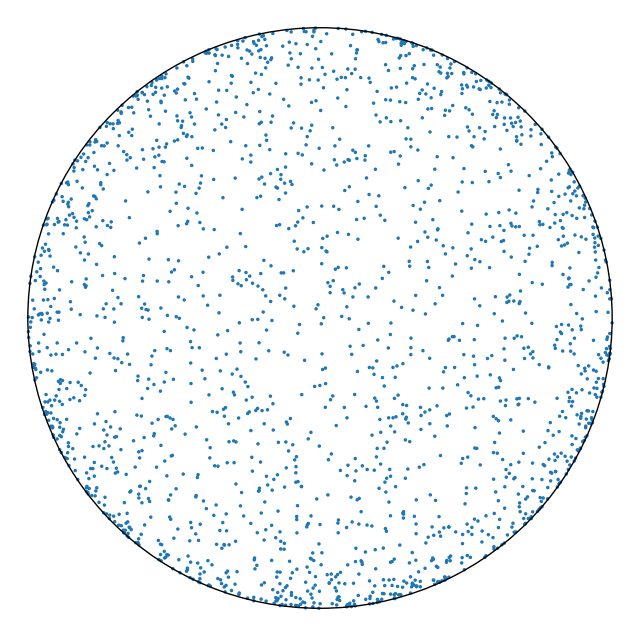}
\end{subfigure}%

\centering
\begin{subfigure}[t]{.15\textwidth}
\includegraphics[width=\linewidth]{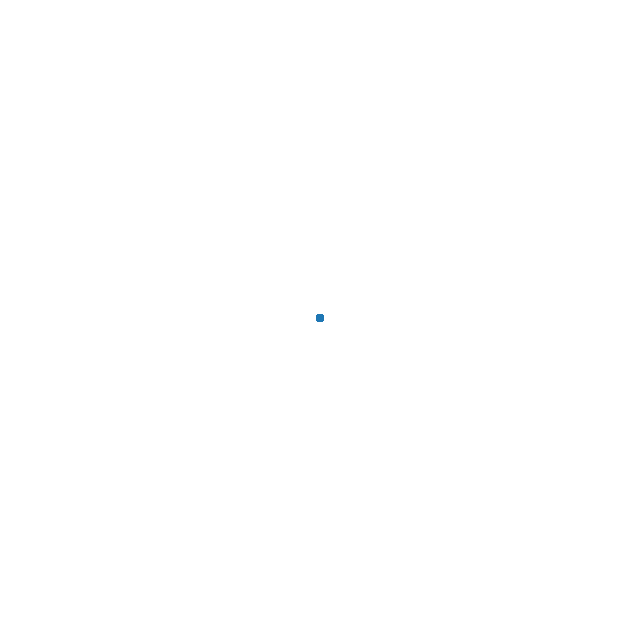}
\end{subfigure}%
\begin{subfigure}[t]{.15\textwidth}
  \includegraphics[width=\linewidth]{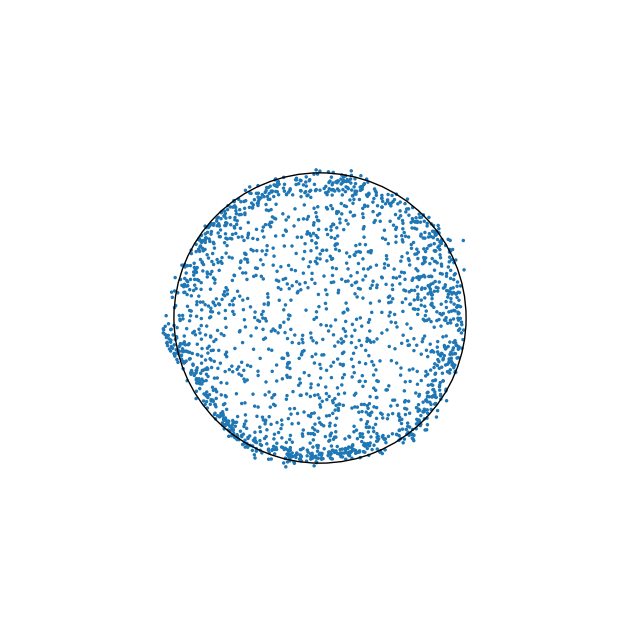}
\end{subfigure}%
\begin{subfigure}[t]{.15\textwidth}
  \includegraphics[width=\linewidth]{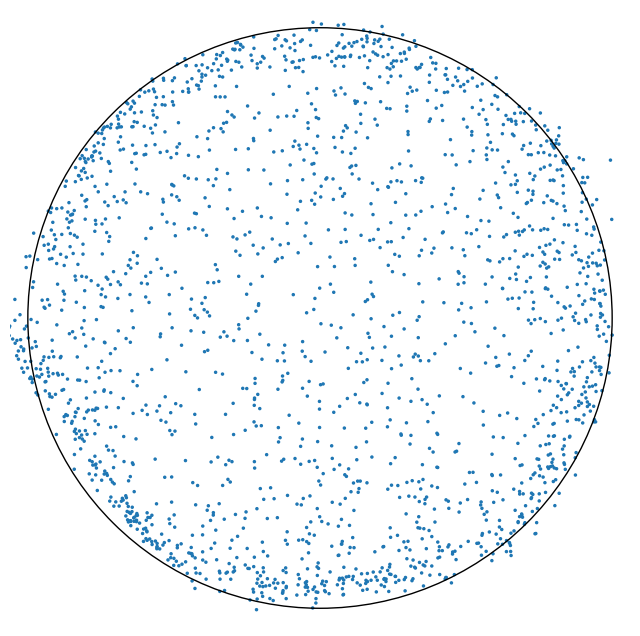}
\end{subfigure}%

\centering
\begin{subfigure}[t]{.15\textwidth}
\includegraphics[width=\linewidth]{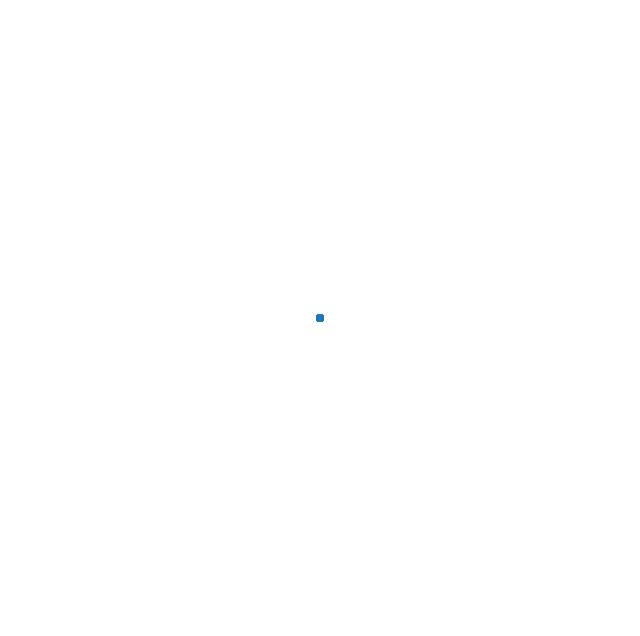}
\end{subfigure}%
\begin{subfigure}[t]{.15\textwidth}
  \includegraphics[width=\linewidth]{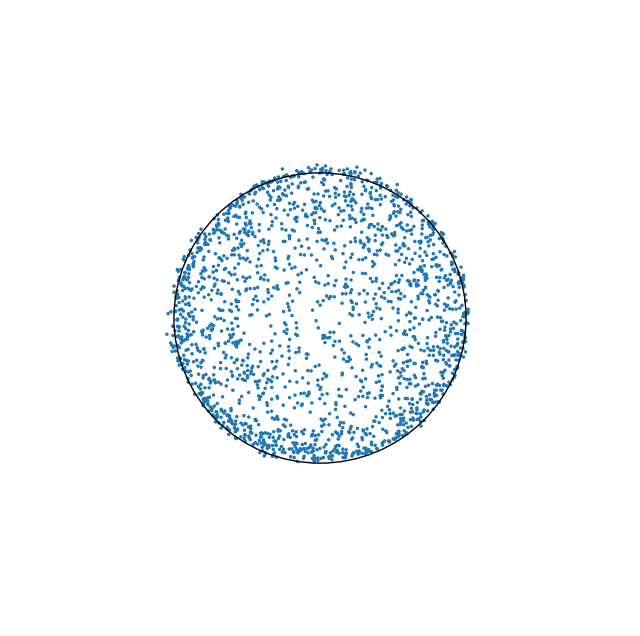}
\end{subfigure}%
\begin{subfigure}[t]{.15\textwidth}
  \includegraphics[width=\linewidth]{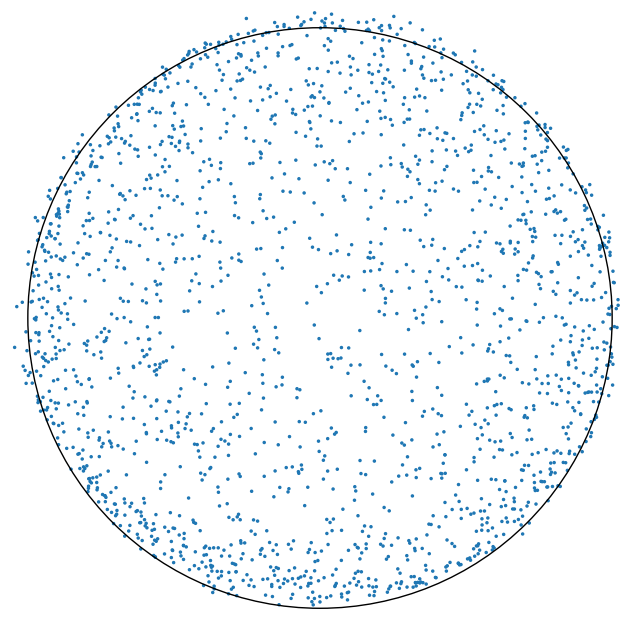}
\end{subfigure}%

\centering
\begin{subfigure}[t]{.15\textwidth}
\includegraphics[width=\linewidth]{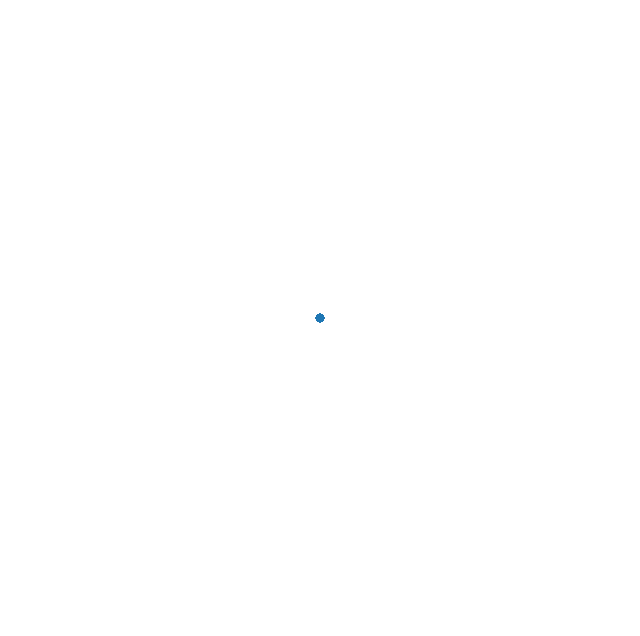}
\caption*{t=0.0}
\end{subfigure}%
\begin{subfigure}[t]{.15\textwidth}
  \includegraphics[width=\linewidth]{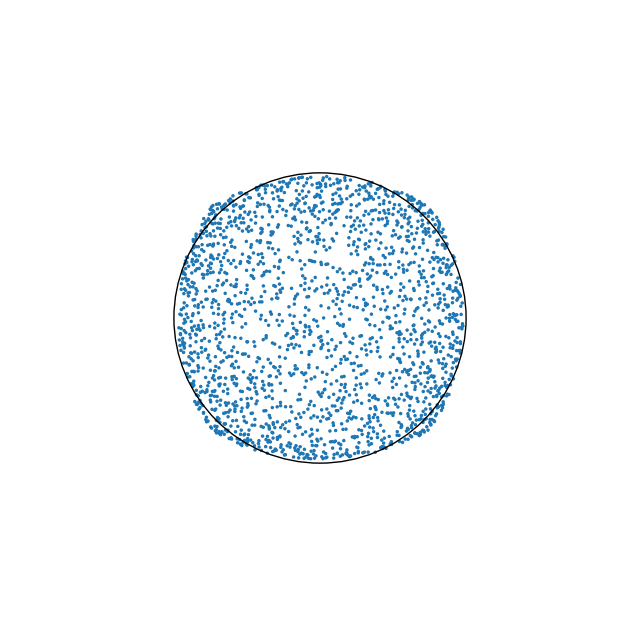}
\caption*{t=0.3}
\end{subfigure}%
\begin{subfigure}[t]{.15\textwidth}
  \includegraphics[width=\linewidth]{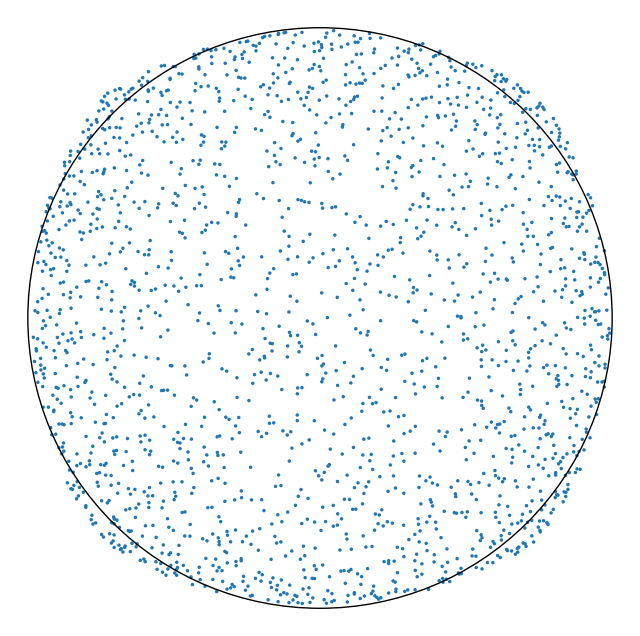}
\caption*{t=0.6}
\end{subfigure}%
\caption{Comparison of different approaches for approximating the Wasserstein gradient 
flow of $\mathcal E_K$ with step size $\tau = 0.05$. {\bf From top to bottom}: 
limit curve, neural backward scheme, neural forward scheme and particle flow.
The black circle is the border of  the limit $\supp \, \gamma(t)$.
Here the forward flow shows the best fit.
While our neural flows start in a single point,
the particle flow  starts with uniform samples in a square of radius $10^{-9}$,
a structure which remains visible over the time.
} \label{fig:energy_1r2d_main}
\end{figure}

\subsection{Interaction Energy Flows with Benchmark} \label{sec:numerics_interaction}

We compare the different approaches for simulating the Wasserstein flow of the interaction energy $\F  = \mathcal E_K$ starting in $\delta_0$. 

A visual comparison in two dimensions is given in Fig.~\ref{fig:energy_1r2d_main}. 
While our neural schemes start in a single point, the particle flow starts with 
uniformly distributed particles in a square of size $10^{-9}$.
This square structure remains visible over the time. 
For particle flows with other starting point configurations, see Appendix \ref{app:initial_particles}.

A quantitative comparison between the analytic flow and its approximation 
with the discrepancy $\mathcal{D}_K^2$ (negative distance kernel) 
as distance is given in Fig.~\ref{fig:d2_xrxd}.
The time step size is again $\tau = 0.05$ and simulated 10000 samples.
In the left part of the figure, we
compare the different approaches for $d=2$ and different Riesz kernel exponents.  
For the Riesz exponent $r=1$ the neural forward scheme gives the best approximation of the Wasserstein flow.
While for  $r=0.5$ the neural backward scheme and the particle flow approximate the limit curve nearly similarly, the neural backward scheme performs better for $r=1.5$. 
The poor approximation ability of the particle flow can be explained by the inexact starting measure and the relatively high step size $\tau=0.05$. Reducing the step size leads to an improvement of the approximation.
As outlined in the text before Theorem~\ref{thm:forward-inter-flow}, the neural forward scheme is not defined for $r\neq 1$.
\\
The right part of Fig.~\ref{fig:d2_xrxd} shows results for $r=1$ and different dimensions $d$. 
While in the three-dimensional case, the particle flow is not able to push the particles from the initial cube onto the sphere (condensation), for higher dimensions it approximates the limit curve almost perfectly. For the two network-based methods a higher dimension leads to a higher approximation error.

\subsection{MMD Flows}\label{sec:discrepancy}

Next, we consider MMD Wasserstein flows  $\F_\nu$. 
We can use the proposed methods to sample 
from a target measure $\nu$  which is given by some samples as it was already shown in  Fig.~\ref{fig:discrepancy_maxmoritz}  'Max und Moritz' in the introduction with 6000 particles and $\tau = 0.5$.
More examples are given in Appendix~\ref{app:discrepancy_examples}.

In order to show the scalability of the methods, we can use the proposed methods to sample from the MNIST dataset \cite{LBBH1998}. 
Each $28\times28$ MNIST digit can be interpreted as a point in the 784 dimensional space such that our target measure $\nu$ is a weighted sum of Dirac measures. 
Here we only use the first 100 digits of MNIST for $\nu$. 
Fig.~\ref{fig:mnist} illustrates samples and their trajectories using the proposed methods.
The effect of the inexact starting of the particle flow can be seen in the trajectory of the particle flow, where the first images of the trajectory contain a lot of noise. For more details, we refer to Appendix~\ref{app:implementation details}. 
In Appendix~\ref{app:mnist_uniform} we illustrate the same example when starting in an absolutely continuous measure instead of a singular measure.

\begin{figure}[t!]
\centering
\begin{subfigure}[t]{.235\textwidth}
\includegraphics[width=\linewidth]{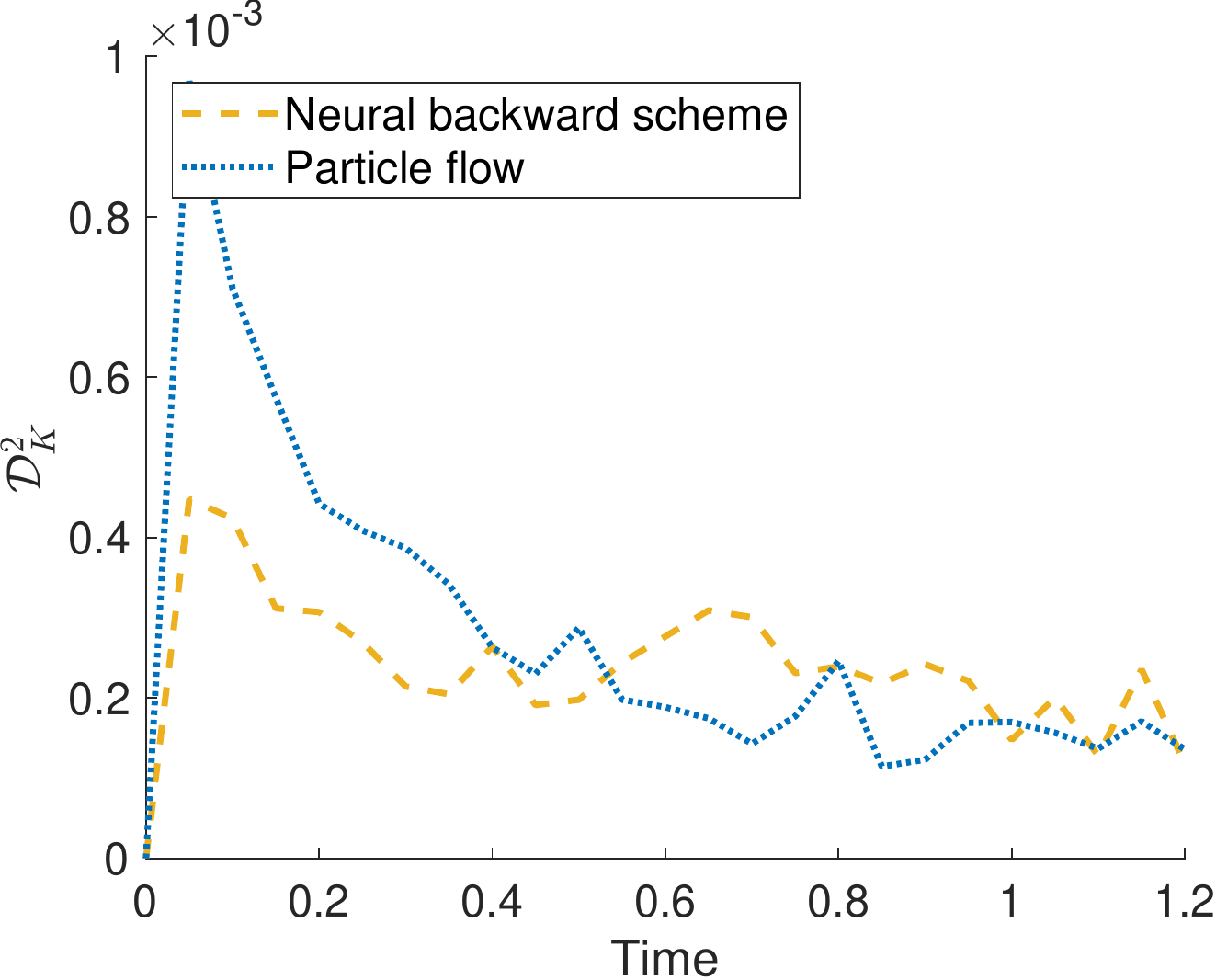}
\caption{$r=0.5, \quad d=2$}\label{fig:w2_.5r2d}
\end{subfigure}
\begin{subfigure}[t]{.235\textwidth}
\includegraphics[width=\linewidth]{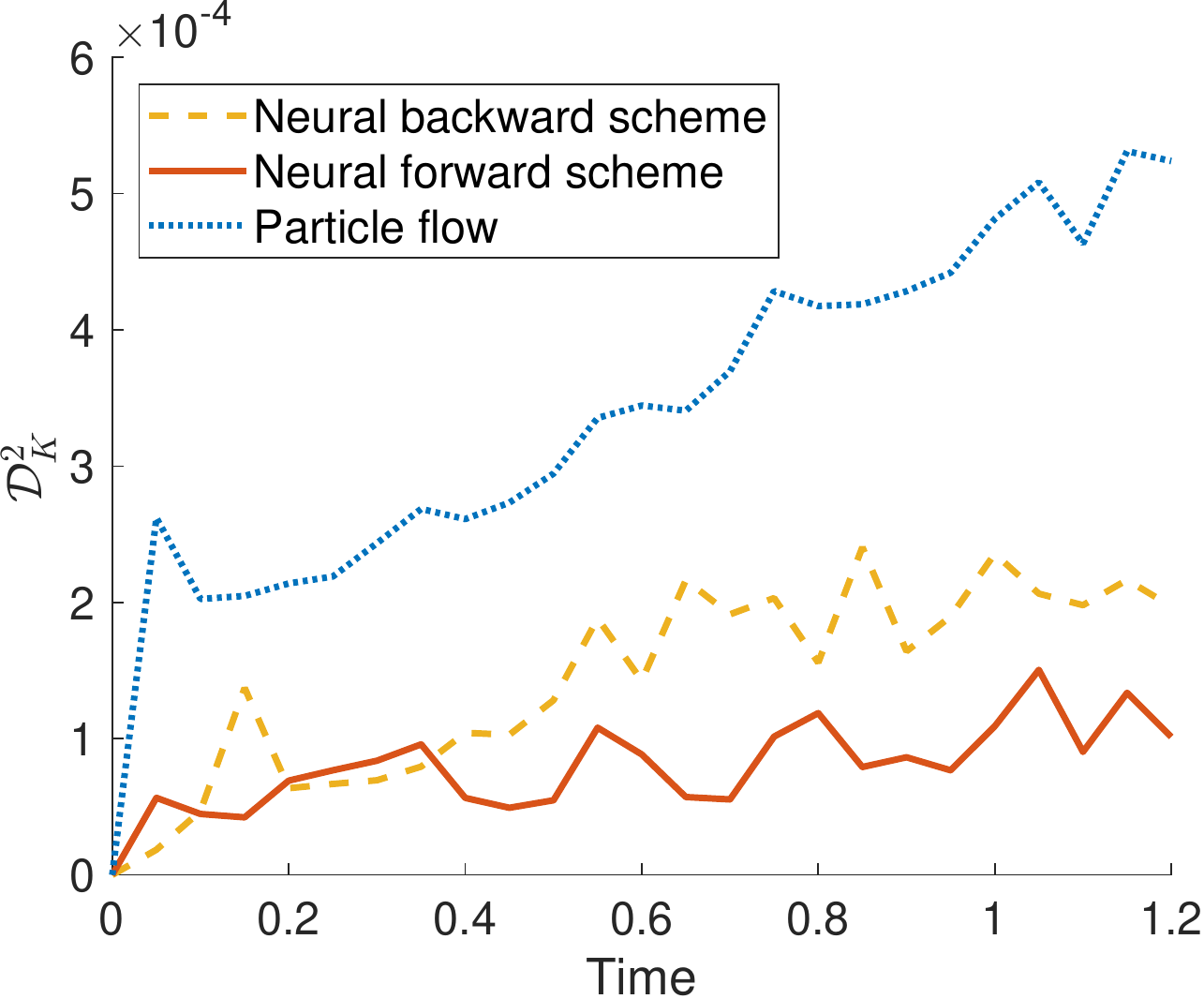}
\caption{$r=1, \quad d=3$} \label{fig:energy_1r3d}
\end{subfigure}

\begin{subfigure}[t]{.235\textwidth}
\includegraphics[width=\linewidth]{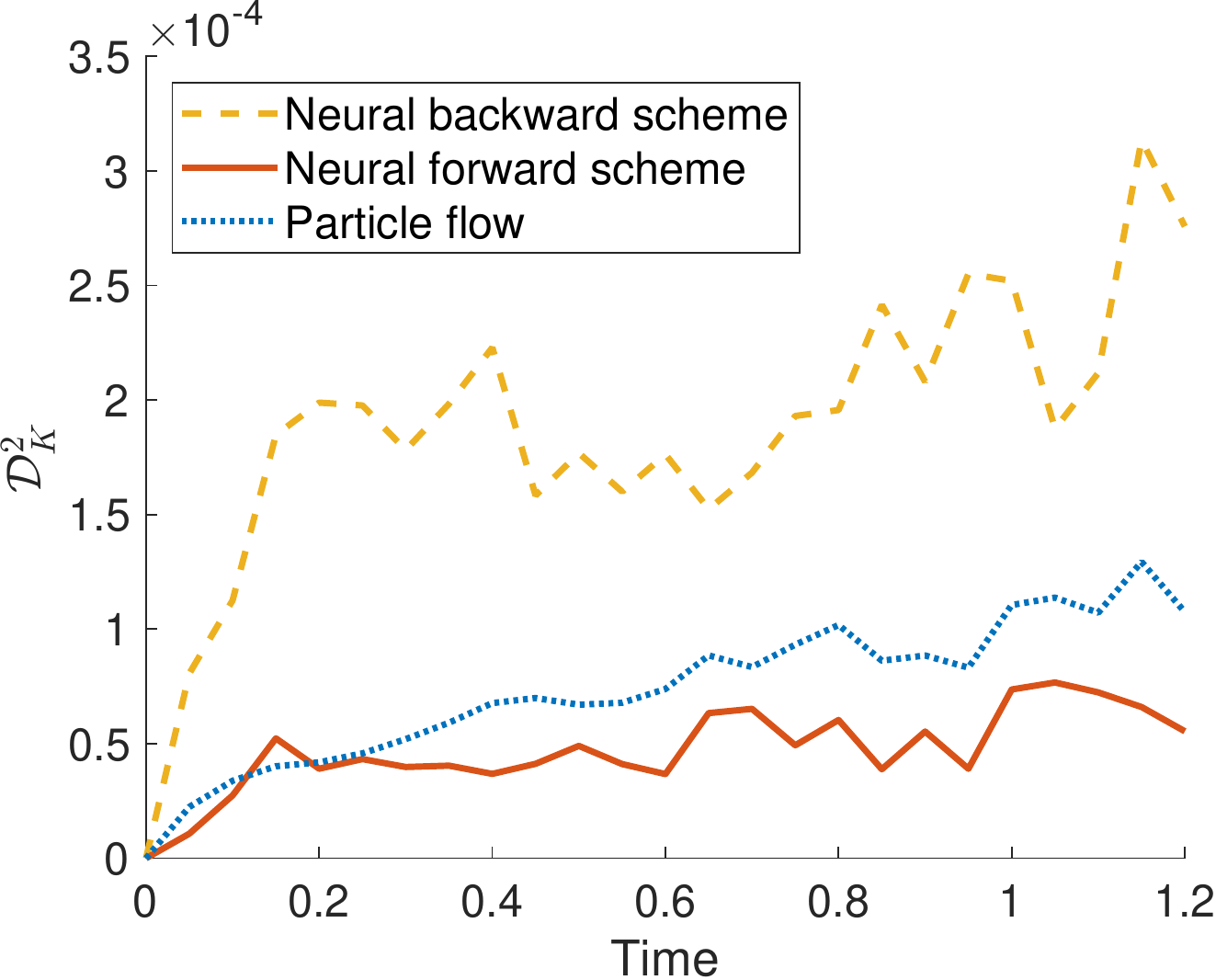}
\caption{$r=1, \quad d=2$} \label{fig:w2_1r2d}
\end{subfigure}
\begin{subfigure}[t]{.235\textwidth}
\includegraphics[width=\linewidth]{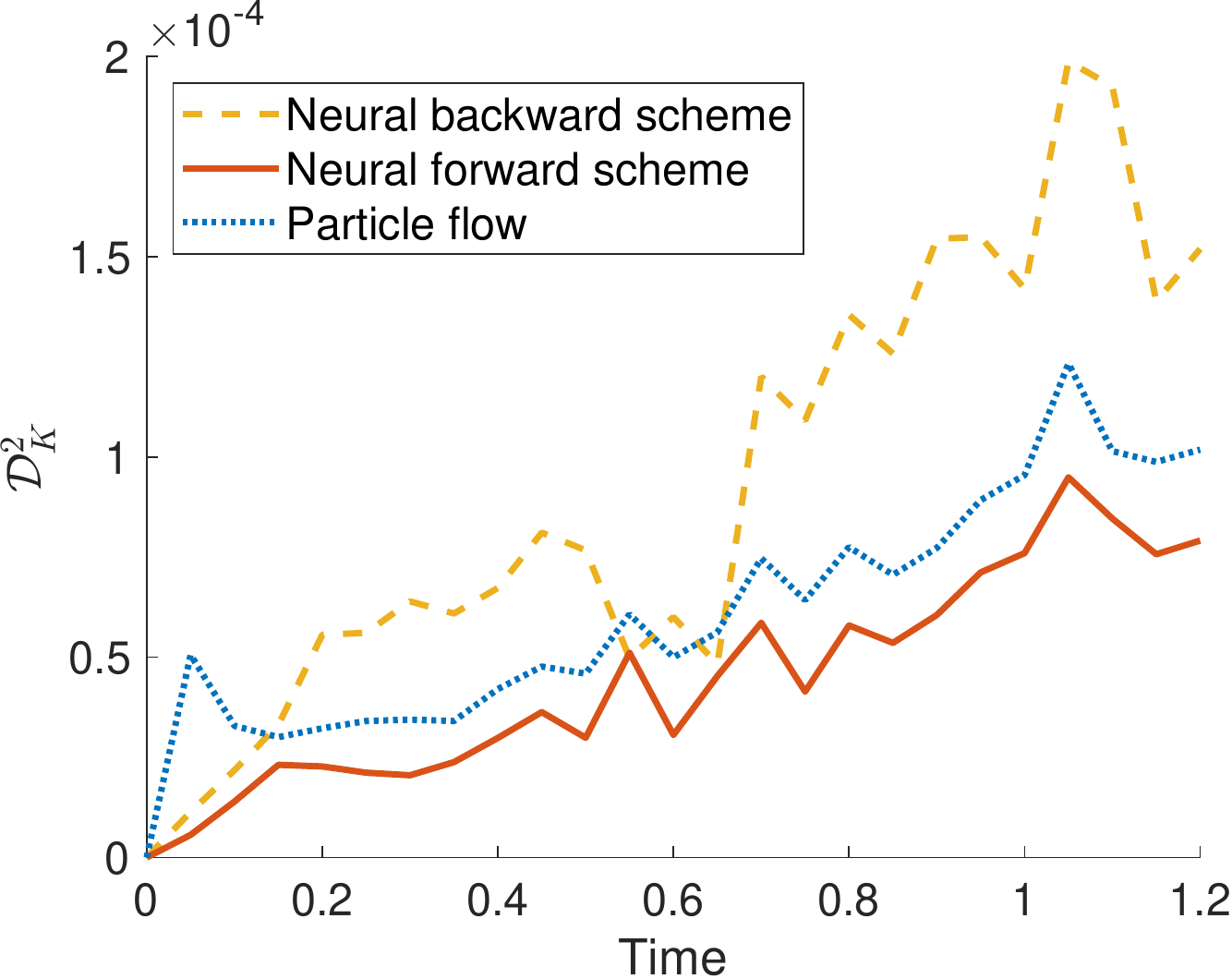}
\caption{$r=1, \quad d=10$} \label{fig:energy_1r10d}
\end{subfigure}

\begin{subfigure}[t]{.235\textwidth}
\includegraphics[width=\linewidth]{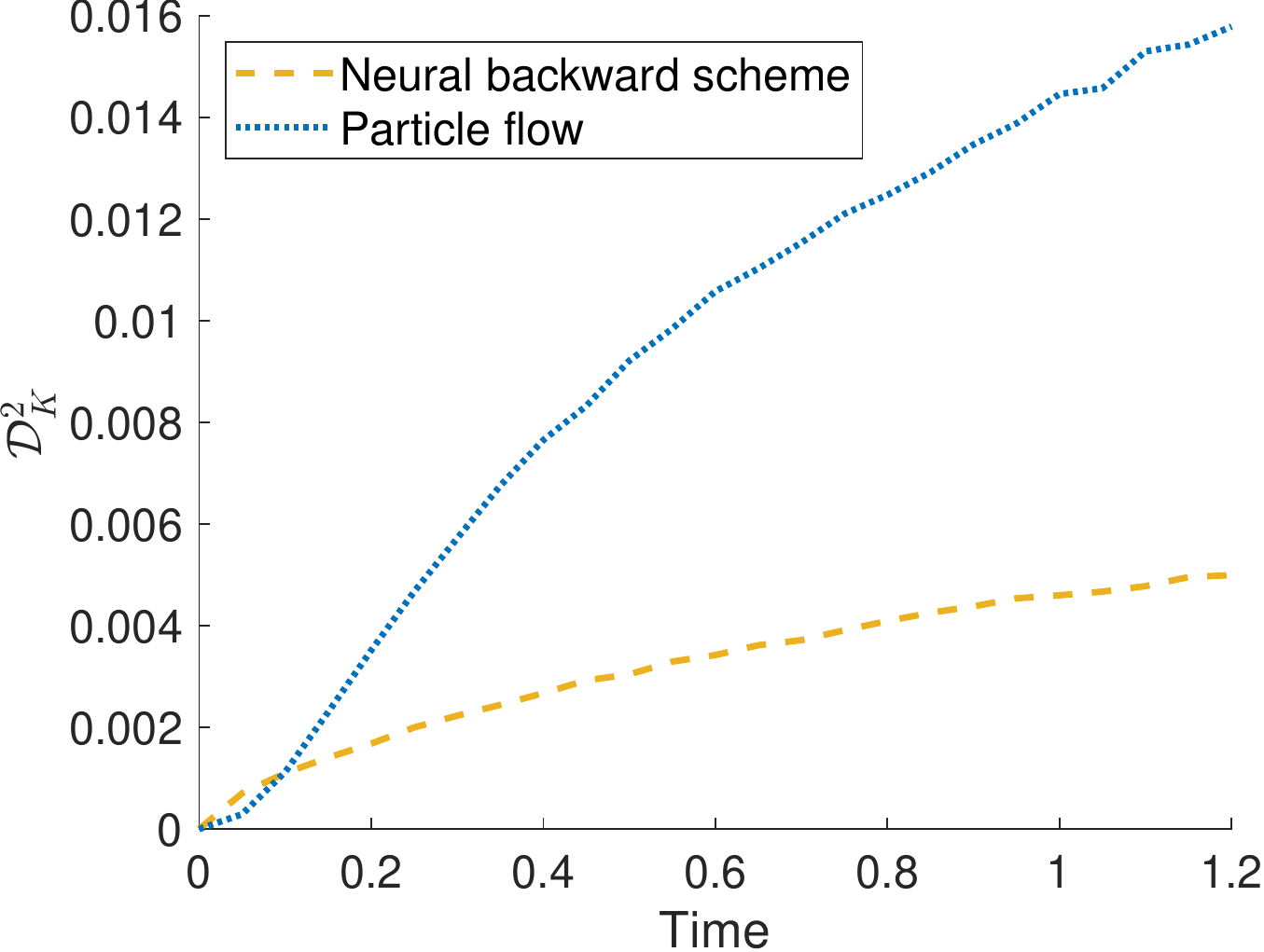}
\caption{$r=1.5, \quad d=2$}\label{fig:w2_1.5r2d}
\end{subfigure}
\begin{subfigure}[t]{.235\textwidth}
\includegraphics[width=\linewidth]{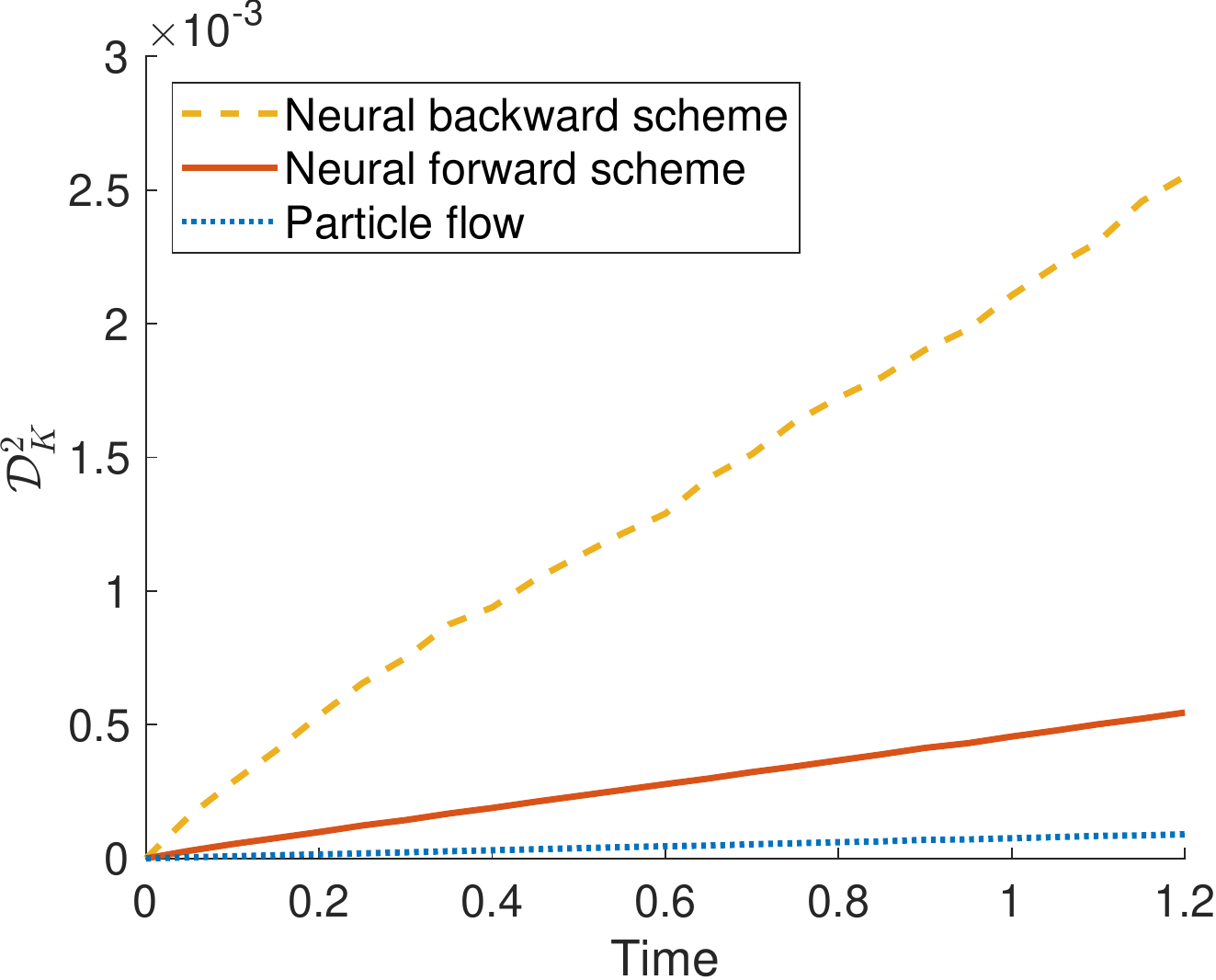}
\caption{$r=1, \quad d=1000$} \label{fig:energy_1r1000d}
\end{subfigure}
\caption{Discrepancy between the analytic Wasserstein flow of  $\mathcal E_K$ and its approximations for $\tau = 0.05$.
{\bf Left}: dimension $d=2$ and different exponents of the Riesz kernel.
Note that the neural forward flow only exists for $r=1$, where it gives
the best approximation.
For  $r=0.5$ the neural backward scheme and the particle flow approximate the limit curve nearly similar, while the neural backward scheme performs better for $r=1.5$ which is due to the relatively large time step size.
{\bf Right}: Different dimensions $d \in \{ 3, 10, 1000 \}$ and $r=1$. 
While the particle flow suffers from the inexact initial samples in lower dimensions, it performs very well in higher dimensions. The neural forward scheme gives a more accurate approximation than the neural backward scheme.}
\label{fig:d2_xrxd}
\end{figure}

\begin{figure*}[t!]
\centering
\begin{subfigure}[t]{.33\textwidth}
\includegraphics[width=\linewidth]{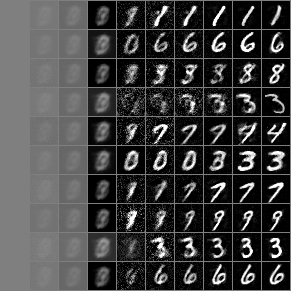}
\caption*{Neural backward scheme}
\end{subfigure}
\hfill
\begin{subfigure}[t]{.33\textwidth}
\includegraphics[width=\linewidth]{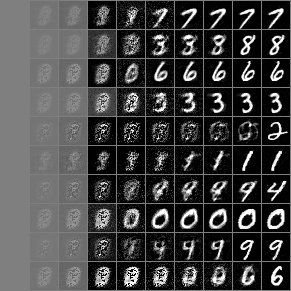}
\caption*{Neural forward scheme}
\end{subfigure}
\hfill
\begin{subfigure}[t]{.33\textwidth}
\includegraphics[width=\linewidth]{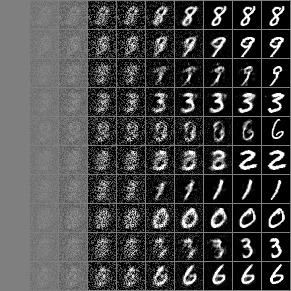}
\caption*{Particle flow}
\end{subfigure}
\caption{Samples and their trajectories from MNIST starting in $\delta_{x}$ for 
$x=0.5 \cdot \mathbf{1}_{784}$. The inexact starting of the particle flow leads to noisy images at the beginning.}
\label{fig:mnist}
\end{figure*}

\section{Conclusions}

We introduced neural forward and backward schemes to approximate Wasserstein flows of MMDs with non-smooth Riesz kernels.
Both neural schemes are realized by approximating the disintegration of transport plans and velocity plans. 
This enables us to handle non-absolutely continuous measures, which were excluded in prior works.
In order to benchmark the schemes, we derive analytic formulas for the schemes with respect to the interaction energy starting at Dirac measures.
Finally, the performance of our neural approximations was demonstrated by numerical examples. Here, additionally particle flows were considered,
which show a good performance as well, but may depend on the start distribution of the points.

Our work can be extended in different directions. 
Even though the forward scheme converges nicely in all our numerical examples, it would be desirable to derive both analytic formulas for steepest descent directions as well as
an analytic convergence result for \eqref{eq:forward_cont} for other functions than the interaction energy.
It will be also interesting to restrict measures to certain supports, as, e.g., curves and to examine corresponding flows. 
Moreover, we aim to extend our framework to posterior sampling in a Bayesian setting. Here a sampling based approach appears to be useful
for several applications.
So far we used fully connected NNs for approximating the corresponding measures. Nevertheless, the usage of convolutional NNs could be a key ingredient for applying the proposed methods on image data.
Finally, we can use the findings from \cite{HWAH2023} and compute the MMD functional by its sliced version. 
We hope that this can lead to a significant acceleration of our proposed schemes.

\section*{Acknowledgement}
F.A. acknowledges support from the German Research Foundation (DFG) under Germany`s Excellence Strategy – The Berlin Mathematics Research Center MATH+ within the project EF3-7, J.H. by the DFG within the project STE 571/16-1 and G.S. acknowledges support by the BMBF Project ``Sale'' (01|S20053B).
We would like to thank Robert Beinert and Manuel Gr\"af for fruitful discussions on Wasserstein gradient flows.

\bibliographystyle{icml2023}
\bibliography{refs}

\begin{thebibliography}{53}
\providecommand{\natexlab}[1]{#1}
\providecommand{\url}[1]{\texttt{#1}}
\expandafter\ifx\csname urlstyle\endcsname\relax
  \providecommand{\doi}[1]{doi: #1}\else
  \providecommand{\doi}{doi: \begingroup \urlstyle{rm}\Url}\fi

\bibitem[Alvarez-Melis et~al.(2022)Alvarez-Melis, Schiff, and Mroueh]{ASM2022}
Alvarez-Melis, D., Schiff, Y., and Mroueh, Y.
\newblock Optimizing functionals on the space of probabilities with input
  convex neural networks.
\newblock \emph{Transactions on Machine Learning Research}, 2022.

\bibitem[Ambrosio et~al.(2005)Ambrosio, Gigli, and Savare]{AGS2005}
Ambrosio, L., Gigli, N., and Savare, G.
\newblock \emph{Gradient Flows}.
\newblock Lectures in Mathematics ETH Zürich. Birkhäuser, Basel, 2005.
\newblock ISBN 978-3-7643-2428-5.

\bibitem[Amos et~al.(2017)Amos, Xu, and Kolter]{AXK2017}
Amos, B., Xu, L., and Kolter, J.~Z.
\newblock Input convex neural networks.
\newblock In Precup, D. and Teh, Y.~W. (eds.), \emph{Proceedings of the 34th
  International Conference on Machine Learning}, volume~70 of \emph{Proceedings
  of Machine Learning Research}, pp.\  146--155. PMLR, 2017.

\bibitem[Ansari et~al.(2021)Ansari, Ang, and Soh]{AAS2021}
Ansari, A.~F., Ang, M.~L., and Soh, H.
\newblock Refining deep generative models via discriminator gradient flow.
\newblock In \emph{International Conference on Learning Representations}, 2021.

\bibitem[Arbel et~al.(2019)Arbel, Korba, Salim, and Gretton]{AKSG2019}
Arbel, M., Korba, A., Salim, A., and Gretton, A.
\newblock Maximum mean discrepancy gradient flow.
\newblock In Wallach, H., Larochelle, H., Beygelzimer, A., d~Alch\'{e}-Buc, F.,
  Fox, E., and Garnett, R. (eds.), \emph{Advances in Neural Information
  Processing Systems}, volume~32, pp.\  1--11, New York, USA, 2019.

\bibitem[Arbel et~al.(2020)Arbel, Gretton, Li, and Montufar]{AGLM2020}
Arbel, M., Gretton, A., Li, W., and Montufar, G.
\newblock Kernelized wasserstein natural gradient.
\newblock In \emph{International Conference on Learning Representations}, 2020.

\bibitem[Balagué et~al.(2013)Balagué, Carrillo, Laurent, and
  Raoul]{BaCaLaRa13}
Balagué, D., Carrillo, J.~A., Laurent, T., and Raoul, G.
\newblock Dimensionality of local minimizers of the interaction energy.
\newblock \emph{Archive for Rational Mechanics and Analysis}, 209:\penalty0
  1055--1088, 2013.

\bibitem[Bonet et~al.(2022)Bonet, Courty, Septier, and Drumetz]{BCSD2022}
Bonet, C., Courty, N., Septier, F., and Drumetz, L.
\newblock Efficient gradient flows in sliced-{W}asserstein space.
\newblock \emph{Transactions on Machine Learning Research}, 2022.

\bibitem[Brehmer \& Cranmer(2020)Brehmer and Cranmer]{BC2020}
Brehmer, J. and Cranmer, K.
\newblock Flows for simultaneous manifold learning and density estimation.
\newblock In Larochelle, H., Ranzato, M., Hadsell, R., Balcan, M., and Lin, H.
  (eds.), \emph{Advances in Neural Information Processing Systems}, volume~33,
  pp.\  442--453. Curran Associates, Inc., 2020.

\bibitem[Brenier(1987)]{Brenier1987}
Brenier, Y.
\newblock D\'ecomposition polaire et r\'earrangement monotone des champs de
  vecteurs.
\newblock \emph{Comptes Rendus de l'Acad\'emie des Sciences Paris Series I
  Mathematics}, 305\penalty0 (19):\penalty0 805--808, 1987.

\bibitem[Bunne et~al.(2022)Bunne, Papaxanthos, Krause, and Cuturi]{BPKC2022}
Bunne, C., Papaxanthos, L., Krause, A., and Cuturi, M.
\newblock Proximal optimal transport modeling of population dynamics.
\newblock In Camps-Valls, G., Ruiz, F. J.~R., and Valera, I. (eds.),
  \emph{Proceedings of The 25th International Conference on Artificial
  Intelligence and Statistics}, volume 151 of \emph{Proceedings of Machine
  Learning Research}, pp.\  6511--6528. PMLR, 2022.

\bibitem[Carrillo \& Huang(2017)Carrillo and Huang]{CaHu17}
Carrillo, J.~A. and Huang, Y.
\newblock Explicit equilibrium solutions for the aggregation equation with
  power-law potentials.
\newblock \emph{Kinetic and Related Models}, 10\penalty0 (1):\penalty0
  171--192, 2017.

\bibitem[Carrillo et~al.(2022)Carrillo, Craig, Wang, and Wei]{CCWWL2022}
Carrillo, J.~A., Craig, K., Wang, L., and Wei, C.
\newblock Primal dual methods for {W}asserstein gradient flows.
\newblock \emph{Foundations of Computational Mathematics}, 22\penalty0
  (2):\penalty0 389--443, 2022.

\bibitem[Chafaï et~al.(2023)Chafaï, Saff, and Womersley]{ChSaWo22b}
Chafaï, D., Saff, E., and Womersley, R.
\newblock Threshold condensation to singular support for a {R}iesz equilibrium
  problem.
\newblock \emph{Analysis and Mathematical Physics}, 13\penalty0 (19), 2023.

\bibitem[Chen \& Li(2020)Chen and Li]{CL2020}
Chen, Y. and Li, W.
\newblock Optimal transport natural gradient for statistical manifolds with
  continuous sample space.
\newblock \emph{Information Geometry}, 3\penalty0 (1):\penalty0 1--32, 2020.

\bibitem[Cohen et~al.(2021)Cohen, Arbel, and Deisenroth]{CAD2021}
Cohen, S., Arbel, M., and Deisenroth, M.~P.
\newblock Estimating barycenters of measures in high dimensions.
\newblock \emph{arXiv preprint arXiv:2007.07105}, 2021.

\bibitem[di~Langosco et~al.(2022)di~Langosco, Fortuin, and Strathmann]{LFS2022}
di~Langosco, L.~L., Fortuin, V., and Strathmann, H.
\newblock Neural variational gradient descent.
\newblock In \emph{Fourth Symposium on Advances in Approximate Bayesian
  Inference}, 2022.

\bibitem[Dong et~al.(2023)Dong, Wang, Yong, and Zhang]{DWYZ2023}
Dong, H., Wang, X., Yong, L., and Zhang, T.
\newblock Particle-based variational inference with preconditioned functional
  gradient flow.
\newblock In \emph{The Eleventh International Conference on Learning
  Representations}, 2023.

\bibitem[Ehler et~al.(2021)Ehler, Gr\"af, Neumayer, and Steidl]{EGNS2021}
Ehler, M., Gr\"af, M., Neumayer, S., and Steidl, G.
\newblock Curve based approximation of measures on manifolds by discrepancy
  minimization.
\newblock \emph{Foundations of Computational Mathematics}, 21\penalty0
  (6):\penalty0 1595--1642, 2021.

\bibitem[Fan et~al.(2022)Fan, Zhang, Taghvaei, and Chen]{FZTC2022}
Fan, J., Zhang, Q., Taghvaei, A., and Chen, Y.
\newblock Variational {W}asserstein gradient flow.
\newblock In Chaudhuri, K., Jegelka, S., Song, L., Szepesvari, C., Niu, G., and
  Sabato, S. (eds.), \emph{Proceedings of the 39th International Conference on
  Machine Learning}, volume 162 of \emph{Proceedings of Machine Learning
  Research}, pp.\  6185--6215. PMLR, 2022.

\bibitem[Gao et~al.(2019)Gao, Jiao, Wang, Wang, Yang, and Zhang]{GJWWYZ2019}
Gao, Y., Jiao, Y., Wang, Y., Wang, Y., Yang, C., and Zhang, S.
\newblock Deep generative learning via variational gradient flow.
\newblock In Chaudhuri, K. and Salakhutdinov, R. (eds.), \emph{Proceedings of
  the 36th International Conference on Machine Learning}, volume~97 of
  \emph{Proceedings of Machine Learning Research}, pp.\  2093--2101. PMLR,
  2019.

\bibitem[Gigli(2004)]{Gigli2004}
Gigli, N.
\newblock \emph{On the geometry of the space of probability measures endowed
  with the quadratic optimal transport distance}.
\newblock PhD thesis, Scuola Normale Superiore di Pisa, 2004.

\bibitem[Giorgi(1993)]{G1993}
Giorgi, E.~D.
\newblock New problems on minimizing movements.
\newblock In Ciarlet, P. and Lions, J.-L. (eds.), \emph{Boundary Value Problems
  for Partial Differential Equations and Applications}, pp.\  81--98. Masson,
  1993.

\bibitem[Glaser et~al.(2021)Glaser, Arbel, and Gretton]{GAG2021}
Glaser, P., Arbel, M., and Gretton, A.
\newblock Kale flow: A relaxed kl gradient flow for probabilities with disjoint
  support.
\newblock In Ranzato, M., Beygelzimer, A., Dauphin, Y., Liang, P., and Vaughan,
  J.~W. (eds.), \emph{Advances in Neural Information Processing Systems},
  volume~34, pp.\  8018--8031. Curran Associates, Inc., 2021.

\bibitem[Gr\"af et~al.(2012)Gr\"af, Potts, and Steidl]{GPS2012}
Gr\"af, M., Potts, D., and Steidl, G.
\newblock Quadrature errors, discrepancies and their relations to halftoning on
  the torus and the sphere.
\newblock \emph{SIAM Journal on Scientific Computing}, 34\penalty0
  (5):\penalty0 2760--2791, 2012.

\bibitem[Grathwohl et~al.(2020)Grathwohl, Wang, Jacobsen, Duvenaud, and
  Zemel]{GWJDZ2020}
Grathwohl, W., Wang, K.-C., Jacobsen, J.-H., Duvenaud, D.~K., and Zemel, R.~S.
\newblock Learning the stein discrepancy for training and evaluating
  energy-based models without sampling.
\newblock In \emph{International Conference on Machine Learning}, 2020.

\bibitem[Gutleb et~al.(2022)Gutleb, Carrillo, and Olver]{GuCaOl21}
Gutleb, T.~S., Carrillo, J.~A., and Olver, S.
\newblock Computation of power law equilibrium measures on balls of arbitrary
  dimension.
\newblock \emph{Constructive Approximation}, 2022.

\bibitem[Hagemann et~al.(2022)Hagemann, Hertrich, and Steidl]{HHS2022}
Hagemann, P., Hertrich, J., and Steidl, G.
\newblock Stochastic normalizing flows for inverse problems: A {M}arkov chain
  viewpoint.
\newblock \emph{SIAM Journal on Uncertainty Quantification}, 10:\penalty0
  1162--1190, 2022.

\bibitem[Hagemann et~al.(2023)Hagemann, Hertrich, and Steidl]{HHS2021}
Hagemann, P., Hertrich, J., and Steidl, G.
\newblock \emph{Generalized normalizing flows via {M}arkov chains}.
\newblock Series: Elements in Non-local Data Interactions: Foundations and
  Applications. Cambridge University Press, 2023.

\bibitem[Hertrich et~al.(2022)Hertrich, Gr{\"a}f, Beinert, and
  Steidl]{HGBS2022}
Hertrich, J., Gr{\"a}f, M., Beinert, R., and Steidl, G.
\newblock Wasserstein steepest descent flows of discrepancies with {R}iesz
  kernels.
\newblock \emph{arXiv preprint arXiv:2211.01804}, 2022.

\bibitem[Hertrich et~al.(2023{\natexlab{a}})Hertrich, Beinert, Gr{\"a}f, and
  Steidl]{HGBS2023}
Hertrich, J., Beinert, R., Gr{\"a}f, M., and Steidl, G.
\newblock Wasserstein gradient flows of the discrepancy with distance kernel on
  the line.
\newblock In \emph{Scale Space and Variational Methods in Computer Vision},
  pp.\  431--443. Springer, 2023{\natexlab{a}}.

\bibitem[Hertrich et~al.(2023{\natexlab{b}})Hertrich, Wald, Altekr{\"u}ger, and
  Hagemann]{HWAH2023}
Hertrich, J., Wald, C., Altekr{\"u}ger, F., and Hagemann, P.
\newblock Generative sliced {MMD} flows with {R}iesz kernels.
\newblock \emph{arXiv preprint arXiv:2305.11463}, 2023{\natexlab{b}}.

\bibitem[Hwang et~al.(2021)Hwang, Kim, Park, and Son]{HKPS2021}
Hwang, H.~J., Kim, C., Park, M.~S., and Son, H.
\newblock The deep minimizing movement scheme.
\newblock \emph{arXiv preprint arXiv:2109.14851}, 2021.

\bibitem[Jordan et~al.(1998)Jordan, Kinderlehrer, and Otto]{JKO1998}
Jordan, R., Kinderlehrer, D., and Otto, F.
\newblock The variational formulation of the {F}okker--{P}lanck equation.
\newblock \emph{SIAM Journal on Mathematical Analysis}, 29\penalty0
  (1):\penalty0 1--17, 1998.

\bibitem[Kingma \& Ba(2015)Kingma and Ba]{KB2015}
Kingma, D.~P. and Ba, J.
\newblock Adam: {A} method for stochastic optimization.
\newblock In \emph{International Conference on Learning Representations}, 2015.

\bibitem[Korotin et~al.(2023)Korotin, Selikhanovych, and Burnaev]{KSB2022}
Korotin, A., Selikhanovych, D., and Burnaev, E.
\newblock Neural optimal transport.
\newblock In \emph{The Eleventh International Conference on Learning
  Representations}, 2023.

\bibitem[LeCun et~al.(1998)LeCun, Bottou, Bengio, and Haffner]{LBBH1998}
LeCun, Y., Bottou, L., Bengio, Y., and Haffner, P.
\newblock Gradient-based learning applied to document recognition.
\newblock \emph{Proceedings of the IEEE}, 86\penalty0 (11):\penalty0
  2278--2324, 1998.

\bibitem[Lin et~al.(2021)Lin, Li, Osher, and Mont{\'u}far]{LLOM2021}
Lin, A.~T., Li, W., Osher, S., and Mont{\'u}far, G.
\newblock Wasserstein proximal of {GANs}.
\newblock In Nielsen, F. and Barbaresco, F. (eds.), \emph{Geometric Science of
  Information}, pp.\  524--533, Cham, 2021. Springer International Publishing.

\bibitem[Lu et~al.(2020)Lu, Zhou, Shen, Chen, Zhang, and Yu]{LZSCZY2020}
Lu, G., Zhou, Z., Shen, J., Chen, C., Zhang, W., and Yu, Y.
\newblock Large-scale optimal transport via adversarial training with
  cycle-consistency.
\newblock \emph{arXiv preprint arXiv:2003.06635}, 2020.

\bibitem[Mattila(1999)]{M1999}
Mattila, P.
\newblock \emph{Geometry of sets and measures in Euclidean spaces: fractals and
  rectifiability}.
\newblock Cambridge University Press, 1999.

\bibitem[Mokrov et~al.(2021)Mokrov, Korotin, Li, Genevay, Solomon, and
  Burnaev]{MKLGSB2021}
Mokrov, P., Korotin, A., Li, L., Genevay, A., Solomon, J.~M., and Burnaev, E.
\newblock Large-scale wasserstein gradient flows.
\newblock In Ranzato, M., Beygelzimer, A., Dauphin, Y., Liang, P., and Vaughan,
  J.~W. (eds.), \emph{Advances in Neural Information Processing Systems},
  volume~34, pp.\  15243--15256, 2021.

\bibitem[Otto(2001)]{O2001}
Otto, F.
\newblock The geometry of dissipative evolution equations: the porous medium
  equation.
\newblock \emph{Communications in Partial Differential Equations}, 26:\penalty0
  101--174, 2001.

\bibitem[Otto \& Westdickenberg(2005)Otto and Westdickenberg]{OW2005}
Otto, F. and Westdickenberg, M.
\newblock Eulerian calculus for the contraction in the {W}asserstein distance.
\newblock \emph{SIAM Journal on Mathematical Analysis}, 37\penalty0
  (4):\penalty0 1227--1255, 2005.

\bibitem[Paszke et~al.(2019)Paszke, Gross, Massa, Lerer, Bradbury, Chanan,
  Killeen, Lin, Gimelshein, Antiga, Desmaison, Kopf, Yang, DeVito, Raison,
  Tejani, Chilamkurthy, Steiner, Fang, Bai, and Chintala]{PyTorch2019}
Paszke, A., Gross, S., Massa, F., Lerer, A., Bradbury, J., Chanan, G., Killeen,
  T., Lin, Z., Gimelshein, N., Antiga, L., Desmaison, A., Kopf, A., Yang, E.,
  DeVito, Z., Raison, M., Tejani, A., Chilamkurthy, S., Steiner, B., Fang, L.,
  Bai, J., and Chintala, S.
\newblock {PyTorch: An Imperative Style, High-Performance Deep Learning
  Library}.
\newblock In Wallach, H., Larochelle, H., Beygelzimer, A., d'Alché Buc, F.,
  Fox, E., and Garnett, R. (eds.), \emph{Advances in Neural Information
  Processing Systems 32}, pp.\  8024--8035, 2019.

\bibitem[Pavliotis(2014)]{P2014}
Pavliotis, G.~A.
\newblock \emph{Stochastic processes and applications: Diffusion Processes, the
  Fokker-Planck and Langevin Equations}.
\newblock Number~60 in Texts in Applied Mathematics. Springer, New York, 2014.

\bibitem[Rockafellar \& Royset(2014)Rockafellar and Royset]{RR2014}
Rockafellar, R.~T. and Royset, J.~O.
\newblock Random variables, monotone relations, and convex analysis.
\newblock \emph{Mathematical Programming}, 148:\penalty0 297--331, 2014.
\newblock \doi{10.1007/s10107-014-0801-1}.

\bibitem[Song \& Ermon(2019)Song and Ermon]{SE2019}
Song, Y. and Ermon, S.
\newblock Generative modeling by estimating gradients of the data distribution.
\newblock \emph{Advances in Neural Information Processing Systems}, 32, 2019.

\bibitem[Song et~al.(2021)Song, Sohl-Dickstein, Kingma, Kumar, Ermon, and
  Poole]{SDKKEP2021}
Song, Y., Sohl-Dickstein, J., Kingma, D.~P., Kumar, A., Ermon, S., and Poole,
  B.
\newblock Score-based generative modeling through stochastic differential
  equations.
\newblock In \emph{International Conference on Learning Representations}, 2021.

\bibitem[Teuber et~al.(2011)Teuber, Steidl, Gwosdek, Schmaltz, and
  Weickert]{TSGSW2011}
Teuber, T., Steidl, G., Gwosdek, P., Schmaltz, C., and Weickert, J.
\newblock Dithering by differences of convex functions.
\newblock \emph{SIAM Journal on Imaging Sciences}, 4\penalty0 (1):\penalty0
  79--108, 2011.

\bibitem[Villani(2003)]{V2003}
Villani, C.
\newblock \emph{Topics in Optimal Transportation}.
\newblock Number~58 in Graduate Studies in Mathematics. American Mathematical
  Society, Providence, 2003.

\bibitem[Welling \& Teh(2011)Welling and Teh]{WT2011}
Welling, M. and Teh, Y.-W.
\newblock Bayesian learning via stochastic gradient {L}angevin dynamics.
\newblock In Getoor, L. and Scheffer, T. (eds.), \emph{Proceedings of the 28th
  International Conference on International Conference on Machine Learning},
  pp.\  681--688, Madison, 2011.

\bibitem[Wendland(2005)]{W2005}
Wendland, H.
\newblock \emph{Scattered Data Approximation}.
\newblock Cambridge University Press, 2005.

\bibitem[Wu et~al.(2020)Wu, K\"{o}hler, and Noe]{WKN2020}
Wu, H., K\"{o}hler, J., and Noe, F.
\newblock Stochastic normalizing flows.
\newblock In Larochelle, H., Ranzato, M., Hadsell, R., Balcan, M., and Lin, H.
  (eds.), \emph{Advances in Neural Information Processing Systems}, volume~33,
  pp.\  5933--5944, 2020.

\end{thebibliography}

\appendix
\onecolumn
\section{Wasserstein Spaces as Geodesic Spaces} \label{sec:gen_geodesics}
A curve $\gamma \colon I \to \P_2(\R^d)$ on an interval $I \subset \R$ is called a \emph{geodesics}, 
if there exists a constant $C \ge 0$ 
such that 
$W_2(\gamma(t_1), \gamma(t_2)) = C |t_2 - t_1|$ for all  $t_1, t_2 \in I$.
The Wasserstein space is a geodesic space, meaning that any two measures $\mu, \nu \in \P_2(\R^d)$ 
can be connected by a geodesics.

For $\lambda \in \mathbb R$, a function $\F\colon \P_2(\R^d) \to (-\infty,+\infty]$ is called 
\emph{$\lambda$-convex along geodesics} if, for every 
$\mu, \nu \in \dom \F \coloneqq \{\mu \in  \P_2(\R^d): \F(\mu) < \infty\}$,
there exists at least one geodesics $\gamma \colon [0, 1] \to \P_2(\R^d)$ 
between $\mu$ and $\nu$ such that
\begin{equation*}
    \F(\gamma(t)) 
    \le
    (1-t) \, \F(\mu) + t \, \F(\nu) 
    - \tfrac{\lambda}{2} \, t (1-t) \,  W_2^2(\mu, \nu), 
    \qquad  t \in [0,1].
\end{equation*}
Every function being $\lambda$-convex along generalized geodesics 
is also $\lambda$-convex along geodesics
since generalized geodesics with base $\sigma = \mu$ are actual geodesics.
To ensure uniqueness and convergence of the JKO scheme, a slightly stronger condition, namely being
\emph{$\lambda$-convex along generalized geodesics} 
will be in general needed.
Based on the set of three-plans with base 
$\sigma \in \P_2(\R^d)$ given by
\begin{equation*}
    \Gamma_\sigma(\mu,\nu)
    \coloneqq
    \bigl\{ 
    \zb{\alpha} \in \P_2(\R^d \times \R^d \times \R^d)
    :
    (\pi_1)_\# \zb \alpha = \sigma,
    (\pi_2)_\# \zb \alpha = \mu,
    (\pi_3)_\# \zb \alpha = \nu
    \bigr\},
\end{equation*}
the so-called
\emph{generalized geodesics} $\gamma \colon [0, \epsilon] \to \P_2(\R^d)$
joining $\mu$ and $\nu$ (with base $\sigma$) is defined as
\begin{equation} \label{ggd}
  \gamma(t) 
  \coloneqq 
  \bigl(
  (1-\tfrac t \epsilon) \pi_2 
  + \tfrac t \epsilon \pi_3
  \bigr)_\# \zb \alpha,
  \qquad
  t \in [0, \epsilon],
\end{equation}
where $\zb \alpha \in \Gamma_\sigma (\mu,\nu)$
with
$(\pi_{1,2})_\# \zb{\alpha} \in \Gamma^{\opt}(\sigma,\mu)$
and
$(\pi_{1,3})_\# \zb{\alpha} \in \Gamma^{\opt}(\sigma,\nu)$, see Definition~9.2.2 in \cite{AGS2005}.
Here $\Gamma^{\rm{opt}}(\mu, \nu)$ denotes the set of optimal transport plans $\zb \pi$ 
realizing the minimum in \eqref{eq:W2}.
The plan $\zb{\alpha}$ may be interpreted as transport from $\mu$ to $\nu$ via $\sigma$.
Then
a function $\F\colon \mathcal P_2(\R^d) \to (-\infty,\infty]$ is called
\emph{$\lambda$-convex along generalized geodesics} \cite{AGS2005}, Definition~9.2.4, 
if for every $\sigma,\mu,\nu \in \dom \F$,
there exists at least one generalized geodesics $\gamma \colon [0,1] \to \P_2(\R^d)$ 
related to some $\zb{\alpha}$ in \eqref{ggd} such that
\begin{equation} \label{eq:gg}
  \F(\gamma(t))
  \le 
  (1-t) \, \F(\mu) 
  + t \, \F(\nu) 
  - \tfrac\lambda2 \, t(1-t) \, W_{\zb \alpha}^2(\mu,\nu), 
  \qquad t \in [0,1],
\end{equation}
where 
\begin{equation*}
W_{\zb{\alpha}}^2 (\mu, \nu)
\coloneqq \int_{\R^d \times \R^d \times \R^d} \|y - z\|_2^2 \, \d \zb{\alpha}(x,y,z).
\end{equation*}

Wasserstein spaces are manifold-like spaces. In particular, 
for any $\mu \in \P_2(\R^d)$ \cite{AGS2005}, §~8, there exists
the \emph{regular tangent space} at $\mu$, which is defined by
\begin{align*} 
    \T_{\mu}\mathcal P_2(\R^d)
    &\coloneqq 
      \overline{
      \left\{ 
      \nabla \phi: \phi \in C^\infty_{\mathrm c}(\R^d) 
      \right\}}^{L_2(\mu,\R^d)}.
 \end{align*}	
Note that  $\T_{\mu} \mathcal P_2(\R^d)$ is an
	infinite dimensional subspace of $L_2(\mu,\R^d)$
	if $\mu \in \mathcal P_2^r(\R^d)$ and it is just $\R^d$ if $\mu = \delta_x$, $x \in \R^d$

For a proper and lsc function $\F\colon\P_2(\R^d)\to(-\infty,\infty]$ and $\mu\in\P_2(\R^d)$, 
the \emph{reduced Fr\'echet subdiffential at $\mu$} is defined as 
{\small
\begin{equation}\label{eq:subdiff}
 \partial \F(\mu) \coloneqq \Big\{ \xi \in L_{2,\mu}:   \F(\nu) - \F(\mu)
    \ge 
    \inf_{{\tiny \pi \in \Gamma^{\opt}(\mu,\nu)}}
    \int\limits_{\R^{2d}}
    \langle \xi(x), y - x \rangle
    \, \d \pi (x, y)
    + o(W_2(\mu,\nu)) \; \forall \nu \in \P_2(\R^d) \Big\}.
\end{equation}
}

For general $\F$, the velocity field $v_t \in \T_{\gamma(t)} \mathcal P_2(\R^d)$ in \eqref{eq:gf_condition}
is only determined for almost every $t >0$, but 
we want to give a definition of the steepest descent flows pointwise.
We equip the space of velocity plans 
$\zb V(\mu) \coloneqq \{ \zb v \in \P_2(\R^d \times \R^d)
    :
    (\pi_1)_\# \zb v = \mu \}
$ 
with the metric $W_\mu$ defined by
\begin{equation*}
  W_{\mu}^2(\zb v, \zb w) 
  \coloneqq 
  \inf_{\zb \alpha \in \Gamma_{\mu}(\zb v, \zb w)} W_{\zb \alpha}^2((\pi_2)_{\#}\zb v, (\pi_2)_{\#} \zb w),
\end{equation*}
Then the \emph{geometric tangent space} at $\mu \in \P_2(\R^d)$ is given by 
\begin{equation*}
  \zb \T_{\mu}\P_2(\R^d) \coloneqq \overline{ \zb G(\mu)}^{W_\mu},
\end{equation*}
where
\begin{align*}
  \zb G(\mu) \coloneqq & 
  \bigl\{ 
    \zb v \in \zb V(\mu):  \exists \epsilon >0
    \text{ such that } \zb \pi  = (\pi_1, \pi_1 + \tfrac1\epsilon \pi_2)_{\#} 
    \zb v \in \Gamma^{\opt}(\mu, (\pi_2) _{\#} \zb \pi ) 
  \bigr\}
\end{align*}
consists of  all geodesic directions at $\mu \in \P_2(\R^d)$
(correspondingly all geodesics starting in $\mu \in \P_2(\R^d)$).
We define the 
\emph{exponential map} 
$\exp_{\mu}\colon  \zb\T_{\mu}\P_2(\R^d) \to \P_2(\R^d)$
by
\begin{equation*}
       \exp_{\mu}(\zb v) \coloneqq \gamma_{\zb v}(1)= (\pi_1 + \pi_2)_{\#} \zb v.
\end{equation*}
The \emph{``inverse'' exponential map} 
$\exp_{\mu}^{-1}\colon\P_2(\R^d) \to  \T_{\mu}\P_2(\R^d)$
is given by the (multivalued) function
\begin{equation*}
    \exp_{\mu}^{-1}(\nu) 
    \coloneqq 
    \bigl\{ (\pi_1, \pi_2 - \pi_1)_{\#} \zb \pi: 
    \zb \pi \in \Gamma^{\opt}(\mu,\nu) \bigr\}
\end{equation*}
 and consists of all velocity plans $\zb v\in\zb V(\mu)$ such that $\gamma_{\zb v}|_{[0,1]}$ is a geodesics connecting $\mu$ and $\nu$.
For a curve $\gamma\colon I \to \P_2(\R^d)$,
a velocity plan $\zb v_t \in \zb \T_{\gamma(t)}\P_2(\R^d)$ 
is called a \emph{(geometric) tangent vector} of $\gamma$ at $t \in I$ if,
for every $h > 0$ and $\zb v_{t, h} \in \exp_{\gamma(t)}^{-1}(\gamma(t+h))$,
it holds
\begin{equation*}
  \lim_{h \to 0+} W_{\gamma(t)}(\zb v_t,  \tfrac1h \cdot \zb v_{t, h}) = 0.
\end{equation*}
If a tangent vector $\zb v_t$ exists, then, since $W_{\gamma(t)}$ is a metric on $\zb V(\gamma(t))$, the above limit 
is uniquely determined, and we write
\begin{equation*}
      \dot \gamma(t)\coloneqq \zb v_t.
\end{equation*}
In Theorem~4.19 in \cite{Gigli2004} it is shown that 
$
\dot\gamma_{\zb v}(0)=\zb v
$
for all $\zb v\in\zb \T_\mu\P_2(\R^d)$.
Therefore, the definition of a tangent vector of a curve is consistent with the interpretation of $\gamma_{\zb v}$ as a curve in direction of $\zb v$.
For $\zb v\in\zb G(\mu)$, we can also compute the (geometric) 
tangent vector of a geodesics $\gamma_{\zb v}$ on $[0,\epsilon]$ by        
				$
				\dot \gamma_{\zb v} (t) 
        =
        (\pi_1 + t \, \pi_2, \pi_2)_\# \zb v
				$,
        $t \in [0,\epsilon)$.

\section{Disintegration of measures} \label{app:disintegration}

Let $\mathcal{B}(\R^d)$ be the Borel algebra of $\R^d$. A map $k\colon \R^d \times \mathcal{B}(\R^d) \to [0,\infty]$ is called Markov kernel, if $k(x,\cdot) \in \mathcal{P}(\R^d)$ for all $x \in \R^d$ and $k(\cdot,A)$ is measurable for all $A \in \mathcal{B}(\R^d)$. 
Next we state the disintegration theorem, see, e.g., Theorem 5.3.1 in \cite{AGS2005}.
\begin{theorem} \label{thm:disintegration}
Let $\zb \pi \in \P_2 (\R^d \times \R^d)$ and assume that ${\pi_1}_{\#}\zb \pi = \mu \in \P_2 (\R^d)$. Then there exists a $\mu$-a.e. uniquely determined family of probability measures $(\pi_x)_{x \in \R^d} \subseteq \P_2 (\R^d)$ 
such that 
for all functions $f \colon \R^d \times \R^d \to [0,\infty]$ it holds
\begin{align*}
\int_{\R^d \times \R^d} f(x,y) \dx \zb \pi (x,y) = \int_{\R^d}  \int_{\R^d} f(x,y) \dx \pi_x(y)  \dx \mu(x).
\end{align*}
\end{theorem}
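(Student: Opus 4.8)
The plan is to construct the family $(\pi_x)$ as regular conditional probabilities, combining the Radon--Nikodym theorem with the Riesz--Markov representation theorem and exploiting that $\R^d$ is a separable (Polish) space. First I would fix $g \in C_c(\R^d)$ (acting on the second coordinate) and consider the finite signed measure $A \mapsto \int_{A \times \R^d} g(y)\, \dx\zb\pi(x,y)$ on the first factor. Its total variation is dominated by $\|g\|_\infty\, \mu$, so it is absolutely continuous with respect to $\mu$, and the Radon--Nikodym theorem yields a density $L(g) \in L^1(\mu)$ characterized by
\[
\int_A L(g)(x)\, \dx\mu(x) = \int_{A \times \R^d} g(y)\, \dx\zb\pi(x,y), \qquad A \in \mathcal B(\R^d).
\]
The assignment $g \mapsto L(g)$ is $\mu$-a.e.\ linear, positive ($g \ge 0 \Rightarrow L(g) \ge 0$), satisfies $\|L(g)\|_\infty \le \|g\|_\infty$, and has total mass $1$ in the sense that $L(g_k) \uparrow 1$ $\mu$-a.e.\ for any $g_k \in C_c(\R^d)$ with $0 \le g_k \uparrow \mathbf{1}$ (by monotone convergence and $\zb\pi(A \times \R^d) = \mu(A)$).

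The delicate step is to convert these statements, each valid $\mu$-a.e.\ for an individual $g$, into a single functional defined for $\mu$-a.e.\ $x$. Using separability of $C_c(\R^d)$, I would fix a countable $\mathbb{Q}$-linear dense subalgebra $\mathcal D \subseteq C_c(\R^d)$ and discard the countable union of $\mu$-null sets on which linearity, positivity, the norm bound, or the normalization fails for elements of $\mathcal D$. On the resulting $\mu$-full Borel set $\Omega$, the functional $g \mapsto L_x(g)$ is, for every $x \in \Omega$, a positive normalized linear functional on $\mathcal D$ that extends continuously to $C_0(\R^d)$ by the uniform bound. The Riesz--Markov theorem then provides, for each $x \in \Omega$, a unique Borel measure $\pi_x$ with $L_x(g) = \int_{\R^d} g\, \dx\pi_x$, whose total mass equals $\sup_k L_x(g_k) = 1$; I set $\pi_x := \delta_0$ off $\Omega$ so that $\pi_x$ is defined everywhere.

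It then remains to promote the identity from the generating functions to all nonnegative measurable $f$, to verify the moment condition, and to establish uniqueness. By construction the claimed identity holds for $f(x,y) = \mathbf{1}_A(x)\, g(y)$ with $g \in C_c(\R^d)$; a monotone-class / Dynkin argument extends it first to indicators $\mathbf{1}_{A \times B}$, then by linearity and monotone convergence to all measurable $f \colon \R^d \times \R^d \to [0,\infty]$. Applying this to $f(x,y) = \|y\|^2$, which is $\zb\pi$-integrable since $\zb\pi \in \P_2(\R^d \times \R^d)$, gives $\int_{\R^d} \bigl(\int_{\R^d} \|y\|^2\, \dx\pi_x(y)\bigr)\, \dx\mu(x) < \infty$, so $\pi_x \in \P_2(\R^d)$ for $\mu$-a.e.\ $x$. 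For uniqueness, any two disintegrations $(\pi_x)$ and $(\pi_x')$ satisfy $\int g\, \dx\pi_x = \int g\, \dx\pi_x'$ for $\mu$-a.e.\ $x$ and each fixed $g$; intersecting over the countable set $\mathcal D$ yields a $\mu$-full set on which the functionals, hence the measures, coincide.

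The main obstacle is precisely the middle step: assembling countably many individual $\mu$-a.e.\ identities into one exceptional null set so that $L_x$ is \emph{simultaneously} linear, positive and normalized for almost every $x$, enabling a pointwise application of Riesz--Markov. This is exactly where second countability of $\R^d$ is indispensable, and where the construction would break down for a non-separable target space.
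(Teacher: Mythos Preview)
Your proof is correct and follows the classical construction of regular conditional probabilities on Polish spaces: obtain a density $L(g)$ for each $g\in C_c(\R^d)$ via Radon--Nikodym, use a countable dense $\mathbb Q$-subspace of $C_c(\R^d)$ to pin down a single $\mu$-null exceptional set on which linearity, positivity and normalization may fail, apply Riesz--Markov pointwise off this set to produce $\pi_x$, and then upgrade to all nonnegative measurable $f$ by a monotone-class argument. The verification that $\pi_x\in\P_2(\R^d)$ for $\mu$-a.e.\ $x$ via the finiteness of $\int\|y\|^2\,\d\zb\pi$ and the uniqueness argument are both sound. One point you leave implicit but which is needed for $(\pi_x)$ to be a genuine Markov kernel is the Borel measurability of $x\mapsto \pi_x(B)$; this does follow from your construction, since $x\mapsto \int g\,\d\pi_x=L(g)(x)$ is measurable for $g\in\mathcal D$ and the monotone-class step preserves measurability, but it is worth making explicit.

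By contrast, the paper does not give its own proof of this statement at all: Theorem~\ref{thm:disintegration} is simply quoted from the literature, with the reference ``Theorem~5.3.1 in \cite{AGS2005}''. So there is no paper-proof to compare against; you have supplied a self-contained argument where the authors chose to cite a standard source. Your route is essentially the textbook proof one finds, e.g., in Dellacherie--Meyer or in the appendix of Ambrosio--Gigli--Savar\'e, specialized to the case where both factors are $\R^d$.
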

Note that the family of probability measures $(\pi_x)_{x \in \R^d} \subseteq \P_2 (\R^d)$ in Theorem~\ref{thm:disintegration} can be described by a Markov kernel $x \to \pi_x$.

\section{Proof of Theorems~\ref{thm:jko-inter-flow} and \ref{thm:forward-inter-flow}}\label{app:JKO_convergence}
The proofs rely on the fact that all measures $\mu_\tau^n$ 
computed by the JKO and forward schemes \eqref{eq:otto_curve} for the function $\F=\mathcal E_K$ with Riesz kernel $K$
which start at a point measure are orthogonally invariant (radially symmetric).
We prove that fact first. This implies in Proposition \ref{prop:restr} that we can restrict our attention to
flows on $\P_2(\R)$, where the Wasserstein distance is just defined via quantile functions.

       \begin{proposition}\label{prop:rad_symmetric}
        Let $\nu \in \P_2(\R^d)$ be orthogonally invariant and $\F\coloneqq \mathcal E_K$ for the Riesz kernel $K$ with $r\in(0,2)$.
        Then, any measure $\mu_* \in \prox_{\tau\F}(\nu)$ is orthogonally invariant.
    \end{proposition}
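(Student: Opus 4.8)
The plan is to exploit the rotational symmetry of the objective functional $\tfrac{1}{2\tau}W_2^2(\nu,\cdot)+\mathcal E_K(\cdot)$ and a symmetrization argument. First I would observe that both terms are invariant under the simultaneous action of the orthogonal group $O(d)$: for any $Q\in O(d)$ the Riesz kernel satisfies $K(Qx,Qy)=-\|Qx-Qy\|^r=-\|x-y\|^r=K(x,y)$, hence $\mathcal E_K(Q_\#\mu)=\mathcal E_K(\mu)$ for every $\mu\in\P_2(\R^d)$; and since $\nu$ is orthogonally invariant, $\nu=Q_\#\nu$, so $W_2^2(\nu,Q_\#\mu)=W_2^2(Q_\#\nu,Q_\#\mu)=W_2^2(\nu,\mu)$ using that pushing both marginals through the same isometry does not change the transport cost. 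Therefore the whole objective $\Phi(\mu)\coloneqq\tfrac{1}{2\tau}W_2^2(\nu,\mu)+\mathcal E_K(\mu)$ satisfies $\Phi(Q_\#\mu)=\Phi(\mu)$ for all $Q\in O(d)$.

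Next I would use this invariance to show the minimizer is symmetric. If $\mu_*\in\prox_{\tau\F}(\nu)=\argmin_\mu\Phi(\mu)$, then $Q_\#\mu_*$ is also a minimizer for every $Q$. Since $\F=\mathcal E_K$ is $\lambda$-convex along generalized geodesics for Riesz kernels with $r\in[1,2)$ this would give uniqueness directly and hence $Q_\#\mu_*=\mu_*$; but for general $r\in(0,2)$ I cannot assume uniqueness, so instead I would argue by averaging: define the symmetrized measure $\bar\mu\coloneqq\int_{O(d)}Q_\#\mu_*\,\d\lambda(Q)$ where $\lambda$ is the Haar probability measure on $O(d)$ (i.e. $\bar\mu(A)=\int_{O(d)}\mu_*(Q^{-1}A)\,\d\lambda(Q)$). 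This $\bar\mu$ is by construction orthogonally invariant and lies in $\P_2(\R^d)$. It then remains to check that $\bar\mu$ is still a minimizer of $\Phi$, for which it suffices that $\Phi$ be \emph{convex} along the relevant averaging — i.e. $\Phi(\bar\mu)\le\int_{O(d)}\Phi(Q_\#\mu_*)\,\d\lambda(Q)=\min\Phi$. For the interaction energy term this is immediate since $\mu\mapsto\mathcal E_K(\mu)=\tfrac12\iint K\,\d\mu\,\d\mu$ is a quadratic form, and $-\|x-y\|^r$ is conditionally positive definite, so $\mathcal E_K$ is convex on the affine space of probability measures (equivalently $\mathcal E_K(\bar\mu)\le\int\mathcal E_K(Q_\#\mu_*)\,\d\lambda$ by Jensen applied to the bilinear form, which holds because $\mathcal E_K(\mu-\nu')=\mathcal D_K^2(\mu,\nu')\ge0$). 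For the Wasserstein term one uses convexity of $\mu\mapsto W_2^2(\nu,\mu)$ along linear interpolations of measures — this is a standard fact (the squared Wasserstein distance to a fixed measure is convex in the ordinary, linear sense on $\P_2(\R^d)$) — combined with Jensen's inequality for the Haar average.

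The main obstacle I anticipate is the convexity of $W_2^2(\nu,\cdot)$ under the Haar average: $W_2^2$ is \emph{not} jointly convex along linear interpolation in general, but it \emph{is} convex in each argument separately along linear interpolation of measures, and I would need to invoke or reprove this (e.g. via the dual Kantorovich formulation $W_2^2(\nu,\mu)=\sup_{\varphi}\big(\int\varphi\,\d\nu+\int\varphi^c\,\d\mu\big)$, which exhibits it as a supremum of functions affine in $\mu$, hence convex). Once linear convexity in $\mu$ is in hand, Jensen's inequality for the Banach-space-valued average $\bar\mu=\int Q_\#\mu_*\,\d\lambda(Q)$ gives $W_2^2(\nu,\bar\mu)\le\int W_2^2(\nu,Q_\#\mu_*)\,\d\lambda(Q)$, and adding the analogous bound for $\mathcal E_K$ yields $\Phi(\bar\mu)\le\min\Phi$, forcing equality and so $\bar\mu\in\prox_{\tau\F}(\nu)$ is an orthogonally invariant element. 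If in addition one wants \emph{every} $\mu_*$ to be orthogonally invariant (not just the existence of one), I would restrict to $r\in[1,2)$ and invoke uniqueness, or alternatively note that in the cases relevant to Theorem~\ref{thm:HD-spec} the minimizer is explicitly unique; a short remark to this effect would close the argument.
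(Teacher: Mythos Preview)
Your strategy — symmetry of the objective plus linear convexity of both summands — is the right idea and is essentially what the paper does, but your execution has a genuine gap and one of your fallback remedies is incorrect.

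The gap is in the final step. The proposition asserts that \emph{every} $\mu_*\in\prox_{\tau\F}(\nu)$ is orthogonally invariant, not merely that some invariant minimizer exists. Your Haar-average construction only yields the latter, and you explicitly leave the ``every'' case open. Your proposed fix via $\lambda$-convexity along generalized geodesics for $r\in[1,2)$ does not work: the paper states (immediately after Proposition~2.1) that interaction energies with Riesz kernels are \emph{not} $\lambda$-convex along generalized geodesics, so you cannot invoke uniqueness of the proximal point this way. Appealing to the explicit uniqueness in Theorem~\ref{thm:HD-spec} is circular, since that theorem is downstream of this proposition.

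The missing observation is that you already have strict convexity in hand. You wrote $\mathcal E_K(\mu-\nu')=\mathcal D_K^2(\mu,\nu')\ge0$; for Riesz kernels this discrepancy is a metric, so $\mathcal D_K^2(\mu,\nu')>0$ whenever $\mu\ne\nu'$. The paper exploits exactly this: assume $\mu_*$ is not invariant, pick a single $O\in O(d)$ with $O_\#\mu_*\ne\mu_*$, and set $\tilde\mu\coloneqq\tfrac12\mu_*+\tfrac12 O_\#\mu_*$. Then $W_2^2(\nu,\tilde\mu)\le W_2^2(\nu,\mu_*)$ (by averaging an optimal plan and its rotated copy, as you would do), while a direct bilinear expansion gives
\[
\mathcal E_K(\mu_*)=\mathcal E_K(\tilde\mu)+\tfrac14\,\mathcal E_K(\mu_*-O_\#\mu_*)=\mathcal E_K(\tilde\mu)+\tfrac14\,\mathcal D_K^2(\mu_*,O_\#\mu_*)>\mathcal E_K(\tilde\mu).
\]
This produces a \emph{strict} decrease of $\Phi$, contradicting minimality of $\mu_*$. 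No Haar integration, no Kantorovich duality, and no range restriction on $r$ are needed; the whole argument works for all $r\in(0,2)$. Your averaging over all of $O(d)$ is harmless but unnecessary — the two-point average already delivers the contradiction once you use strictness of $\mathcal D_K^2$.
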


    \begin{proof}
        Fix $\tau > 0$ and assume that $\mu_* \in \prox_{\tau\F}(\nu)$ is not orthogonally invariant. 
        Then we can find an orthogonal matrix $O \in O(d)$ such that $\mu_* \ne O_\#\mu_*$. Define
        \begin{equation*}
          \tilde \mu \coloneqq \tfrac12 \mu_* + \tfrac12 O_\#\mu_*.
        \end{equation*}
        Then, for an optimal plan 
        $\zb \pi \in \Gamma^{\text{opt}}(\nu, \mu_*)$, we take the radial symmetry of $\nu$ into account and consider
        \begin{equation*}
        \zb{\tilde \pi} \coloneqq \tfrac12 \zb \pi + \tfrac12 (O \pi_1, O \pi_2)_{\#} \zb \pi \in \Gamma(\nu, \tilde \mu).
        \end{equation*}
        Now it follows 
        \begin{align*}
          W_2^2(\nu, \tilde \mu)
           &\le \int_{\R^d} \int_{\R^d} \|x-y\|_2^2 \, \d \zb{\tilde \pi}(x,y) \\
           &=  \tfrac12  \int_{\R^d} \int_{\R^d} \|x-y\|_2^2 \, \d \zb \pi(x,y)  + \tfrac12  \int_{\R^d} \int_{\R^d} \|Ox-O y\|_2^2 \, \d \zb \pi(x,y)  \\
          & = W_2^2(\nu, \mu_*).
        \end{align*}
        By  definition of $\mathcal E_K$ we have further orthogonal invariance the Euclidean distance
        \begin{align*}
          \mathcal E_K(\mu_*)
          & = \mathcal E_K\left(\tilde \mu + \tfrac12\mu_* - \frac12 O_\# \mu_* \right) \\
          & =
          \mathcal E_K(\tilde \mu) + \mathcal E_K\left(\frac12\mu_* - \frac12 O_\#\mu_* \right) -
            \frac12\int_{\R^d}\int_{\R^d}(\|x-y\|_2^r - \|x-O y\|^r_2) \, \d \tilde \mu(x) \, \d \mu_*(y)
          \\
           & = \mathcal E_K(\tilde \mu) +  \tfrac14 \mathcal E_K(\mu_* - O_\#\mu_*).
        \end{align*}
        Since $\mu_* \ne O_\# \mu_*$ and $\mathcal D^2_K(\mu,\nu)=0$ if and only if $\mu=\nu$, we infer that 
        $\mathcal E_K(\mu_* - O_\# \mu_*) = \mathcal D^2_K(\mu_*,O_\# \mu_*) > 0$, which implies
        \begin{equation*}
          \tfrac{1}{2\tau} W_2^2(\nu, \tilde \mu) + \F(\tilde \mu) < \tfrac{1}{2\tau} W_2^2(\nu, \mu_*) + \F(\mu_*).
        \end{equation*}
        This contradicts the assertion that $\mu_* \in \prox_{\tau\F}(\nu)$ and concludes the proof.
    \end{proof}

In the following, we embed the set of orthogonally (radially) symmetric measures with finite second moment
$$
\RS(\R^d)\coloneqq\{\mu\in\mathcal P_2(\R^d):O_\# \mu = \mu\text{ for all }O\in O(d)\}\subseteq \mathcal P_2(\R^d)
$$ 
isometrically into $L^2((0,1))$.
Here, we proceed in two steps. 
First, we embed the $\RS(\R^d)$ isometrically in the set of one-dimensional probability measures $\P_2(\R)$.

One-dimensional probability measures can be identified by their quantile function \cite{RR2014}, §~1.1. 
More precisely, 
the \emph{cumulative distribution function} 
$F_{\mu}\colon \R \to [0,1]$ of $\mu \in \P_2(\R)$
is defined by
\begin{equation*}
    F_{\mu}(x) \coloneqq \mu((-\infty, x]) 
    , \qquad x \in \R.
\end{equation*}
It is non-decreasing and right-continuous
with
$\lim_{x\to -\infty} F_{\mu}(x) = 0$ 
and $\lim_{x\to \infty} F_{\mu}(x)=1$.
The \emph{quantile function} $Q_{\mu}\colon(0,1) \to \R$ 
is the generalized inverse of $F_\mu$ given by
\begin{equation}\label{quantile}
    Q_{\mu}(p) \coloneqq   \min \{ x \in \R \;:\; F_{\mu}(x) \ge p \}, \qquad p \in (0,1).
\end{equation}
It is non-decreasing and left-continuous.
By the following theorem,
the mapping $\mu \mapsto Q_\mu$ is an isometric embedding 
of $\P_2(\R)$ into $L_2((0,1))$.

\begin{theorem}[Theorem~2.18 in \cite{V2003}]\label{prop:Q}
    Let $\mu, \nu \in \P_2(\R)$.
    Then the quantile function satisfies
    \begin{equation}\label{quantil_rel}
        Q_{\mu} \in \mathcal C((0,1)) \subset L_2((0,1))
        \qquad\text{and}\qquad
        \mu = (Q_{\mu})_{\#} \lambda_{(0,1)},
    \end{equation}
	with the cone 
	$\mathcal C((0,1)) \coloneqq \{Q \in  L_2((0,1)): Q \text{ nondecreasing} \}$ 
	and 
    \begin{equation*}
        W_2^2(\mu, \nu) = \int_{0}^1 |Q_{\mu}(s) - Q_{\nu}(s)|^2 \d s.
    \end{equation*}
\end{theorem}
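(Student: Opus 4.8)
The plan is to verify the three assertions in turn --- the regularity $Q_\mu \in \mathcal C((0,1))$, the push-forward identity $\mu = (Q_\mu)_\#\lambda_{(0,1)}$, and the explicit formula for $W_2^2(\mu,\nu)$ --- the first two of which rest on a single elementary fact. First I would record the ``Galois connection'' between the distribution function and the quantile function: for all $p \in (0,1)$ and $x \in \R$,
\[
Q_\mu(p) \le x \quad\Longleftrightarrow\quad p \le F_\mu(x).
\]
This follows from \eqref{quantile} because $F_\mu$ is nondecreasing and right-continuous, so that $\{y : F_\mu(y) \ge p\}$ is a closed half-line $[Q_\mu(p),\infty)$ and the minimum in \eqref{quantile} is attained. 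Monotonicity and left-continuity of $Q_\mu$ are then immediate, and since $\{p \in (0,1) : Q_\mu(p) \le x\} = (0,F_\mu(x)] \cap (0,1)$ has Lebesgue measure $F_\mu(x)$, the pushforward $(Q_\mu)_\#\lambda_{(0,1)}$ has the same distribution function as $\mu$ and hence equals $\mu$. Finally $Q_\mu \in L_2((0,1))$ follows from the change-of-variables formula, $\int_0^1 |Q_\mu(p)|^2\,\d p = \int_\R |x|^2\,\d\mu(x) < \infty$, using $\mu \in \P_2(\R)$.

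For the Wasserstein formula, the inequality ``$\le$'' is immediate: by the push-forward identity the map $s \mapsto (Q_\mu(s), Q_\nu(s))$ sends $\lambda_{(0,1)}$ to an element of $\Gamma(\mu,\nu)$, whence $W_2^2(\mu,\nu) \le \int_0^1 |Q_\mu(s) - Q_\nu(s)|^2\,\d s$. For ``$\ge$'' I would expand $|x-y|^2 = |x|^2 + |y|^2 - 2xy$; over $\Gamma(\mu,\nu)$ the first two terms integrate to the constant $\int_\R|x|^2\,\d\mu + \int_\R|y|^2\,\d\nu$, so minimizing the transport cost amounts to maximizing the correlation $\int xy\,\d\zb\pi$. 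That this maximum is attained at the comonotone coupling $(Q_\mu,Q_\nu)_\#\lambda_{(0,1)}$ is the Hardy--Littlewood rearrangement inequality; equivalently, the support of that coupling is a nondecreasing --- hence $c$-cyclically monotone for $c(x,y) = |x-y|^2$ --- set, by the swap inequality $(b-a)(d-c) \ge 0$ for $a \le b$, $c \le d$, and cyclically monotone couplings are optimal. Substituting the maximizer back into the expansion yields the matching lower bound.

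The main obstacle is exactly this lower bound when $\mu$ or $\nu$ carry atoms: then an optimal plan need not be supported on a graph, and one has to argue that every maximizer of $\int xy\,\d\zb\pi$ coincides with $(Q_\mu,Q_\nu)_\#\lambda_{(0,1)}$ rather than merely with ``some'' monotone coupling. A clean self-contained route is to prove the rearrangement inequality for $\int xy\,\d\zb\pi$ by writing $xy$ as an integral over level sets of products of indicator functions and applying Fubini, reducing the claim to a one-dimensional comparison of distribution functions; alternatively one simply cites \cite{V2003}, Theorem~2.18.
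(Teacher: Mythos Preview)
The paper does not supply a proof of this statement: it is quoted verbatim as Theorem~2.18 of \cite{V2003} and used as a black box in the proof of Proposition~\ref{prop:restr}. There is therefore nothing in the paper to compare your argument against.

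That said, your proposal is a correct and standard self-contained proof of the cited result. The Galois connection $Q_\mu(p)\le x \Leftrightarrow p\le F_\mu(x)$ indeed yields monotonicity, left-continuity, and the push-forward identity in one stroke; the $L_2$ membership via change of variables is exactly right; and for the Wasserstein formula the comonotone coupling gives the upper bound while Hardy--Littlewood (or, equivalently, $c$-cyclical monotonicity of the nondecreasing graph) gives the matching lower bound. Your remark about atoms is also on point: the issue is not existence of an optimal plan but rather that optimality of the comonotone coupling still holds, which the rearrangement inequality handles without any regularity assumption on $\mu,\nu$. In short, your argument is sound and is essentially the proof one finds in \cite{V2003}; the paper simply cites it.
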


Using this theorem, we can now embed $\RS(\R^d)$ isometrically into $L_2((0,1))$ by the following proposition.

\begin{proposition}\label{prop:restr}
The mapping $\iota\colon\RS(\R^d)\to\mathcal P_2(\R)$ defined by $\iota(\mu)=(\|\cdot\|_2)_\#\mu$ is an isometry from $(\RS(\R^d),W_2)$ to $(\mathcal P_2(\R),W_2)$ with range $\mathcal P_2(\R_{\geq0})\subseteq \mathcal P_2(\R)$.
Moreover,  the inverse  $\iota^{-1}: \mathcal P_2(\R_{\ge0}) \to \RS(\R^d)$ is given by
\begin{equation}\label{eq:inversion_formula}
\iota^{-1}(\tilde\mu)(A) = \mu(A)=\int_{[0,\infty)}\int_{\partial B_r(0)}1_A(x) \, \d \mathcal U_{\partial B_r(0)}(x)\d \tilde \mu(r), \quad
A\in\mathcal B(\R^d),
\end{equation}
where $B_r(0)$ is the ball in $\R^d$ around $0$ with radius $r$.
The mapping 
$$
\Psi: \RS(\R^d) \to \mathcal C_0((0,1)), \quad-
\mu \mapsto Q_{\iota(\mu)} 
$$
to the convex cone 
$\mathcal C_0((0,1)) \coloneqq \{Q\in L^2((0,1)):Q\text{ is non-decreasing and }
Q\geq 0\} \subset L_2\left((0,1) \right)
$
with the quantile functions $Q_{\iota(\mu)}$ defined in \eqref{quantile},
is a bijective isometry.
In particular, it holds for all $\mu,\nu\in\RS(\R^d)$ that
$$
W_2(\mu,\nu)=\int_0^1(f(s)-g(s))^2 \, \d s,\quad f\coloneqq \Psi(\mu), \; g \coloneqq \Psi(\nu).
$$
\end{proposition}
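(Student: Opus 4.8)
The plan is to verify the three claims of Proposition~\ref{prop:restr} in sequence: first that $\iota$ is a well-defined bijective isometry onto $\P_2(\R_{\ge 0})$ with the stated inverse, then that $\Psi = Q_{\iota(\cdot)}$ is a bijective isometry onto $\mathcal C_0((0,1))$, and finally read off the $W_2$ formula. The key idea throughout is that an orthogonally invariant measure $\mu$ is determined entirely by its \emph{radial marginal} $(\|\cdot\|_2)_\#\mu$, since conditioning on the radius gives the uniform distribution on the corresponding sphere; this is exactly the content of the inversion formula \eqref{eq:inversion_formula}, which can be checked by disintegrating $\mu$ along the map $x\mapsto\|x\|_2$ and using that the conditional law must itself be $O(d)$-invariant on each sphere, hence uniform.

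First I would show $\iota$ maps into $\P_2(\R_{\ge 0})$: pushing forward by $\|\cdot\|_2$ yields a probability measure on $[0,\infty)$, and $\int r^2\,\d\iota(\mu)(r)=\int\|x\|_2^2\,\d\mu(x)<\infty$ since $\mu\in\P_2$. Injectivity and surjectivity follow from \eqref{eq:inversion_formula}: given any $\tilde\mu\in\P_2(\R_{\ge 0})$, the formula defines an orthogonally invariant $\mu$ (invariance is clear because $\mathcal U_{\partial B_r(0)}$ is $O(d)$-invariant) with finite second moment and $\iota(\mu)=\tilde\mu$, and conversely applying the formula to $\iota(\mu)$ recovers $\mu$ by the disintegration argument above. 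For the isometry claim I would argue at the level of transport plans: given optimal $\zb\pi\in\Gamma^{\opt}(\mu,\nu)$ for $\mu,\nu\in\RS(\R^d)$, symmetrizing over $O(d)$ (averaging $(O\pi_1,O\pi_2)_\#\zb\pi$) keeps the cost unchanged and the marginals fixed, so we may assume $\zb\pi$ is itself $O(d)$-invariant; then $(\|\cdot\|_2,\|\cdot\|_2)_\#\zb\pi$ is a coupling of $\iota(\mu),\iota(\nu)$, and one checks via the elementary bound $\big|\,\|x\|_2-\|y\|_2\,\big|\le\|x-y\|_2$ together with the matching lower bound (which uses that for radially symmetric plans the optimal coupling aligns points radially — the monotone rearrangement on $[0,\infty)$ is induced by an $O(d)$-equivariant map) that $W_2(\iota(\mu),\iota(\nu))=W_2(\mu,\nu)$. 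The cleanest route is: $W_2(\iota\mu,\iota\nu)\le W_2(\mu,\nu)$ from pushing forward the optimal plan through $\|\cdot\|_2$, and $W_2(\mu,\nu)\le W_2(\iota\mu,\iota\nu)$ by building a plan on $\R^d$ from the optimal one-dimensional plan using the uniform-on-sphere disintegration.

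Next, composing $\iota$ with the quantile embedding $\mu\mapsto Q_\mu$ of Theorem~\ref{prop:Q}: since $\iota(\mu)$ is supported on $\R_{\ge 0}$, its quantile function $Q_{\iota(\mu)}$ is nondecreasing, left-continuous, in $L_2((0,1))$, and nonnegative, hence lies in $\mathcal C_0((0,1))$. Conversely any $Q\in\mathcal C_0((0,1))$ is the quantile function of the measure $Q_\#\lambda_{(0,1)}\in\P_2(\R_{\ge 0})$ by \eqref{quantil_rel}, which is $\iota(\mu)$ for a unique $\mu\in\RS(\R^d)$; so $\Psi$ is a bijection onto $\mathcal C_0((0,1))$. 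It is an isometry because it is the composition of the two isometries $\iota$ and $\mu\mapsto Q_\mu$. The final displayed identity $W_2(\mu,\nu)=\int_0^1(f(s)-g(s))^2\,\d s$ with $f=\Psi(\mu)$, $g=\Psi(\nu)$ is then immediate from the $W_2$ formula in Theorem~\ref{prop:Q} applied to $\iota(\mu),\iota(\nu)$ (note the statement should read $W_2^2$ on the left, matching Theorem~\ref{prop:Q}).

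The main obstacle I expect is the lower bound $W_2(\mu,\nu)\ge W_2(\iota\mu,\iota\nu)$, equivalently producing, from an optimal \emph{one-dimensional} coupling of the radial marginals, a genuine coupling on $\R^d\times\R^d$ with the \emph{same} cost. The subtlety is that transporting mass from the sphere of radius $r$ to the sphere of radius $r'$ incurs cost at least $(r-r')^2$ with equality only along radial rays, so one must exhibit a coupling realizing this: concretely, take the one-dimensional monotone plan $\zb\sigma\in\Gamma^{\opt}(\iota\mu,\iota\nu)$ and lift it to $\zb\pi(A\times B)=\int\!\!\int \big(\int_{\partial B_r(0)\times\partial B_{r'}(0)} 1_{A\times B}(R_r x, R_{r'}x)\,\ldots\big)\d\zb\sigma(r,r')$ where points on the two spheres are matched by the common direction (scaling a fixed unit vector), and then compute that $\int\|x-y\|_2^2\,\d\zb\pi = \int(r-r')^2\,\d\zb\sigma(r,r') = W_2^2(\iota\mu,\iota\nu)$ using $\|rx_0 - r'x_0\|_2^2 = (r-r')^2$ for unit $x_0$. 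Verifying measurability of this lift and that its marginals are correct is the one genuinely technical point; the rest is bookkeeping with disintegration and the already-cited one-dimensional theory.
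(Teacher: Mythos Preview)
Your proposal is correct and follows essentially the same route as the paper: establish the inversion formula by disintegrating along $x\mapsto\|x\|_2$ and using that the $O(d)$-invariant conditional on each sphere must be uniform; prove the isometry via the two inequalities (push forward the optimal $d$-dimensional plan through $\|\cdot\|_2$ for one direction, lift the optimal one-dimensional plan radially for the other); and obtain $\Psi$ by composing with the quantile embedding of Theorem~\ref{prop:Q}. The paper carries out exactly this program, with the radial lift written as $\pi_x(A)=\tilde\pi_{\|x\|_2}(\{c\ge 0:c\,x/\|x\|_2\in A\})$ and a detailed verification that $(\pi_2)_\#\pi=\nu$ on a generating family of Borel sets --- precisely the ``one genuinely technical point'' you flagged.

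One slip to fix: in your final paragraph the inequality is reversed. Lifting the optimal one-dimensional plan to $\R^d$ yields a \emph{candidate} coupling of $\mu,\nu$ with cost $W_2^2(\iota\mu,\iota\nu)$, hence gives $W_2(\mu,\nu)\le W_2(\iota\mu,\iota\nu)$, not $\ge$; you had this right in the second paragraph. Your observation that the displayed formula should read $W_2^2$ on the left is also correct.
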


\begin{proof}
1. First, we show the inversion formula \eqref{eq:inversion_formula}.
Let $\mu\in\RS(\R^d)$ and $\tilde\mu=(\|\cdot\|_2)_\#\mu$. Then, we obtain by the transformation
$x\to(\tfrac{x}{\|x\|_2},\|x\|_2)$ 
that there exist $\tilde\mu$-almost everywhere unique measures $\mu_r$ on $\partial B_r(0)$ such that
for all $A\in\mathcal B(\R^d)$,
\begin{align*}
\mu(A)=\int_{[0,\infty)}\int_{\partial B_r(0)}1_A(x)\d \mu_r(x)\d\tilde \mu(r).
\end{align*}
Since $\mu$ is orthogonally invariant, 
we obtain for any $O\in O(d)$ that
\begin{align*}
&\int_{[0,\infty)}\int_{\partial B_r(0)}1_A(x) \, \d \mu_r(x)\d\tilde \mu(r)
=\mu(A)=  O_\# \mu(A)
\\
&=\int_{[0,\infty)}\int_{\partial B_r(0)}1_A(Ox) \, \d \mu_r(x)\d\tilde \mu(r)
=\int_{[0,\infty)}\int_{\partial B_r(0)}1_A(x)\, \d O_\#\mu_r(x)\d\tilde \mu(r).
\end{align*}
Due to the uniqueness of the $\mu_r$, we have $\tilde\mu$-almost everywhere that $O_\#\mu_r=\mu_r$ for all $O\in \text{O}(d)$.
By §~3, Theorem~3.4 in \cite{M1999}, this implies that $\mu_r=U_{\partial B_r(0)}$.
Hence, we have that
\begin{equation}\label{eq:rot_identity}
\mu(A)=\int_{[0,\infty)}\int_{\partial B_r(0)}1_A(x)
\, \d \mathcal U_{\partial B_r(0)}(x)\d\tilde \mu(r),
\end{equation}
which proves \eqref{eq:inversion_formula} and the statement about the range of $\iota$.

2. Next, we show the isometry property.
Let $\mu,\nu\in\RS(\R^d)$, 
$\tilde\mu=\iota(\mu)$, $\tilde \nu=\iota(\nu)$ and $\pi\in\Gamma^{\text{opt}}(\mu,\nu)$.
Then, it holds for $\tilde \pi\coloneqq (\|\pi_1 \cdot\|_2,\|\pi_2 \cdot\|_2)_\#\pi\in\Gamma(\tilde\mu,\tilde\nu)$ that
\begin{align*}
W_2^2(\mu,\nu)&=\int_{\R^d\times\R^d}\|x-y\|_2^2\, \d \pi(x,y)\geq\int_{\R^d\times\R^d}(\|x\|_2-\|y\|_2)^2\, \d \pi(x,y)\\
&=\int_{\R^d\times\R^d}(x-y)^2\, \d \tilde \pi(x,y)
\geq W_2^2(\tilde \mu,\tilde\nu).
\end{align*}
To show the reverse direction let $\tilde\pi\in\Gamma^{opt}(\tilde\mu,\tilde\nu)$ and define $\pi\in\mathcal P_2(\R^d\times\R^d)$ by ${\pi_1}_\#\pi=\mu$ and the disintegration
$$
\pi_x(A)=\tilde\pi_{\|x\|_2}(\{c\geq0:c\tfrac{x}{\|x\|_2}\in A\}).
$$
In the following, we show that ${\pi_2}_\#\pi=\nu$ so that $\pi \in \Gamma(\mu,\nu)$. 
Let $A\in\mathcal B(\R^d)$ be given by 
$A \coloneqq \{cx:c\in[a,b],x\in B\}$ for some $B\in\partial B_1(0)$. 
We show that ${\pi_2}_\#\pi(A)=\nu(A)$. As the set of all such $A$ is a $\cap$-stable generator of $\mathcal B(\R^d)$, this implies that ${\pi_2}_\#\pi=\nu$.
By definition, it holds 
\begin{align*}
\int_{\R^d}\pi_x(A) \, \d \mu(x)
=\int_{\R^d}\tilde \pi_{\|x\|_2}(\{c\geq0:c\tfrac{x}{\|x\|_2}\in A\})\d \mu(x),
\end{align*}
and using the identity \eqref{eq:rot_identity} further
\begin{align*}
{\pi_2}_\#\pi(A) 
&=\int_{[0,\infty)}\int_{\partial B_r(0)}\tilde \pi_{r}(\{c\geq0:c x/r\in A\})\d U_{\partial B_r(0)}(x) \d\tilde\mu(r)\\
&=\int_{[0,\infty)}\int_{\partial B_1(0)}\tilde \pi_{r}(\{c\geq0:c x\in A\})\d U_{\partial B_1(0)}(x)\d \tilde\mu(r).
\end{align*}
Now, by definition of $A$, it holds that $\{c\geq0:c x\in A\}=[a,b]$ for $x\in B$ and $\{c\geq0:c x\in A\}=\emptyset$ for $x\not\in B$. Thus, the above formula is equal to
\begin{align*}
{\pi_2}_\#\pi(A) 
&=\int_{[0,\infty)}\int_{\partial B_1(0)}\tilde \pi_{r}([a,b])1_B(x) \, \d \mathcal U_{\partial B_1(0)}(x) \, \d\tilde\mu(r)\\
&=\int_{[0,\infty)}\tilde \pi_{r}([a,b]) \mathcal U_{\partial B_1(0)}(B)\, \d \tilde\mu(r)
=\mathcal U_{\partial B_1(0)}(B)\int_{\R^d}\tilde \pi_{r}([a,b]) \, \d \tilde\mu(r)\\
&=\mathcal U_{\partial B_1(0)}(B){\pi_2}_\#\tilde\pi([a,b])
=\mathcal U_{\partial B_1(0)}(B)\tilde\nu([a,b])
\end{align*}
and applying \eqref{eq:rot_identity} for $\nu$ to
\begin{align*}
{\pi_2}_\#\pi(A) = \mathcal U_{\partial B_1(0)}(B)\tilde\nu([a,b])&=\int_{[0,\infty)}\int_{\partial B_r(0)} 1_{[a,b]}(r) 1_B(x/r) \, \d \mathcal U_{\partial B_r(0)}(x) \, \d \tilde\nu(r)\\
&=
\int_{[0,\infty)}\int_{\partial B_r(0)} 1_{A}(x)\d U_{\partial B_r(0)}(x)\d \tilde\nu(r)=\nu(A).
\end{align*}
Finally, note that for $\pi$-almost every $(x,y)$ there exists by construction some $c\geq0$ such that $x=cy$, which implies $\|x-y\|_2=|\|x\|_2-\|y\|_2|$. Further, it holds by construction that $(\|\cdot\|_2)_\#\pi_x=\tilde \pi_{\|x\|_2}$.
Therefore, we can conclude that
\begin{align*}
W_2^2(\mu,\nu)&\leq\int_{\R^d\times\R^d}\|x-y\|_2^2 \, \d \pi(x,y)=\int_{\R^d\times\R^d}(\|x\|_2-\|y\|_2)^2 \, \d \pi(x,y)\\
&=\int_{\R^d}\int_{\R^d}(\|x\|_2-\|y\|_2)^2\d \pi_x(y) \, \d \mu(x)
=\int_{\R^d}\int_{\R^d}(x-y)^2 \, \d \tilde \pi_x(y) \, \d \tilde\mu(x)\\
&=\int_{\R^d\times\R^d}(x-y)^2 \, \d \tilde\pi(x,y)
=W_2^2(\tilde\mu,\tilde\nu)
\end{align*}
and we are done.

3. The statement that $\Psi$ is a bijective isometry follows directly by the previous part and Proposition~\ref{prop:Q}.
\end{proof}

Applying the isometry $\Psi$ from the proposition,
we can  compute the steps from the JKO scheme \eqref{eq:otto_curve} explicitly for the function $\F\coloneqq \mathcal E_K$ starting at $\delta_0$. 
We need the following lemma.

\begin{lemma}\label{lem:h_tau}
For $r\in(0,2)$ and $\tau>0$, we consider the functions
\begin{equation}\label{eq:h_tau}
h_\tau\colon\R_{>0}\times \R_{\geq_0} \to \R,
\quad h_\tau(t,s)\coloneqq s^{\frac{1}{2-r}} t^{\frac{1-r}{2-r}}-t+\tau.
\end{equation}
Then, for any $s\geq0$, the function $t\mapsto h_\tau(t,s)$ has a unique positive zero $\hat t$ and it holds
$$
h_\tau(t,s)>0\text{  for  }t<\hat t\qquad\text{and}\qquad h_\tau(t,s)<0\text{  for  }t>\hat t.
$$
In particular, $h_\tau(t,s)\geq 0$ implies $t\leq \hat t$ and $h_\tau(t,s)\leq 0$ implies $t\geq \hat t$.
\end{lemma}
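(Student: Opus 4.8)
The plan is to analyze the single-variable function $t \mapsto h_\tau(t,s)$ for fixed $s \ge 0$ by elementary calculus, treating the case $s = 0$ separately from $s > 0$. For $s = 0$ we simply have $h_\tau(t,0) = \tau - t$, which is strictly decreasing with the unique positive zero $\hat t = \tau$, and the sign statements are immediate; so from now on I would assume $s > 0$. Since $r \in (0,2)$, the exponent $\frac{1}{2-r}$ is positive, and I would split into the subcases $r \in (0,1]$ and $r \in (1,2)$ according to the sign of $\frac{1-r}{2-r}$, the exponent on $t$ in the first summand.

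First I would record the boundary behavior. As $t \to 0+$: if $r \in (0,1)$ the term $s^{\frac{1}{2-r}} t^{\frac{1-r}{2-r}} \to 0$, so $h_\tau(0+,s) = \tau > 0$; if $r = 1$ the first term equals $s$ (constant in $t$), so $h_\tau(0+,s) = s + \tau > 0$; if $r \in (1,2)$ then $\frac{1-r}{2-r} < 0$, so $s^{\frac{1}{2-r}} t^{\frac{1-r}{2-r}} \to +\infty$ and again $h_\tau(0+,s) = +\infty > 0$. In all cases $h_\tau(\cdot,s) > 0$ near $0$. As $t \to \infty$: the $-t$ term dominates because the first term grows (or decays) slower than linearly — the exponent $\frac{1-r}{2-r}$ is strictly less than $1$ for all $r \in (0,2)$ — hence $h_\tau(t,s) \to -\infty$. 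By the intermediate value theorem there is at least one positive zero.

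For uniqueness and the sign dichotomy I would show that $h_\tau(\cdot,s)$ is strictly decreasing on the region where it is nonnegative, or more cleanly, that $\partial_t h_\tau$ has a controlled sign structure. Compute
\begin{equation*}
\partial_t h_\tau(t,s) = \tfrac{1-r}{2-r}\, s^{\frac{1}{2-r}}\, t^{\frac{1-r}{2-r}-1} - 1.
\end{equation*}
For $r \in [1,2)$ the coefficient $\frac{1-r}{2-r} \le 0$, so $\partial_t h_\tau(t,s) \le -1 < 0$ for all $t > 0$: the function is strictly decreasing, hence has exactly one zero $\hat t$, and is positive left of $\hat t$, negative right of it. For $r \in (0,1)$ the derivative $\partial_t h_\tau(t,s)$ is itself strictly decreasing in $t$ (since $t^{\frac{1-r}{2-r}-1}$ has a negative exponent), going from $+\infty$ at $0+$ to $-1$ at $\infty$; thus $h_\tau(\cdot,s)$ is first increasing, then strictly decreasing, i.e.\ strictly quasiconcave. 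A strictly quasiconcave function that is positive at $0+$ and tends to $-\infty$ has exactly one positive zero $\hat t$, lying to the right of the unique maximizer, and satisfies $h_\tau(t,s) > 0$ for $t < \hat t$ and $h_\tau(t,s) < 0$ for $t > \hat t$. This gives all claims; the final ``in particular'' sentences are just the contrapositives. The main obstacle is keeping the case distinction in $r$ straight and verifying that the sign of the derivative (or the quasiconcavity argument) indeed forces the zero to be unique with the stated one-sided sign behavior — the $r \in (0,1)$ case is the only one where $h_\tau(\cdot,s)$ is not monotone, so care is needed there, but quasiconcavity handles it cleanly.
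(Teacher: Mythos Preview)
Your proof is correct and follows essentially the same route as the paper: case split on $r$, with strict monotonicity of $h_\tau(\cdot,s)$ for $r\in[1,2)$ and a shape argument for $r\in(0,1)$. The only cosmetic differences are that the paper argues monotonicity directly from the summands (rather than computing $\partial_t h_\tau$) and, for $r\in(0,1)$, invokes concavity of $t\mapsto t^{\frac{1-r}{2-r}}$ together with a secant-line contradiction, whereas you phrase this as strict quasiconcavity via the monotone derivative --- the content is the same.
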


\begin{proof}
For $r\in[1,2)$, it holds $\tfrac{1-r}{2-r}\leq 0$ so that the first summand of $h_\tau(\cdot,s)$ is decreasing. 
Since the second one is also strictly decreasing, we obtain that $h_\tau$ is strictly decreasing.
Moreover, we have by definition that 
$h(\tau,s)=s^{1/(2-r)}\tau^{(1-r)/(2-r)}\geq0$ 
and that $h(t,s)\to-\infty$ as $t\to\infty$ such that it admits a unique zero $\hat t\geq\tau$.\\
For $r\in(0,1)$, we have $h_\tau(0,s)=\tau>0$ and $h_\tau(t,s)\to-\infty$ as $t\to\infty$. This ensures the existence of a positive zero.
Moreover, we have that $h_\tau(\cdot,s)$ is concave on $(0,\infty)$. 
Assume that there are $0<\hat t_1<\hat t_2$ with
$h_\tau(\hat t_1,s)=h_\tau(\hat t_2,s)=0$. 
Then, it holds by concavity that
$$
h_\tau(\hat t_1,s)\geq (1-\hat t_1/\hat t_2) h_\tau(0,s)+ \hat t_1/\hat t_2 h_\tau(\hat t_2,s)=(1-\hat t_1/\hat t_2)\tau>0,
$$
which is a contradiction.
\end{proof}

The following lemma implies Theorem \ref{thm:jko-inter-flow}(i).

\begin{lemma}[and {\bf Theorem \ref{thm:jko-inter-flow}(i)}]\label{lem:jko_step} 
Let $\F=\mathcal E_K$, where $K$ is the Riesz kernel  with $r\in(0,2)$ and $\eta^*$ be the unique element of $\prox_{\F}(\delta_0)$.
Then, 
for $\mu=(t_0^{1/(2-r)}\Id)_\#\eta^*$, $t_0\geq 0$, 
there exists a unique measure 
$\hat\mu\in\prox_{\tau\F}(\mu)$ 
given by
$$
\hat\mu=
\big(
\hat t^{\frac{1}{2-r}}\Id 
\big)_\#\eta^*,
$$
where $\hat t$ is the unique positive zero of the strictly decreasing function $t\mapsto h_\tau(t,t_0)$ in \eqref{eq:h_tau}.
In particular, this implies Theorem \ref{thm:jko-inter-flow}\rm{(i)}. 
\end{lemma}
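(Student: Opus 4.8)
The plan is to work in the isometric embedding $\Psi$ from Proposition~\ref{prop:restr}, which turns the whole problem into a computation on the convex cone $\mathcal{C}_0((0,1)) \subset L_2((0,1))$. First I would note that by Proposition~\ref{prop:rad_symmetric}, since $\mu = (t_0^{1/(2-r)}\Id)_\#\eta^*$ is orthogonally invariant (being a scaling of the radially symmetric $\eta^*$), any $\hat\mu \in \prox_{\tau\F}(\mu)$ is orthogonally invariant as well, so we may apply $\Psi$ and restrict to $\RS(\R^d)$. Writing $q^* \coloneqq \Psi(\eta^*) = Q_{\iota(\eta^*)}$, the candidate $\hat\mu$ corresponds to $\hat t^{1/(2-r)} q^*$, and a general competitor in $\prox_{\tau\F}(\mu)$ corresponds to some $g \in \mathcal{C}_0((0,1))$. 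Using Theorem~\ref{prop:Q} and Proposition~\ref{prop:restr}, the objective $\tfrac{1}{2\tau}W_2^2(\mu,\nu) + \mathcal{E}_K(\nu)$ becomes
\begin{equation*}
J(g) \coloneqq \frac{1}{2\tau}\|t_0^{1/(2-r)}q^* - g\|_{L_2}^2 + \mathcal{E}_K\big(\Psi^{-1}(g)\big),
\end{equation*}
and the task is to identify its minimizer over $\mathcal{C}_0((0,1))$.

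The key structural fact I would exploit is the scaling behavior of $\mathcal{E}_K$ with a Riesz kernel: for $\lambda > 0$ one has $\mathcal{E}_K\big((\lambda\Id)_\#\eta\big) = \lambda^r \mathcal{E}_K(\eta)$, since $K(x,y) = -\|x-y\|^r$ is $r$-homogeneous. Thus along the ray $g = \lambda q^*$, $\lambda \ge 0$, the potential energy reads $\lambda^r \mathcal{E}_K(\eta^*)$, and (crucially) $\eta^*$ being the unique minimizer of $\prox_{\mathcal{E}_K}(\delta_0)$ encodes the first-order optimality that will make the ray $\{\lambda q^*\}$ the right direction. I would first argue that the minimizer of $J$ over $\mathcal{C}_0((0,1))$ lies on this ray: intuitively, since the starting measure $\mu$ itself lies on the ray, and $\eta^*$ is the ``optimal shape'' coming out of $\delta_0$, the proximal step preserves the shape $q^*$ and only rescales it. Concretely, one can use that $\eta^* = \prox_{\mathcal{E}_K}(\delta_0)$ means $q^* = \argmin_{g}\{\tfrac12\|g\|_{L_2}^2 + \mathcal{E}_K(\Psi^{-1}(g))\}$ (the $\delta_0$ corresponds to $g \equiv 0$), which gives a subgradient inclusion $0 \in q^* + \partial(\mathcal{E}_K\circ\Psi^{-1})(q^*)$; combined with $r$-homogeneity of the energy one deduces the subgradient inclusion at $\lambda q^*$ needed for $\lambda q^*$ to solve the proximal problem with base $t_0^{1/(2-r)}q^*$. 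This is where the main obstacle lies: making the reduction to the ray fully rigorous, since $\mathcal{E}_K\circ\Psi^{-1}$ need not be convex and one cannot just invoke standard convex-analysis. I would either quote the relevant uniqueness/characterization results from \cite{HGBS2022,CaHu17,GuCaOl21,ChSaWo22b} underlying Theorem~\ref{thm:HD-spec} (which already does exactly this computation for $t_0 = 0$), or mimic their argument with the extra base point carried through.

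Once the problem is reduced to the one-parameter family $g = \lambda q^*$, it is a plain calculus exercise: setting $a \coloneqq t_0^{1/(2-r)}$ and $E \coloneqq \mathcal{E}_K(\eta^*)$ (recall $E < 0$), one minimizes
\begin{equation*}
\phi(\lambda) \coloneqq \frac{1}{2\tau}(a - \lambda)^2\|q^*\|_{L_2}^2 + \lambda^r E
\end{equation*}
over $\lambda \ge 0$; from Theorem~\ref{thm:HD-spec} the normalization of $\eta^*$ gives $\|q^*\|_{L_2}^2 = -2 E$ (this is the identity $s$ is chosen to enforce; equivalently $\prox_{\mathcal{E}_K}(\delta_0)$ has its second moment pinned down), so $\phi'(\lambda) = \tfrac{1}{\tau}(\lambda - a)\|q^*\|^2 + r\lambda^{r-1}E = 0$ rearranges, after dividing by $\|q^*\|^2 = -2E$, to $\lambda - a + \tfrac{r}{2}\tau\lambda^{r-1}\cdot(-1)\cdot(-1)$... more cleanly, to the equation $a\,\lambda^{\frac{1-r}{?}} \cdots$ — tracking the exponents, the stationarity condition becomes exactly $h_\tau(\lambda^{2-r}, t_0) = 0$ with $\hat t = \lambda^{2-r}$, i.e.\ $t_0^{1/(2-r)}\hat t^{(1-r)/(2-r)} - \hat t + \tau = 0$. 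By Lemma~\ref{lem:h_tau}, $t \mapsto h_\tau(t,t_0)$ has a unique positive zero $\hat t$, and since $\phi$ is strictly convex on the relevant region (for $r \in [1,2)$, $\phi'' > 0$ everywhere; for $r \in (0,1)$ one checks $\phi$ has a unique interior minimizer via the sign analysis of $\phi'$, exactly paralleling the concavity argument in Lemma~\ref{lem:h_tau}), this stationary point is the unique global minimizer, i.e.\ $\hat\mu = (\hat t^{1/(2-r)}\Id)_\#\eta^*$ with $\hat t$ the zero of $h_\tau(\cdot,t_0)$. Iterating this with $t_0 = t_{\tau,n-1}$ and $\hat t = t_{\tau,n}$, starting from $t_{\tau,0} = 0$ (where the statement is precisely Theorem~\ref{thm:HD-spec} with $\tau$ reinstated via the scaling $\prox_{\tau\mathcal{E}_K}(\delta_0) = (\tau^{1/(2-r)}\Id)_\#\eta^*$), yields Theorem~\ref{thm:jko-inter-flow}(i).
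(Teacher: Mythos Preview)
Your overall strategy---embed via $\Psi$ into $\mathcal C_0((0,1))$, use radial symmetry, and exploit the $r$-homogeneity of $\mathcal E_K$---matches the paper. However, the step you yourself flag as the ``main obstacle'' is a genuine gap, and the paper closes it by a different and cleaner device than the subgradient route you sketch.

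Your proposal is to first argue that the minimizer of $J(g)=\tfrac{1}{2\tau}\|t_0^{1/(2-r)}q^*-g\|^2+\mathcal E_K(\Psi^{-1}(g))$ lies on the ray $\{\lambda q^*\}$, using a subgradient inclusion $0\in q^*+\partial(\mathcal E_K\circ\Psi^{-1})(q^*)$ transported by homogeneity. But $\mathcal E_K\circ\Psi^{-1}$ is not convex, so (i) the subdifferential you invoke is not the convex one and you would have to specify which generalized notion you mean and why optimality at $\delta_0$ yields it, and (ii) even granting a stationarity condition at $\lambda q^*$, that is only \emph{necessary}, not sufficient, for global minimality over all of $\mathcal C_0((0,1))$. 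Your one-variable computation downstream also contains errors: the correct normalization from optimality of $\eta^*$ along the ray is $\|q^*\|^2=-rE$, not $-2E$, and your derivation of the $h_\tau$ equation trails off unresolved.

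The paper avoids the ray reduction entirely. It writes the target objective as the sum of two pieces,
\[
J(g)=\underbrace{\Big[\tfrac{1}{2\tau}\|t_0^{1/(2-r)}q^*-g\|^2-\tfrac{1}{2\hat t}\|g\|^2\Big]}_{(\mathrm A)}
+\underbrace{\Big[\tfrac{1}{2\hat t}\|g\|^2+\mathcal E_K(\Psi^{-1}(g))\Big]}_{(\mathrm B)},
\]
where $\hat t$ is the zero of $h_\tau(\cdot,t_0)$. Piece $(\mathrm B)$ is exactly the objective for $\prox_{\hat t\,\mathcal E_K}(\delta_0)$, whose \emph{unique} minimizer is $\hat t^{1/(2-r)}q^*$ by Theorem~\ref{thm:HD-spec}. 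Piece $(\mathrm A)$ is a pure quadratic in $g$ with leading coefficient $\tfrac{1}{2\tau}-\tfrac{1}{2\hat t}>0$ (Lemma~\ref{lem:h_tau} gives $\hat t>\tau$), hence strictly convex on $L_2$; its first-order condition is $g=\tfrac{\hat t\,t_0^{1/(2-r)}}{\hat t-\tau}\,q^*$, and the defining equation $h_\tau(\hat t,t_0)=0$ is \emph{equivalent} to $\tfrac{\hat t\,t_0^{1/(2-r)}}{\hat t-\tau}=\hat t^{1/(2-r)}$. Thus $\hat t^{1/(2-r)}q^*$ minimizes $(\mathrm A)$ and $(\mathrm B)$ separately, hence their sum $J$, and uniqueness in $(\mathrm B)$ gives uniqueness of $\hat\mu$. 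This decomposition is the missing global argument your approach lacks: it replaces the non-convex optimality question by two problems---one already solved (Theorem~\ref{thm:HD-spec}) and one elementary convex quadratic---with the choice of $\hat t$ engineered precisely so that the two minimizers coincide.
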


\begin{proof}
Set $\alpha \coloneqq \left(t_0 \right)^{\frac{1}{2-r}}$ so that $\mu = (\alpha \Id)_\#\eta^*$.
Since it holds by definition that $\eta^*\in\prox_{\F}(\delta_0)$, we have by Proposition~\ref{prop:rad_symmetric} that $\mu\in\RS(\R^d)$ and then $\prox_{\tau \F}(\mu)  \subseteq\RS(\R^d)$.
Using  $f=\Psi(\eta^*)$ and the fact that $\Psi(c\Id_\#\nu)=c\Psi(\nu)$
for any $\nu \in \RS(\R^d)$ and $c \ge 0$, we obtain 
\begin{align}
\prox_{\tau\F}(\mu)&=\argmin_{\nu\in\RS(\R^d)} \tfrac{1}{2\tau}W_2^2(\nu,\mu)+\F(\nu) \nonumber \\
&=\Psi^{-1}\Big(\argmin_{g\in \mathcal C_0((0,1))} \tfrac{1}{2\tau}\int_0^1(g(s)-\alpha f(s))^2\d s+\F(\Psi^{-1}(g))\Big). \label{eq:jko_next_step}
\end{align}
Then, we know by Theorem~\ref{thm:HD-spec} that 
$$
\big\{\Psi^{-1}\big(\hat t^{\frac{1}{2-r}} f\big)\big\}
=\big\{\big(\hat t^{\frac{1}{2-r}}\Id\big)_\#\eta^*\big\}
=\prox_{\hat t\F}(\delta_0)
$$
such that
\begin{equation}\label{eq:opt_problem1}
\{\hat t^{\frac{1}{2-r}}f\}
=
\argmin_{g\in \mathcal C_0((0,1))} \tfrac{1}{2\hat t}\int_0^1(g(s))^2\, \d s+\F(\Psi^{-1}(g)).
\end{equation}
Now, we consider the optimization problem
\begin{align}
&\quad\argmin_{g\in\mathcal C_0((0,1))}\tfrac{1}{2\tau}
\int_0^1(g(s)-\alpha f(s))^2\, \d s
-\tfrac{1}{2\hat t}\int_0^1(g(s))^2\, \d s\label{eq:opt_problem2}\\
&=\argmin_{g\in\mathcal C_0((0,1))}\int_0^1\big(\tfrac{1}{2\tau}-\tfrac{1}{2\hat t}\big)g(s)^2+ \tfrac{\alpha}{2\tau}f(s)^2-\tfrac{\alpha}{\tau}f(s)g(s)\, \d s. \nonumber
\end{align}
By Lemma \ref{lem:h_tau} we know that $\hat t>\tau$. Hence this problem is  convex and any critical point is a global minimizer. By setting the derivative in $L^2((0,1))$ to zero, we obtain that the minimizer fulfills
$$
0=\tfrac{1}{\tau}(g(s)-\alpha f(s))-\tfrac{1}{\hat t}g(s)\quad\Leftrightarrow\quad g(s)=\frac{\hat t\alpha}{\hat t-\tau}f(s).
$$
Since 
$$
h_\tau(\hat t,t_0)=0 
\quad\Leftrightarrow\quad
\tfrac{\hat t^{\frac{1-r}{2-r}} \, t_0^{\frac{1}{2-r}}}{\hat t-\tau}=1
\quad\Leftrightarrow\quad
\tfrac{\hat t \alpha}{\hat t-\tau}=\hat t^{\frac{1}{2-r}}
$$
it follows that 
$\hat t^{\frac{1}{2-r}} f$ is the minimizer of \eqref{eq:opt_problem2}.
As it is also the unique minimizer in \eqref{eq:opt_problem1}, 
we conclude by adding the two objective functions that
$$
\big\{\hat t^{\frac{1}{2-r}} f\big\}
\in
\argmin_{g\in \mathcal C_0((0,1))} \tfrac{1}{2\tau}\int_0^1(g(s)-\alpha f(s))^2\d s+\F(\Psi^{-1}(g)).
$$
By \eqref{eq:jko_next_step}, this implies that
$$
\prox_{\tau\F}(\mu)
=
\big\{\Psi^{-1}\big(\hat t^{\frac{1}{2-r}}f\big)\big\}
=
\big\{\big(\hat t^{\frac{1}{2-r}}\Id\big)_\#\eta^*\big\}
$$
and we are done.
\end{proof}

Finally, we have to invest some effort to show the convergence of the curves induced by the JKO scheme. We need two preliminary lemmata to prove
finally Theorem~\ref{thm:jko-inter-flow}(ii).

\begin{lemma}\label{lem:t_abschaetzungen}
Let $r\in(0,2)$, $t_{\tau,0}=0$ and let $t_{\tau,n}$ be the unique positive zero of $h_\tau(\cdot,t_{\tau,n-1})$ in \eqref{eq:h_tau}.
Then, the following holds true:
\begin{enumerate}[\upshape(i)]
\item If $r\in[1,2)$, then
$
t_{\tau,n}\geq t_{\tau,n-1}+ (2-r)\tau,
$
and thus $t_{\tau,n}\geq (2-r)n\tau$.

If $r\in(0,1]$, then $t_{\tau,n}\leq t_{\tau,n-1}+ (2-r)\tau$, 
and 
thus $t_{\tau,n}\leq (2-r)n\tau$.
\item Let $n\geq 2$. For $r\in[1,2)$, we have
\begin{equation}\label{eq_quadratic_estimate_ts}
t_{\tau,n}-t_{\tau,n-1}
\leq 
(2-r)\tau+c_{\tau,n}\tau,\quad\text{with}\quad c_{\tau,n}=\tfrac{r-1}{(4-2r)(n-1)} \ge 0.
\end{equation}
For $r\in(0,1]$, the same inequality holds true with $\geq$ instead of $\leq$. 
\end{enumerate}
\end{lemma}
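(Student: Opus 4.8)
The plan is to turn every bound on $t_{\tau,n}$ into a sign check for $h_\tau(\cdot,t_{\tau,n-1})$ through Lemma~\ref{lem:h_tau}: since $t\mapsto h_\tau(t,t_{\tau,n-1})$ is strictly decreasing when $r\in[1,2)$ and concave with a single sign change on $(0,t_{\tau,n})$ when $r\in(0,1]$, an inequality $h_\tau(L,t_{\tau,n-1})\ge 0$ forces $t_{\tau,n}\ge L$ in the first case and $t_{\tau,n}\le L$ in the second, and symmetrically $h_\tau(U,t_{\tau,n-1})\le 0$ forces $t_{\tau,n}\le U$ resp.\ $t_{\tau,n}\ge U$. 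Throughout write $a\coloneqq\tfrac1{2-r}$ and $b\coloneqq\tfrac{1-r}{2-r}$, so that $a+b=1$, $h_\tau(t,s)=s^{a}t^{b}-t+\tau$, and $b\le 0$ precisely when $r\ge 1$.

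\emph{Part (i).} With $s=t_{\tau,n-1}$ and $s^{a}=s\cdot s^{(r-1)/(2-r)}$ one gets
\begin{equation*}
h_\tau\bigl(s+(2-r)\tau,\,s\bigr)=s\bigl((1+\xi)^{b}-1\bigr)+(r-1)\tau,\qquad \xi\coloneqq\tfrac{(2-r)\tau}{s},
\end{equation*}
which equals $(r-1)\tau$ when $n=1$ (so $s=0$), and otherwise is estimated by Bernoulli's inequality: $x\mapsto x^{b}$ is convex for $b\le 0$ and concave for $b\in[0,1]$, so $(1+\xi)^{b}\gtrless 1+b\xi$; since $s\,b\,\xi=b(2-r)\tau=(1-r)\tau$ the linear terms cancel and $h_\tau(s+(2-r)\tau,s)\gtrless 0$ with the sign matching $r\gtrless 1$. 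By the reduction above, $t_{\tau,n}\gtrless t_{\tau,n-1}+(2-r)\tau$, and telescoping from $t_{\tau,0}=0$ gives $t_{\tau,n}\gtrless(2-r)n\tau$.

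\emph{Part (ii), case $r\in[1,2)$.} For $n\ge 2$ we have $0<t_{\tau,n-1}<t_{\tau,n}$ by (i), so write $t_{\tau,n}=\rho\,t_{\tau,n-1}$ with $\rho>1$; inserting this into $h_\tau(t_{\tau,n},t_{\tau,n-1})=0$ and using $a+b=1$ collapses the relation to $\rho-\rho^{b}=\tau/t_{\tau,n-1}$, hence
\begin{equation*}
t_{\tau,n}-t_{\tau,n-1}=(\rho-1)t_{\tau,n-1}=\frac{\rho-1}{\rho-\rho^{b}}\,\tau=:g(\rho)\,\tau,\qquad g(1^{+})=\tfrac1{1-b}=2-r.
\end{equation*}
A short computation gives $g(\rho)-(2-r)=\dfrac{\rho^{b}-b\rho-(1-b)}{(1-b)(\rho-\rho^{b})}$, so it suffices to prove the scalar inequality $\rho^{b}-b\rho-(1-b)\le-\tfrac b2(\rho-\rho^{b})^{2}$ for $\rho\ge 1$. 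I would prove this by letting $\Phi$ denote the difference of the two sides: $\Phi(1)=0$, and since $\rho-\rho^{b}=\rho(1-\rho^{b-1})$ with $1-b\rho^{b-1}\ge 1$ and $1-\rho^{b-1}\ge 0$, one has $(\rho-\rho^{b})(1-b\rho^{b-1})\ge 1-\rho^{b-1}$, so $\Phi'(\rho)=-b\bigl[(\rho-\rho^{b})(1-b\rho^{b-1})-(1-\rho^{b-1})\bigr]\ge 0$ and $\Phi\ge 0$. Feeding this back in, then $\rho-\rho^{b}=\tau/t_{\tau,n-1}$, and finally $t_{\tau,n-1}\ge(2-r)(n-1)\tau$ from (i), yields $g(\rho)-(2-r)\le\tfrac{-b}{2(1-b)}\cdot\tfrac{\tau}{t_{\tau,n-1}}\le\tfrac{-b}{2(n-1)}=\tfrac{r-1}{(4-2r)(n-1)}=c_{\tau,n}$, which is the claim.

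\emph{Part (ii), case $r\in(0,1]$, and the main obstacle.} Here $b\in[0,1)$ and one runs the same scheme (set $t_{\tau,n}=\rho t_{\tau,n-1}$, reduce to $g(\rho)-(2-r)\ge c_{\tau,n}$, estimate $\rho^{b}-b\rho-(1-b)$ from below), except that Lemma~\ref{lem:t_abschaetzungen}(i) now only yields an \emph{upper} bound on $t_{\tau,n-1}$; one therefore first needs a lower bound on $t_{\tau,n-1}$, built from the one-step estimates already obtained, after which all inequality signs reverse and the asserted bound follows. The crux of the whole lemma is the second-order scalar estimate above (and its $r<1$ analogue): it is exactly what upgrades the first-order telescoped bound of (i) to the rate $O(1/(n-1))$, and it must be sharp enough that, after dividing by $\rho-\rho^{b}$ and inserting the bound on $t_{\tau,n-1}$, the constant lands precisely on $\tfrac{r-1}{(4-2r)(n-1)}$; the Lemma~\ref{lem:h_tau} reduction, the Bernoulli step in (i), and the telescoping are routine by comparison.
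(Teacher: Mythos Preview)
Your part~(i) is essentially the paper's: both check the sign of $h_\tau\bigl(t_{\tau,n-1}+(2-r)\tau,\,t_{\tau,n-1}\bigr)$ via the first-order convexity/concavity inequality for $x\mapsto x^{b}$ (Bernoulli in your language, $f(y)\ge f(x)+(y-x)f'(x)$ in the paper's), then invoke Lemma~\ref{lem:h_tau}.

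For part~(ii) with $r\in[1,2)$ you take a genuinely different route. The paper picks a second test point $t_{\tau,n-1}+(2-r)\tau+c\tau^2$ with $c=\tfrac{r-1}{2t_{\tau,n-1}}$, bounds $h_\tau$ there from above by a second-order Taylor expansion of $x^{b}$ combined with its monotonicity (decreasing for $b\le 0$), and only at the end inserts $t_{\tau,n-1}\ge(2-r)(n-1)\tau$ from~(i). You instead rewrite the defining equation as $\rho-\rho^{b}=\tau/t_{\tau,n-1}$ with $\rho=t_{\tau,n}/t_{\tau,n-1}$ and reduce everything to the scalar inequality $\rho^{b}-b\rho-(1-b)\le-\tfrac{b}{2}(\rho-\rho^{b})^{2}$, which you prove correctly for $b\le 0$. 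Your approach is a bit slicker and avoids the test-point device; the paper's Taylor computation is closer to a classical local-truncation-error estimate and makes the role of~(i) more transparent.

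There is, however, a real gap in the case $r\in(0,1)$, beyond the missing lower bound on $t_{\tau,n-1}$ that you flag. The claim that ``all inequality signs reverse'' is false for your key scalar inequality: expanding at $\rho=1+\epsilon$ gives $\Phi(\rho)=\tfrac{b^{2}(1-b)}{2}\epsilon^{2}+O(\epsilon^{3})$, which is \emph{positive} for every $b\in(0,1)$ (for $b=\tfrac12$ one has the explicit $\Phi(\rho)=\tfrac14(\sqrt{\rho}-1)^{2}(2-\rho)$), so the reversed inequality $\Phi\le 0$ fails precisely for $\rho$ close to~$1$, i.e.\ in the large-$n$ regime. You therefore need a \emph{different} second-order lower bound on $\rho^{b}-b\rho-(1-b)$, not the same one with the sign flipped, together with an independently obtained lower bound on $t_{\tau,n-1}$ (for instance $t_{\tau,n-1}\ge(n-1)\tau$, which follows from $t_{\tau,k}-\tau=t_{\tau,k-1}^{a}t_{\tau,k}^{b}>t_{\tau,k-1}$ when $a,b>0$). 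The paper, for its part, writes only that ``the proof for $r\in(0,1]$ follows the same lines'' and does not spell these adjustments out either.
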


\begin{proof}
(i)
For $r\in[1,2)$, the function $x\mapsto x^{\frac{1-r}{2-r}}$ is convex. 
Then the identity $f(y)\geq f(x)+(y-x)f'(x)$ for convex, differentiable functions yields
$$
(t+(2-r)\tau)^{\frac{1-r}{2-r}}
\geq 
t^{\frac{1-r}{2-r}}+ (2-r)\tau \, \tfrac{1-r}{2-r} \, t^{\frac{1-r}{2-r}-1}
=t^{\frac{1-r}{2-r}}+(1-r)\tau t^{-\frac{1}{2-r}}.
$$
Hence, we obtain 
\begin{align*}
h_\tau(t+(2-r)\tau,t)&=t^{\frac{1}{2-r}}(t+(2-r)\tau)^{\frac{1-r}{2-r}}-t-(2-r)\tau+\tau\\
&\geq 
t^{\frac{1}{2-r}}\big(t^{\frac{1-r}{2-r}}+(1-r)\tau t^{-\frac{1}{2-r}}\big) -t +(r-1)\tau = 0.
\end{align*}
In particular, we have that
$
h_\tau(t_{\tau,n-1}+(2-r)\tau,t_{\tau,n-1})\geq 0
$
which implies the assertion by Lemma~\ref{lem:h_tau}.
The proof for $r\in(0,1]$ works analogously by using the concavity of $x\mapsto x^{\frac{1-r}{2-r}}$.
\\[1ex]
(ii)
Let  $r\in[1,2)$. Using Taylor's theorem, we obtain with $\xi\in[t,t+(2-r)\tau]$ that
\begin{align*}
(t+(2-r)\tau)^{\frac{1-r}{2-r}}
&=t^{\frac{1-r}{2-r}}+ (2-r)\tau \, \tfrac{1-r}{2-r} \, t^{-\frac{1}{2-r}}
+
\tfrac{r-1}{2(2-r)^2}\xi^{\frac{-1}{2-r}-1}(2-r)^2\tau^2\\
&=
t^{\frac{1-r}{2-r}}+(1-r)\tau t^{-\frac{1}{2-r}}+\tfrac{r-1}{2}\xi^{-\frac{1}{2-r}-1}\tau^2\\
&\leq t^{\frac{1-r}{2-r}}+(1-r)\tau t^{-\frac{1}{2-r}}+\tfrac{r-1}{2}t^{-\frac{1}{2-r}-1}\tau^2.
\end{align*}
Thus, by monotonicity of $x\mapsto x^{\frac{1-r}{2-r}}$ it holds for $t>0$ and $c=\frac{r-1}{2t}$ that
\begin{align*}
&h_\tau \left(t+(2-r)\tau+c\tau^2,t \right)
=
t^{\frac{1}{2-r}} \left(t+(2-r)\tau+c\tau^2 \right)^{\frac{1-r}{2-r}}-t-(2-r)\tau-c\tau^2+\tau
\\
&\leq 
t^{\frac{1}{2-r}}
\left(t+(2-r)\tau\right)^{\frac{1-r}{2-r}}-t+(r-1)\tau-c\tau^2\\
&\leq 
t^{\frac{1}{2-r}}\Big(t^{\frac{1-r}{2-r}}+(1-r)\tau t^{-\frac{1}{2-r}}+\tfrac{r-1}{2}t^{-\frac{1}{2-r}-1}\tau^2\Big)-t+(r-1)\tau-c\tau^2 = 0.
\end{align*}
Inserting $t_{\tau,n-1}\geq (2-r)(n-1)\tau>0$ for $t$ and setting $c\coloneqq\frac{r-1}{2t_{\tau,n}}$, we obtain 
$$
h(t_{\tau,n-1}+(2-r)\tau+c\tau^2,t_{\tau,n-1})\leq 0,
$$
which yields by Lemma~\ref{lem:h_tau} that
$$
t_{\tau,n}\leq t_{\tau,n-1}+(2-r)\tau+c\tau^2
\quad \text{and} \quad
c=\tfrac{r-1}{2t_{\tau,n}}\leq \tfrac{r-1}{(4-2r)(n-1)\tau}
= \tfrac{c_{\tau,n}}{\tau}.
$$
and consequently the assertion
$$
t_{\tau,n}\leq t_{\tau,n-1}+(2-r)\tau+c_{\tau,n}\tau.
$$
The proof for $r\in(0,1]$ follows the same lines.
\end{proof}

\begin{lemma}\label{thm:jko_estimate}
Let $r\in(0,2)$, $t_{\tau,0}=0$, and let $t_{\tau,n}$ be the unique positive zero of 
$h_\tau(\cdot,t_{\tau,n-1})$ defined in \eqref{eq:h_tau}. 
Then, it holds for $r\in[1,2)$ that 
$$
0\leq t_{\tau,n} - (2-r)\tau n\leq \tau(r-1)\Big(1+\tfrac{1}{4-2r}+\tfrac{1}{4-2r}\log(n)\Big),
$$
and for $r\in(0,1]$ that
$$
0\leq (2-r)\tau n- t_{\tau,n}\leq \tau(r-1)\Big(1+\tfrac{1}{4-2r}+\tfrac{1}{4-2r}\log(n)\Big).
$$
\end{lemma}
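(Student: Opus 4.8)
The plan is to obtain both estimates by telescoping the one-step bounds of Lemma~\ref{lem:t_abschaetzungen} and summing a harmonic series. I treat the case $r \in [1,2)$ in detail; the case $r \in (0,1]$ follows verbatim with all inequalities reversed and with the role of $r-1$ played by $1-r$.

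First I would write the quantity of interest as a telescoping sum,
$$
t_{\tau,n} - (2-r)\tau n \;=\; \sum_{k=1}^{n}\big(t_{\tau,k} - t_{\tau,k-1} - (2-r)\tau\big).
$$
The lower bound $0 \le t_{\tau,n} - (2-r)\tau n$ is then immediate, since by Lemma~\ref{lem:t_abschaetzungen}(i) every summand is nonnegative (equivalently, $t_{\tau,n} \ge (2-r)n\tau$ was already recorded there). For the upper bound I would isolate the first step: because $t_{\tau,0}=0$, the function $h_\tau(\cdot,0)$ from \eqref{eq:h_tau} collapses to $t \mapsto -t+\tau$, so $t_{\tau,1}=\tau$ and the $k=1$ summand equals $\tau - (2-r)\tau = (r-1)\tau$. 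For $k \ge 2$, Lemma~\ref{lem:t_abschaetzungen}(ii) gives $t_{\tau,k} - t_{\tau,k-1} - (2-r)\tau \le c_{\tau,k}\tau = \tfrac{(r-1)\tau}{(4-2r)(k-1)}$.

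Summing these contributions yields
$$
t_{\tau,n} - (2-r)\tau n \;\le\; (r-1)\tau + \frac{(r-1)\tau}{4-2r}\sum_{k=2}^{n}\frac{1}{k-1} \;=\; (r-1)\tau + \frac{(r-1)\tau}{4-2r}\,H_{n-1},
$$
where $H_{n-1} = \sum_{j=1}^{n-1} 1/j$ is the $(n-1)$-st harmonic number. Inserting the elementary estimate $H_{n-1} \le 1 + \log(n-1) \le 1 + \log n$ and collecting terms gives
$$
t_{\tau,n} - (2-r)\tau n \;\le\; \tau(r-1)\Big(1 + \tfrac{1}{4-2r} + \tfrac{1}{4-2r}\log n\Big),
$$
which is the asserted bound (and is trivially valid for $n=1$, where the harmonic sum is empty). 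For $r \in (0,1]$ the identical computation applies: Lemma~\ref{lem:t_abschaetzungen}(i) gives $(2-r)\tau n - t_{\tau,n} \ge 0$, the first summand equals $(2-r)\tau - t_{\tau,1} = (1-r)\tau$, and Lemma~\ref{lem:t_abschaetzungen}(ii) (now with the reversed inequality) bounds the remaining summands by $-c_{\tau,k}\tau = \tfrac{(1-r)\tau}{(4-2r)(k-1)}$; the same harmonic-sum estimate then closes the argument.

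I do not expect any genuine obstacle here — the argument is a straightforward summation once Lemma~\ref{lem:t_abschaetzungen} is in hand. The only points demanding a little care are that part~(ii) of Lemma~\ref{lem:t_abschaetzungen} is available only for indices $\ge 2$, which forces the separate exact evaluation $t_{\tau,1}=\tau$, and the bookkeeping of the logarithmic (harmonic) term, where one must use $H_m \le 1 + \log m$ rather than a naive $\sum 1/j \le n$ to land the $\log n$ dependence stated in the lemma.
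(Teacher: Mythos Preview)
Your proposal is correct and follows essentially the same route as the paper: the lower bound comes directly from Lemma~\ref{lem:t_abschaetzungen}(i), the first step is evaluated exactly via $t_{\tau,1}=\tau$, the remaining increments are controlled by Lemma~\ref{lem:t_abschaetzungen}(ii), and the harmonic sum $\sum_{k=1}^{n-1}1/k\le 1+\log n$ finishes. The only cosmetic difference is that you phrase the computation as a telescoping sum and name $H_{n-1}$ explicitly, whereas the paper sums \eqref{eq_quadratic_estimate_ts} directly.
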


\begin{proof}
In both cases, the first inequality was proven in Lemma~\ref{lem:t_abschaetzungen} (i).
For the second one, we consider the case $r\in[1,2)$. 
For $r\in (0,1)$, the assertion follows in the same way.
Since $h_\tau(\tau,0)=0$, we have that $t_{\tau,1}=\tau$ such that $t_{\tau,1}-(2-r)\tau=(r-1)\tau$.
This proves the estimate for $n=1$.
Moreover,  summing up the equations \eqref{eq_quadratic_estimate_ts} for $2,...,n$, 
we obtain 
\begin{align*}
t_{\tau,n}
&\leq (2-r)\tau n+(r-1)\tau+ \tau \, \sum_{k=2}^{n}\tfrac{r-1}{(4-2r)(k-1)}\\
&=(2-r)\tau n+\tau(r-1)\Big(1+\tfrac{1}{4-2r}\sum_{k=1}^{n-1}\tfrac{1}{k}\Big)\\
&\le
(2-r)\tau n+\tau(r-1)\Big(1+\tfrac{1}{4-2r}+\tfrac{1}{4-2r}\log(n)\Big)
\end{align*}
and we are done.
\end{proof}

\noindent
{\bf Proof of Theorem~\ref{thm:jko-inter-flow}(ii)}
For fixed $T>0$, we show that $\gamma_\tau$ converges uniformly on $[0,T]$ to $\gamma$.
Then, for $n=0,1,\ldots$, we have by part (i) of the theorem that
$$
\gamma_\tau(t)
=\mu_\tau^n=\big((t_{\tau,n})^{\frac{1}{2-r}}\Id\big)_\#\eta^*, \quad
t\in ((n-1)\tau,n\tau]
$$
and we want to show convergence to
$$
\gamma(t)=
\big((t(2-r))^{\frac{1}{2-r}}\Id\big)_\#\eta^*.
$$
Since the curve 
$
t\mapsto (t\Id)_\#\eta^*
$
is a geodesics, there exists a constant $C>0$ such that
\begin{equation}\label{eq:W2_to_real}
W_2(\gamma_\tau(t),\gamma(t))\leq C |( t_{\tau,n})^{\frac{1}{2-r}}-(t(2-r))^{\frac{1}{2-r}}|.
\end{equation}
Now assume that $n\tau\leq T$, i.e., $n\leq T/\tau$.

For $r\in[1,2)$, the function $t\mapsto t^{\frac{1}{2-r}}$ 
is Lipschitz continuous on $[0,T]$, such that
there exists some $L>0$ such that for $t \in ((n-1) \tau, n\tau]$,
\begin{align*}
W_2(\gamma_\tau(t),\gamma(t))&\leq LC|t_{\tau,n}-(2-r)t|
\leq LC \left(| t_{\tau,n}-(2-r) n\tau|+(2-r)|t-  n\tau| \right)\\
&\leq LC \left(| t_{\tau,n}-(2-r)n\tau|+(2-r)\tau \right),
\end{align*}
and by Lemma~\ref{thm:jko_estimate} further
$$
W_2(\gamma_\tau(t),\gamma(t))
\leq
LC\tau(r-1)\Big(1+\tfrac{1}{4-2r}+\tfrac{1}{4-2r}\log(\tfrac{T}{\tau})\Big)+LC(2-r)\tau\to0\quad\text{as}\quad\tau\to0
$$
which yields the assertion for $r\in[1,2)$.

For $r\in(0,1]$, the function defined by
$f(t) \coloneqq t^{\frac{1}{2-r}}$
is increasing and $f'$ is decreasing for $t>0$.
Thus, using $t_{\tau,n}\leq (2-r)n\tau$, we get for $t\in [(n-1)\tau,n \tau)$ that 
\begin{align}
t_{\tau,n}^{\frac{1}{2-r}}-\left(t(2-r) \right)^{\frac{1}{2-r}}
&\leq 
\left( (2-r)n\tau  \right)^{\frac{1}{2-r}}
-
\left((2-r)(n-1)\tau \right)^{\frac{1}{2-r}} \nonumber \\
&=
\int_{(2-r)(n-1)\tau}^{(2-r)n\tau}f'(t)\, \d t
\leq \int_{0}^{(2-r)\tau}f'(t)\, \d t \nonumber \\
&=((2-r)\tau )^{1/(2-r)}.\label{eq:jko_final_estimate1}
\end{align}
Similarly, we obtain 
\begin{align}
\left(t(2-r) \right)^{\frac{1}{2-r}}- t_{\tau,n}^{\frac{1}{2-r}}
&\leq 
((2-r)n\tau)^{\frac{1}{2-r}}-t_{\tau,n}^{\frac{1}{2-r}} \nonumber
\\
&=\int_{t_{\tau,n}}^{(2-r)n\tau}f'(t) \, \d t
\leq 
\int_{0}^{(2-r)n\tau-t_{\tau,n}}f'(t)\, \d t \nonumber \\
&=\big((2-r)n\tau-t_{\tau,n}\big)^{\frac{1}{2-r}} \nonumber \\
&\leq 
\bigg(\tau(r-1)\Big(1+\tfrac{1}{4-2r}+\tfrac{1}{4-2r}\log(\tfrac{t}{\tau})\Big)\bigg)^{\frac{1}{2-r}}. \label{eq:jko_final_estimate2}
\end{align}
Combining \eqref{eq:W2_to_real}, \eqref{eq:jko_final_estimate1} and \eqref{eq:jko_final_estimate2}, 
we obtain the assertion.
\hfill $\Box$
\\[2ex]
{\bf Proof of Theorem~\ref{thm:forward-inter-flow}}
For $n=0$ the claim holds true by definition.
For $n\geq 1$, assume that 
$$
\mu_\tau^{n-1}=((n-1)\tau)_\#\eta^*
$$ 
and consider the geodesics 
$$
\gamma_{\delta_0\otimes\eta^*}(t)=(t\Id)_\#\eta_1^*=(t\Id)_\#\eta^*.
$$
Note that by Corollary~20 and Theorem~22 in \cite{HGBS2022} there exists a unique steepest descent direction $\D_-\F((t\Id)_\#\eta^*)$ for all $t\geq0$.
Moreover, we have by Theorem~23 in \cite{HGBS2022} that $\gamma_{\delta_0\otimes\eta^*}(t)$ is a Wasserstein steepest descent flow.
Thus, using Lemma~6 in \cite{HGBS2022}, we obtain that the unique element $\zb v\in \D_-\F(\mu_\tau^{n-1})=\D_-\F(\gamma_{\delta_0\otimes\eta^*}((n-1)\tau))$ is given by
\begin{align*}
\zb v&=\dot\gamma_{\delta_0\otimes\eta^*}((n-1)\tau)=(\pi_1+(n-1)\tau\pi_2,\pi_2)_\#(\delta_0\otimes\eta^*)=((n-1)\tau\Id,\Id)_\#\eta^*.
\end{align*}
In particular, we have that
$$
\mu_\tau^n=\gamma_{\zb v}(\tau)=(\pi_1+\tau\pi_2)_\#\zb v=(\pi_1+\tau\pi_2)_\#((n-1)\tau\Id,\Id)_\#\eta^*=(n\tau\Id)_\#\eta^*.
$$
Now, the claim follows by induction.\hfill$\Box$

\section{Particle Flows for Numerical Comparison} \label{sec:particles}
In order to approximate the Wasserstein gradient flow by particles, 
we restrict the set of feasible measures to the set of point measures located at exactly $M$ points, i.e., to the set
$$
\mathcal S_M\coloneqq\Big\{\frac1M\sum_{i=1}^M\delta_{x_i}:x_i\in\R^d,x_i\neq x_j\text{ for all }i\neq j\Big\}.
$$
Then, we compute the Wasserstein gradient flow of the functional
$$
\F_M(\mu)\coloneqq\begin{cases}\F(\mu),&$if $\mu\in\mathcal S_M,\\+\infty,&$otherwise$. \end{cases}
$$
In order to compute the gradient flow with respect ot $\F_M$, we consider the (rescaled) particle flow 
for the function $F_M\colon\R^{dM}\to\R$ given by
$$
F_M(x_1,...,x_M)\coloneqq \F_M\Big(\frac1M\sum_{i=1}^M\delta_{x_i}\Big).
$$
More precisely, we are interested in solutions of the ODE
\begin{equation}\label{eq:real_gf}
\dot u=-M\nabla F_M(u).
\end{equation}
Then, the following proposition relates the solutions of \eqref{eq:real_gf} 
with Wasserstein gradient flows with respect to $\F_M$. 

\begin{proposition}\label{prop:particle_gf}
Let $u=(u_1,...,u_M)\colon(0,\infty)\to\R^{dM}$ be a solution of \eqref{eq:real_gf} with $u_i(t)\neq u_j(t)$ for all $i\neq j$ 
and all $t\in(0,\infty)$.
Then, the curve 
$$
\gamma\colon(0,\infty)\to\P_2(\R^d),\quad \gamma(t)\coloneqq  \frac1M\sum_{i=1}^M\delta_{u_i(t)}
$$ 
is a Wasserstein gradient flow with respect to $\F_M$.
\end{proposition}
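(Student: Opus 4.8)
The plan is to check directly the definition of a Wasserstein gradient flow of $\mathcal F_M$ from Section~\ref{sec:WGF}: that $\gamma$ is locally absolutely continuous with a velocity field $v_t\in\T_{\gamma(t)}\P_2(\R^d)$ solving the continuity equation, and that $v_t\in-\partial\mathcal F_M(\gamma(t))$ in the sense of the reduced Fréchet subdifferential \eqref{eq:subdiff}. Because $\dot u=-M\nabla F_M(u)$ has a right-hand side that is continuous on the open set of pairwise-distinct configurations (there $F_M$ is differentiable), $u$ is $C^1$ along its trajectory; using the ``identity'' coupling one gets $W_2(\gamma(s),\gamma(t))\le M^{-1/2}\|u(s)-u(t)\|$, so $t\mapsto\gamma(t)$ is locally Lipschitz for $W_2$. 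Since $u_1(t),\dots,u_M(t)$ are pairwise distinct, $L_2(\gamma(t),\R^d)$ is canonically $(\R^d)^M$, and as $\nabla\phi(u_1),\dots,\nabla\phi(u_M)$ can be prescribed arbitrarily by choosing $\phi\in C^\infty_{\mathrm c}(\R^d)$ supported near the points, the regular tangent space $\T_{\gamma(t)}\P_2(\R^d)$ is all of $L_2(\gamma(t),\R^d)$. Hence the field $v_t$ with $v_t(u_i(t))\coloneqq\dot u_i(t)$ lies in $\T_{\gamma(t)}\P_2(\R^d)$, and it satisfies the continuity equation weakly, because for $\phi\in C^\infty_{\mathrm c}(\R^d)$
\[
\tfrac{\d}{\d t}\int_{\R^d}\phi\,\d\gamma(t)=\tfrac1M\sum_{i=1}^M\langle\nabla\phi(u_i(t)),\dot u_i(t)\rangle=\int_{\R^d}\langle\nabla\phi,v_t\rangle\,\d\gamma(t).
\]

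The main step I would then carry out is $-v_t\in\partial\mathcal F_M(\gamma(t))$ for every $t>0$. By \eqref{eq:real_gf} the candidate subgradient is $\xi_t$ with $\xi_t(u_i(t))=-\dot u_i(t)=M\nabla_{x_i}F_M(u(t))$. Fix $\nu\in\P_2(\R^d)$. If $\nu\notin\mathcal S_M$, then $\mathcal F_M(\nu)=+\infty$ and the inequality in \eqref{eq:subdiff} is trivial; otherwise write $\nu=\tfrac1M\sum_{j=1}^M\delta_{y_j}$ and choose a permutation $\sigma$ of $\{1,\dots,M\}$ with $W_2^2(\gamma(t),\nu)=\tfrac1M\sum_i\|u_i(t)-y_{\sigma(i)}\|^2$, which exists because the optimal-transport linear program between two uniform $M$-point measures attains its minimum at a vertex of the Birkhoff polytope, i.e.\ at a permutation matrix. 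Then $\pi_\sigma\coloneqq\tfrac1M\sum_i\delta_{(u_i(t),y_{\sigma(i)})}\in\Gamma^{\opt}(\gamma(t),\nu)$, so with $z_\sigma\coloneqq(y_{\sigma(1)},\dots,y_{\sigma(M)})\in(\R^d)^M$,
\begin{align*}
\inf_{\pi\in\Gamma^{\opt}(\gamma(t),\nu)}\int_{\R^{2d}}\langle\xi_t(x),y-x\rangle\,\d\pi(x,y)
&\le\tfrac1M\sum_{i=1}^M\langle\xi_t(u_i(t)),y_{\sigma(i)}-u_i(t)\rangle\\
&=\langle\nabla F_M(u(t)),z_\sigma-u(t)\rangle,
\end{align*}
the factor $M$ having cancelled. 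As $\|z_\sigma-u(t)\|=\sqrt M\,W_2(\gamma(t),\nu)$, letting $\nu\to\gamma(t)$ forces $z_\sigma\to u(t)$, where $F_M$ is differentiable; a first-order Taylor expansion, together with the permutation invariance of $F_M$ (whence $F_M(z_\sigma)=\mathcal F_M(\nu)$ and $F_M(u(t))=\mathcal F_M(\gamma(t))$), gives $\langle\nabla F_M(u(t)),z_\sigma-u(t)\rangle=\mathcal F_M(\nu)-\mathcal F_M(\gamma(t))+o(W_2(\gamma(t),\nu))$. Combining the two displays yields $\mathcal F_M(\nu)-\mathcal F_M(\gamma(t))\ge\inf_\pi\int\langle\xi_t,y-x\rangle\,\d\pi+o(W_2(\gamma(t),\nu))$, i.e.\ $\xi_t=-v_t\in\partial\mathcal F_M(\gamma(t))$. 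Together with the first paragraph this shows $\gamma$ is a Wasserstein gradient flow of $\mathcal F_M$.

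I expect the main obstacle to be the bookkeeping around the optimal coupling: one has to exploit that an optimal plan between the two uniform $M$-point measures may be taken to be a permutation plan, so that after the $1/M$ weights cancel the Euclidean gradient $\nabla F_M$ reappears in the pairing with $\xi_t$; and one has to keep in mind that the subdifferential inequality only needs to be tested for $\nu\to\gamma(t)$, which legitimizes the Taylor expansion of $F_M$ at the pairwise-distinct configuration $u(t)$. A minor point worth a remark is that $\mathcal F_M$ is not globally lower semicontinuous since $\mathcal S_M$ is not closed, but this is harmless here: off $\mathcal S_M$ the functional is $+\infty$ and the inequality is vacuous, while along the flow and in a $W_2$-ball around $\gamma(t)$ inside $\mathcal S_M$ everything is smooth.
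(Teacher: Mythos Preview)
Your proof is correct and follows essentially the same route as the paper's: both establish absolute continuity of $\gamma$ via the local isometry $W_2^2(\gamma(t),\nu)=\tfrac1M\sum_i\|u_i(t)-y_i\|^2$ for nearby empirical measures, identify the velocity field $v_t(u_i(t))=\dot u_i(t)$, and verify $-v_t\in\partial\mathcal F_M(\gamma(t))$ by lifting the subdifferential inequality to the Euclidean first-order expansion of $F_M$ at the distinct configuration $u(t)$, treating $\nu\notin\mathcal S_M$ trivially. The only presentational differences are that the paper cites Proposition~8.4.6 in \cite{AGS2005} to extract $v_t$, whereas you verify the continuity equation by hand and argue that $\T_{\gamma(t)}\P_2(\R^d)=L_2(\gamma(t),\R^d)$; and the paper relabels the atoms of $\nu$ so the identity matching is optimal, whereas you invoke the Birkhoff--von~Neumann theorem to obtain a permutation optimal plan.
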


\begin{proof}
Let $x=(x_1,...,x_M)\in\R^{dM}$ with $x_i\neq x_j$ for all $i\neq j$. Then, there exists $\epsilon>0$ such that
for all $y\in\R^{dM}$ with $\|x-y\|<\epsilon$ it holds that
the optimal transport plan between $\frac{1}{M} \sum_{i=1}^M \delta_{x_i}$ and $\frac{1}{M} \sum_{i=1}^M \delta_{y_i}$ is given by $\zb\pi\coloneqq\frac{1}{M}\sum_{i=1}^M\delta_{(x_i,y_i)}$.
In particular, it holds
\begin{equation}\label{eq:isometry_M}
    W_2^2\biggl( \frac{1}{M} \sum_{i=1}^M \delta_{x_i}, \frac{1}{M} \sum_{i=1}^M \delta_{y_i} \biggr) = \frac{1}{M} \sum_{i=1}^M \|x_i - y_i\|_2^2.
\end{equation}
Moreover, since $F_M$ is locally Lipschitz continuous, we obtain that $u$ is absolute continuous. Together with \eqref{eq:isometry_M},
this yields that $\gamma$ is (locally) absolute continuous.
Thus, we obtain by Proposition~8.4.6 in \cite{AGS2005} that the velocity field $v_t$ of $\gamma$ fulfills
\begin{align*}
0&=\lim_{h\to0}\frac{W_2^2(\gamma(t+h)),(Id+hv_t)_\# \gamma(t))}{|h|^2}\\
&=\lim_{h\to0}\frac{W_2^2(\frac1M\sum_{i=1}^M\delta_{u_i(t+h)},\frac1M\sum_{i=1}^M \delta_{u_i(t)+hv_t(u_i(t))})}{|h|^2}\\
&=\lim_{h\to0}\frac1M\sum_{i=1}^M\Bigl\|\frac{u_i(t+h)-u_i(t)}{h}-v_t(u_i(t))\Bigr\|^2=\frac1M\sum_{i=1}^M\|\dot u_i(t)-v_t(u_i(t))\|^2
\end{align*}
for almost every $t\in(0,\infty)$, where the first equality in the last line follows from \eqref{eq:isometry_M}.
In particular, this implies $\dot u_i(t)=v_t(u_i(t))$ a.e.~such that $M\nabla F_M(u(t))=\bigl(v_t(u_1(t)),...,v_t(u_M(t))\bigr)$.
Now consider for fixed $t$ and $\epsilon$ from \eqref{eq:isometry_M} some $\mu\in\P_2(\R^d)$ such that $W_2(\mu,\gamma(t))<M\epsilon$.
If $\mu\in\mathcal S_M$, we find $x_\mu=(x_{\mu,1},...,x_{\mu,M})\in\R^{dM}$ such that 
$\mu=\frac1M\sum_{i=1}^M\delta_{x_{\mu,i}}$ and such that
the unique element of $\Gamma^{\opt}(\mu,\gamma(t))$
is given by $\zb\pi=\frac1M\sum_{i=1}^M \delta_{(x_{\mu,i},u_i(t))}$.
Then, we obtain that
\begin{align*}
0&\leq F_M(x_\mu)-F_M(u(t))+\langle \nabla F_M(u(t)),x_\mu-u(t)\rangle + o(\|x_\mu-u(t)\|)\\
&=\F_M(\mu)-\F_M(\gamma(t))+\frac1M\sum_{i=1}^M \langle v_t(u_i(t)),x_{\mu,i}-u_i(t)\rangle+o(W_2(\mu,\gamma(t)))\\
&=\F_M(\mu)-\F_M(\gamma(t))+\int_{\R^d\times\R^d} \langle v_t(x_1),x_2-x_1\rangle\d \zb \pi(x_1,x_2)+o(W_2(\mu,\gamma(t))).
\end{align*}
Since $\zb \pi$ is the unique optimal transport plan between $\mu$ and $\gamma(t)$, 
we obtain that for $\mu\in\mathcal S_M$ equation \eqref{eq:subdiff} is fulfilled.
If $\mu\not\in\mathcal S_M$, we obtain that $\F_M(\mu)=+\infty$ such that \eqref{eq:subdiff} holds trivially true.
Summarizing, this yields that $v_t\in-\partial\F_M(\gamma(t))$ showing the assertion by \eqref{eq:gf_condition}.
\end{proof}

\section{Implementation details} \label{app:implementation details}

Our code is written in PyTorch \cite{PyTorch2019}. For the network-based methods neural backward scheme and neural forward scheme we use the same fully connected network architecture with ReLU activation functions and train the networks with Adam optimizer \cite{KB2015} with a learning rate of $1e-3$. 

In Sect.~\ref{sec:numerics_interaction} we use a batch size of 6000 in two and three dimensions, of 5000 in ten dimensions, of 500 in 1000 dimensions and a time step size of $\tau=0.05$ for all methods. In two, three and ten dimensions we use networks with three hidden layer and 128 nodes for both methods and in 1000 dimensions for the neural backward scheme three hidden layers with 256 nodes, while we use 2048 nodes for the neural forward scheme. We train the neural forward scheme for 25000 iterations in all dimensions and the neural backward scheme for 20000 iterations using a learning rate of $5e-4$.

In Sect.~\ref{sec:discrepancy} we use a full batch size for 5000 iterations in the first two time steps and then 1000 iterations for the neural forward scheme and for the neural backward scheme 20000 and 10000 iterations, respectively. The networks have four hidden layers and 128 nodes and we use a time step size of $\tau=0.5$. In order to illustrate the given image in form of samples, we use the code of  \cite{WKN2020}. 
For the 784-dimensional MNIST task we use two hidden layers and 2048 nodes and a step size of $\tau=0.5$ for the first 10 steps. Then we increase the step size to 1, 2 and 3 after a time of 5, 25 and 50, respectively, and finally use a step size of $\tau=6$ after a time of 6000.
While the starting measure of the network-based methods can be chosen as $\delta_{x}$ for $x=0.5\cdot \mathbf{1}_{784}$ and $\mathbf{1}_d \in \R^d$ is the vector with all entries equal to $1$, the initial particles of the particle flow are sampled uniformly around $x$ with a radius of $R=10^{-9}$.
\paragraph{Neural forward scheme vs neural backward scheme}
The advantages of forward and backward scheme mainly follow the case of forward and backward schemes in Euclidean spaces. In our experiments the forward scheme performs better. Moreover, the notion of the steepest descent direction as some kind of derivative might allow some better analysis. In particular, it could be used for the development of "Wasserstein momentum methods" which appears to be an interesting direction of further research. On the other hand, the forward scheme always requires the existence of steepest descent directions. This is much stronger than the assumption that the Wasserstein proxy \eqref{eq:otto_curve} exists, which is the main assumption for the backward scheme. For instance, in the case $r<1$ the backward scheme exists, but not the forward scheme. Therefore the backward scheme can be applied for more general functions.

\section{Initial Particle Configurations for Particle Flows} \label{app:initial_particles}

Figs.~\ref{fig:particles_start} and \ref{fig:particles_start_l1} illustrate the effect of using different initial particles of the particle flow for the interaction energy $\mathcal{F} = \mathcal{E}_K $. While in Fig.~\ref{fig:particles_start} we use the Riesz kernel with 2-norm, in Fig.~\ref{fig:particles_start_l1} the 1-norm is used for the Riesz kernel. Since for the particle flow we cannot start in an atomic measure, we need to choose the initial particles appropriately, depending on the choice of the kernel for the MMD functional. 
For the kernel $K(x,y)=-\Vert x - y \Vert_2$, the 'best' initial structure is a circle, while a square is the 'best' structure for the kernel $K(x,y)=-\Vert x - y \Vert_1$ since it decouples to a sum of 1-dimensional functions.
Obviously, the geometry of the initial particles influences the behaviour of the particle flow heavily and leads to severe approximation errors if the time step size is not sufficient small.
In particular, for the used time step size of $\tau=0.05$ the geometry of the initial particles is retained.
Note that, since the optimal initial structure depends on the choice of the functional $\mathcal{F}$ and its computation is non-trivial, we decided to choose the non-optimal square initial structure for all experiments.

\begin{figure*}
\centering
\begin{subfigure}[t]{.252\textwidth}
  \includegraphics[width=\linewidth]{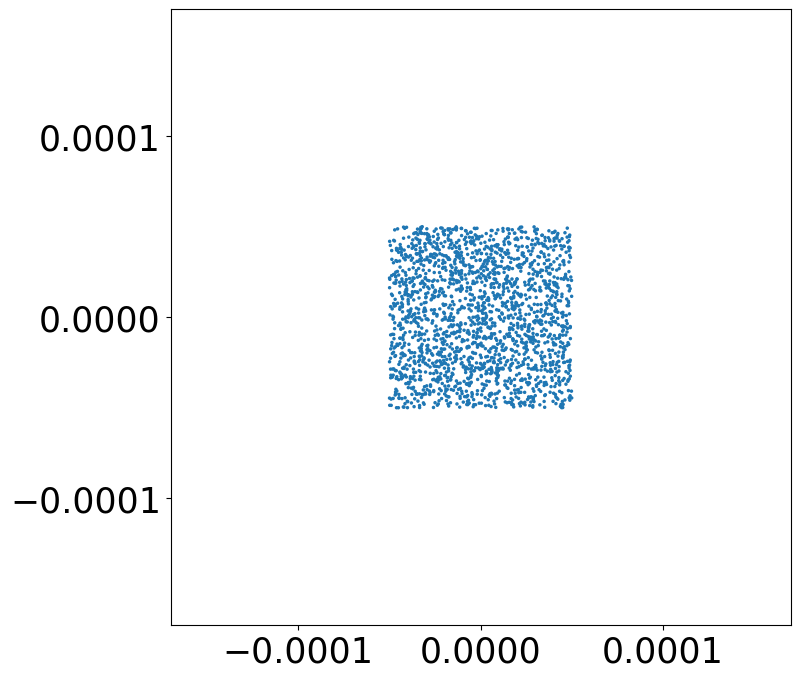}
\end{subfigure}%
\begin{subfigure}[t]{.23\textwidth}
  \includegraphics[width=\linewidth]{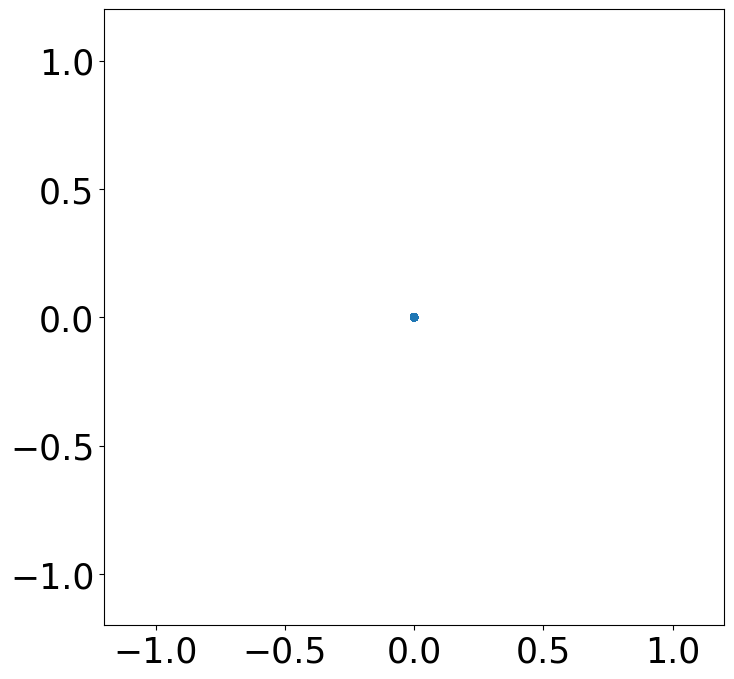}
\end{subfigure}%
\begin{subfigure}[t]{.23\textwidth}
  \includegraphics[width=\linewidth]{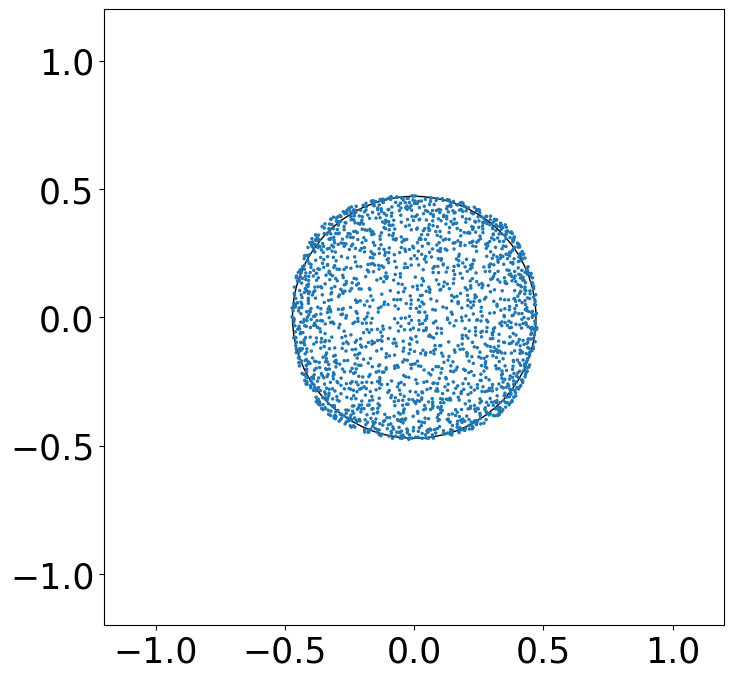}
\end{subfigure}%
\begin{subfigure}[t]{.23\textwidth}
  \includegraphics[width=\linewidth]{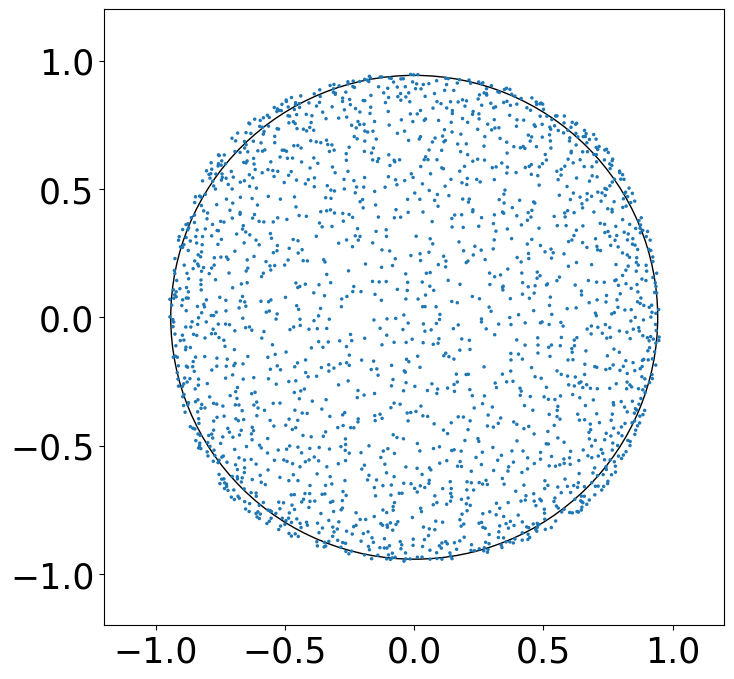}
\end{subfigure}%

\begin{subfigure}[t]{.252\textwidth}
  \includegraphics[width=\linewidth]{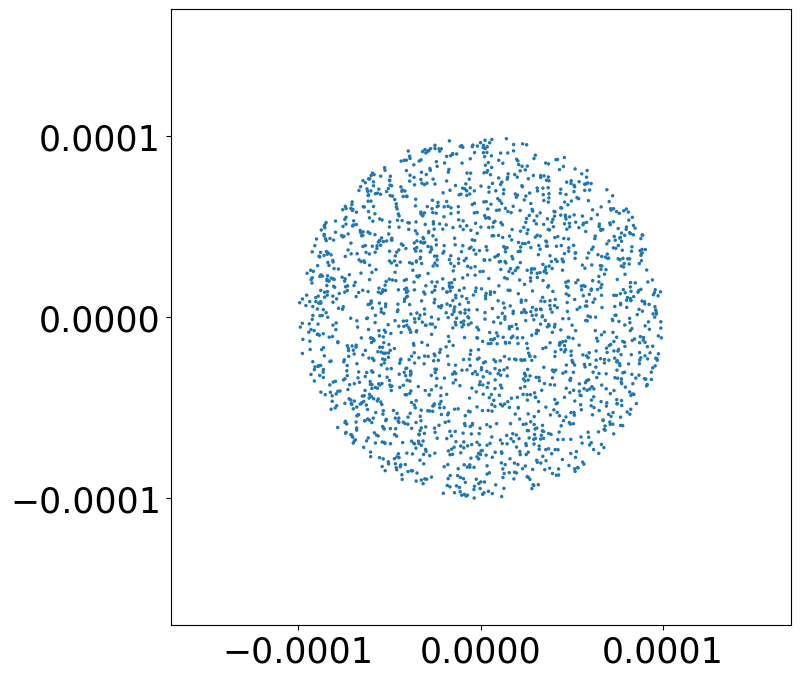}
\end{subfigure}%
\begin{subfigure}[t]{.23\textwidth}
  \includegraphics[width=\linewidth]{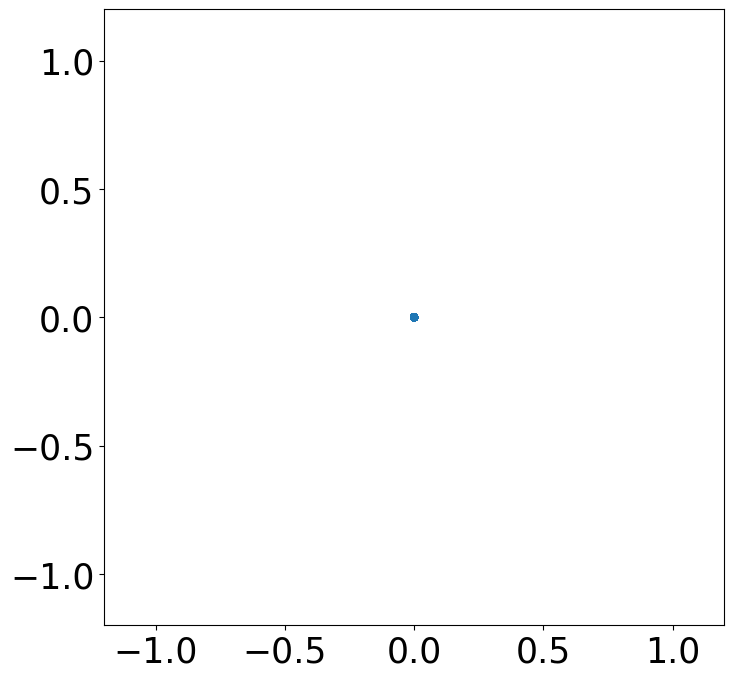}
\end{subfigure}%
\begin{subfigure}[t]{.23\textwidth}
  \includegraphics[width=\linewidth]{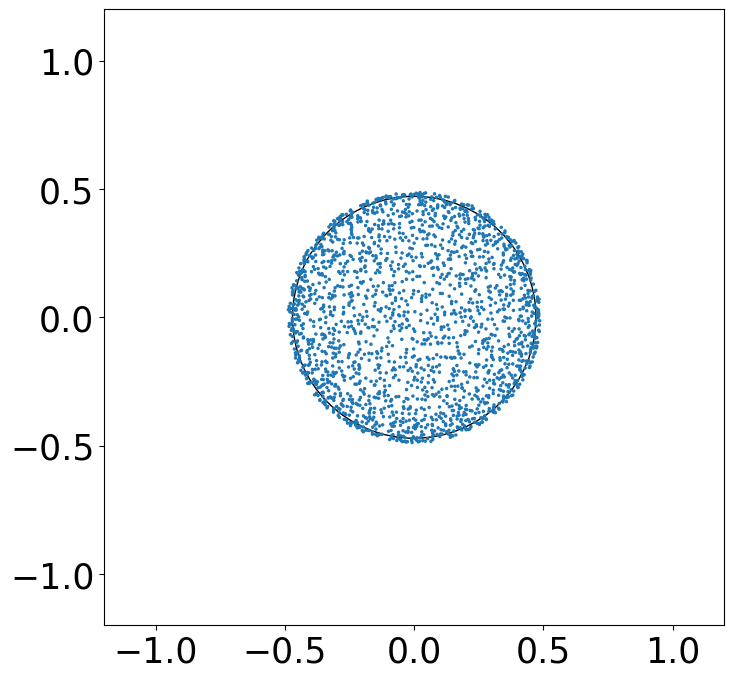}
\end{subfigure}%
\begin{subfigure}[t]{.23\textwidth}
  \includegraphics[width=\linewidth]{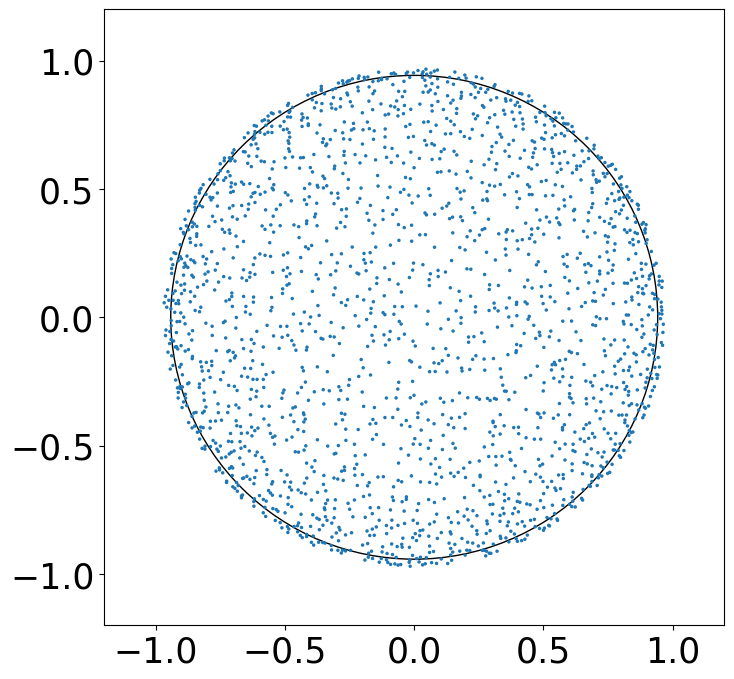}
\end{subfigure}%

\begin{subfigure}[t]{.252\textwidth}
  \includegraphics[width=\linewidth]{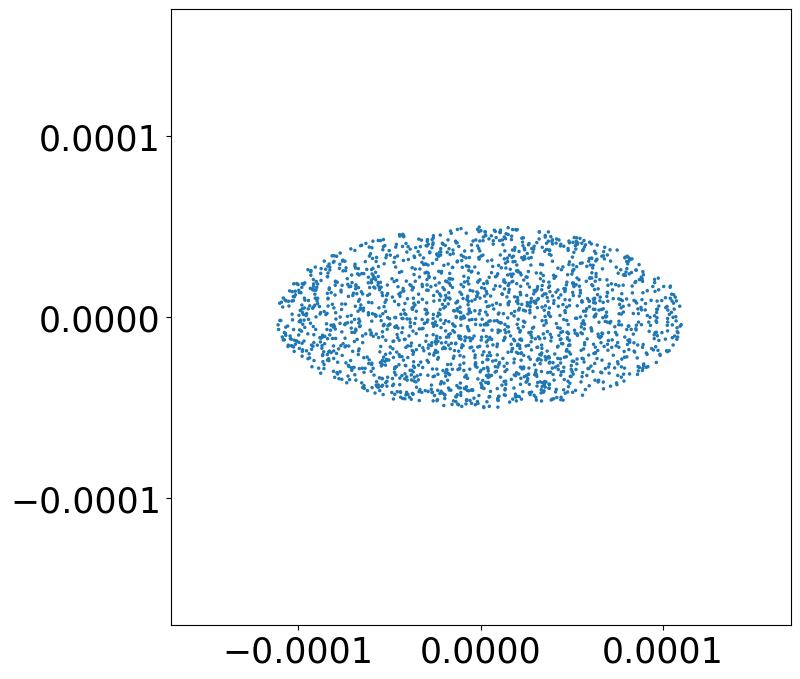}
\end{subfigure}%
\begin{subfigure}[t]{.23\textwidth}
  \includegraphics[width=\linewidth]{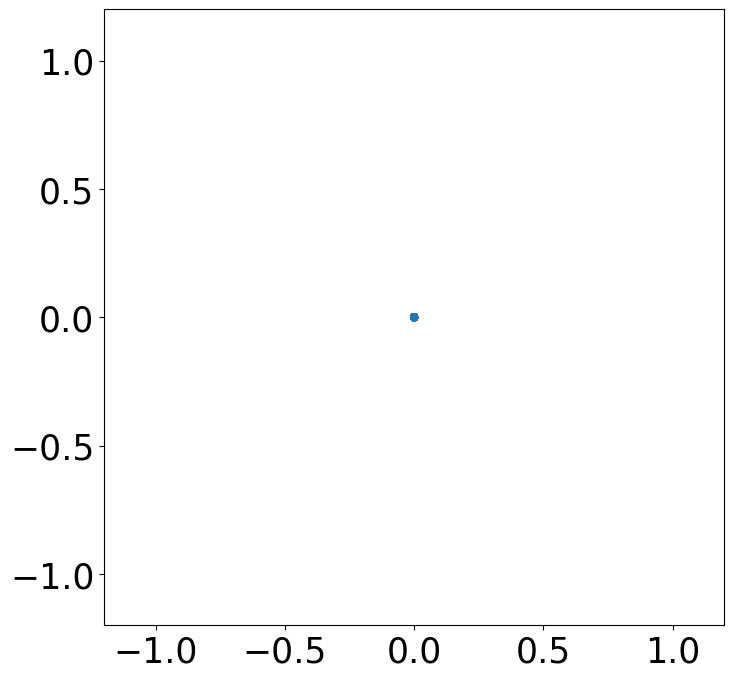}
\end{subfigure}%
\begin{subfigure}[t]{.23\textwidth}
  \includegraphics[width=\linewidth]{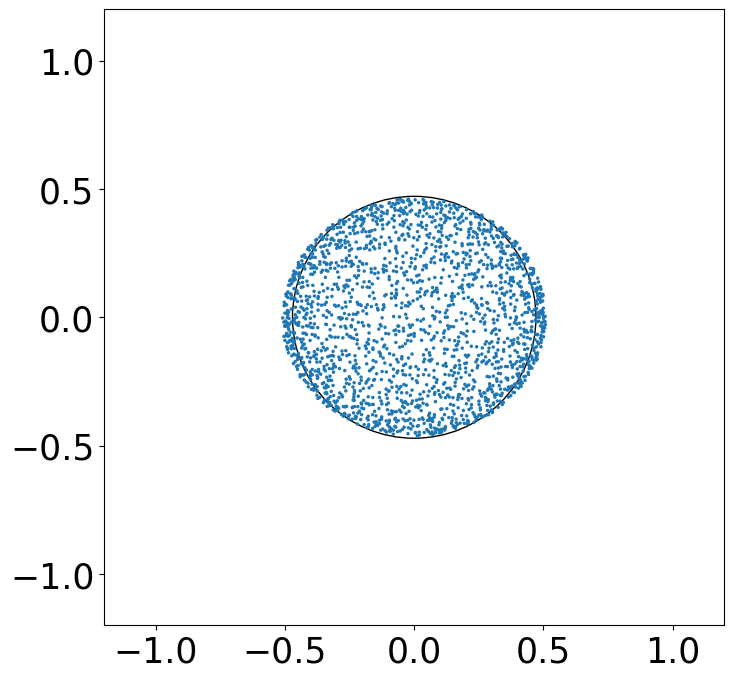}
\end{subfigure}%
\begin{subfigure}[t]{.23\textwidth}
  \includegraphics[width=\linewidth]{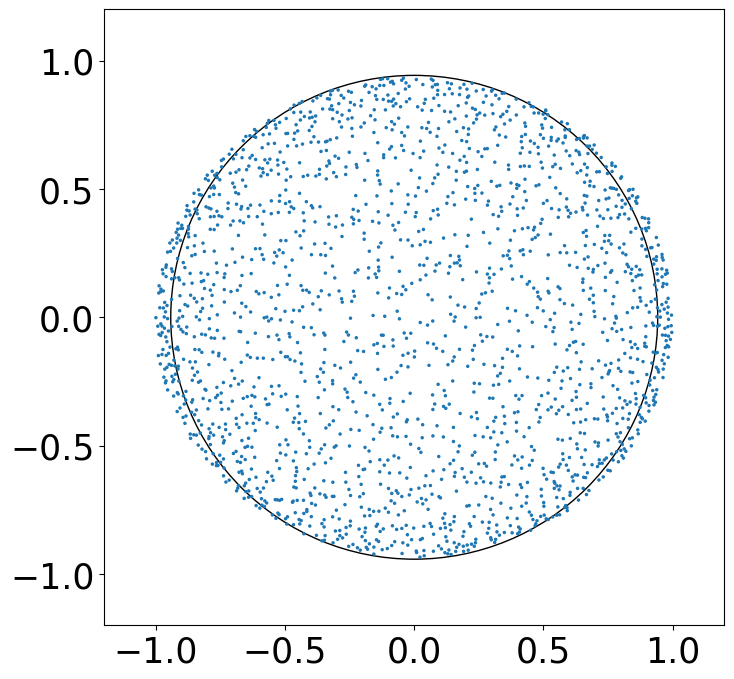}
\end{subfigure}%

\begin{subfigure}[t]{.252\textwidth}
  \includegraphics[width=\linewidth]{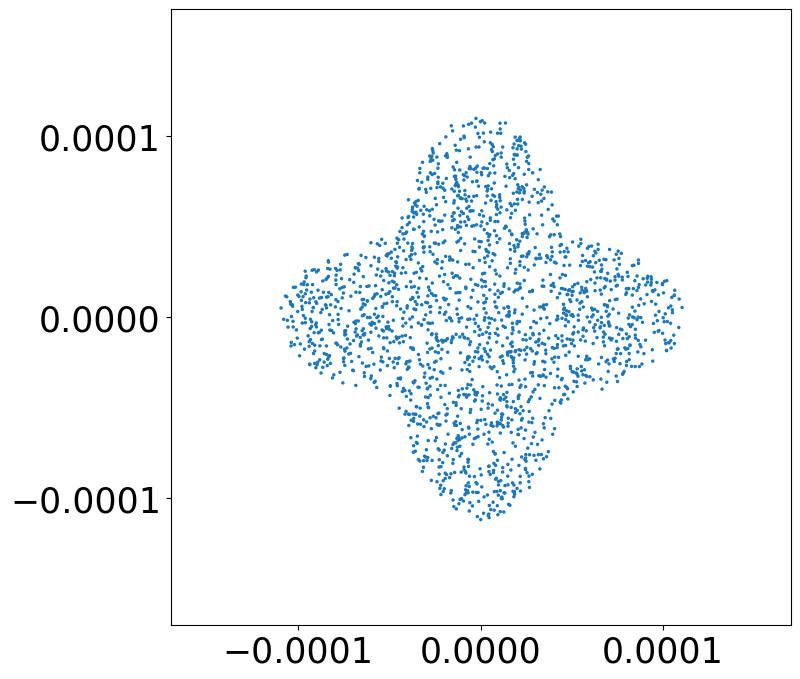}
\caption*{t=0.0}
\end{subfigure}%
\begin{subfigure}[t]{.23\textwidth}
  \includegraphics[width=\linewidth]{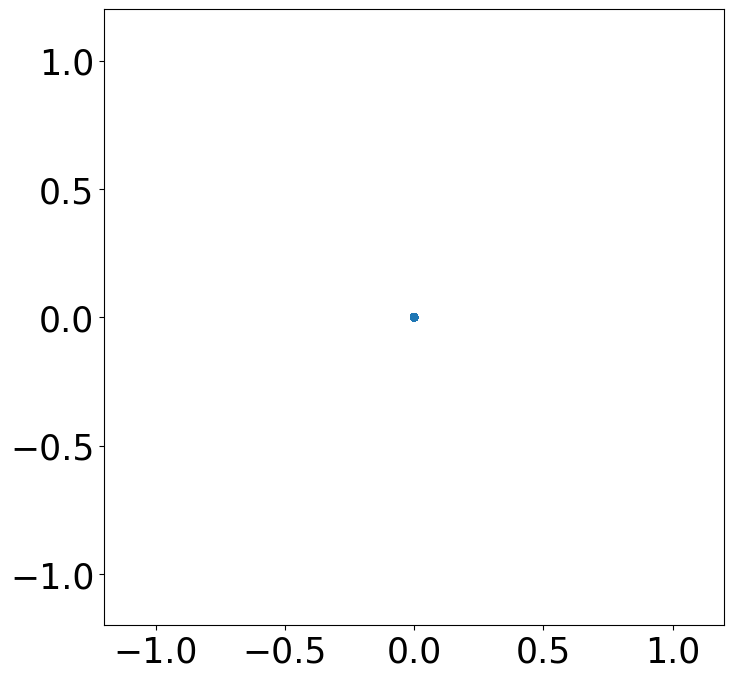}
\caption*{t=0.0}
\end{subfigure}%
\begin{subfigure}[t]{.23\textwidth}
  \includegraphics[width=\linewidth]{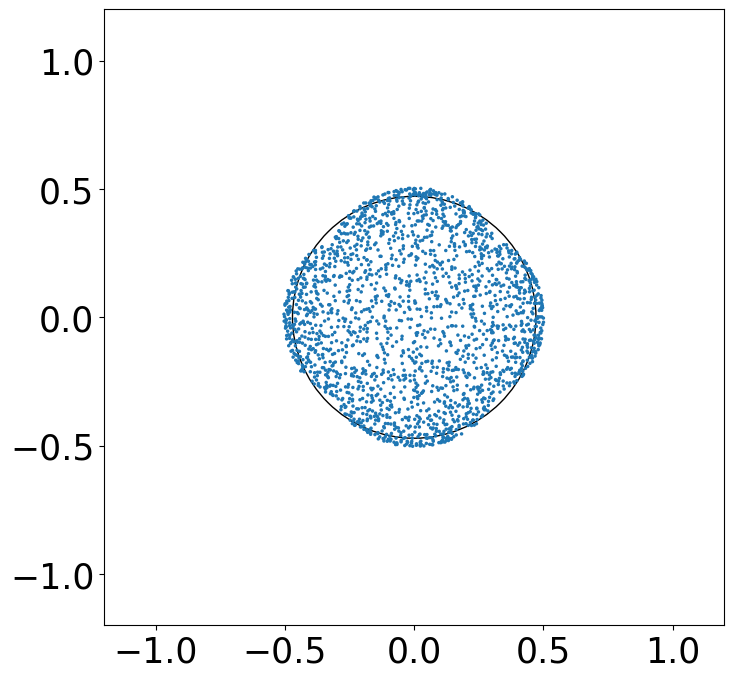}
\caption*{t=0.6}
\end{subfigure}%
\begin{subfigure}[t]{.23\textwidth}
  \includegraphics[width=\linewidth]{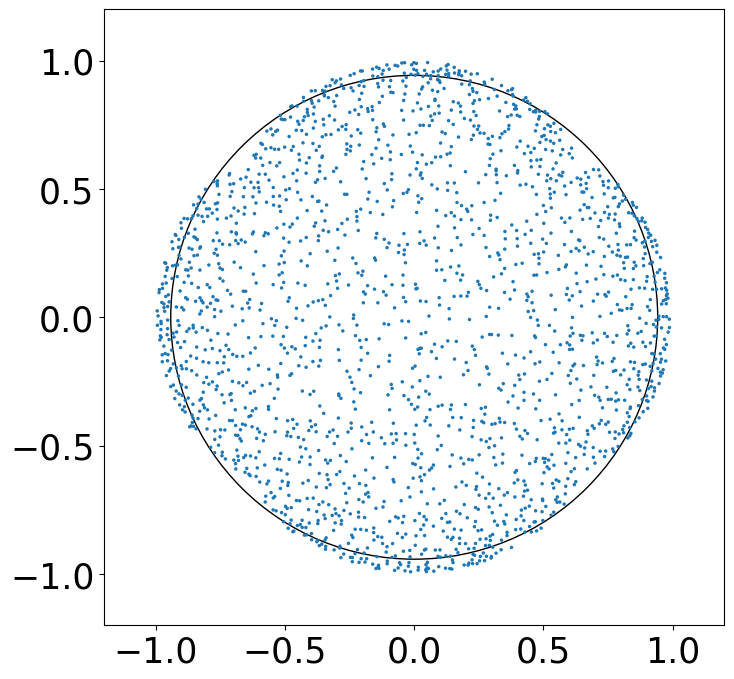}
\caption*{t=1.2}
\end{subfigure}%
\caption{Effect of different initial particle configurations  for the particle flow of $\mathcal E_K$, $K(x,y) \coloneqq - \|x-y\|_2$.  The black circle is the border of the limit $\mathrm{supp} ~\gamma(t)$. Using Gaussian distributed samples instead of uniformly distributed samples lead to a similar result. Left: zoomed-in part of the initial particles (note axes!).} \label{fig:particles_start}
\end{figure*}

\begin{figure*}
\centering
\begin{subfigure}[t]{.252\textwidth}
  \includegraphics[width=\linewidth]{images/energy/particles_start/particles_uniform_1r2d_0.0_zoom.png}
\end{subfigure}%
\begin{subfigure}[t]{.23\textwidth}
  \includegraphics[width=\linewidth]{images/energy/particles_start/particles_uniform_1r2d_0.0.png}
\end{subfigure}%
\begin{subfigure}[t]{.23\textwidth}
  \includegraphics[width=\linewidth]{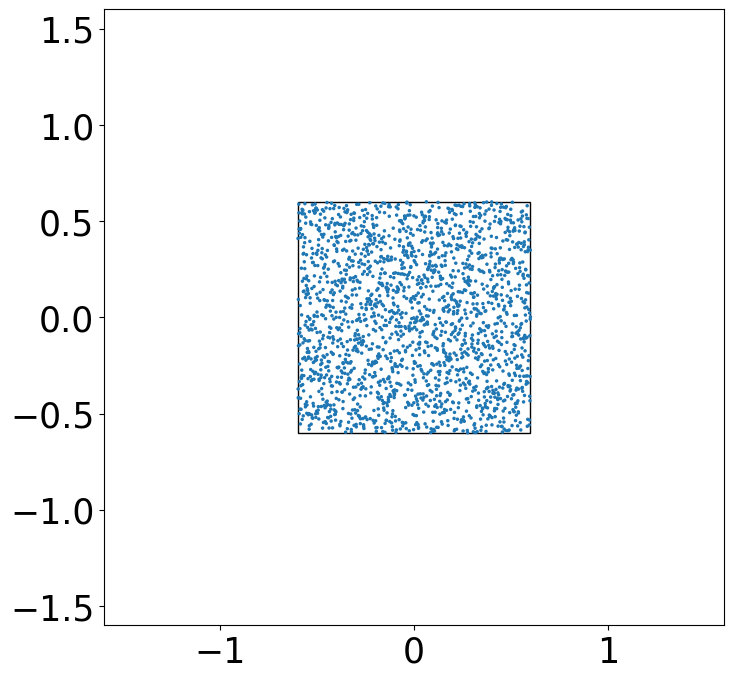}
\end{subfigure}%
\begin{subfigure}[t]{.23\textwidth}
  \includegraphics[width=\linewidth]{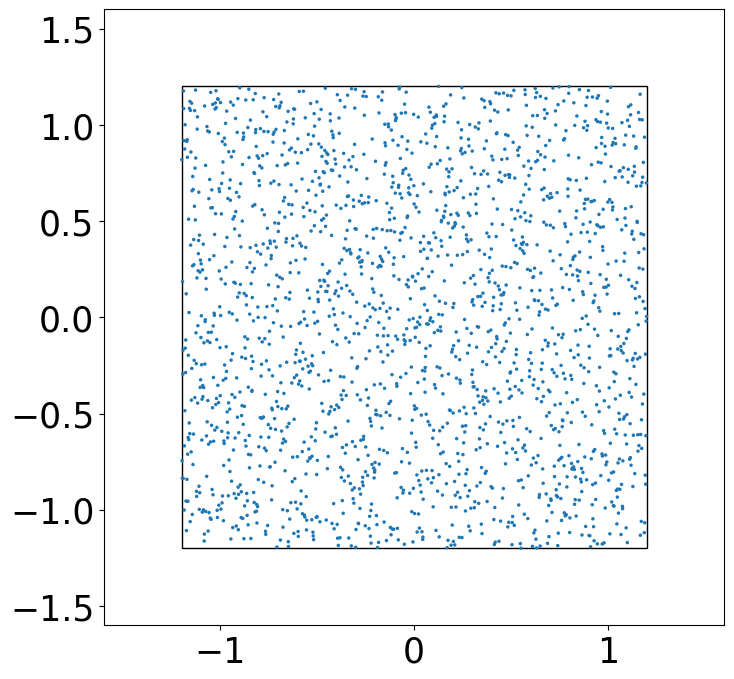}
\end{subfigure}%

\begin{subfigure}[t]{.252\textwidth}
  \includegraphics[width=\linewidth]{images/energy/particles_start/particles_circle_1r2d_0.0_zoom.png}
\end{subfigure}%
\begin{subfigure}[t]{.23\textwidth}
  \includegraphics[width=\linewidth]{images/energy/particles_start/particles_circle_1r2d_0.0.png}
\end{subfigure}%
\begin{subfigure}[t]{.23\textwidth}
  \includegraphics[width=\linewidth]{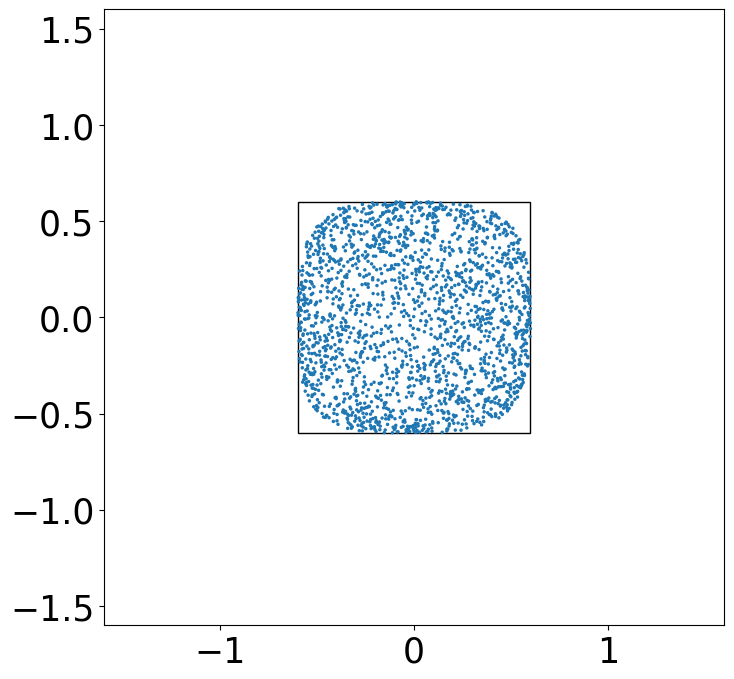}
\end{subfigure}%
\begin{subfigure}[t]{.23\textwidth}
  \includegraphics[width=\linewidth]{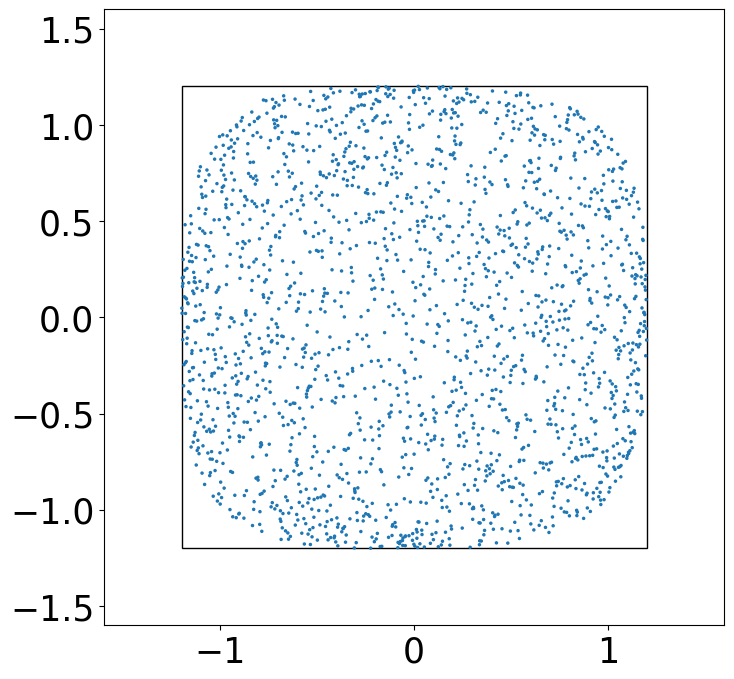}
\end{subfigure}%

\begin{subfigure}[t]{.252\textwidth}
  \includegraphics[width=\linewidth]{images/energy/particles_start/particles_uniformellipse_1r2d_0.0_zoom.png}
\end{subfigure}%
\begin{subfigure}[t]{.23\textwidth}
  \includegraphics[width=\linewidth]{images/energy/particles_start/particles_uniformellipse_1r2d_0.0.png}
\end{subfigure}%
\begin{subfigure}[t]{.23\textwidth}
  \includegraphics[width=\linewidth]{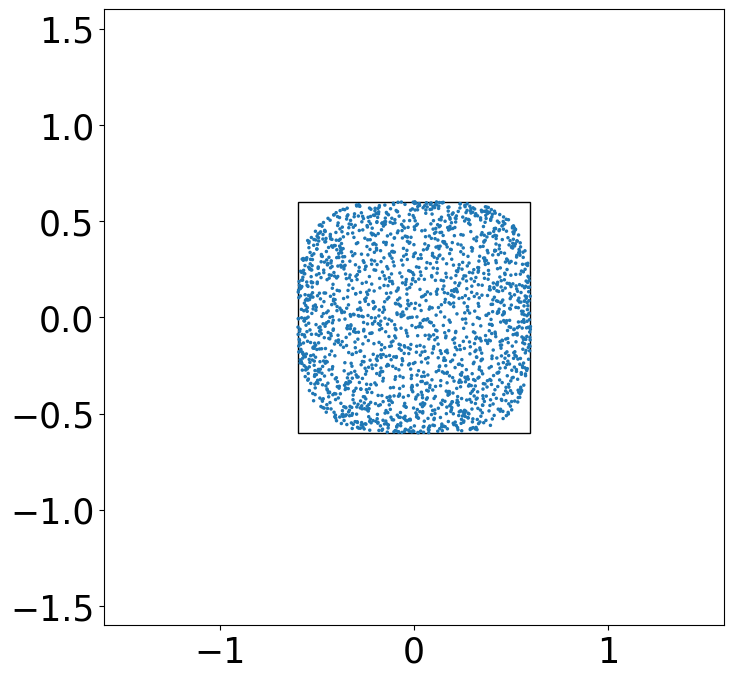}
\end{subfigure}%
\begin{subfigure}[t]{.23\textwidth}
  \includegraphics[width=\linewidth]{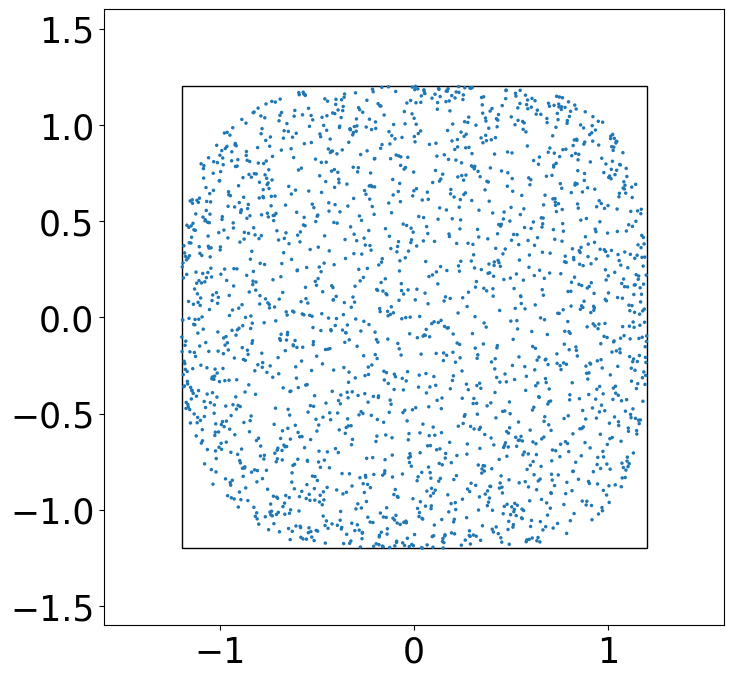}
\end{subfigure}%

\begin{subfigure}[t]{.252\textwidth}
  \includegraphics[width=\linewidth]{images/energy/particles_start/particles_uniformcross_1r2d_0.0_zoom.png}
\caption*{t=0.0}
\end{subfigure}%
\begin{subfigure}[t]{.23\textwidth}
  \includegraphics[width=\linewidth]{images/energy/particles_start/particles_uniformcross_1r2d_0.0.png}
\caption*{t=0.0}
\end{subfigure}%
\begin{subfigure}[t]{.23\textwidth}
  \includegraphics[width=\linewidth]{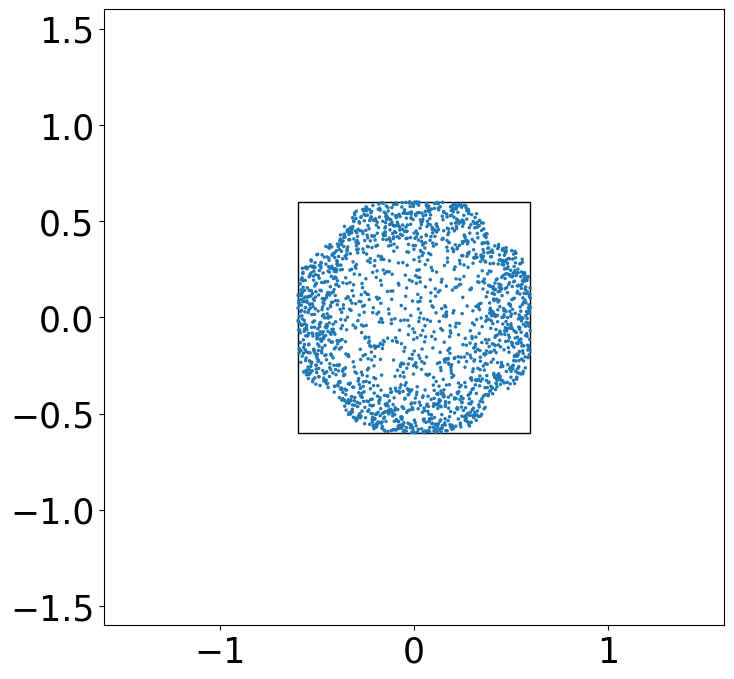}
\caption*{t=0.6}
\end{subfigure}%
\begin{subfigure}[t]{.23\textwidth}
  \includegraphics[width=\linewidth]{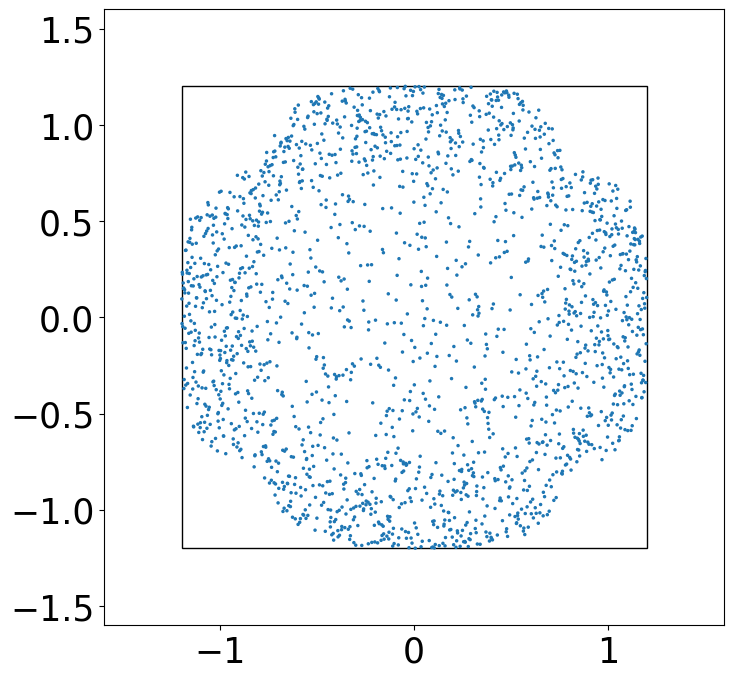}
\caption*{t=1.2}
\end{subfigure}%
\caption{Effect of different initial particle configurations  for the particle flow of $\mathcal E_K$, $K(x,y) \coloneqq - \|x-y\|_1$. The black circle is the border of the limit $\mathrm{supp} ~\gamma(t)$. Left: zoomed-in part of the initial particles (note axes!). } \label{fig:particles_start_l1}
\end{figure*}

\section{MMD Flows on the line} \label{app:mmd_1d}
While in general an analytic solution of the MMD flow is not known, in the one-dimensional case we can compute the exact gradient flow if the target measure is $\nu = \delta_p$ for some $p \in \R$. More explicitly, by \cite{HGBS2023} the exact gradient flow of $\mathcal{F}_{\delta_0}$ starting in the initial measure $\mu_{\tau}^0 = \delta_{-1}$ is given by
\begin{align*}
\gamma (t) = \begin{cases}
\delta_{-1},\quad&t=0, \\
\frac{1}{2t}\lambda_{[-1,-1+2t]}, &0\le t \le \frac{1}{2}, \\
\frac{1}{2t}\lambda_{[-1,0]} + (1-\frac{1}{2t}) \delta_0,&\frac{1}{2}<t.
\end{cases}
\end{align*}
A quantitative comparison between the analytic flow and its approximations with the discrepancy $\mathcal{D}_K^2$ is given in Fig.~\ref{fig:mmd_flow_line}. We use a time step size of $\tau = 0.01$ and simulated 2000 samples. While our neural schemes start in $\delta_{-1}$, the particle flow starts with uniformly distributed particles in an interval of size $10^{-9}$ around $-1$. Until the time $t=0.5$ all methods behave similarly, and then the neural forward scheme and the particle flow give a much worse approximation than the neural backward scheme. This can be explained by the fact that after time $t=0.5$ the particles should flow into the singular target $\delta_{0}$. Nevertheless, the repulsion term in the discrepancy leads to particle explosions in the neural forward scheme and the particle flow such that the approximation error increases. This behaviour can be prevented by decreasing the time step size $\tau$.
\begin{figure}
\centering
\begin{subfigure}[t]{.45\textwidth}
  \includegraphics[width=\linewidth]{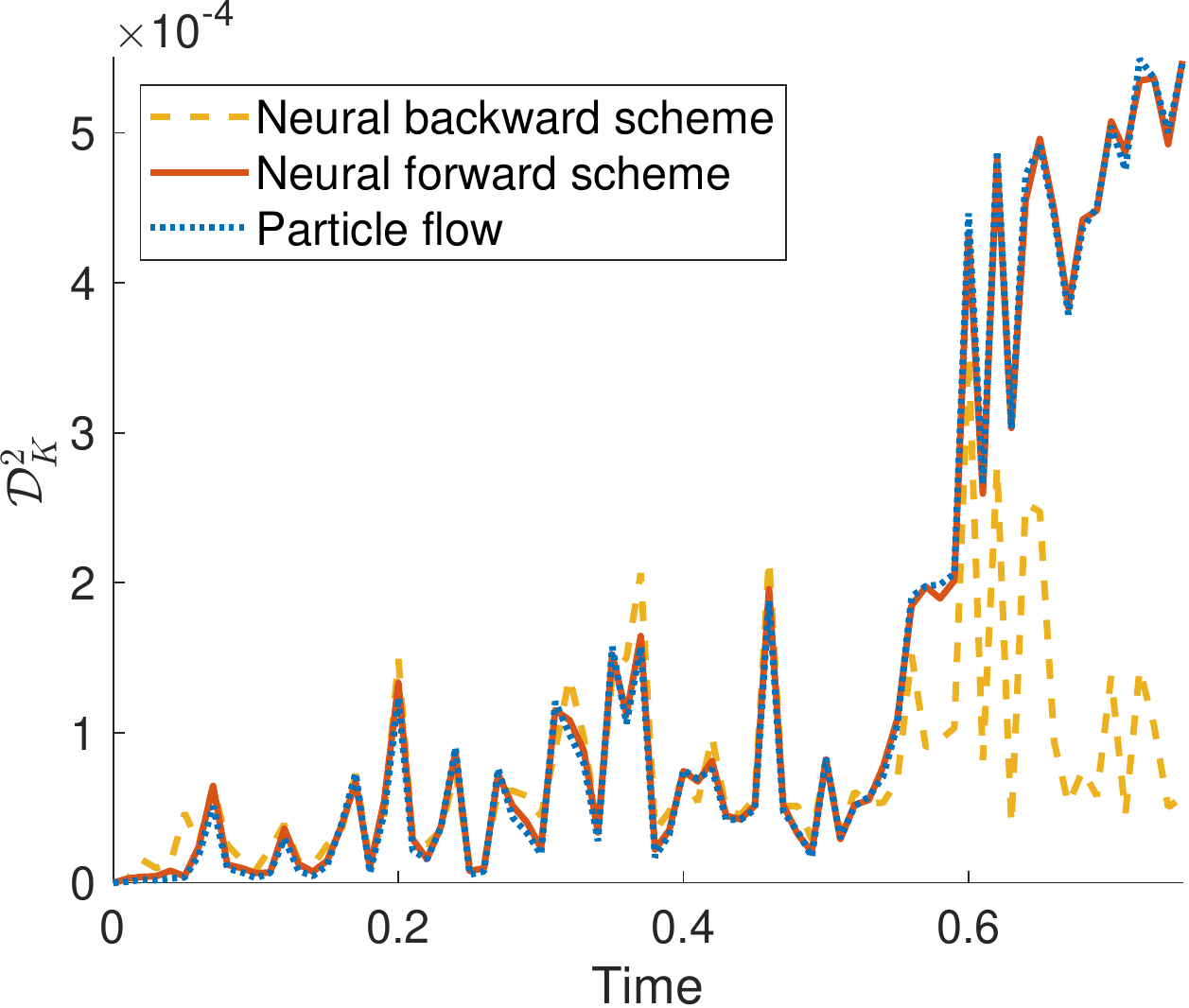}
\end{subfigure}%
\begin{subfigure}[t]{.45\textwidth}
  \includegraphics[width=\linewidth]{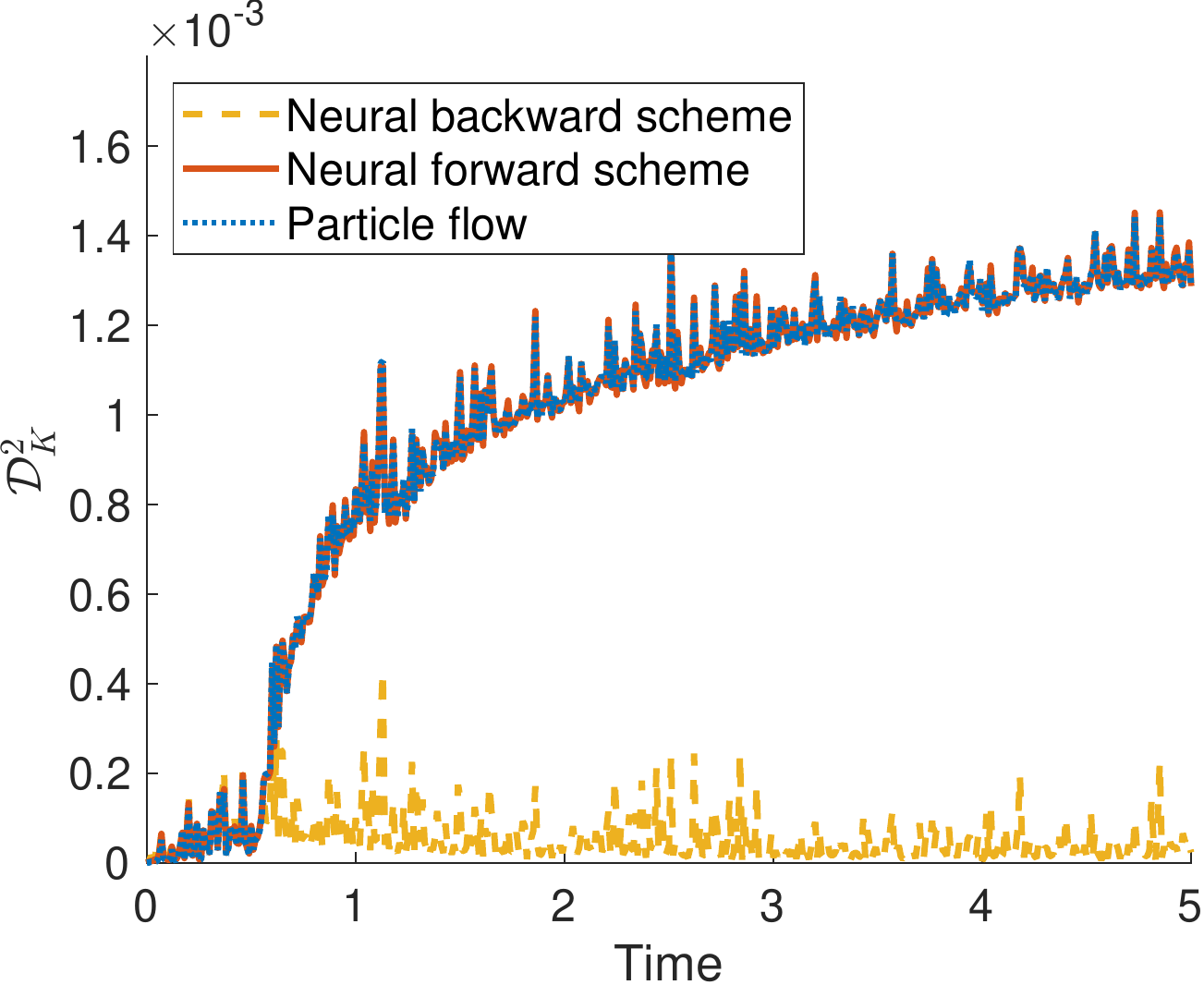}
\end{subfigure}%
\caption{Discrepancy between the analytic Wasserstein gradient flow of $\mathcal{F}_{\delta_{0}}$ and its approximations for $\tau=0.01$. \textbf{Left:} The discrepancy until time $t=0.75$. Here we can observe that all methods behave similarly until time $t=0.5$. After that, the particles of the analytic solution flow into $\delta_{0}$. \textbf{Right:} The discrepancy until time $t=5$. Obviously, the neural backward scheme is able to approximate the Wasserstein flow well, while the repulsion term leads to an explosion of the particles and therefore to a high approximation error for the neural forward scheme and the particle flow.}
\label{fig:mmd_flow_line}
\end{figure}

\section{Further Numerical Examples} \label{app:discrepancy_examples}

\paragraph{Example 1}
In Fig.~\ref{fig:discrepancy_smiley}, we consider the target measure given from the image 'Smiley'. A sample from the exact target density is illustrated in Fig.~\ref{fig:discrepancy_smiley} (right). 
For all methods we use a time step size of $\tau=0.1$. The network-based methods use a network with four hidden layers and 128 nodes and train for 4000 iterations in the first ten steps and then for 2000 iterations.
While we can start the network-based methods in $\delta_{(-1,0)} + \delta_{(1,0)}$, the 2000 initial particles of the particle flow needs to be placed in small squares of radius $R=10^{-9}$ around $\delta_{(-1,0)}$ and $\delta_{(1,0)}$. The effect of this remedy can be seen in Fig.~\ref{fig:discrepancy_smiley} (bottom), where the particles tend to form squares.

\begin{figure*}[t]
\begin{subfigure}[t]{.14\textwidth}
  \includegraphics[width=\linewidth]{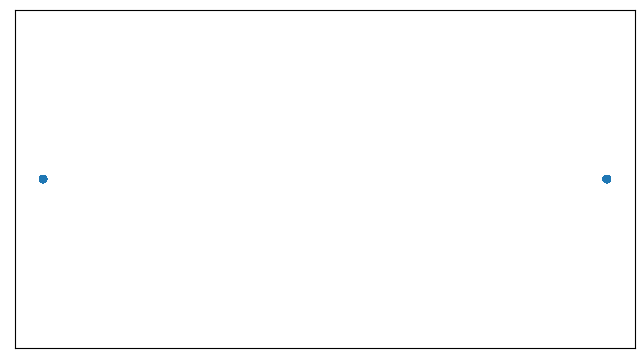}
\end{subfigure}%
\begin{subfigure}[t]{.14\textwidth}
  \includegraphics[width=\linewidth]{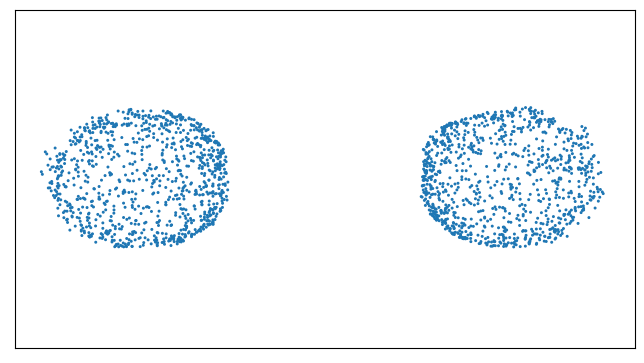}
\end{subfigure}%
\begin{subfigure}[t]{.14\textwidth}
  \includegraphics[width=\linewidth]{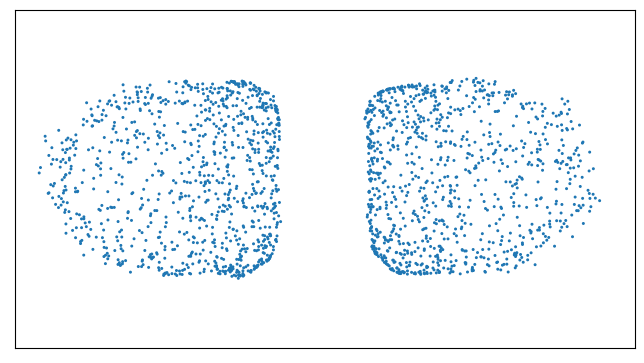}
\end{subfigure}%
\begin{subfigure}[t]{.14\textwidth}
  \includegraphics[width=\linewidth]{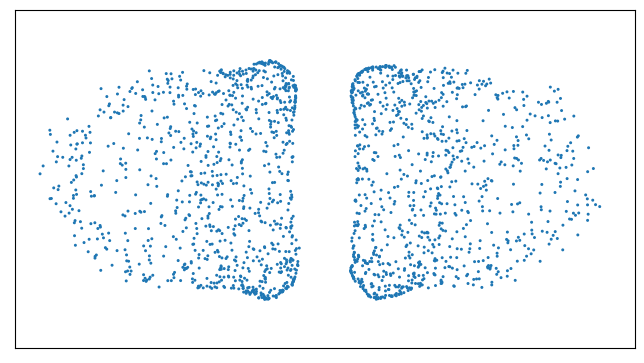}
\end{subfigure}%
\begin{subfigure}[t]{.14\textwidth}
  \includegraphics[width=\linewidth]{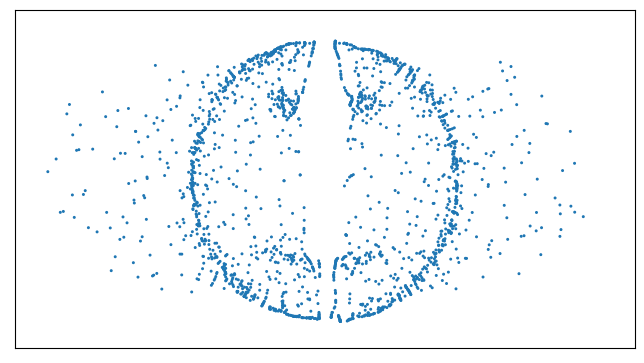}
\end{subfigure}%
\begin{subfigure}[t]{.14\textwidth}
  \includegraphics[width=\linewidth]{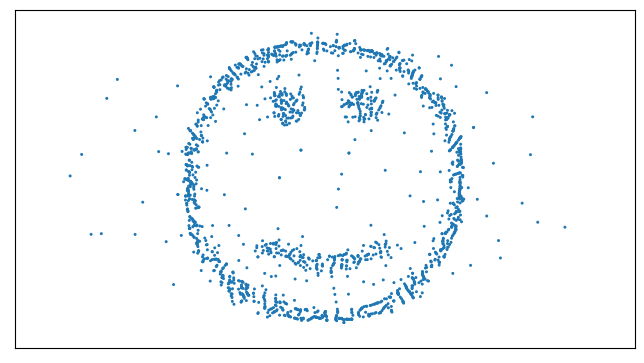}
\end{subfigure}%
\hfill
\begin{subfigure}[t]{.14\textwidth}
  \includegraphics[width=\linewidth]{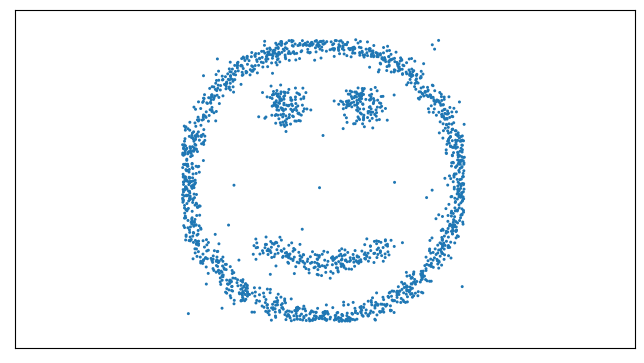}
\end{subfigure}%

\begin{subfigure}[t]{.14\textwidth}
  \includegraphics[width=\linewidth]{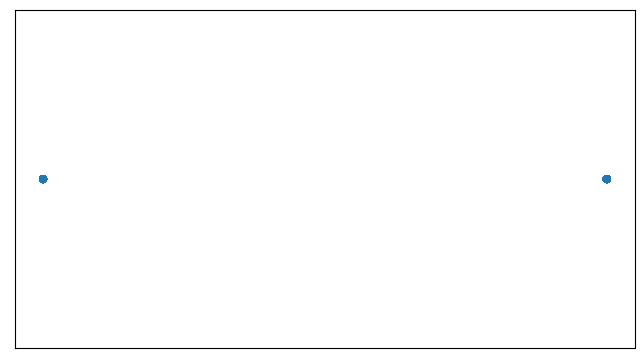}
\end{subfigure}%
\begin{subfigure}[t]{.14\textwidth}
  \includegraphics[width=\linewidth]{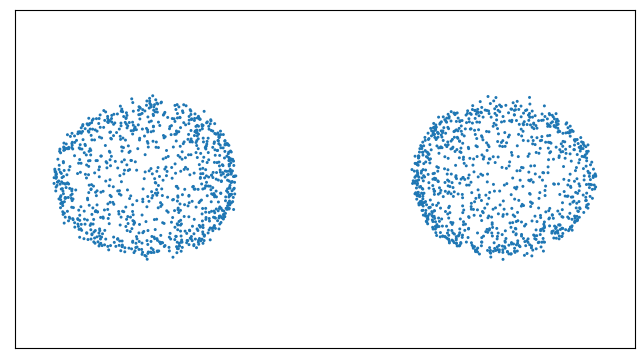}
\end{subfigure}%
\begin{subfigure}[t]{.14\textwidth}
  \includegraphics[width=\linewidth]{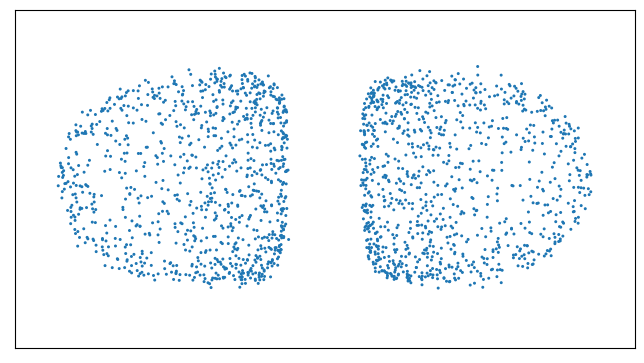}
\end{subfigure}%
\begin{subfigure}[t]{.14\textwidth}
  \includegraphics[width=\linewidth]{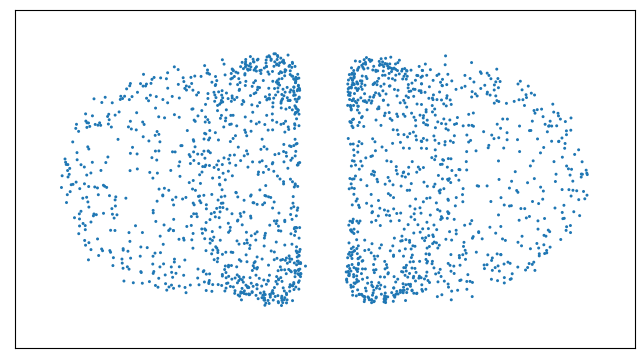}
\end{subfigure}%
\begin{subfigure}[t]{.14\textwidth}
  \includegraphics[width=\linewidth]{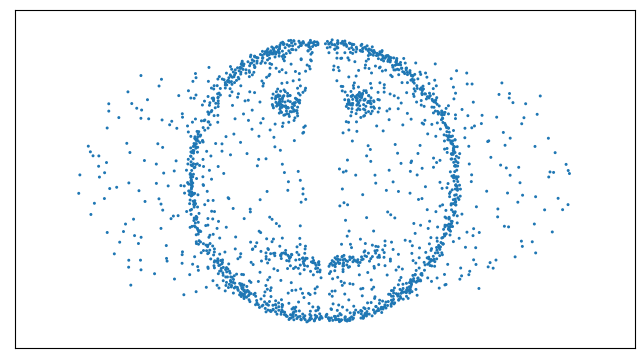}
\end{subfigure}%
\begin{subfigure}[t]{.14\textwidth}
  \includegraphics[width=\linewidth]{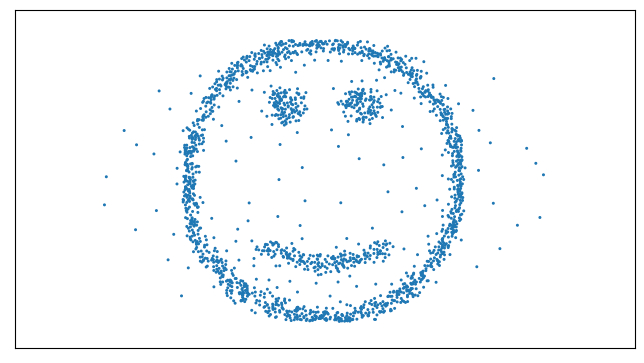}
\end{subfigure}%
\hfill
\begin{subfigure}[t]{.14\textwidth}
  \includegraphics[width=\linewidth]{images/discrepancy/smiley/exact_smiley_frame.png}
\end{subfigure}%

\begin{subfigure}[t]{.14\textwidth}
\includegraphics[width=\linewidth]{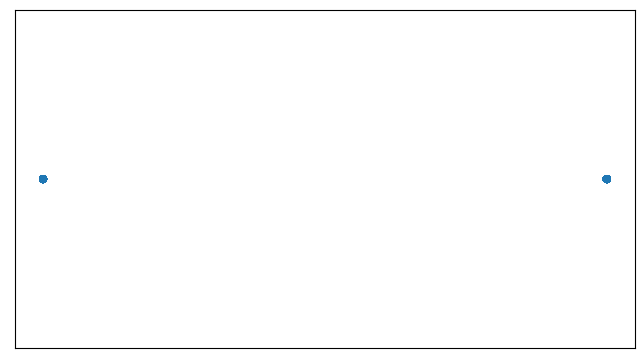}
\caption*{$t=0.0$}
\end{subfigure}%
\begin{subfigure}[t]{.14\textwidth}
  \includegraphics[width=\linewidth]{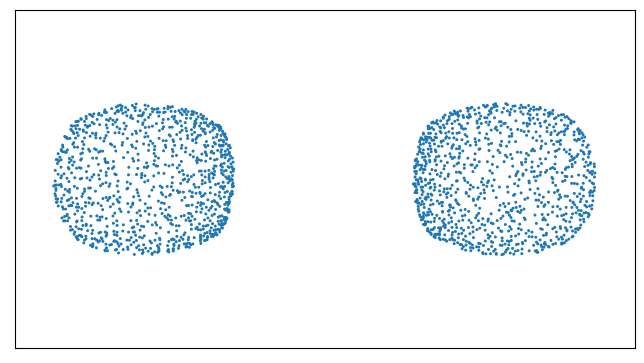}
\caption*{$t=1.0$}
\end{subfigure}%
\begin{subfigure}[t]{.14\textwidth}
  \includegraphics[width=\linewidth]{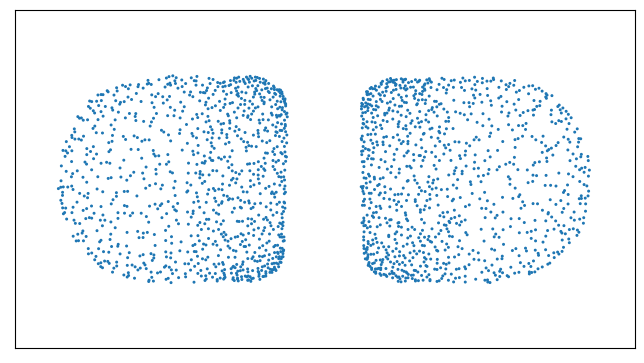}
\caption*{$t=2.0$}
\end{subfigure}%
\begin{subfigure}[t]{.14\textwidth}
  \includegraphics[width=\linewidth]{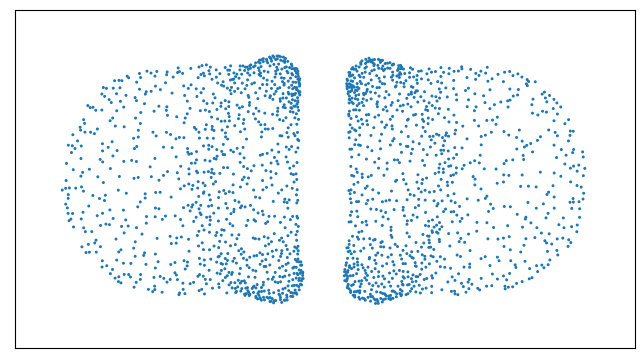}
\caption*{$t=3.0$}
\end{subfigure}%
\begin{subfigure}[t]{.14\textwidth}
  \includegraphics[width=\linewidth]{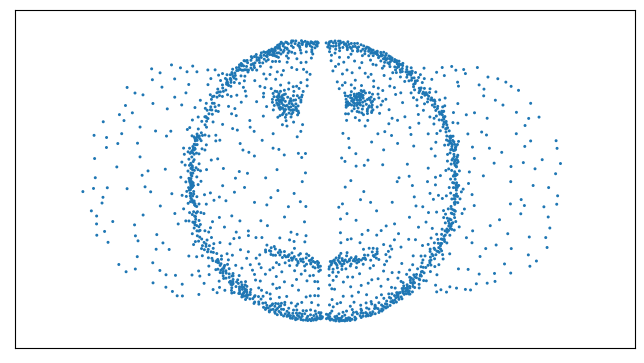}
\caption*{$t=10.0$}
\end{subfigure}%
\begin{subfigure}[t]{.14\textwidth}
  \includegraphics[width=\linewidth]{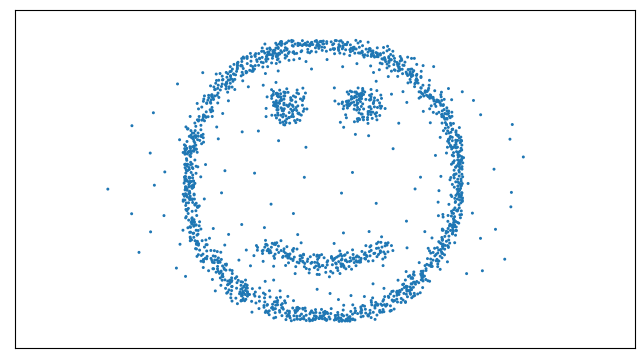}
\caption*{$t=50.0$}
\end{subfigure}%
\hfill
\begin{subfigure}[t]{.14\textwidth}
  \includegraphics[width=\linewidth]{images/discrepancy/smiley/exact_smiley_frame.png}
\caption*{target}
\end{subfigure}%
\caption{Comparison of neural backward scheme (top), neural forward scheme (middle) and particle flow (bottom) for sampling of the two-dimensional density 'Smiley' starting in $\delta_{(-1,0)} + \delta_{(1,0)}$. 
} \label{fig:discrepancy_smiley}
\end{figure*}

\paragraph{Example 2}
In Fig.~\ref{fig:discrepancy_two_two}, we aim to compute the MMD flows for the target 
$\nu =  \delta_{(-1,-1)} + \delta_{(1,1)}$ 
starting in $\delta_{(-1,1)}+\delta_{(1,-1)}$ for the network-based methods and small squares with radius $R=10^{-9}$ around $(-1,1)$ and $(1,-1)$ for the particle flow. 
We use a step size of $\tau=0.1$. The network-based methods use a network with four hidden layers and 128 nodes and train for 4000 iterations in the first ten steps and then for 2000 iterations.

\begin{figure*}[t!]
\begin{subfigure}[t]{.124\textwidth}
  \includegraphics[width=\linewidth]{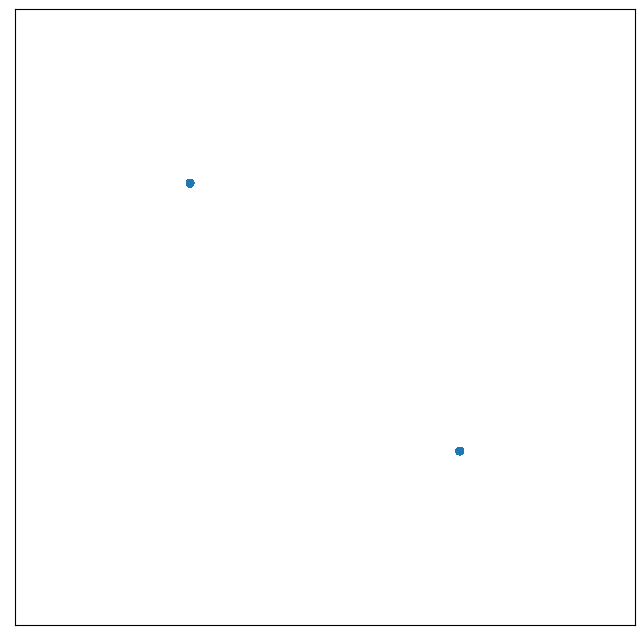}
\end{subfigure}%
\begin{subfigure}[t]{.124\textwidth}
  \includegraphics[width=\linewidth]{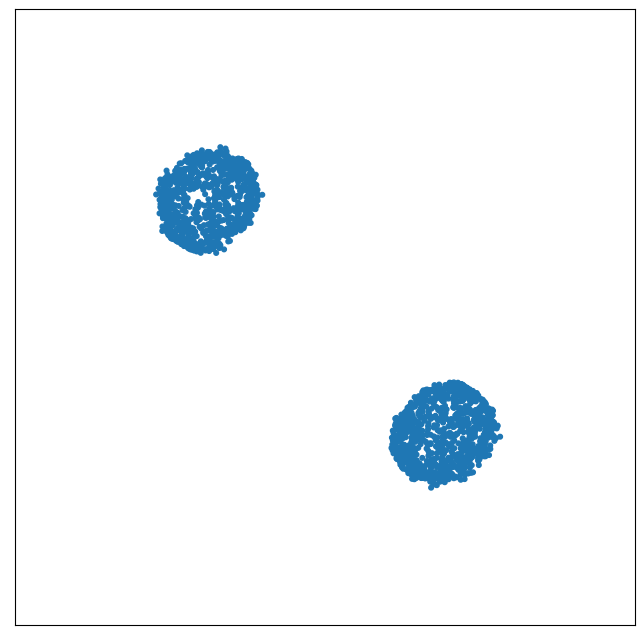}
\end{subfigure}%
\begin{subfigure}[t]{.124\textwidth}
  \includegraphics[width=\linewidth]{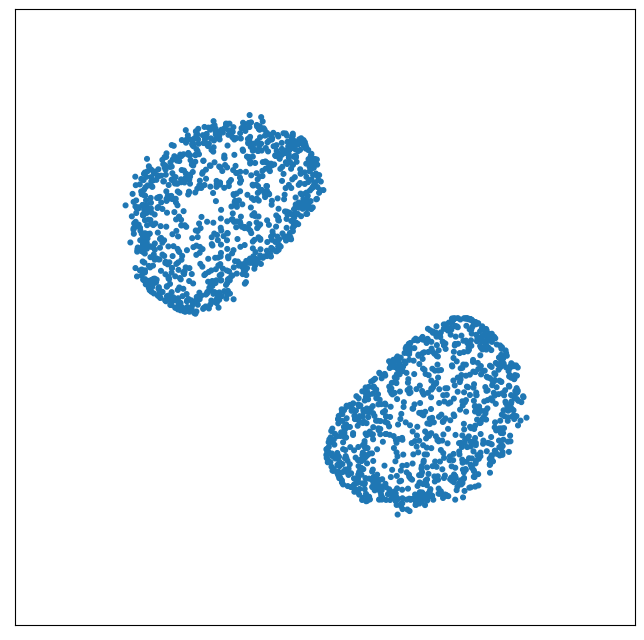}
\end{subfigure}%
\begin{subfigure}[t]{.124\textwidth}
  \includegraphics[width=\linewidth]{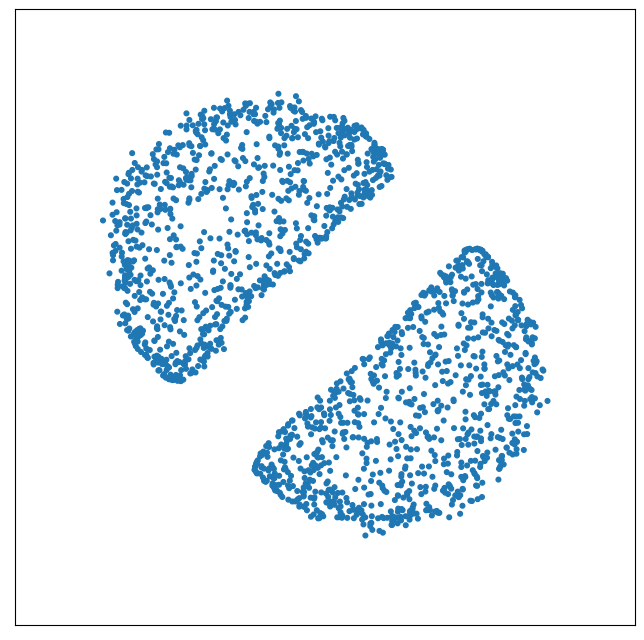}
\end{subfigure}%
\begin{subfigure}[t]{.124\textwidth}
  \includegraphics[width=\linewidth]{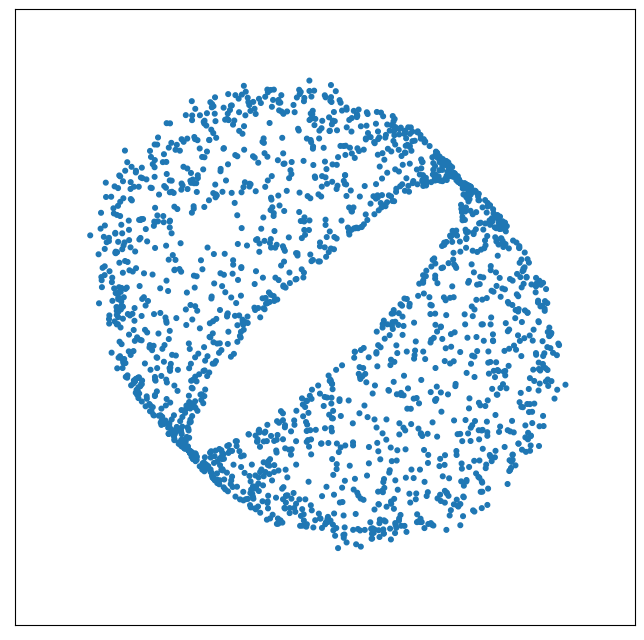}
\end{subfigure}%
\begin{subfigure}[t]{.124\textwidth}
  \includegraphics[width=\linewidth]{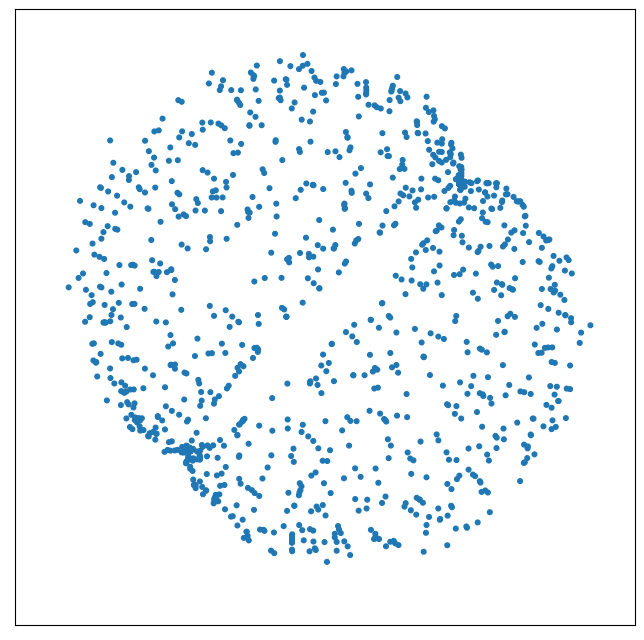}
\end{subfigure}%
\begin{subfigure}[t]{.124\textwidth}
  \includegraphics[width=\linewidth]{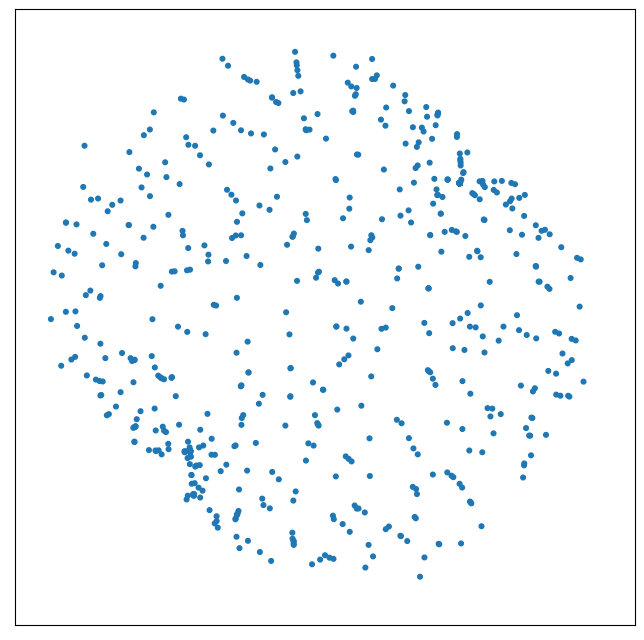}
\end{subfigure}%
\begin{subfigure}[t]{.124\textwidth}
  \includegraphics[width=\linewidth]{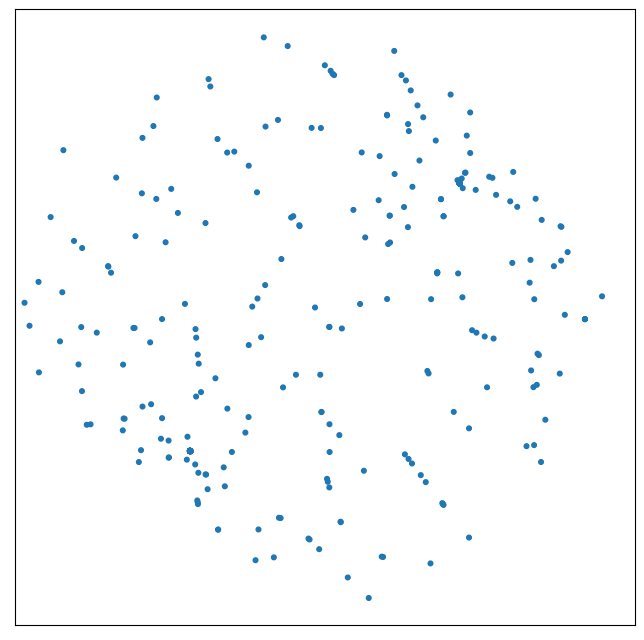}
\end{subfigure}%

\begin{subfigure}[t]{.124\textwidth}
  \includegraphics[width=\linewidth]{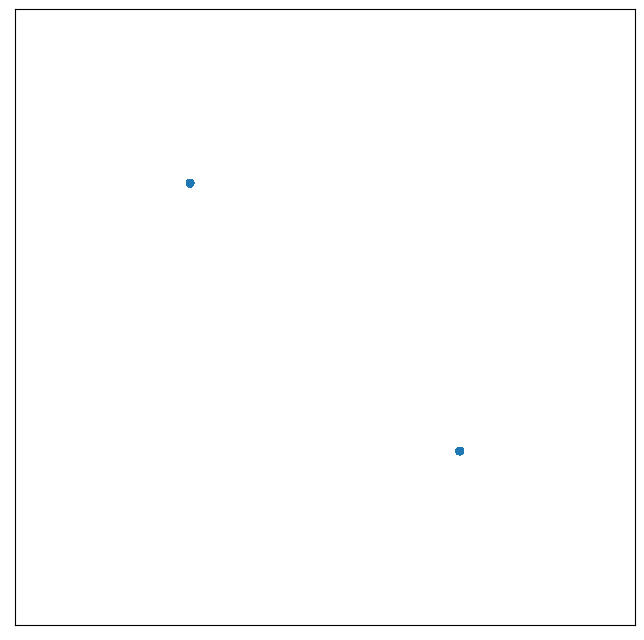}
\end{subfigure}%
\begin{subfigure}[t]{.124\textwidth}
  \includegraphics[width=\linewidth]{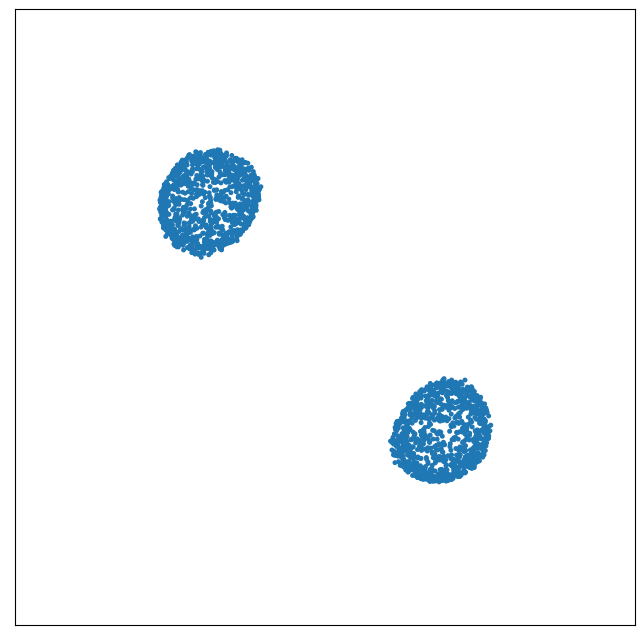}
\end{subfigure}%
\begin{subfigure}[t]{.124\textwidth}
  \includegraphics[width=\linewidth]{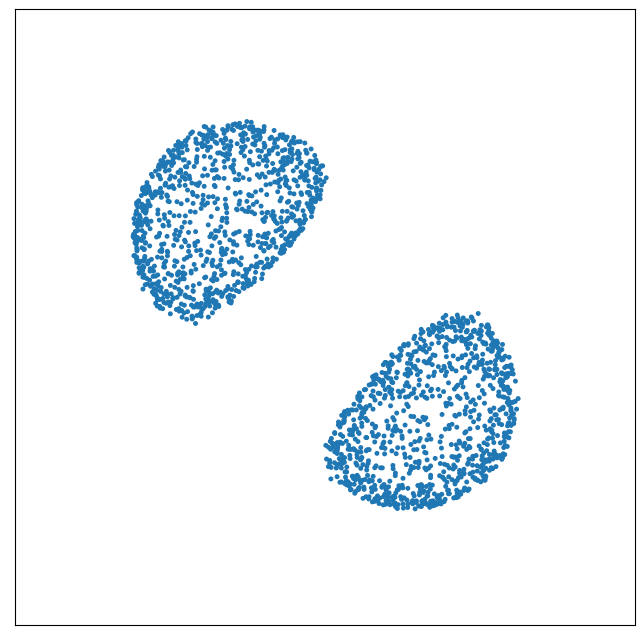}
\end{subfigure}%
\begin{subfigure}[t]{.124\textwidth}
  \includegraphics[width=\linewidth]{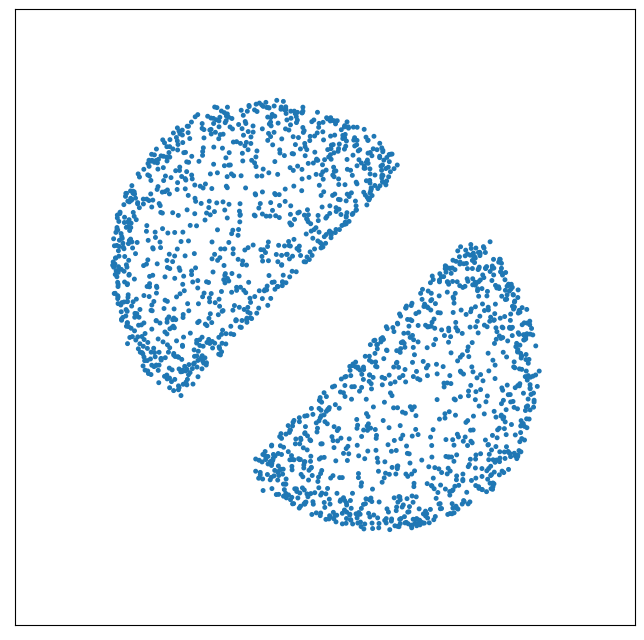}
\end{subfigure}%
\begin{subfigure}[t]{.124\textwidth}
  \includegraphics[width=\linewidth]{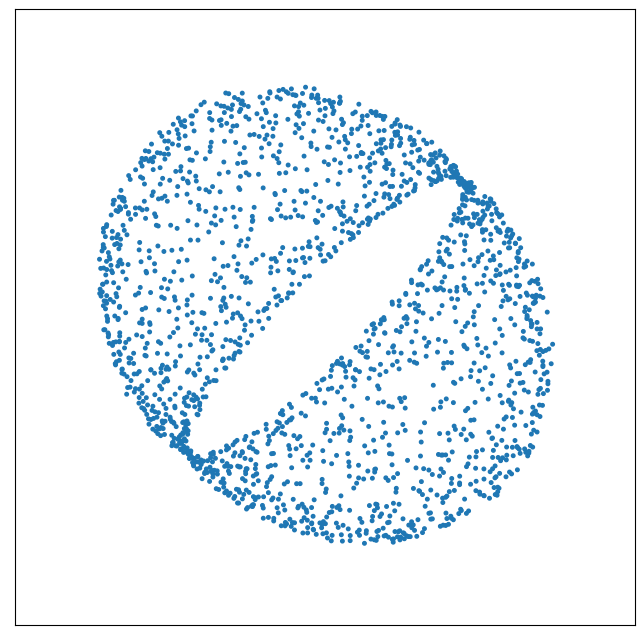}
\end{subfigure}%
\begin{subfigure}[t]{.124\textwidth}
  \includegraphics[width=\linewidth]{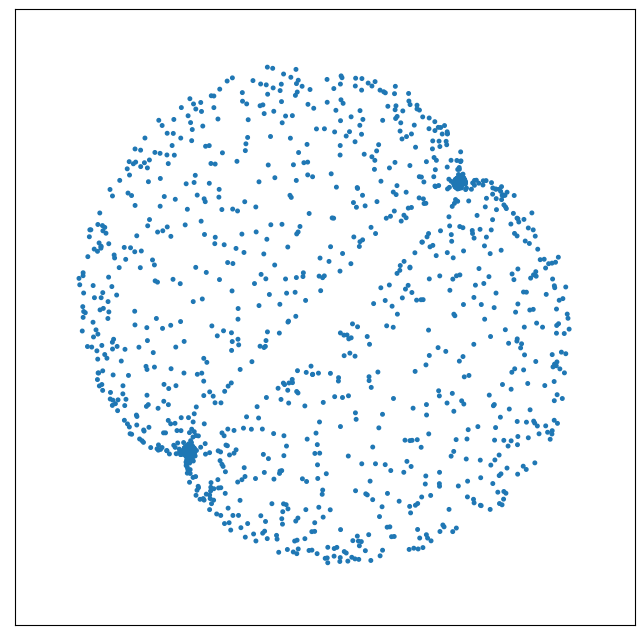}
\end{subfigure}%
\begin{subfigure}[t]{.124\textwidth}
  \includegraphics[width=\linewidth]{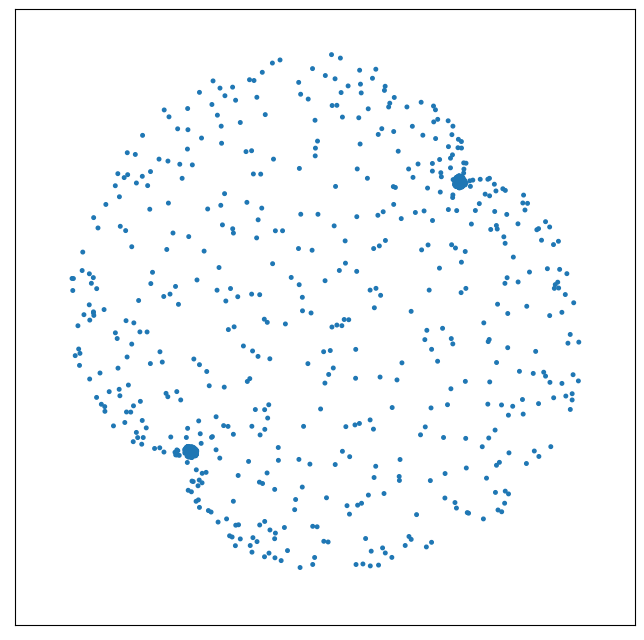}
\end{subfigure}%
\begin{subfigure}[t]{.124\textwidth}
  \includegraphics[width=\linewidth]{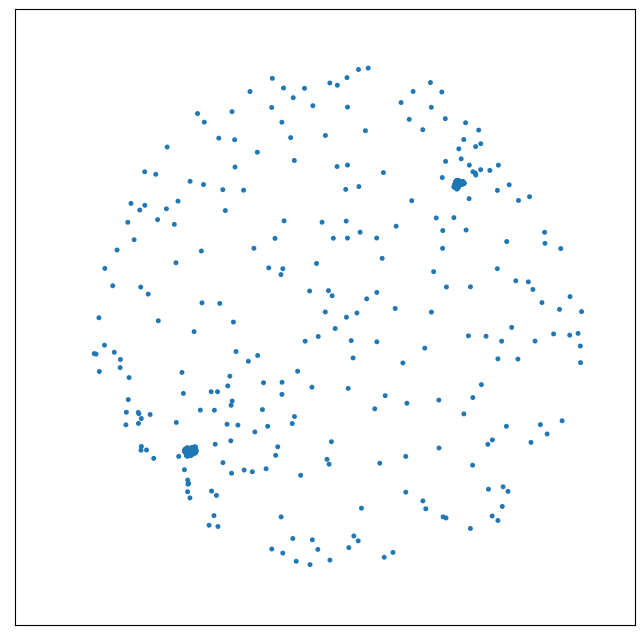}
\end{subfigure}%

\begin{subfigure}[t]{.124\textwidth}
  \includegraphics[width=\linewidth]{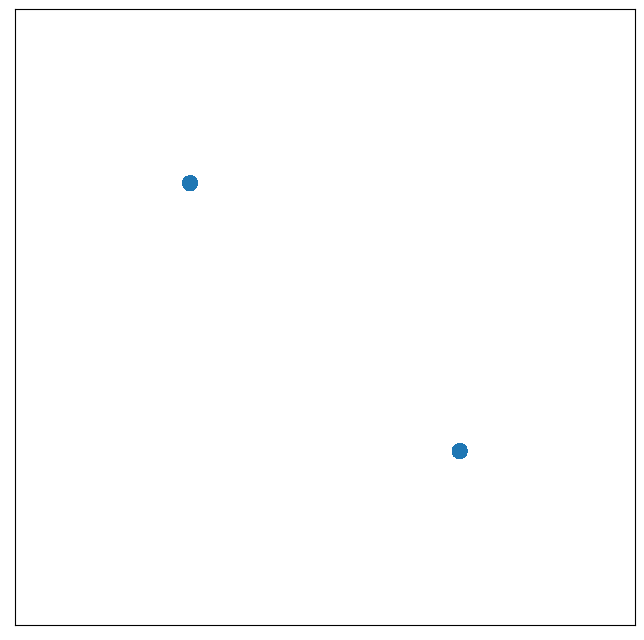}
  \caption*{t=0.0}
\end{subfigure}%
\begin{subfigure}[t]{.124\textwidth}
  \includegraphics[width=\linewidth]{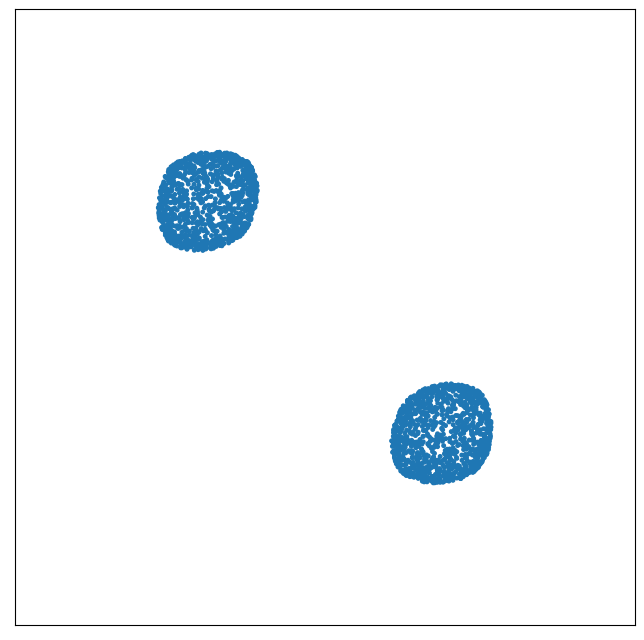}
  \caption*{t=1.0}
\end{subfigure}%
\begin{subfigure}[t]{.124\textwidth}
  \includegraphics[width=\linewidth]{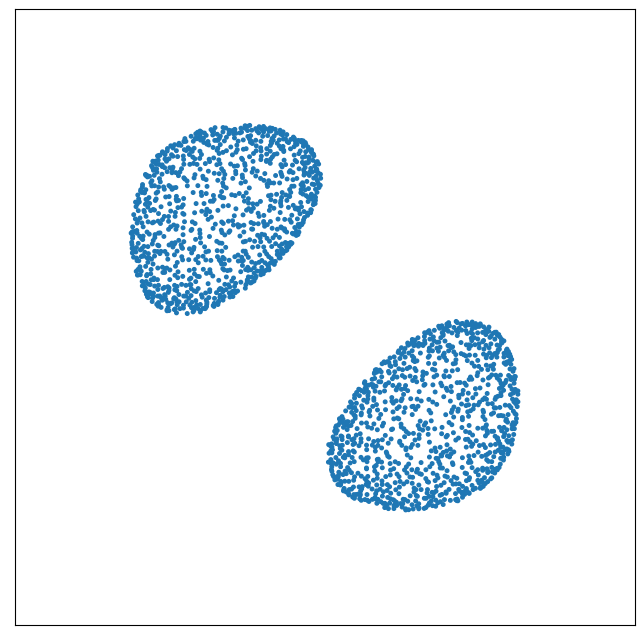}
  \caption*{t=2.0}
\end{subfigure}%
\begin{subfigure}[t]{.124\textwidth}
  \includegraphics[width=\linewidth]{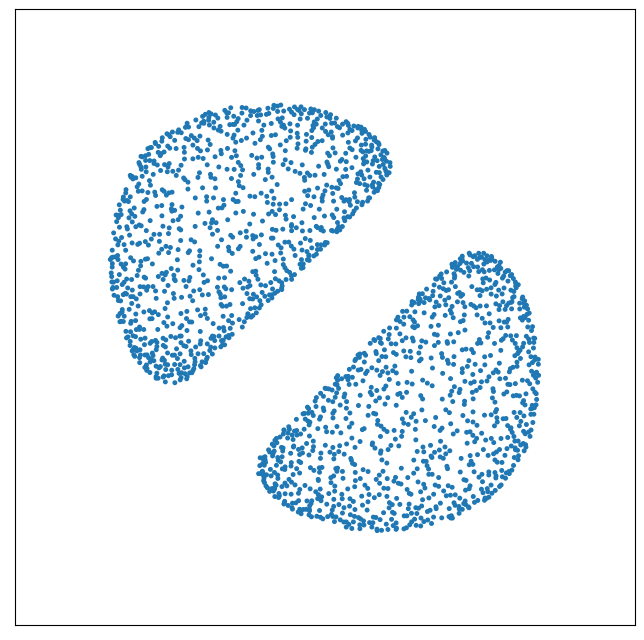}
  \caption*{t=3.0}
\end{subfigure}%
\begin{subfigure}[t]{.124\textwidth}
  \includegraphics[width=\linewidth]{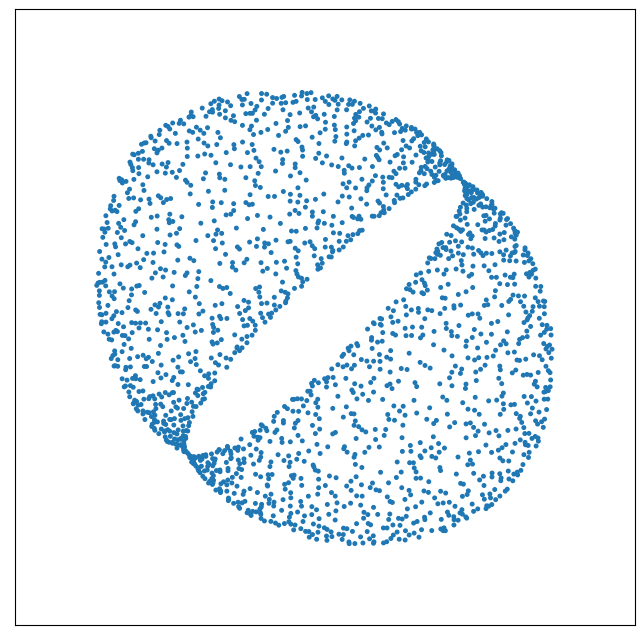}
  \caption*{t=4.0}
\end{subfigure}%
\begin{subfigure}[t]{.124\textwidth}
  \includegraphics[width=\linewidth]{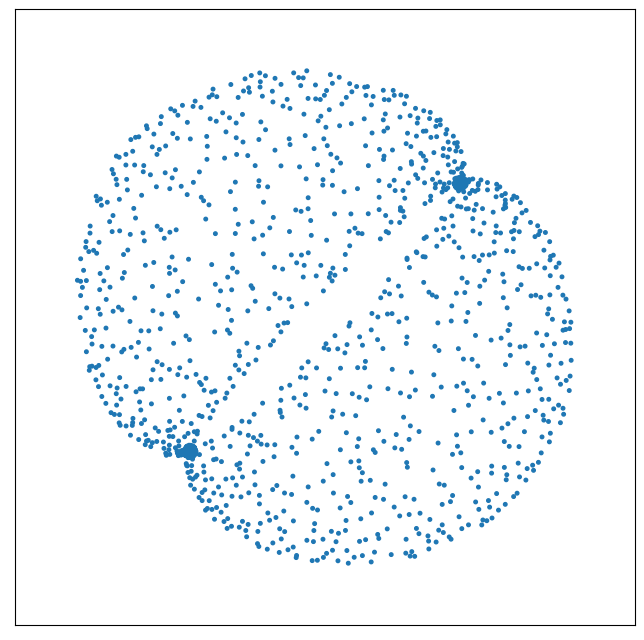}
  \caption*{t=8.0}
\end{subfigure}%
\begin{subfigure}[t]{.124\textwidth}
  \includegraphics[width=\linewidth]{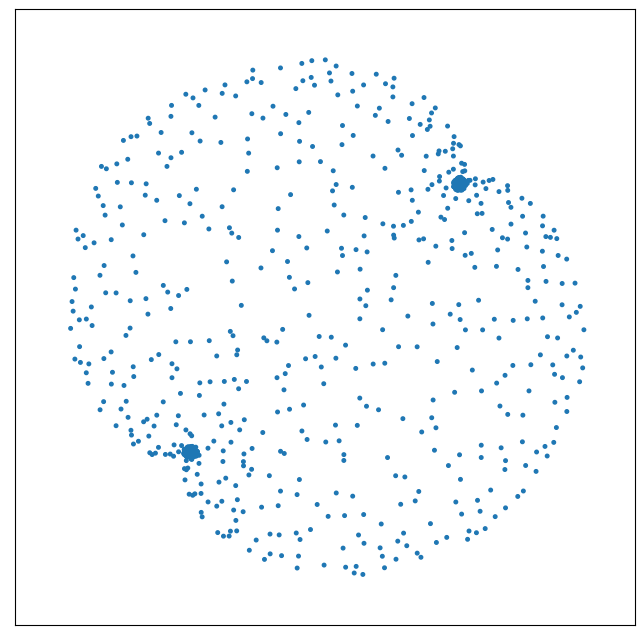}
  \caption*{t=16.0}
\end{subfigure}%
\begin{subfigure}[t]{.124\textwidth}
  \includegraphics[width=\linewidth]{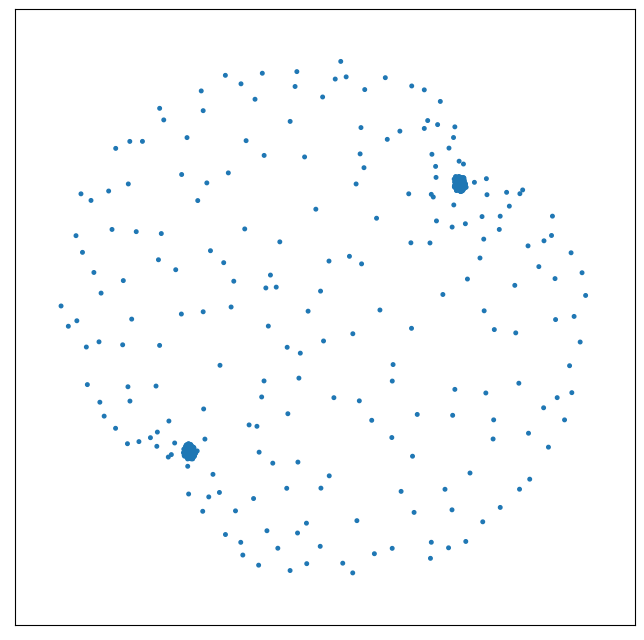}
  \caption*{t=40.0}
\end{subfigure}%
\caption{Comparison of neural backward scheme (top), neural forward scheme (middle) and particle flow (bottom) for computing the MMD flow with target $\nu = \delta_{(1,1)} + \delta_{(-1,-1)}$ starting in the opposite diagonal points $\delta_{(-1,1)}+\delta_{(1,-1)}$} \label{fig:discrepancy_two_two}
\end{figure*}

\begin{figure*}
\centering
\begin{subfigure}[t]{.14\textwidth}
  \includegraphics[width=\linewidth]{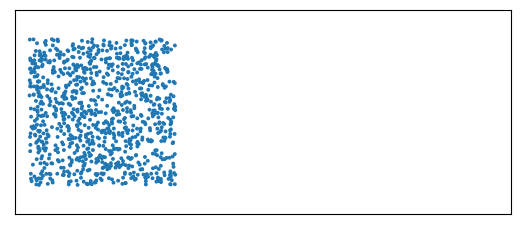}
\end{subfigure}%
\begin{subfigure}[t]{.14\textwidth}
  \includegraphics[width=\linewidth]{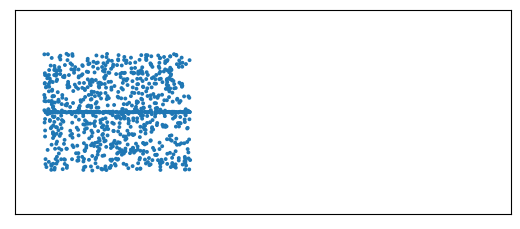}
\end{subfigure}%
\begin{subfigure}[t]{.14\textwidth}
  \includegraphics[width=\linewidth]{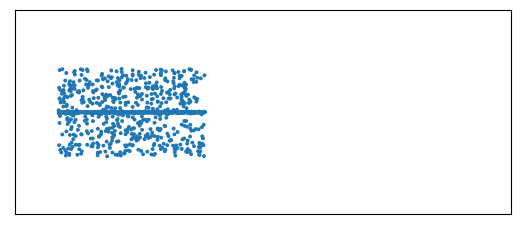}
\end{subfigure}%
\begin{subfigure}[t]{.14\textwidth}
  \includegraphics[width=\linewidth]{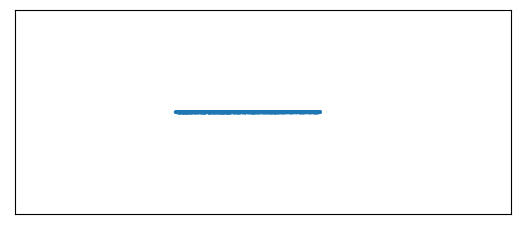}
\end{subfigure}%
\begin{subfigure}[t]{.14\textwidth}
  \includegraphics[width=\linewidth]{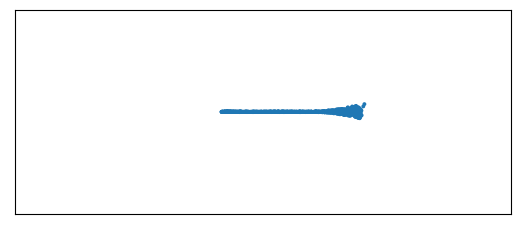}
\end{subfigure}%
\begin{subfigure}[t]{.14\textwidth}
  \includegraphics[width=\linewidth]{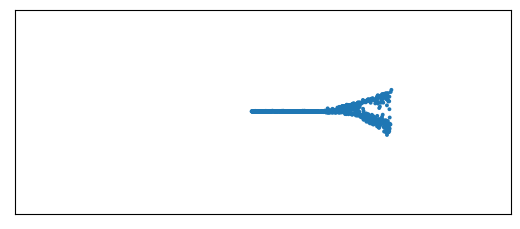}
\end{subfigure}%
\begin{subfigure}[t]{.14\textwidth}
  \includegraphics[width=\linewidth]{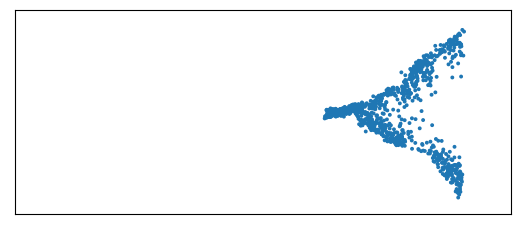}
\end{subfigure}%

\begin{subfigure}[t]{.14\textwidth}
  \includegraphics[width=\linewidth]{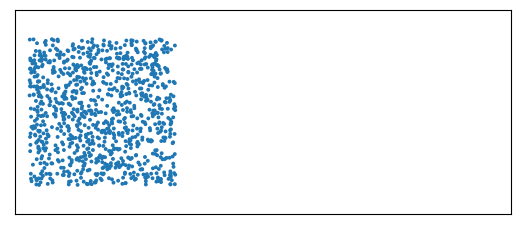}
\caption*{t=0.0}
\end{subfigure}%
\begin{subfigure}[t]{.14\textwidth}
  \includegraphics[width=\linewidth]{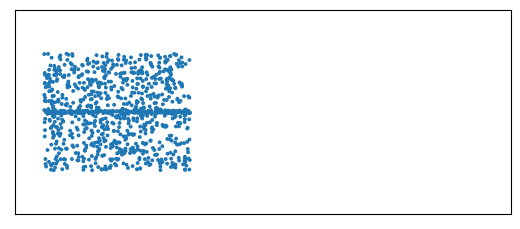}
\caption*{t=0.1}
\end{subfigure}%
\begin{subfigure}[t]{.14\textwidth}
  \includegraphics[width=\linewidth]{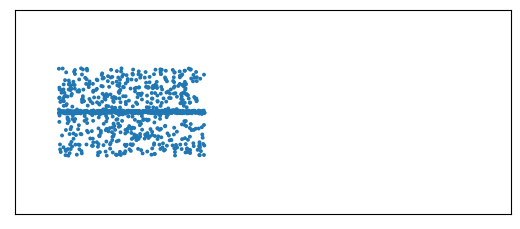}
\caption*{t=0.2}
\end{subfigure}%
\begin{subfigure}[t]{.14\textwidth}
  \includegraphics[width=\linewidth]{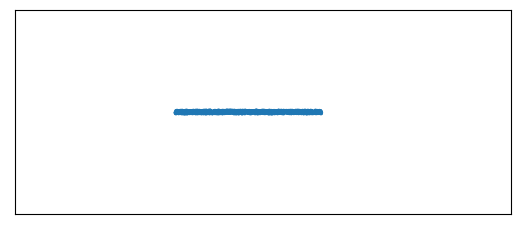}
\caption*{t=1.0}
\end{subfigure}%
\begin{subfigure}[t]{.14\textwidth}
  \includegraphics[width=\linewidth]{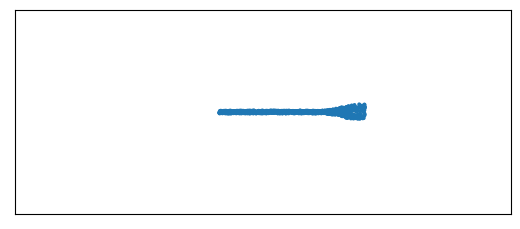}
\caption*{t=1.3}
\end{subfigure}%
\begin{subfigure}[t]{.14\textwidth}
  \includegraphics[width=\linewidth]{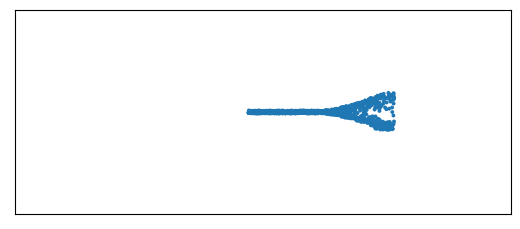}
\caption*{t=1.5}
\end{subfigure}%
\begin{subfigure}[t]{.14\textwidth}
  \includegraphics[width=\linewidth]{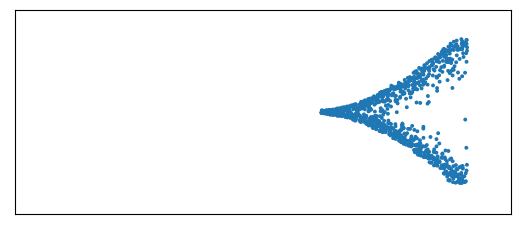}
\caption*{t=2.0}
\end{subfigure}%
\caption{Neural backward scheme (top) and neural forward scheme (bottom) for the energy functional \eqref{eq:new_functional}
starting at the uniform distribution on the square. 
The absolutely continuous initial measure becomes for $t>1$
a \emph{non} absolutely continuous one, 
which is supported on a line at $t=1$ and branches afterwards.
} \label{fig:new_energy}
\end{figure*}
\paragraph{Example 3}
Instead of computing a MMD flow, we can also consider a different functional $\mathcal{F}$. Here we define the energy functional as
\begin{equation}\label{eq:new_functional}
\begin{aligned} 
\mathcal{F}(\mu) &= \int_{\R^2} 1_{(-\infty,0)}(x) \vert y \vert - x \,\dx \mu(x,y)\\&\quad - \frac{1}{2} \int_{\R^2} \int_{\R^2} 1_{[0,\infty)^2}(x_1,x_2) \Vert y_1 - y_2 \Vert \dx \mu(x_1,y_2) \dx \mu(x_2,y_2).
\end{aligned}
\end{equation}
The first term in the first integral pushes the particles towards the x-axis until  $x=0$ and the second term in the first integral moves the particles to the right. The second integral is the interaction energy in the y-dimension, pushing the particles away from the x-axis. The corresponding neural backward scheme and neural forward scheme are depiced in Fig.~\ref{fig:new_energy}. Initial particles are sampled from from the uniform distribution on $[-2,-1]\times[-0.5,0.5]$, i.e., the initial measure is absolutely continuous. 

\paragraph{Example 4}
We can use the proposed schemes for computing the MMD barycenter. More precisely, let $\mu_1,...,\mu_n \in \P_2 (\R^d)$, then we aim to find the measure $\mu^*$ given by 
\begin{align*}
\mu^* = \argmin_{\mu \in \P_2 (\R^d)} \sum_{i=1}^n \alpha_i \mathcal{D}_K^2 (\mu,\mu_i), \quad \sum_{i=1}^n \alpha_i = 1.
\end{align*}
Consequently, we consider the functional $\mathcal{F}$ given by
\begin{align} \label{eq:mmd_barycenter}
\mathcal{F}_{\mu_1,...,\mu_n} (\mu) = \sum_{i=1}^n \alpha_i \mathcal{D}_K^2(\mu,\mu_i).
\end{align}
By Proposition 2 in \cite{CAD2021} the barycenter is given by $\mu^* = \sum_{i=1}^n \alpha_i \mu_i$. We illustrate an example, where we compute the MMD barycenter between the measures $\mu_1$ and $\mu_2$, which are uniformly distributed on the unit circle and uniformly distributed on the boundary of the square on $[-1,1]^2$, respectively. The starting measure is $\delta_0$ for the neural backward and neural forward scheme and a small square of radius $R=10^{-9}$ around $\delta_0$ for the particle flow. Note that the measures $\mu_1$ and $\mu_2$ are supported on different submanifolds.
In Fig.~\ref{fig:mmd_barycenter} we illustrate the corresponding neural backward scheme (top), the neural forward scheme (middle) and the particle flow (bottom). Obviously, all methods are able to approximate the correct MMD barycenter.

\begin{figure*}[t]
\centering
\begin{subfigure}[t]{.14\textwidth}
  \includegraphics[width=\linewidth]{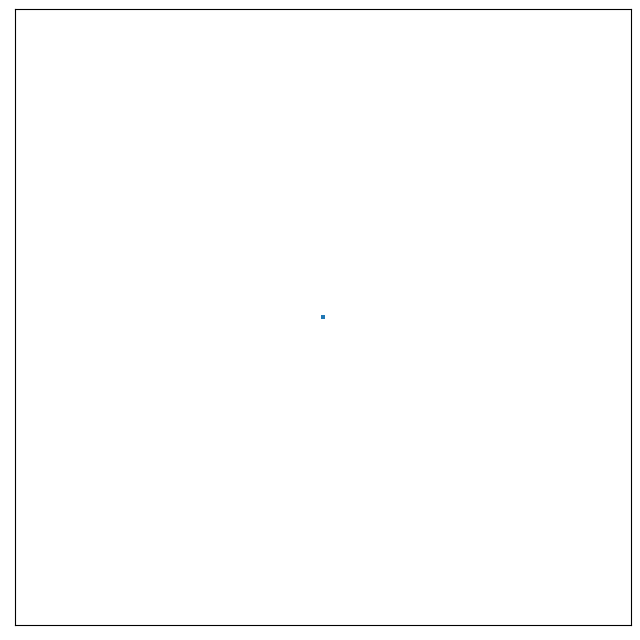}
\end{subfigure}%
\begin{subfigure}[t]{.14\textwidth}
  \includegraphics[width=\linewidth]{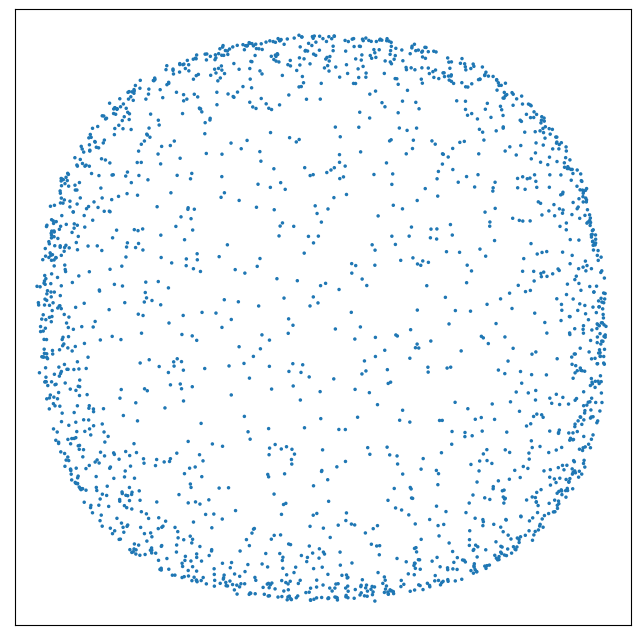}
\end{subfigure}%
\begin{subfigure}[t]{.14\textwidth}
  \includegraphics[width=\linewidth]{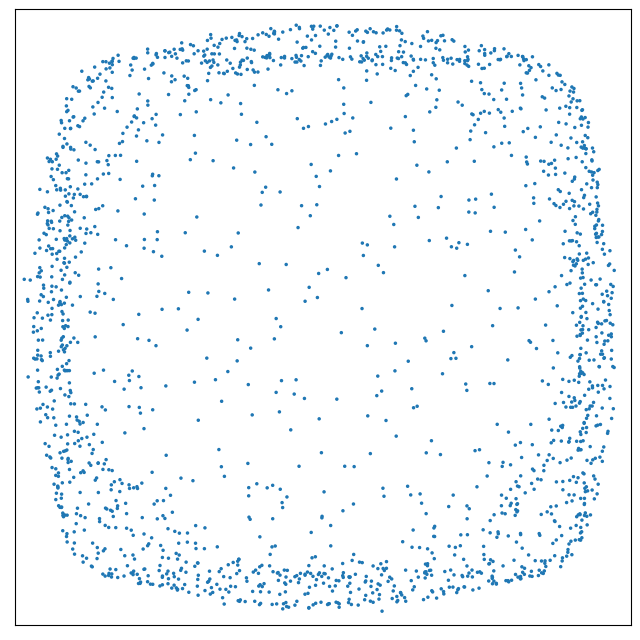}
\end{subfigure}%
\begin{subfigure}[t]{.14\textwidth}
  \includegraphics[width=\linewidth]{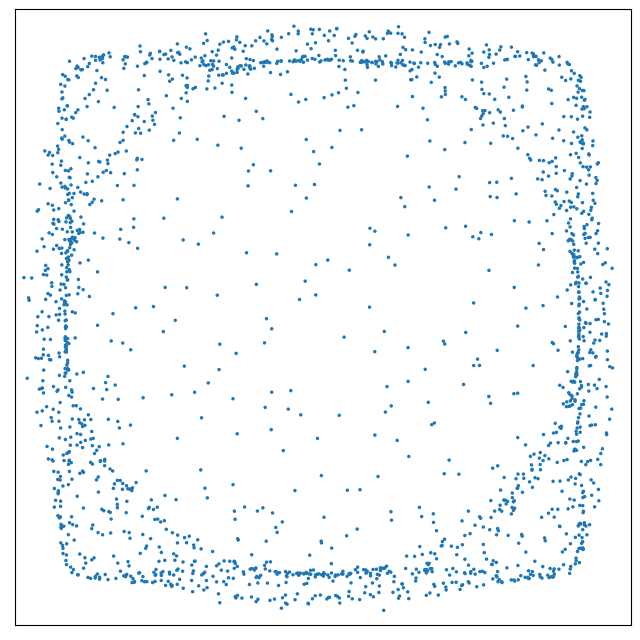}
\end{subfigure}%
\begin{subfigure}[t]{.14\textwidth}
  \includegraphics[width=\linewidth]{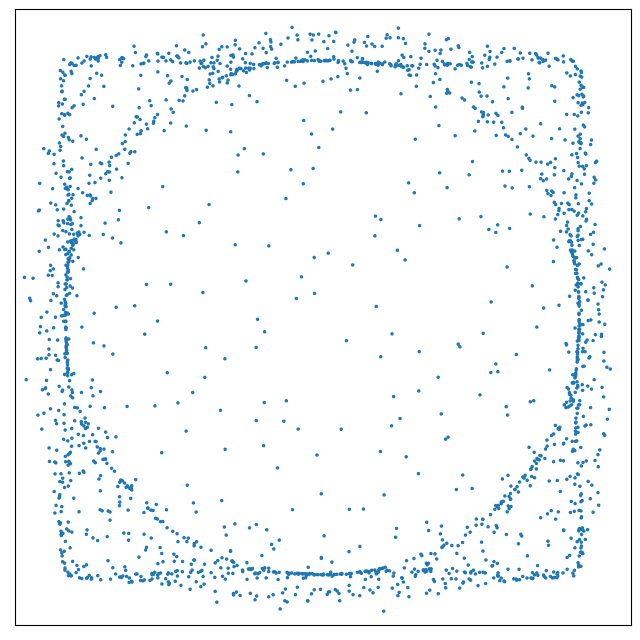}
\end{subfigure}%
\begin{subfigure}[t]{.14\textwidth}
  \includegraphics[width=\linewidth]{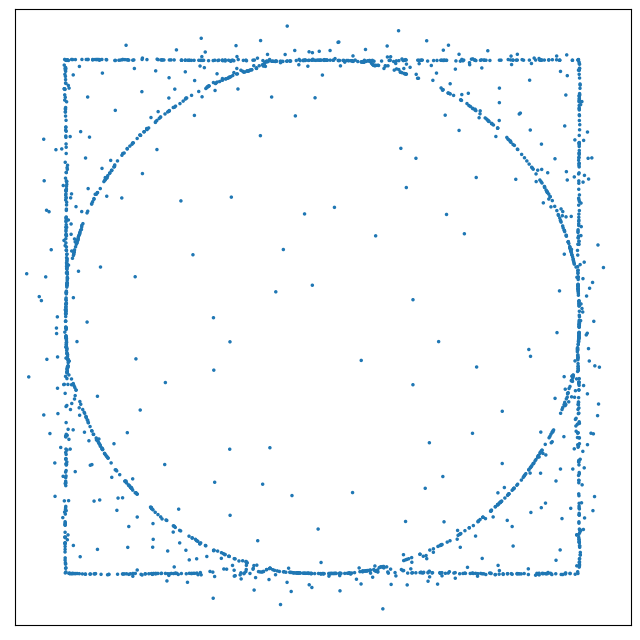}
\end{subfigure}%
\begin{subfigure}[t]{.14\textwidth}
  \includegraphics[width=\linewidth]{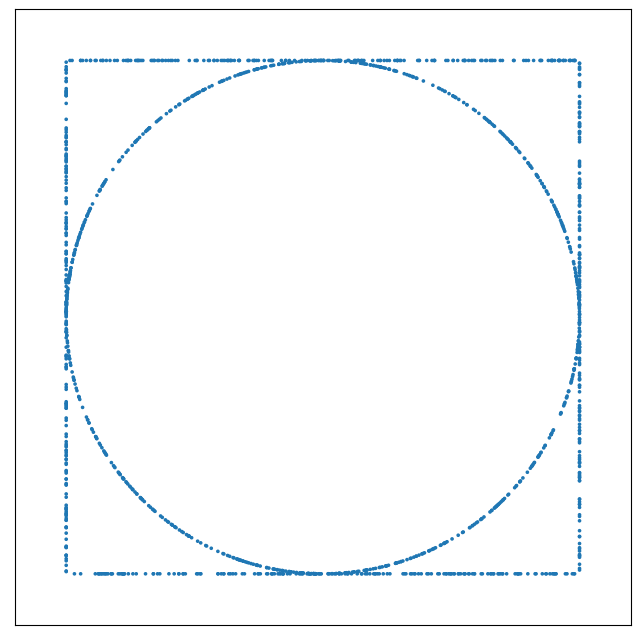}
\end{subfigure}%

\begin{subfigure}[t]{.14\textwidth}
  \includegraphics[width=\linewidth]{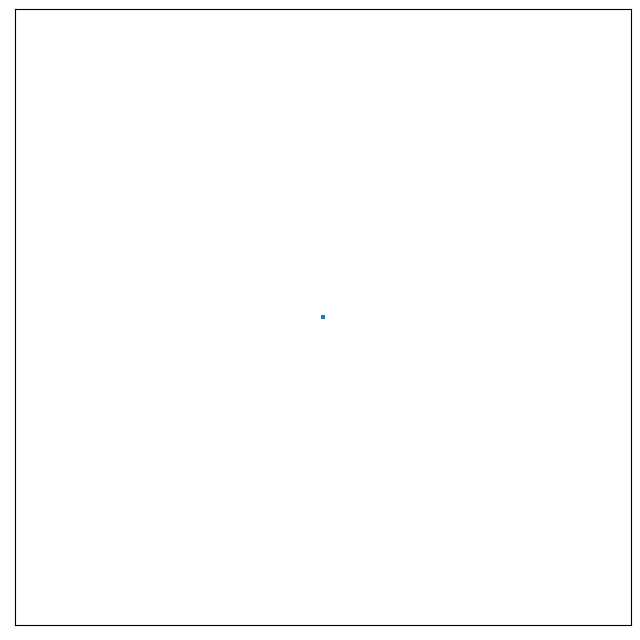}
\end{subfigure}%
\begin{subfigure}[t]{.14\textwidth}
  \includegraphics[width=\linewidth]{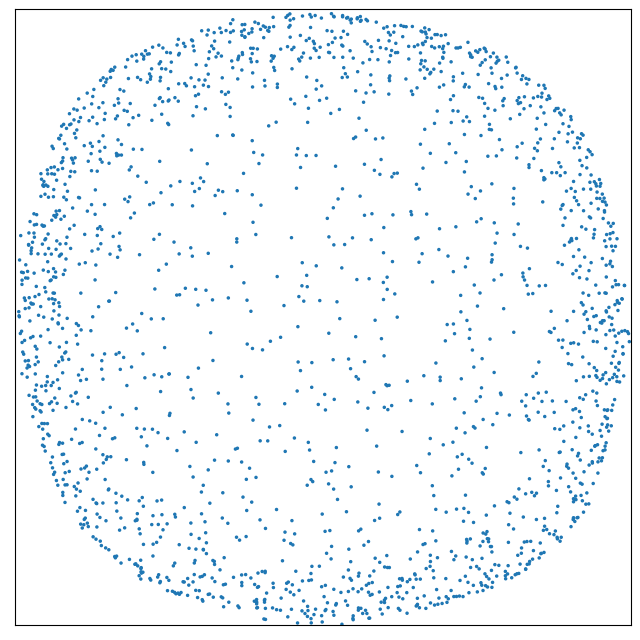}
\end{subfigure}%
\begin{subfigure}[t]{.14\textwidth}
  \includegraphics[width=\linewidth]{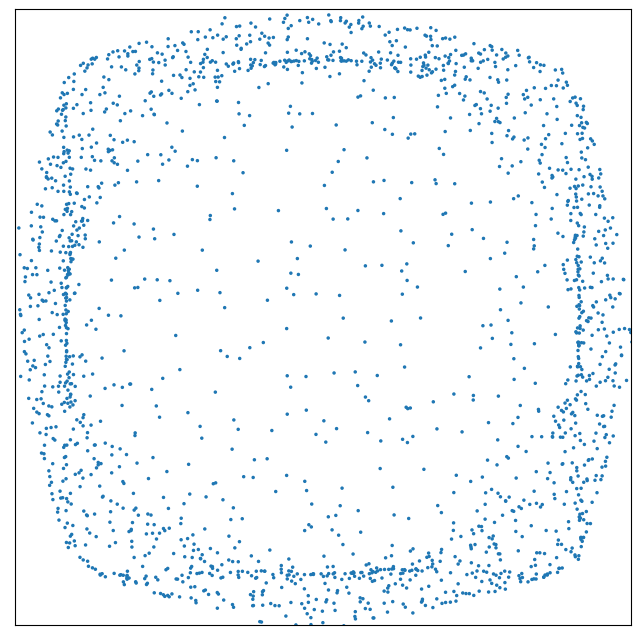}
\end{subfigure}%
\begin{subfigure}[t]{.14\textwidth}
  \includegraphics[width=\linewidth]{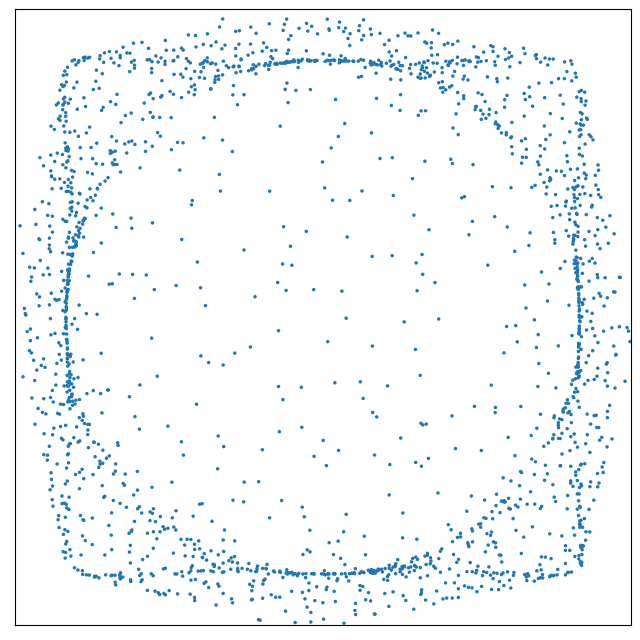}
\end{subfigure}%
\begin{subfigure}[t]{.14\textwidth}
  \includegraphics[width=\linewidth]{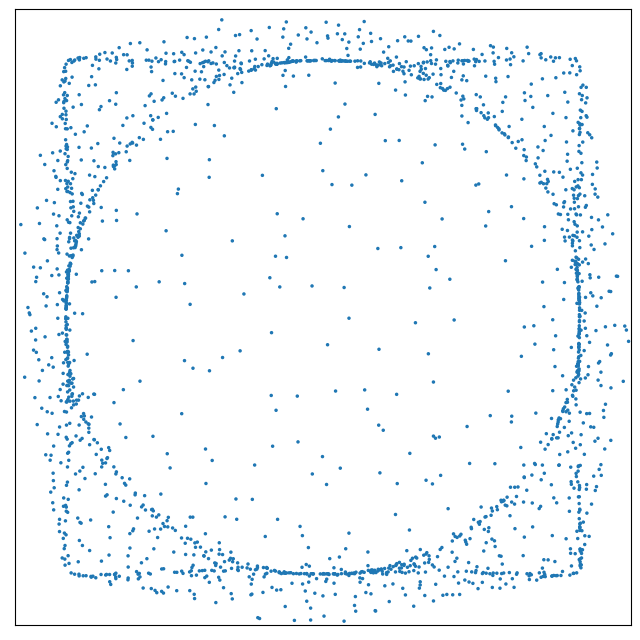}
\end{subfigure}%
\begin{subfigure}[t]{.14\textwidth}
  \includegraphics[width=\linewidth]{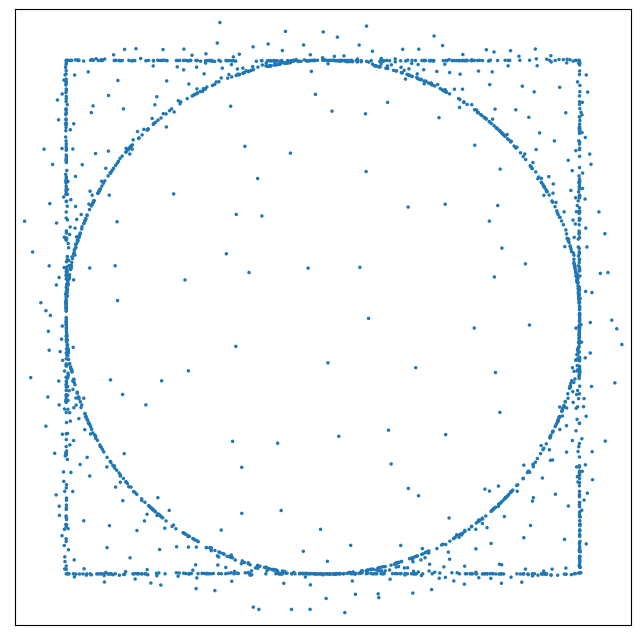}
\end{subfigure}%
\begin{subfigure}[t]{.14\textwidth}
  \includegraphics[width=\linewidth]{images/discrepancy/barycenter/target.png}
\end{subfigure}%

\begin{subfigure}[t]{.14\textwidth}
  \includegraphics[width=\linewidth]{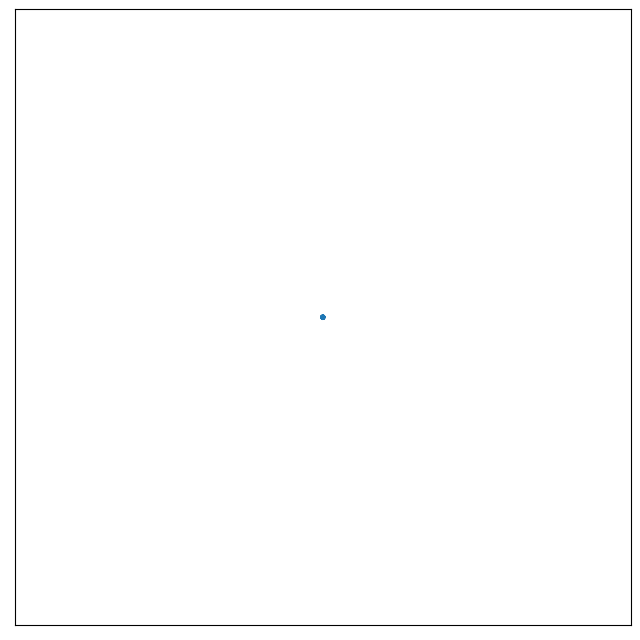}
\caption*{t=0.0}
\end{subfigure}%
\begin{subfigure}[t]{.14\textwidth}
  \includegraphics[width=\linewidth]{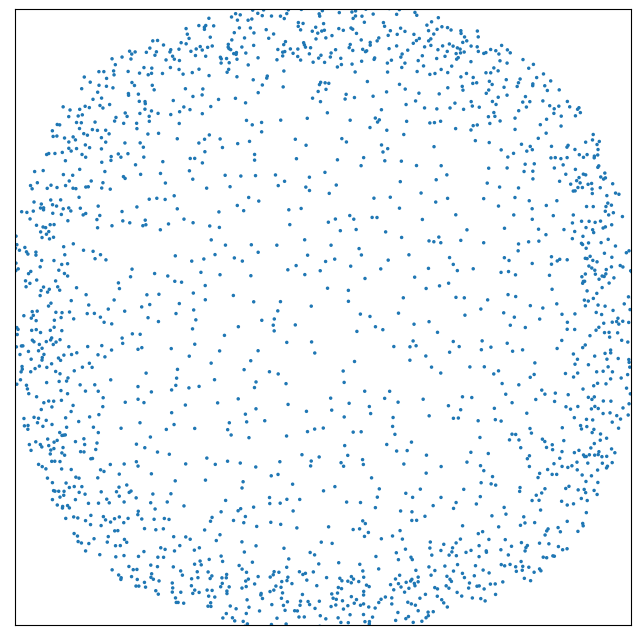}
\caption*{t=4.0}
\end{subfigure}%
\begin{subfigure}[t]{.14\textwidth}
  \includegraphics[width=\linewidth]{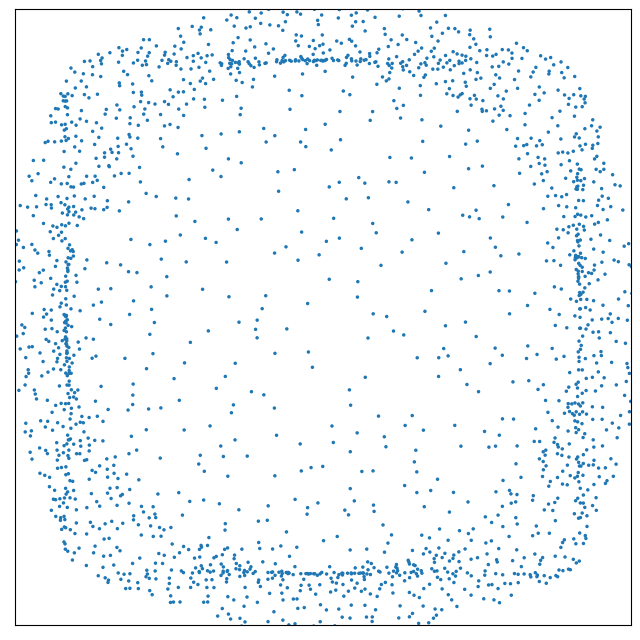}
\caption*{t=8.0}
\end{subfigure}%
\begin{subfigure}[t]{.14\textwidth}
  \includegraphics[width=\linewidth]{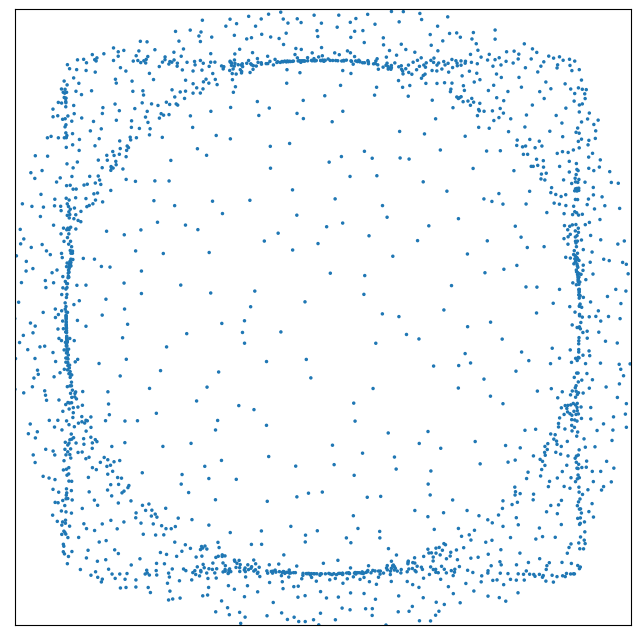}
\caption*{t=12.0}
\end{subfigure}%
\begin{subfigure}[t]{.14\textwidth}
  \includegraphics[width=\linewidth]{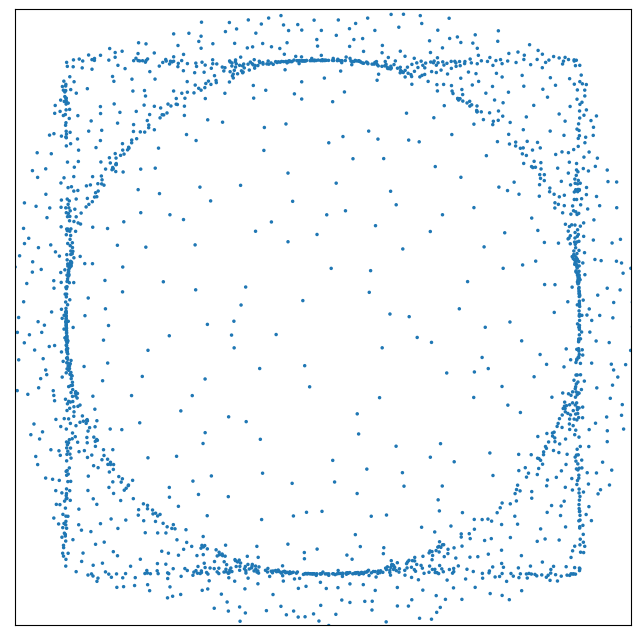}
\caption*{t=16.0}
\end{subfigure}%
\begin{subfigure}[t]{.14\textwidth}
  \includegraphics[width=\linewidth]{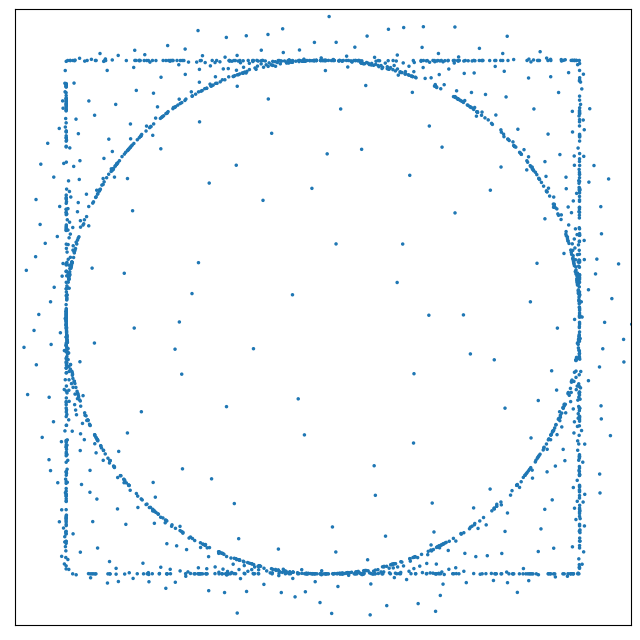}
\caption*{t=48.0}
\end{subfigure}%
\begin{subfigure}[t]{.14\textwidth}
  \includegraphics[width=\linewidth]{images/discrepancy/barycenter/target.png}
  \caption*{target}
\end{subfigure}%
\caption{Neural backward scheme (top), neural forward scheme (middle) and particle flow (bottom) for the energy functional \eqref{eq:mmd_barycenter}
starting in $\delta_0$.
} \label{fig:mmd_barycenter}
\end{figure*}

\section{MNIST starting in a uniform distribution} \label{app:mnist_uniform}
Here we recompute the MNIST example from Sect.~\ref{sec:discrepancy} starting in an absolutely continuous measure instad of a singular measure. More precisely, the initial particles of all schemes are uniformly distributed in $\mathcal{U}_{[0,1]^d}$. Then we use the same experimental configuration as in Sect.~\ref{sec:discrepancy}. In Fig.~\ref{fig:mnist_uniform} we illustrate the trajectories from MNIST of the different methods. In contrast to Sect.~\ref{sec:discrepancy}, where the particle flow suffered from the inexact starting because of the singular starting measure, in this case the methods behave similarly.
\begin{figure*}[t!]
\centering
\begin{subfigure}[t]{.33\textwidth}
\includegraphics[width=\linewidth]{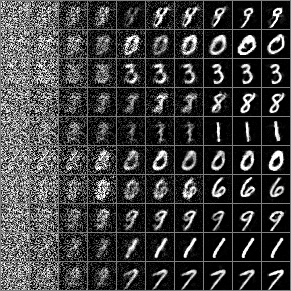}
\caption*{Neural backward scheme}
\end{subfigure}
\hfill
\begin{subfigure}[t]{.33\textwidth}
\includegraphics[width=\linewidth]{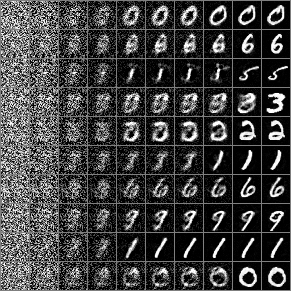}
\caption*{Neural forward scheme}
\end{subfigure}
\hfill
\begin{subfigure}[t]{.33\textwidth}
\includegraphics[width=\linewidth]{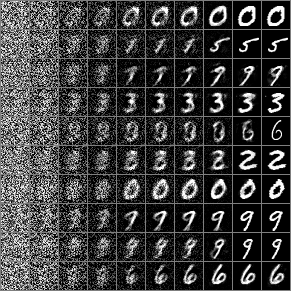}
\caption*{Particle flow}
\end{subfigure}
\caption{Samples and their trajectories from MNIST starting in $\mathcal{U}_{[0,1]^d}$.}
\label{fig:mnist_uniform}
\end{figure*}

\end{document}